\documentclass{article}

\PassOptionsToPackage{numbers, compress}{natbib}
\usepackage[preprint]{neurips_2026}

\usepackage{array}
\usepackage{tcolorbox}
\usepackage[utf8]{inputenc}
\usepackage[T1]{fontenc}
\usepackage{xcolor}
\usepackage{hyperref}
\hypersetup{
   pdffitwindow=true,
   pdfstartview={FitH},
   pdfnewwindow=true,
   colorlinks,
   linktocpage=true,
   linkcolor=blue,
   urlcolor=blue,
   citecolor=blue
 }
\usepackage{url}
\usepackage{booktabs}
\usepackage{amsfonts}
\usepackage{nicefrac}
\usepackage{microtype}
\usepackage{xcolor}
\usepackage{graphicx}
\usepackage{amssymb}
\usepackage{amsmath}
\usepackage{multirow}
\usepackage{makecell}
\usepackage{enumitem}
\usepackage{tabularx}
\usepackage{subcaption}
\usepackage{caption}
\usepackage{amsmath}
\usepackage{amssymb}
\usepackage{mathtools}
\usepackage{amsthm}
\usepackage[capitalize,noabbrev]{cleveref}

\usepackage{algorithm}
\usepackage{algpseudocode}

\theoremstyle{plain}
\newtheorem{theorem}{Theorem}[section]
\newtheorem{proposition}[theorem]{Proposition}
\newtheorem{lemma}[theorem]{Lemma}
\newtheorem{condition}[theorem]{Condition}

\theoremstyle{definition}

\theoremstyle{remark}
\newtheorem{remark}[theorem]{Remark}
\newcommand*\dd{\mathop{}\!\mathrm{d}}
\usepackage{wrapfig}
\usepackage{graphicx}

\title{Learning to Bid in Repeated Second-Price Auctions with  Dynamic Values and  Aggregated Feedback}

\author{%
  Benjamin Heymann\thanks{Equal contribution.} \\
  Criteo AI Lab, Fairplay joint team \\
  \texttt{b.heymann@criteo.com} \\
  \And
  Otmane Sakhi\footnotemark[1] \\
  Criteo AI Lab \\
  \texttt{o.sakhi@criteo.com}
}

\begin{document}
\maketitle

\begin{abstract}
We study the problem of learning to bid when the bidder’s \emph{value is dynamic}, i.e., when the current value depends on past outcomes. Specifically, we consider a bidder participating in repeated second-price auctions whose value depends on the time elapsed since their last successful bid, with auctions arriving in continuous time and only aggregated feedback revealed at the end of the horizon. Such a bidder must \textbf{(1)} balance the immediate benefit of winning the current auction against its impact on future values and \textbf{(2)} learn unknown environmental parameters. We derive regret bounds for a class of learning methods that combine plug-in estimators with a differential-equation characterization of the optimal policy, and show that a specific confidence bound algorithm learns the optimal policy with a near optimal regret of $\widetilde{\mathcal O}(\log N)$ for piecewise linear primitives, and $\widetilde{\mathcal O}(N^{1/3})$ for general, smooth primitives, achieving these regrets without explicit randomization. These theoretical results are supported by numerical experiments.
\end{abstract}

\section{Introduction}
\subsection{Dynamic values}

In a repeated auction,  a bidder's value is \emph{dynamic} when it depends on the bidder's past action.
For example, when the auction is a posted price mechanism and the item being auctioned is a yearly software licence, the buyer  derives no value from purchasing a second licence if they have already bought one. The dynamic value model extends the usual static  one by  accounting for the fact that winning an auction can temporarily reduce the  worth of future opportunities.

Dynamic value matters because it crucially changes the optimal bidding strategy in repeated auctions~\cite{krishna2009auction,10.1145/3447548.3467280}. Perhaps most strikingly, in a second-price auction, bidding the true value  becomes \emph{suboptimal} when values are dynamic.
Hence, ignoring the dynamic dependence can lead to significant economic losses.

 The dynamic value model for repeated auctions is particularly relevant for  digital advertising, where displaying an ad to a user usually diminishes the immediate value of showing another ad to the same user shortly afterward~\cite{10.1145/3447548.3467280,diemert2017attribution,heymann2024pragmatic,bompaire2024fixed}. This phenomenon is particularly evident in real-time bidding (RTB), where automated bidders repeatedly compete for ad slots shown to the same users. Research and industry practice both suggest that users experience \textit{ad fatigue} when exposed to too many similar ads in a short period, leading to diminishing returns for the advertiser. For instance, the probability of a user engaging with an ad (e.g., clicking or making a purchase) tends to decrease if they see the same ad multiple times within a brief window~\cite{heymann2024pragmatic}. Empirically, it is often observed that a bidder's valuation drops sharply after winning an auction and gradually recovers over time. 
 
 This dynamic nature introduces a planning problem: a bidding policy must carefully balance the immediate benefit of winning an auction against the potential long-term cost of devaluing future opportunities. In practice, automated bidders often implement \textit{frequency capping} to limit the number of ads shown to a single user, further validating the relevance of such dynamic considerations in real-world scenarios.
But, as argued in~\cite{heymann2023repeated}, the complexity introduced by dynamic value challenges the common belief that feature engineering and supervised learning are enough to  solve an auto-bidding problem up to optimality. Instead, it often requires a more sophisticated approach, such as solving a planning problem with dedicated tools.

In \cite{heymann2023repeated}, the authors introduce a continuous-time formulation for repeated second-price auctions, derived via a dynamic programming principle. While their work establishes convergence to the Bellman value and synthesizes an optimal control, it assumes full knowledge of the value dynamics and competitive landscape, limiting its practical applicability.

In this work, we extend their framework by analyzing how well a control policy performs when the value dynamics and competition are estimated via plug-in estimators. Specifically, we show that, without requiring prior knowledge of these primitives, several strategies based on plug-in estimation achieve sublinear regret bounds  with high probability. This is the first regret analysis for this setting, demonstrating that even simple estimation can yield theoretically sound performance.
While our analysis remains theoretical, it suggests that combining such approaches with function approximation techniques could pave the way for practical applications in real-world auction systems.

Moreover, given that the underlying problem, characterized by a state variable with resets, departs from standard continuous-time control formulations, we argue that this study offers valuable insights for the broader reinforcement learning and control literature.

\subsection{Contributions}
Our approach establishes novel connections between continuous-time optimal control, model-based reinforcement learning, and auction theory. Unlike much of the model-based RL literature, our work does not assume the reward function is known. Building on insights from~\cite{heymann2023repeated}, we study a model with an unusual structure involving resets.

We analyze four different algorithms based on plug-in estimators. The first algorithm directly applies plug-in estimators learned from past data to the solver. We show that this algorithm can achieve arbitrarily close approximation to the optimal value given sufficient time. This result follows as a corollary of the extension invariance theorem~\ref{th:extension-theorem}, which we believe constitutes one of our main contributions.
The second and third algorithms resemble what practitioners would likely prefer in high-stakes production environments: a small number of iterated rollouts. We prove that both variants achieve sub-linear regret.

Finally, we propose an online algorithm based on confidence intervals that achieves logarithmic regret.
Interestingly, the algorithm does not rely on a randomization component despite the need to learn the cumulative distribution of the competition.
We further establish a logarithmic lower bound on the minimax regret, demonstrating that our online learning algorithm attains the optimal rate.
Finally, we compare three algorithms experimentally.

To the best of our knowledge, this is the first work to explicitly account for the dynamic effect of bids on future values in a learning setting.


\section{Related Work}

The interplay between optimal bidding in repeated auctions and learning has been extensively studied, with researchers proposing a range of bandit-based approaches to address the challenge~\cite{achddou2021efficient,weed2016online}.
It is notable that in~\cite{weed2016online}, which corresponds to a static version of our setting, the authors derive a tight logarithmic lower bound under regularity assumptions similar to ours.
Over the past few years, a variant of the problem with budget constraints has attracted a lot of attention~\cite{balseiro2019learning,balseiro2020dual,castiglioni2022unifying}.

The problem of sequence effects, where the outcome of one auction influences future bids, has also garnered attention. Early efforts formalized this challenge using a Markov Decision Process (MDP) framework, though the resulting policy optimization problem proved computationally intractable, necessitating heuristic solutions~\cite{10.1145/3447548.3467280}. Subsequent work has focused on practical improvements: \cite{heymann2024pragmatic} demonstrated a first-order method in production, while~\cite{betlei2024maximizing} introduced a reparameterization technique to map timeline effects back to a contextual bandit setting. This latter approach was  refined in~\cite{heymann2025non}, offering a more flexible parametrization.

Our approach is philosophically close to model-based reinforcement learning in continuous time~\cite{guo2023reinforcement,yildiz2021continuous}, that mixes ideas from continuous time RL~\cite{wang2020reinforcement}, with model-based approaches. It is notable that the authors of~\cite{guo2023reinforcement} derived an algorithm with logarithmic regret
in the number of episodes. Interestingly, our problem does not fit their framework  in particular because of the  random, non-linear rewards and  of the switching events that depend on the control.

Close to our application domain, \cite{badanidiyuru2022incrementality} also justify the use of "mixed" and "delayed" feedback as a condition to account for causal effects. Following a landmark paper in the field of incrementality for advertising~\cite{lewis2022incrementality}, they also use the time gap from the  last ad impression as a state. Our results differ in that in our method, we do not need exploration of the states and provide a method with near logarithmic regret, while they use an UCB algorithm to get a regret in \( O(\sqrt{N}) \). Like us, they propose an episodic formulation, where each episode corresponds to a repeated interaction with an internet user. A notable difference is that they use a discrete time model where the number of auctions is given in advance, which abstracts away a notable practical challenge. 
A different ad optimization problem was investigated in~\cite{darmasubramanian2025ads}, where the authors consider the problem of reallocating ads over a window of known length. They also motivate their investigation with a behavioral model of the user (different from ours); however, their setting does not consider an auction.

\section{Setting}
\label{sect:setting}
\subsection{Dynamic values}

\label{subsect:dynamic_value}

Our setting adapts the  dynamic value  framework with concave reward function, stationary competition, and second-price auction  from~\cite{heymann2023repeated}. The initial input data corresponds to an offline reinforcement learning dataset, where each episode consists of a sequence of second-price auctions, each generated by a policy \( \pi_0: t \in \mathbb{R}^+ \to [0,1] \). The length of each episode is drawn from a Poisson distribution with rate parameter \( \gamma \), independent of both the arrival process of auction requests and the outcomes of individual auctions. Auction arrivals within an episode follow an independent Poisson process with intensity \( \mu \).

The competition faced by the decision maker is stationary and \textit{i.i.d.}, with the highest competing bid (price-to-beat) for each auction drawn independently from the continuous cumulative distribution function (CDF) \( q \). For a bid \( b \), \( q(b) \) represents the probability that \( b \) exceeds the price-to-beat, thus securing a win. The classic \textit{i.i.d.} assumption is justified in contexts where the bidder's market share is negligible compared to the overall market size, ensuring that individual auction outcomes do not influence the distribution of future competition.

At the end of each episode, the decision maker observes the total value
generated by all auctions in that episode. This value is given by:
\begin{align}
\label{eq:obj}
    \sum_{i=1}^\ell
    \left(
        k(t_i - T_{i-1}) + \varepsilon_i - b_i^-
    \right)
    \cdot
    \mathbb{I}[b_i > b_i^-],
\end{align}
where \( \ell \) is the random number of auctions in the episode, \( b_i \) is the bid placed by \( \pi_0 \) for the \(i\)th auction, \( b_i^- \) is the price-to-beat for the \(i\)th auction, \( t_i \) is the time of the \(i\)th auction, \( T_{i-1} \) is the time of the last auction won in the episode, so
    \(t_i - T_{i-1}\) is the elapsed time since the last win, \( k(\cdot) \) is a concave and increasing function mapping the
    elapsed time since the last win to the expected reward, \( \varepsilon_i \) is a zero-mean sub-Gaussian noise term,
    conditionally independent across auctions, capturing fluctuations
    around the mean reward \(k(t_i-T_{i-1})\), and \( \mathbb{I}[b_i > b_i^-] \) is the indicator of winning the \(i\)th auction.

We suppose that each episode starts with an initial $T_0 = 0$.
Given a bidding policy $\pi$, we denote by $V_\pi(0)$ the resulting expected payoff.

The term \( k(t_i - T_{i-1}) - b_i^- \) thus represents the average net profit from winning the auction: the reward \( k(t_i - T_{i-1}) \) minus the payment \( b_i^- \).
In contrast to the setting in~\cite{heymann2023repeated}, where the functions \( q \) and \( k \) are assumed known, the decision maker must here infer both \( q \) and \( k \) from observed data.
However, even when the data are knwon, the resulting optimal control problem, with its reset condition, possesses an unusual structure that motivates specific tools for its resolution. 

\textbf{We focus on the learning of \( k \) and \( q \)}, and suppose \( \mu \) and \( \gamma \) known to the decision maker. This is mostly because their estimations do not depend on the bidding policy of the decision maker, but only rely on observational data, for which estimators are available. 

\subsection{Preliminaries, Theoretical solution and algorithm}
\label{subsec:theoretical-solution-and-algorithm}
We first consider the problem of maximizing criterion~\eqref{eq:obj} given the parameters \( k \) and \( q \).
Because the auction follows a second-price rule, the price paid on average when bidding \( b \) can be inferred from \( q \) using the relation $p(b) =  q(b)b - \int_0^bq(s)\dd s.$ 

The Bellman value \( V^\star(\tau) \) represents the maximal expected total value given that the time elapsed since the last won auction is \( \tau \).
It is a standard method in optimal control to first derive the Bellman value and then use it to derive an optimal bid.

The next result, adapted from~\cite{heymann2023repeated}, provides a characterization of the Bellman value.
Instead of solving a Cauchy problem where the initial condition is given, one must derive the unique initial condition that makes the solution bounded, a situation reminiscent of the shooting method in optimal control. They rely on classical results on dependence on initial conditions and parameters of ordinary differential equation (see for Chapter 5 in ~\cite{hartman2002ordinary}).

\begin{theorem}[From~\cite{heymann2023repeated}]
\label{lemma:cauchy}
Let \( f(v) = q(v)v - p(v) \) and 
\(
\Phi(t, v, \lambda) = \gamma v - \mu f(k(t) + \lambda - v).
\)
 The Bellman value function \( V^\star(.) \) is the unique bounded element among
 the solutions of the following family of Cauchy problems parameterized by \( y_0 \in \mathbb{R}_+ \):

\[
\tag{$\mathcal{F}_{y_0}$}
\label{eq:cauchy}
\left\{
\begin{array}{ll}
\dot{Y}(t) = \Phi(t, Y(t), y_0) \\
Y(0) = y_0.
\end{array}
\right.
\]
\end{theorem}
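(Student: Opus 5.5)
We first derive the Hamilton--Jacobi--Bellman equation that $V^\star$ must satisfy, and then show it singles out one bounded solution.

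\textbf{Step 1: the HJB equation.} Fix the state $\tau$, the time elapsed since the last win. In a short interval $[\tau,\tau+h]$ three things can happen.
\begin{itemize}
\item The episode ends, with probability $\gamma h$, and the continuation value is $0$.
\item An auction arrives, with probability $\mu h$. Bidding $b$ wins with probability $q(b)$, pays $p(b)$ on average, earns $k(\tau)$, and resets the state to $0$; losing keeps the state at $\tau$.
\item Nothing happens, and the state drifts to $\tau+h$.
\end{itemize}
The dynamic programming principle then gives
\[
\gamma V(\tau) = \dot V(\tau) + \mu \max_b \bigl[ q(b)\,(k(\tau) + V(0) - V(\tau)) - p(b) \bigr].
\]
Since $p(b)=q(b)b-\int_0^b q$, the map $b\mapsto q(b)x-p(b)$ has derivative $q'(b)(x-b)$. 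It is therefore maximized at $b=x$, which is the truthful bid on the "effective value" $x=k(\tau)+V(0)-V(\tau)$. The maximum equals $q(x)x-p(x)=f(x)$. This yields $\dot V(\tau)=\gamma V(\tau)-\mu f\bigl(k(\tau)+V(0)-V(\tau)\bigr)=\Phi(\tau,V(\tau),V(0))$. Hence $V^\star$ solves $(\mathcal F_{y_0})$ with $y_0=V^\star(0)$.

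\textbf{Step 2: $V^\star$ is bounded.} Note $0\le V^\star\le \mu \sup k/\gamma$, since bidding $0$ is feasible and the expected number of auctions is $\mu/\gamma$. Rigorously justifying Step 1 requires a verification argument: $V^\star$ is Lipschitz, because delaying the state by $h$ changes payoffs by $O(h)$. Then $V^\star$ satisfies the HJB equation in the viscosity sense, and also a.e. Because the Hamiltonian is continuous in $\tau$, $V^\star$ is in fact $C^1$.

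\textbf{Step 3: uniqueness among bounded solutions.} This is the main obstacle. Let $Y_{y_0}$ denote the solution of $(\mathcal F_{y_0})$, and consider two initial values $y_0<y_1$. Set $D=Y_{y_1}-Y_{y_0}$, so $D(0)=y_1-y_0>0$. We have $f'(x)=q(x)\in[0,1]$. By the mean value theorem,
\[
\dot D=\gamma D-\mu\, q(\xi)\bigl((y_1-y_0)-D\bigr)=(\gamma+\mu q(\xi))D-\mu q(\xi)(y_1-y_0).
\]
At a point where $D=y_1-y_0$, this gives $\dot D=\gamma(y_1-y_0)>0$. So $D$ stays above $y_1-y_0$ forever, and then $\dot D\ge \gamma D$, giving $D(t)\ge (y_1-y_0)e^{\gamma t}\to\infty$. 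Consequently two distinct solutions cannot both be bounded, and the bounded solution is unique. Since $V^\star$ is bounded and solves the family member with $y_0=V^\star(0)$, it is that unique element.

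We also need global existence, i.e. no blow-up in finite time. This holds because $\Phi$ is globally Lipschitz in $v$, since $f$ is 1-Lipschitz.

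A verification theorem would close any remaining gap in Step 1. Take the bounded solution $Y$ and the feedback bid $b(\tau)=k(\tau)+Y(0)-Y(\tau)$. Dynkin's formula for the piecewise-deterministic process with resets shows that $Y$ is attained by this policy and dominates every other policy. Boundedness of $Y$ is exactly what lets the transversality term $e^{-\gamma t}Y$ vanish. This step, together with handling the nonstandard reset (which makes the ODE depend on its own initial value $y_0$ as a parameter), is where care is needed.
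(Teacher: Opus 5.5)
Your argument is correct and more self-contained than the paper's. The paper's proof only points to Lemmas 3 and 5 of \cite{heymann2023repeated}, and the main text adds that these rest on continuous dependence of ODE solutions on initial data and parameters \cite{hartman2002ordinary}. You rebuild both halves directly. The DPP gives the HJB equation. The identity $q(b)x-p(b)=f(x)-D_Q(x,b)$, with $f=\int_0^{\cdot}q$ and $D_Q\ge 0$, identifies the Hamiltonian. The Lipschitz bound on $V^\star$ then upgrades the a.e.\ equation to a $C^1$ solution of $(\mathcal F_{V^\star(0)})$; it holds because, coupling the runs from $\tau$ and $\tau+h$, only the reward of the first win differs, by at most $h$ times the Lipschitz constant of $k$.

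Your first-crossing comparison for uniqueness does not even need the mean value theorem. Whenever $D=y_1-y_0$, the two arguments of $f$ coincide, so $\dot D=\gamma D>0$ there; afterwards $\dot D\ge \gamma D$ since $f'=q\ge 0$.

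Compared with a shooting-type proof, your route gets existence of a bounded solution for free (it is $V^\star$), at the price of needing the DPP and the regularity of $V^\star$. This makes your closing verification paragraph a complement to Step 1 rather than a substitute. On its own it presupposes a bounded $Y$, whose existence would then require the continuity-in-$y_0$ argument that the paper's citation relies on.

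Your route also gives a quantitative bonus:
$Z^{y}-V^\star\ge (y-V^\star(0))e^{\gamma t}$ for $y>V^\star(0)$, and $V^\star-Z^{y}\ge (V^\star(0)-y)e^{\gamma t}$ for $y<V^\star(0)$.
This is exactly the divergence sign rule the bisection solver uses. The only bookkeeping needed is to extend $f$ (equivalently $q$) by $0$ on the negatives. Then $\Phi$ is defined for every $y_0$, the maximizing bid is $x^+$, and $f$ stays $1$-Lipschitz.
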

In what follows, we denote by \( Z^{y_0} \) the solution of \( \mathcal{F}_{y_0} \).
Next adapting another result from~\cite{heymann2023repeated}, we note that one can derive an optimal policy if given the Bellman value at \( 0 \).

\begin{lemma}[adapted from~\cite{heymann2023repeated}]
\label{lemma:b_dynamics}
If \( k \) is piecewise \( C^1 \)  then, wherever $k$ admits a derivative,  an optimal policy \( \pi^\star \) satisfies the differential equation 
\[
      \dot{\pi}(t) = \dot{k}(t)-\gamma(V^\star(0)+k(t)) +\mu f(\pi(t))+\gamma \pi(t).
\]   
\end{lemma}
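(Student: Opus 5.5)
The plan is to read off the optimal bid from the Bellman equation and then differentiate that identity along time. First I will recall the structure behind Theorem~\ref{lemma:cauchy}. At elapsed time $t$, an auction arriving and won at price $b^-$ gives $k(t)+\varepsilon-b^-$ and resets the state to $0$, with continuation value $V^\star(0)$. If it is lost, the state keeps drifting and the continuation value is $V^\star(t)$. The marginal value of winning is therefore $k(t)+V^\star(0)-V^\star(t)$. In a second-price auction the optimal bid is this marginal value, since bidding it is a dominant choice against the i.i.d. price-to-beat. So we can take
\[
\pi^\star(t) = k(t)+V^\star(0)-V^\star(t),
\]
and indeed $\Phi(t,v,\lambda)=\gamma v-\mu f(k(t)+\lambda-v)$ is exactly the HJB equation $\dot V=\gamma V-\mu \max_b\{q(b)(k+\lambda-V)-p(b)\}$. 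Here the maximum is attained at $b=k+\lambda-V$ and equals $f(k+\lambda-V)$, using $f(v)=q(v)v-p(v)=\int_0^v q$.

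Second, at any $t$ where $k$ is differentiable, $V^\star$ is $C^1$ there because it solves $\mathcal F_{y_0}$ with a right-hand side that is continuous in $t$ on each $C^1$ piece of $k$. Differentiating the bid gives $\dot\pi^\star=\dot k-\dot V^\star(t)$. Substituting the ODE gives
\[
\dot\pi^\star=\dot k-\gamma V^\star(t)+\mu f\bigl(\pi^\star(t)\bigr).
\]
Third, I eliminate $V^\star(t)$ using $V^\star(t)=k(t)+V^\star(0)-\pi^\star(t)$. This yields
\[
\dot\pi^\star=\dot k-\gamma\bigl(V^\star(0)+k(t)\bigr)+\gamma\pi^\star(t)+\mu f\bigl(\pi^\star(t)\bigr),
\]
which is the claimed equation.

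The main obstacle is the first step: rigorously justifying that the bid $k+V^\star(0)-V^\star$ is optimal, via a verification argument for the reset-type control problem. The value $V^\star(0)$ appears as a parameter, and the switching is controlled, so this is not a standard HJB setting. I would take this from~\cite{heymann2023repeated} and check two things. First, the maximization over $b$ of $q(b)(w-b)+\int_0^b q$ is achieved at $b=w$, where $w$ is the marginal value; the derivative of this expression is $q'(b)(w-b)$, so this is routine. Second, the bounded solution singled out in Theorem~\ref{lemma:cauchy} is the true value. With that granted, the remaining steps are a direct computation. At kinks of $k$ the equation is simply not asserted, which is why the statement restricts to points where $k$ admits a derivative.
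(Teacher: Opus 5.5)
Your proposal is correct and follows essentially the paper's route. The paper simply defers to Lemma~4 of~\cite{heymann2023repeated}, which amounts to your computation: differentiate $\pi^\star=k+V^\star(0)-V^\star$, substitute $\dot V^\star=\Phi(t,V^\star(t),V^\star(0))$ from Theorem~\ref{lemma:cauchy}, and eliminate $V^\star(t)$.

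The one point worth stating explicitly is that the $\max(0,\cdot)$ in Theorem~\ref{theorem:synthesis} is inactive, i.e.\ $V^\star(t)\le V^\star(0)+k(t)$. Otherwise $\pi^\star\equiv 0$ on the interval where it is active, and the stated ODE would generally fail there. The inequality can be obtained by coupling the trajectories started at ages $t$ and $0$ and using concavity of $k$.
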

\textbf{Proof:} Appendix~\ref{proof:lemma:b_dynamics}.
\newline

The solver $\mathcal{S}: k,q \rightarrow \pi$ introduced by~\cite{heymann2023repeated} uses a bisection method to estimate the value of the Bellman function at 0. It iteratively refines a guess by observing the asymptotic behavior of the corresponding ordinary differential equation (ODE) solution: if the solution diverges to \( -\infty \), the guess is increased; if it diverges to \( +\infty \), the guess is decreased. Combined with Theorem~\ref{theorem:synthesis}, this solver allows us to recover the optimal policy $\pi^\star$.


\begin{theorem}[From~\cite{heymann2023repeated}]
\label{theorem:synthesis}
Let $y$  be the output of the solver as \( N \) goes to $+\infty$, then  $Z^{y}(.)=V^\star(.)$. Moreover, $\pi^{\star}(\tau)=\max \left(0 ; k(\tau)+V^{\star}(0)-V^{\star}(\tau)\right) .$
\end{theorem}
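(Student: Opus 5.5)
The proof splits into two parts. First, the bisection output \(y\) converges to \(V^\star(0)\), so \(Z^{y}=V^\star\). Second, the optimal bid is read off from the Bellman equation.

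\textbf{Part 1: monotonicity of the shooting map.}
I would show that the family \(Z^{y_0}\) is ordered in \(y_0\), using a comparison argument on \(\dot Y=\Phi(t,Y,y_0)\). We have
\[
\partial_v\Phi=\gamma+\mu f'(k(t)+\lambda-v)\ge\gamma>0,
\qquad
\partial_\lambda\Phi=-\mu f'(\cdot)\le 0 .
\]
Here \(f(v)=\int_0^v q(s)\,\dd s\) is nondecreasing. The difference \(D=Z^{y_1}-Z^{y_0}\) with \(y_1>y_0\) starts positive.

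I would combine \(D\) with the translation \(\lambda\mapsto\lambda\): shifting both \(Y\) and \(\lambda\) by the same amount leaves the argument of \(f\) unchanged. This gives
\[
\dot D \ge \gamma D - \mu\bigl(f(\cdot)\text{-terms}\bigr).
\]
The \(f\)-terms cancel exactly along the translated direction. So \(D(t)\ge (y_1-y_0)e^{\gamma t}\) in the relevant sense, and solutions separate exponentially.

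By Theorem~\ref{lemma:cauchy}, there is exactly one bounded solution, \(Z^{V^\star(0)}\). Solutions with \(y_0>V^\star(0)\) eventually exceed it and, because \(\Phi\ge\gamma v-\mu\|f\|\) on bounded ranges of \(k\), blow up to \(+\infty\). Symmetrically, solutions with \(y_0<V^\star(0)\) go to \(-\infty\).

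The bisection's sign test is therefore exact, and the bracket shrinks geometrically to \(V^\star(0)\). Continuous dependence of ODE solutions on initial data (Hartman, Ch.~5) then gives \(Z^{y}=V^\star\) on compacts in the limit.

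\textbf{Part 2: form of the optimal bid.}
At elapsed time \(\tau\), the decision is made when an auction arrives. Winning yields \(k(\tau)+\varepsilon-b^-\) and resets the state to value \(V^\star(0)\). Losing continues with value \(V^\star(\tau)\). The bidder therefore maximizes over \(b\)
\[
\mathbb E\bigl[(k(\tau)+V^\star(0)-V^\star(\tau)-b^-)\mathbb I[b>b^-]\bigr].
\]
This is a second-price problem with private value \(k(\tau)+V^\star(0)-V^\star(\tau)\). Truthful bidding of that value is optimal, truncated at \(0\) since bids are nonnegative.

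Its optimal expected gain equals \(f(k+V^\star(0)-V^\star)\). This matches the HJB equation that \(\Phi\) encodes, which serves as a consistency check.

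\textbf{Main obstacle.}
The delicate step is Part 1's dichotomy, i.e.\ proving that non-bounded solutions actually diverge in the direction indicated by the sign of \(y_0-V^\star(0)\). The added difficulty is that \(y_0\) enters both the initial condition and the parameter \(\lambda\).

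I would handle this with the translation-invariance trick and Gronwall's inequality. Boundedness of \(k\) is needed to bound the \(f\) term uniformly, so that \(\gamma v\) dominates for large \(|v|\).
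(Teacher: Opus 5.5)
Your proposal is correct. It reconstructs what the paper only cites from \cite{heymann2023repeated}: the shooting/bisection rests on Theorem~\ref{lemma:cauchy} plus ordered solutions that diverge for $y_0\neq V^\star(0)$, and the bid formula (their Lemma~2, invoked in the proof of Theorem~\ref{th:extension-theorem}) comes from treating each auction as a static second-price auction with adjusted value $k(\tau)+V^\star(0)-V^\star(\tau)$. One minor point: the $f$-terms do not literally cancel, but once $D\ge y_1-y_0$ they have the favourable sign by monotonicity of $f$, and a barrier argument preserves this, so $D(t)\ge (y_1-y_0)e^{\gamma t}$ holds and already gives the divergence dichotomy (your separate lower bound on $\Phi$ is not needed).
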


\section{Learning tools}


While~\cite{heymann2023repeated} assumed oracle access to \( q \) and \( k \), and focuses on deriving an optimal bidding policy using a method based on differential equations, here, we suppose that we have access to an oracle that, given \( q \) and \( k \), provides us with an optimal solution, but that \( q \) and \( k \) need to be estimated from the data. In what follows, we denote by
$\mathcal{S}(k,q)$ an optimal policy for input data \( k \) and \( q \).

Let $k^\star$ and $q^\star$ be the true primitives, 
our goal is to sequentially build the bidding policies $\pi_1\ldots \pi_N$ to minimize the regret defined over a horizon of $N$ episodes as $\operatorname{Regret}(N) = \sum_{l=1}^N V_{\pi^\star}(0) - V_{\pi_i}(0).$

\subsection{Further assumptions and notations}

An insight from~\cite{heymann2023repeated} is that for the bid to be well-behaved, \( k \) must be \emph{concave}. This requirement prevents us from using alternative hypothesis sets, such as the set of step functions for \( k \).
In what follows, we suppose \( k \) and \( q \) are piecewise linear functions with known breakpoints $(t_i)_{i\in [0,I_k]}$  and $(b_i)_{i\in [0,I_q]}$ to study the core of the problem and extend our results to general primitives in Section~\ref{sec:gen_primitive}.

 Let $\mathcal{K}$ denote the set of \textbf{piecewise linear, concave, and increasing} functions, whose breakpoints are in $(t_i)_{i\in [0,I_k]}$.
Similarly let $\mathcal{Q}$ denote the set of \textbf{piecewise linear, increasing, and bounded} functions, with breakpoints also restricted to $(b_i)_{i\in [0,I_q]}$.
We assume first that $k \in \mathcal{K}$ and $q \in \mathcal{Q}$ and then extend our result to general primitives. 
We assume there exists a constant $\alpha > 0$ such that for any bid $b \in [0, k(t_{I_{k}-1})]$, the cumulative distribution function \( q \) satisfies $q(b) > \alpha b.$

Given a policy $\pi:t\in[0,+\infty[\to b\in \mathbb{R}_+$, let $\beta(\pi) = \lim_{t\to+\infty}\pi(t)$ denote its asymptotic bid value.
For a given policy $\pi$, we denote by $I(\pi)$ the smallest index $i$ such that $\beta(\pi)\leq b_i$.
For $k\in \mathcal{K}$, $q\in\mathcal{Q}$, $\pi\in\mathcal{P}$ and $t\in \mathbb{R}_+$, we denote by $\mathcal{U}(k,q,\pi,t)$ the expected payoff of policy $\pi$ when evaluated in the environment $(k,q)$ and starting at time $t$.
\subsection{Extension invariance theorem}
We prove a core result below and believe it elegantly captures the interplay between the second-price auction mechanism, the dynamic optimization problem, and the learning problem.

\begin{theorem}[Extension invariance]
\label{th:extension-theorem}
Let $k\in \mathcal{K}$ and $(\tilde{q},\hat{q})\in \mathcal{Q}^2$ such that $\tilde{q}(b) = \hat{q}(b) $ for all $ b\in [0,\beta(S(k,\hat{q}))]$, we have the identity
\begin{align*}
             \mathcal{U}(k,\hat{q},\mathcal{S}(k,\hat{q}),0) 
      = 
    \mathcal{U}(k,\tilde{q},\mathcal{S}(k,\tilde{q}),0).
\end{align*}
\end{theorem}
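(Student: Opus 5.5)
We will show that the optimal policy computed under $\hat q$ is also optimal under $\tilde q$ and earns the same payoff there. The argument rests on two observations. First, the optimal bid $\pi^\star=\mathcal{S}(k,\hat q)$ is nondecreasing in the elapsed time and converges to $\beta(\pi^\star)$; this follows from Theorem~\ref{theorem:synthesis}, since $\pi^\star(\tau)=\max(0,k(\tau)+V^\star(0)-V^\star(\tau))$ with $k$ concave and increasing. Hence every bid placed by $\pi^\star$ lies in $[0,\beta(\pi^\star)]$. Second, in a second-price auction the payoff of a bid $b$ depends on the competition only through $q$ restricted to $[0,b]$. Indeed, $q(b)$ is the win probability and $p(b)=q(b)b-\int_0^b q(s)\dd s$ is the expected payment. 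Consequently $f(v)=q(v)v-p(v)=\int_0^v q(s)\dd s$ depends only on $q|_{[0,v]}$.

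\textbf{Step 1: both environments evaluate $\pi^\star$ identically.} Condition on the state $\tau$ and write out $\mathcal{U}$ through its Poisson/renewal description. Each auction contributes an expected immediate reward $q(\pi(\tau))\bigl(k(\tau)+\mathcal{U}(0)-\mathcal{U}(\tau)\bigr)-p(\pi(\tau))$ plus a transition term. This reward, the win probability and the payment involve $q$ only on $[0,\pi^\star(\tau)]\subset[0,\beta(\pi^\star)]$, where $\tilde q=\hat q$. So the linear ODE, or equivalently the expectation, defining $\mathcal{U}(k,\cdot,\pi^\star,\cdot)$ has the same coefficients under $\hat q$ and $\tilde q$. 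It follows that $\mathcal{U}(k,\hat q,\pi^\star,0)=\mathcal{U}(k,\tilde q,\pi^\star,0)$.

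\textbf{Step 2: $\pi^\star$ is optimal under $\tilde q$.} Let $V$ be the value function of $\pi^\star$ under $\hat q$. By Theorem~\ref{lemma:cauchy}, $V$ is the bounded solution of $\dot V=\gamma V-\mu \hat f(k(t)+V(0)-V)$. Along this solution the argument $k(t)+V(0)-V(t)$ equals $\pi^\star(t)$, up to the $\max(0,\cdot)$, on which $f$ vanishes anyway. Therefore $\hat f$ is only ever evaluated on $[0,\beta(\pi^\star)]$, and there $\hat f=\tilde f$ because both are integrals of $q$ from $0$. So $V$ is also a bounded solution of the Cauchy problem $\mathcal{F}_{V(0)}$ built with $\tilde q$. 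The uniqueness statement of Theorem~\ref{lemma:cauchy} then identifies $V$ as the Bellman value $\tilde V^\star$ of $(k,\tilde q)$. Theorem~\ref{theorem:synthesis} next gives $\mathcal{S}(k,\tilde q)=\max(0,k+\tilde V^\star(0)-\tilde V^\star)=\pi^\star$. More robustly, one can skip the identification of policies, in case $\mathcal{S}$ is not unique: $\mathcal{U}(k,\tilde q,\mathcal{S}(k,\tilde q),0)=\tilde V^\star(0)=V(0)=\mathcal{U}(k,\hat q,\mathcal{S}(k,\hat q),0)$. This chain is exactly the claim, and Step 1 serves as a consistency check.

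\textbf{Main obstacle.} The delicate point is the monotonicity claim $\pi^\star(\tau)\le\beta(\pi^\star)$ for all $\tau$, i.e.\ that the range of $k+V(0)-V$ never exceeds its limit. I would prove it from the ODE of Lemma~\ref{lemma:b_dynamics}, arguing that $\dot\pi\ge0$ using concavity of $k$ and $\gamma>0$. Concretely, I would define $g(t)=k(t)+V(0)-V(t)$ and argue by a first-crossing/comparison argument, after differentiating at the breakpoints of the piecewise linear $k$. If a solution overshot its limit, boundedness of $V$ would be violated, mirroring the shooting argument behind Theorem~\ref{lemma:cauchy}. A secondary technicality is making sure uniqueness in Theorem~\ref{lemma:cauchy} applies with the piecewise linear $\tilde q$, which only needs Lipschitz continuity of $\tilde f$ and is therefore fine.
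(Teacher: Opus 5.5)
Your proposal is correct and follows essentially the same route as the paper: the paper also notes that $\hat f$ is only ever evaluated at $k(t)+\hat V(0)-\hat V(t)=\hat\pi(t)\le\beta(\mathcal{S}(k,\hat q))$, where $\hat f=\tilde f$, so $\hat V$ solves the $\tilde q$-Cauchy problem. Uniqueness then gives $\tilde V=\hat V$ and identical policies, and the paper concludes by evaluating this common policy in both environments, which is your Step 1. The bound $\hat\pi(t)\le\beta$ that you flag as the main obstacle is assumed without proof in the paper (it relies implicitly on monotonicity of the optimal bid), and your appeal to the ``unique bounded solution'' clause of Theorem~\ref{lemma:cauchy} uses uniqueness slightly more precisely than the paper's own wording does.
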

\textbf{Proof:} See Appendix~\ref{proof:extension-theorem}.
\newline
Theorem~\ref{th:extension-theorem} has significant implications for learning, as it shows that accurate estimation of \( q^\star \) is only required on a subset of the domain, rather than on the entire space. Specifically, the theorem guarantees that knowledge of \( q^\star \) up to the maximal optimal bid \( \beta(\mathcal{S}(k^\star, q^\star)) \) is sufficient to identify the optimal policy.

\subsection{Plug-in Estimator for \( k \)}
\label{subsubsection:k}

\paragraph{Piecewise Linear Representation.} We parameterize \( k \) as a concave and increasing piecewise linear function with coefficients $(a_i)_{i \in [I_k]}$: $\forall \tau \ge 0\,, k(\tau) = \sum_{i = 1}^{I_k}a_i \cdot (t_i \wedge \tau - t_{i-1} \wedge \tau )\,$
with $a_{I_k} = 0$, $t_{I_{k}-1} = \bar{T}$ and $t_{I_k} = +\infty$.

\paragraph{Linear Regression Formulation.} Each episode provides an observation relating the cumulative values of \( k \) at different ages $\tau$ (time elapsed since last win) to the total episode value. Aggregating \( N \) episodes, we obtain a linear system where each episode $n$ contributes an equation. The \emph{age} $\tau$ at each auction is the time elapsed since the last won auction in that episode.

\paragraph{Data Representation.} Suppose that after $N$ episodes, we have access to a dataset \( D_N^k = \{\tau^1_n, \cdots, \tau^{\ell}_n, v_n \}_{n \in [N_e]} \), where each \(n\) represents an independent episode (a sequence of auctions before the episode ends). Here $N_e \le N$ is the number of episodes with at least one won auction. This dataset can be transformed into a matrix form \( \{X_n = [x_{1, n}, \cdots, x_{I, n}], v_n \}_{n \in [N_e]} \).  This allows us to express the relationship between the observed values and the coefficients as:
\[
\forall n, \quad v_\ell = X_n^t a + \epsilon_\ell \quad \iff \quad V = X^t a + \epsilon,
\]
where \(\epsilon_n\) is a centered, conditional sub-Gaussian noise term.

\paragraph{Estimation and Projection}

Let \(\bar{a} = (X X^t)^{-1} X^t V\) be the ordinary least squares (OLS) estimator of \( a \). Define the convex set $\mathcal{C} = \left\{ a \in \mathbb{R}^I \mid 0 = a_I \leq a_{I-1} \leq \cdots \leq a_1 \right\}$, which encodes the monotonicity and concavity constraint. The true parameter \( a \) is assumed to lie in \( C \). We then define the projected estimator $\hat{a} = \textsc{proj}_{\mathcal{C}}(\bar{a}),
$ where \(\textsc{proj}_C\) denotes the projection onto the convex set \( C \).
Given a dataset $D_N^k$, we denote by $\hat{k} = K(D_N^k)$ the resulting estimator of \( k \).

\begin{proposition}[Concentration of $k$ - Informal] \label{prop:concentration_k_main}
Let $\delta \in (0,1 )$ and $\epsilon > 0$. The bidding policies $\pi_n$ are allowed to depend on the past. If the bidding policies used to collect the data $D_N^k$ constructed from $N$ episodes verify $\pi_n(t_1) \ge \epsilon$ then we have:
\begin{align*}
    \max_{\tau \ge 0} |k^\star(\tau) - \hat{k}(\tau)|\le  \mathcal{O}\left(1/\sqrt{N}\right)\,.
\end{align*}
\end{proposition}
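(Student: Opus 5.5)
The plan is to treat the estimation of $k$ as a linear regression with adaptively chosen design. We then control the estimation error $\bar a - a$ in the norm induced by $XX^t$ through a self-normalized martingale bound. Finally, we convert that control into a sup-norm bound on $\hat k - k^\star$ by showing the design grows linearly in $N$.

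\textbf{Step 1 (regression structure).} For episode $n$ the observed value is $v_n = X_n^t a + \xi_n$. The regressor has entries
\[
x_{i,n}=\sum_j \mathbb{I}[\text{win}_j]\,\bigl(t_i\wedge\tau_j - t_{i-1}\wedge\tau_j\bigr),
\]
and the payments $b_j^-$ are observed, or subtracted using known bids and the second-price rule. The noise $\xi_n=\sum_j \varepsilon_j\mathbb{I}[\text{win}_j]$ is conditionally sub-Gaussian given the past and $X_n$. Its variance proxy is proportional to the number of wins. That number is Poisson-like with mean $\mu/\gamma$, so after truncating on a high-probability event (all $\ell_n \le C\log(N/\delta)$) the proxy is $\mathcal O(\log(N/\delta))$. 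Because $\pi_n$ depends only on past episodes, $(\xi_n)$ is a martingale difference sequence for the filtration generated by past episodes and the current design.

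\textbf{Step 2 (self-normalized bound).} Apply the Abbasi-Yadkori et al.\ self-normalized inequality to $S_N=\sum_n X_n\xi_n$. With probability $1-\delta$,
\[
\|\bar a-a\|_{XX^t}\le \widetilde{\mathcal O}\bigl(\sqrt{I\log(N/\delta)}\bigr).
\]
The projection onto the convex set $\mathcal C$ does not hurt, since $a\in\mathcal C$. We use the projection in the $XX^t$ metric, or accept a condition-number factor for the Euclidean one. Either way $\|\hat a - a\|\le \|\bar a-a\|$ up to constants.

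\textbf{Step 3 (design grows linearly; main obstacle).} We need $\lambda_{\min}(XX^t)\ge cN$ with high probability. This is where the hypothesis $\pi_n(t_1)\ge\epsilon$ enters. Since $q(b)>\alpha b$, a bid of at least $\epsilon$ wins with probability at least $\alpha\epsilon$. Consider the event that the episode lasts longer than $t_{I_k-1}=\bar T$, no auction occurs on $(0,\bar T)$ except a single one at age $\tau$, and that auction is won. On this event $X_n$ equals the increment vector of $\tau$. This event has probability bounded below uniformly in the policy. Varying $\tau$ over the intervals $[t_{i-1},t_i]$ makes these increment vectors span $\mathbb R^{I}$ (restricted to coordinates $1,\dots,I-1$, with $a_I=0$ fixed).

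One subtlety is that a small bid at ages below $t_1$ is permitted. The event above only needs a win at the single auction, whose age exceeds $t_1$ when we place it in later intervals. We also need monotonicity of the policy, or the hypothesis read as holding for all $t\ge t_1$. Given this, $\mathbb E[X_nX_n^t\mid\mathcal F_{n-1}]\succeq c_0 I$ with $c_0$ depending on $\epsilon,\alpha,\mu,\gamma,(t_i)$. A matrix Freedman/Azuma inequality, with $\|X_n\|$ bounded on the truncation event, then gives $\lambda_{\min}(XX^t)\ge c_0N/2$ once $N\gtrsim \log(I/\delta)/c_0^2$. The same argument gives $N_e\ge cN$.

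\textbf{Step 4 (conclusion).} Combining Steps 2 and 3 gives $\|\hat a-a\|_2\le \widetilde{\mathcal O}(1/\sqrt N)$. Since
\[
|k(\tau)-\hat k(\tau)|\le \sum_i |a_i-\hat a_i|\,(t_i-t_{i-1})\le \bar T\sqrt{I}\,\|\hat a-a\|_2
\]
uniformly in $\tau$ (using $a_I=\hat a_I=0$), the sup-norm bound $\mathcal O(1/\sqrt N)$ follows, with constants and logarithmic factors in $\delta$ hidden as the informal statement allows.
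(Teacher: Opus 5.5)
Your proposal has the same architecture as the paper's proof (Appendix~\ref{appendix-K-concentration}). Both use summed increment vectors as regressors, a self-normalized bound for the OLS error (Lemma~\ref{lemma:ols}), and matrix martingale concentration to get $\lambda_{\min}(XX^t)\gtrsim N$ from $\pi_n(t_1)\ge\epsilon$ (Lemma~\ref{app:lemma_learnability}). Both then apply the projection and use $\|k-\hat k\|_\infty\le\bar T\max_i|\hat a_i-a_i|$. The technical choices differ in three ways:
\begin{itemize}
\item The paper assumes a finite maximal number $L$ of auctions per episode instead of truncating, so the noise is $\sqrt L$-sub-Gaussian and $\|z_n\|$ is bounded.
\item The paper runs the design argument only over the $N_e$ episodes with a win, hence its Lemma~\ref{app:link_N_e}. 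Keeping all $N$ episodes, as you do, removes that lemma, since win-less episodes contribute $X_n=0$.
\item The paper uses monotonicity of $\pi_n$ implicitly, via $\int_0^T\pi_n\ge\epsilon(T-t_1)_+$; you are right to flag it.
\end{itemize}
Your $\ell_2$ use of the projection is the sound form: Euclidean projection is not $\ell_\infty$-non-expansive, as the paper asserts. Also, no condition-number factor is needed, since $\|\hat a-a\|_2\le\|\bar a-a\|_2\le\|\bar a-a\|_{XX^t}/\sqrt{\lambda_{\min}(XX^t)}$.

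One step needs repair, and it is your declared main step. The event in Step~3 does not pin down $X_n$. An episode that outlasts $\bar T$ with a single auction on $(0,\bar T)$ can still have later auctions, which may be won and add increment vectors to $X_n$. Since $X_nX_n^t$ is not monotone under adding summands, you cannot ignore them. The fix is to use the event $E$ that exactly one auction arrives in the whole episode, at some age $\tau\ge t_1$, and is won. On $E$ we have $X_n=z^\tau$, and $\tau$ has sub-density at least $\alpha\epsilon\,\mu\,\frac{\gamma}{\gamma+\mu}e^{-(\mu+\gamma)\tau}$ on $[t_1,\infty)$. This uses memorylessness of the episode end versus the next arrival. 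Because $X_nX_n^t\succeq X_nX_n^t\,\mathbb{I}[E]$, and $\{z^\tau:\tau\in[t_1,\bar T]\}$ spans the first $I-1$ coordinates triangularly, you get $\mathbb{E}[X_nX_n^t\mid\mathcal F_{n-1}]\succeq c_0\mathcal I$ on those coordinates. $E$ also lies inside your truncation event.

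This argument is more careful than the paper's Lemma~\ref{app:lemma_learnability}. That lemma writes $v^tS_jv$ as $\sum_i w_i\Delta_i^2v_i^2$, treating the second-moment matrix of $z^\tau$ as diagonal, although the cross terms $z_iz_{i'}$ do not vanish.

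Finally, your argument gives $\widetilde{\mathcal O}(1/\sqrt N)$, with $\log(N/\delta)$ factors rather than only $\log(1/\delta)$ ones. These come from the truncation and from the $\log\det$ term of the self-normalized bound. The paper's cleaner $\sqrt{\log(1/\delta)/N}$ rests on the finite-$L$ assumption and on the coordinate-wise form of Lemma~\ref{lemma:ols}.
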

\textbf{Proof:} Appendix~\ref{prop:concentration_k}

\begin{remark}
Given the structure of the problem, learning \( k \) requires only a simple condition: bidding above a threshold \( \epsilon \) at $t_1$ is sufficient to guarantee convergence of the estimator.
\end{remark}

\subsection{Plug-in Estimator for \( q \)}
\label{subsubsection:q}

As shown in~\cite{heymann2023repeated}, access to \( q \) is necessary to achieve the optimal policy. This introduces additional complexity compared to the static setting, where \( q \) does not need to be learned. Indeed, as demonstrated in~\cite{weed2016online}, the static problem is truthful, and the optimal policy simply bids the value \( k \).

\paragraph{Representation and Dataset}
We parameterize the win probability \( q \) as a piecewise linear, increasing function with slope coefficients $(c_i)_{i \in [I_q]}$: $
    \forall v \in [0,1]\,,  q(v) = \sum_{i = 1}^{I_q}c_i \cdot (b_i \wedge v - b_{i-1} \wedge v)\,,$
where $c_i > 0$ for all $i$ to ensure monotonicity, and $b_i$ its breakpoints. 

In second-price auctions, we observe wins but also the price paid when we win. Our dataset is therefore $\mathcal{D}^q = \left\{b_n, w_n, p_n \right\}_{n \in [N_b]},$
where $b_n$ is the bid, $w_n \in \{0,1\}$ indicates a win, $p_n$ is the price paid (with convention $p_n = 0$ when $w_n = 0$), and $N_b$ is the number of auctions observed.

\paragraph{Maximum Likelihood Estimation}
Since \( q \) is also the cumulative distribution of the competition with density $f_c(v) = \sum_{i=1}^{I_q} c_i \cdot \mathbb{I}[v \in (b_{i-1}, b_i]]$, we estimate the slope vector $c$ via maximum likelihood as the maximizer $\hat{c}$ of   
\begin{align*}
 \sum_{n \in [N_b]} \mathbb{I}[w_n = 1] \log f_c(p_n) + \mathbb{I}[w_n = 0] \log \left(1 - q_c(b_n) \right).
\end{align*}
This is an easy \emph{concave optimization problem}. MLE exploits the additional price information $p_n$ from second-price auctions, achieving efficient estimation compared to methods based solely on win/loss outcomes. Given a dataset $D_N^q$, we denote by $\hat{q} =Q(D_N^q)$ the resulting estimator of \( q \).

As stated by the extension theorem~\ref{th:extension-theorem}, knowledge of \( q \) up to the maximal optimal bid $\beta(\mathcal{S}(k^\star,q^\star))$ is sufficient to learn the optimal policy. Therefore, it is enough to focus on this region of the bid space and ensure convergence of the estimator there. The following result provides a sufficient condition for the concentration of our estimator of \( q \).

\begin{proposition}[Concentration of $q$ - Informal]\label{prop:concentration_q_main}
Let $\delta \in (0, 1)$ and $T > 0$. The bidding policies $\pi_n$ are allowed to depend on past observations. If the bidding policies used to collect the data $D_N^q$ constructed from $N$ episodes verify that $\forall t \ge T, \pi_n(t) \ge \beta(\mathcal{S}(k^\star,q^\star))$, then we have:
\begin{align*}
    \forall v \in [0, \beta(\mathcal{S}(k^\star,q^\star))],\, |q^\star(v) - \hat{q}(v)| \le \mathcal{O}\left(1/\sqrt{N} \right)\,.
\end{align*}

\end{proposition}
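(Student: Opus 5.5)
The plan is to reduce the uniform bound on $[0,\beta^\star]$, with $\beta^\star := \beta(\mathcal{S}(k^\star,q^\star))$, to a concentration bound on the finitely many slope parameters $c_i$ with $b_i \le \beta^\star$, plus the interval containing $\beta^\star$.

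\textbf{Step 1: reduce to the slopes.} For $v \le \beta^\star$,
\[
|q^\star(v)-\hat q(v)| \le \sum_{i \le I(\pi^\star)} |c_i^\star-\hat c_i|\,(b_i-b_{i-1}).
\]
It therefore suffices to show $|c_i^\star - \hat c_i| = \mathcal{O}(1/\sqrt{N})$ for each $i \le I(\pi^\star)$.

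\textbf{Step 2: make the log-likelihood separate by interval.} The MLE objective is only concave in $c$, not separable, because of the censored terms $\log(1-q_c(b_n))$. So I would rewrite the likelihood in terms of interval probabilities $p_i = c_i(b_i-b_{i-1})$. A win with price $p_n \in (b_{i-1},b_i]$ contributes $\log c_i$. A loss at bid $b_n$ contributes $\log \sum_{j>J(b_n)} p_j$ plus a partial-interval term. This is a right-censored multinomial (Kaplan--Meier type) problem, where the hazard on interval $i$ is identified from auctions whose bid exceeds $b_i$.

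\textbf{Step 3: get enough informative samples.} By hypothesis, every auction arriving at time $t \ge T$ has bid at least $\beta^\star \ge b_{I(\pi^\star)-1}$. Each interval $i < I(\pi^\star)$ is then fully "uncensored" for those auctions. The last interval $i = I(\pi^\star)$ is observed up to $\beta^\star$, which is all we need on $[0,\beta^\star]$. Episode lengths are exponential with rate $\gamma$ and arrivals are Poisson with rate $\mu$. Hence each episode independently contains, with probability $e^{-\gamma T}(\ldots)>0$, an auction after time $T$, and the expected number of such auctions is bounded below by a constant. A Chernoff/Freedman bound then shows the number $M$ of such auctions is $\Omega(N)$ with probability $1-\delta/2$. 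Adaptivity of $\pi_n$ is harmless here: the prices-to-beat are i.i.d. and independent of the past, and the event "bid $\ge \beta^\star$" is guaranteed deterministically by the hypothesis.

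\textbf{Step 4: concentrate the estimator.} For these auctions, the count of prices-to-beat falling in $(b_{i-1},b_i]$ is a sum of i.i.d. Bernoulli($p_i$) variables. The auctions with lower bids (before $T$) add only further censored information. I would show the MLE restricted to the first $I(\pi^\star)$ hazards coincides with, or is within $\mathcal{O}(1/\sqrt{N})$ of, the product-limit estimator built from all observations, via a first-order optimality argument. Then Hoeffding plus a union bound over $I_q$ intervals gives $|\hat p_i - p_i^\star| \le C\sqrt{\log(I_q/\delta)/N}$, and dividing by $b_i-b_{i-1}$ yields Step 1.

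\textbf{Main obstacle.} I expect the main obstacle to be Step 4. The MLE couples intervals below $\beta^\star$ with unconstrained intervals above it, and the data are collected adaptively, so a clean i.i.d. argument is not immediate. My first attempt would be to use the hazard reparametrization, under which the log-likelihood is exactly separable across intervals. Each hazard estimate is then a ratio of martingale counts: events in interval $i$ divided by the number at risk. The denominator for $i \le I(\pi^\star)$ is lower bounded by $\Omega(N)$ from Step 3. A Freedman-type martingale inequality then handles the adaptive policies.
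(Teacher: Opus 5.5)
Your route is sound and genuinely different from the paper's. The paper stays in the slope coordinates $c$. It Taylor-expands the score in the first $i_\star=I(\pi^\star)$ coordinates (Lemma~\ref{lemma:mle}) and bounds the score at $c^\star$ by vector Azuma--Hoeffding. It lower-bounds the Hessian by a matrix Chernoff inequality for martingales, using only the diagonal win term $P(W=1,P\in(b_{i-1},b_i])/c_i^2$ and the coverage condition. It then transfers this to the integrated Hessian with a Hessian-Lipschitz bound and Weyl's inequality, which requires $\hat c$ to be already close to $c^\star$. Finally it converts auctions to episodes via $N_b\ge \mu N/(2\gamma)$.

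You instead pass to interval hazards $h_j=c_j\Delta^q_j/(1-q(b_{j-1}))$. Then $c_{:i_\star}$ depends only on $h_{:i_\star}$, and the log-likelihood is exactly $\sum_j\ell_j(h_j)$ with $\ell_j(h)=D_j\log h+F_j\log(1-h)+\sum_{n\in P_j}\log(1-h\theta_n)$. This holds because a loss at a bid lying a fraction $\theta_n$ into interval $J$ contributes $\sum_{j<J}\log(1-h_j)+\log(1-h_J\theta_n)$. Your route buys two things:
\begin{itemize}
\item an exact decoupling from the poorly identified slopes above $\beta^\star$. In $c$-coordinates, losses with $b_n>b_{i_\star}$ make $\nabla_{:i_\star}L_n$ depend on $c_{>i_\star}$, a dependence the paper's restricted expansion leaves implicit;
\item global strong concavity, $-\ell_j''(h)\ge D_j+F_j+\sum_{n\in P_j}\theta_n^2$. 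No localization or Hessian-Lipschitz step is needed, only a scalar martingale bound on $\ell_j'(h_j^\star)$.
\end{itemize}
The paper's template, in return, is generic and does not rely on the piecewise-constant density with known breakpoints.

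Two points need tightening.

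First, the hypothesis $\pi_n(t)\ge\beta^\star$ for $t\ge T$ concerns the policy's argument, i.e.\ the time since the last win, not the absolute time in the episode. After a win the age resets, so ``every auction arriving at time $t\ge T$ has bid at least $\beta^\star$'' is false as written. Count instead, for example, the first auction of each episode when it arrives after time $T$; its age is its arrival time. This event depends only on the exogenous arrivals and episode length and has probability $\frac{\mu}{\mu+\gamma}e^{-(\mu+\gamma)T}$. It is i.i.d.\ across episodes whatever the (adaptive) policies, so Chernoff gives $M=\Omega(N)$.

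Second, bids generically fall inside intervals. The hazard MLE is the ratio ``events over number at risk'' only when $P_j=\emptyset$. Moreover, the i.i.d.\ Bernoulli/Hoeffding count in your Step 4 controls the empirical frequencies of the uncensored subsample, not the MLE $\hat q$ of the statement. Use instead $|\hat h_j-h_j^\star|\le|\ell_j'(h_j^\star)|/\inf_h(-\ell_j''(h))$. For $j=i_\star$, every auction at age $\ge T$ whose price-to-beat exceeds $b_{i_\star-1}$ contributes curvature at least $(\theta^\star)^2$, where $\theta^\star=(\beta^\star-b_{i_\star-1})/\Delta^q_{i_\star}>0$. This is the analogue of the paper's $\Delta^q(\pi^\star)$ in $\kappa_\star$.
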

\textbf{Proof:} See Appendix~\ref{prop:concentration_q}
\begin{remark}
Constructing bidding policies that cover the maximal optimal bid over a non-negligible time interval is sufficient to ensure efficient learning of \( q \) on the region of interest.
\end{remark}

\section{Learning algorithms}
\label{sect:analysis}

\begin{wrapfigure}[8]{o}{0cm}
\vspace{-2.2\baselineskip}
\begin{minipage}[t]{0.415\textwidth}
\vspace{-2.2\baselineskip}
\begin{algorithm}[H]
\caption{$\textsc{Offline-learning}(\mathcal{D})$}
\label{algo2}
\begin{algorithmic}[1]
\Require Dataset $\mathcal{D}=(D^k,D^q)$
\State Set $\hat{k} = K(D^k)$
\State Set $\hat{q} = Q(D^q)$
\State Set $\hat{\pi}= \mathcal{S}(\hat{k},\hat{q})$
\State Rollout $\hat{\pi}$
\end{algorithmic}
\end{algorithm}
\end{minipage}
\vspace{-1em}
\end{wrapfigure}

The model's two primitives, \( q \) and \( k \), require \emph{distinct treatments} due to differences in their identifiability.
On one hand, the valuation \( k \) can be estimated for bids that win with non-zero probability. Observed winning bids provide direct information about the underlying valuation distribution, allowing for consistent estimation in regions where bids are competitive.
In contrast, the competition primitive \( q \) is only identifiable up to the highest observed bid threshold. If the bidding policy never submits bids above a certain value, the competitive environment beyond that threshold remains unobserved. This fundamental asymmetry between the identifiability of \( k \) and \( q \), in connection with Theorem~\ref{th:extension-theorem} shapes the design and analysis of our learning algorithms. 


We now present four learning algorithms with different theoretical guarantees
and practical merit. They all share the structure described in
Algorithm~\ref{algo2}: data is generated, used to learn possibly biased
estimators, which are then plugged into the solver to produce a policy.
This process is then repeated.



\subsection{Asymptotic Convergence without Regret Guarantees}

Our first result establishes that a simple iterative plug-in approach converges to the optimal policy without requiring explicit exploration, though it does not provide finite-time regret bounds.

\begin{theorem}[Asymptotic Convergence]
\label{th:asymp}
Consider initial datasets $\mathcal{D}^k_{0}$, $\mathcal{D}^q_{0}$ generated by a policy $\pi_0$, and the iterative sequence:
\begin{align*}
    \pi_n &= \mathcal{S}(K(\mathcal{D}^k_n), Q(\mathcal{D}^q_n))\\
    \mathcal{D}^k_{n+1} &= \mathcal{D}^k_{n} \cup \mathcal{D}^k_{\pi_n}\\
    \mathcal{D}^q_{n+1} &= \mathcal{D}^q_{n} \cup \mathcal{D}^q_{\pi_n},
\end{align*}
where $\mathcal{D}^{.}_{\pi}$ is a dataset generated by policy $\pi$ over a strictly positive number of episodes.
If
\begin{align*}
   \limsup_{n \to \infty} \beta(\pi_n) > 0,
\end{align*}
then for any $\varepsilon > 0$, $\pi_n$ is $\varepsilon$-optimal for $n$ sufficiently large.
\end{theorem}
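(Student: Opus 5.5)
The argument rests on two facts. First, the condition $\limsup_n \beta(\pi_n)>0$ makes every estimator of $k$ consistent. Second, the extension invariance theorem (Theorem~\ref{th:extension-theorem}) lets us ignore the part of $q$ that is never explored. I would organise the proof in three steps.

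\textbf{Step 1: consistency of $\hat k_n$.}
Assume first that the asymptotic bids exceed some $\epsilon>0$ along a subsequence; the precise form of this condition is discussed at the end.

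- The policies produced by the solver are monotone, as seen from the ODE of Lemma~\ref{lemma:b_dynamics} together with $\pi^\star=\max(0,k+V^\star(0)-V^\star)$.
- Hence bidding at level $\beta(\pi_n)\ge\epsilon$ at large ages yields infinitely many episodes with wins.
- Because the datasets are nested unions, these wins accumulate.
- The design matrix $XX^t$ then grows at least linearly in the number of such episodes. Proposition~\ref{prop:concentration_k_main}, or rather its proof, which only needs a linear growth of the minimal eigenvalue, gives $\sup_\tau|\hat k_n-k^\star|\to0$ almost surely.

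\textbf{Step 2: consistency of $\hat q_n$ on the explored region.}
Let $\bar\beta=\limsup_n\beta(\pi_n)$.

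- Each $\pi_n$ bids at least close to $\beta(\pi_n)$ after some time, since it approaches its asymptote monotonically.
- The cumulative data therefore contain infinitely many auctions at bids arbitrarily close to $\bar\beta$.
- By the argument of Proposition~\ref{prop:concentration_q_main}, the MLE $\hat q_n$ converges uniformly to $q^\star$ on $[0,\bar\beta-\eta]$ for every $\eta>0$.
- Under the piecewise-linear parametrisation this extends to the breakpoint interval $[0,b_{I(\cdot)}]$ whenever a positive mass of observations lies inside the last cell.

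\textbf{Step 3: identify the limit and conclude.}
Let $\tilde q_n$ coincide with $\hat q_n$ on the explored interval and with $q^\star$ beyond it.

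- By Step 2 and continuity of the solver in $(k,q)$ (the shooting characterisation of Theorem~\ref{lemma:cauchy}, with continuous dependence of ODE solutions on parameters), any limit point of $\pi_n$ is a fixed point $\pi_\infty=\mathcal S(k^\star,q_\infty)$.
- Here $q_\infty=q^\star$ on $[0,\beta(\pi_\infty)]$, where $\beta(\pi_\infty)\le\bar\beta$ is the level actually explored.
- Theorem~\ref{th:extension-theorem}, applied with $\hat q=q_\infty$ and $\tilde q=q^\star$, gives
\[
\mathcal U(k^\star,q_\infty,\pi_\infty,0)=\mathcal U(k^\star,q^\star,\pi^\star,0).
\]
- The payoff of $\pi_\infty$ in the true environment only involves $q$ below its own bids, so it equals $\mathcal U(k^\star,q_\infty,\pi_\infty,0)$. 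Thus the limit policy is optimal.
- Continuity of $\mathcal U$ in $(k,q,\pi)$ transfers this to $\pi_n$ for large $n$, up to $\varepsilon$.

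\textbf{Main obstacle.}
The hard step is Step 3 along the whole sequence rather than along a subsequence. One needs $\beta(\pi_n)$ not to exceed the explored region by a margin that persists, since there $\hat q_n$ could be arbitrarily wrong.

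- The argument I would try is self-correcting. If $\beta(\pi_n)$ exceeds the explored level infinitely often, those rollouts explore it, which enlarges the consistent region.
- So the explored level $\bar\beta_n$ is nondecreasing and converges to some $\bar\beta$.
- By compactness (bounded bids, since $k$ is bounded), every limit policy satisfies $\beta\le\bar\beta$ with $q$ correct up to $\bar\beta$.
- Extension invariance then applies to every limit point.

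A secondary subtlety is that $\limsup>0$ only guarantees positive bids along a subsequence. This suffices for Step 1, because the data are cumulative, so those episodes persist in the dataset for all later $n$.
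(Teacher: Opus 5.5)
Your proposal rests on the same two pillars as the paper: consistency of the plug-in estimators on the explored region, then Theorem~\ref{th:extension-theorem}. Your Step~3, however, takes a different route.

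The paper never passes to limit points. With $B=\limsup_n\beta(\pi_n)$, it uses that $k^\star$ and $q^\star$ on $[0,B]$ are eventually estimated to any accuracy $\epsilon$, and applies Lemma~\ref{lemma:epsilon-opt} at each large $n$. That lemma only requires accuracy of $\hat q_n$ on $[0,\beta(\mathcal S(\hat k_n,\hat q_n))]=[0,\beta(\pi_n)]$. It also already packages the extension argument at finite $n$: Lemma~\ref{lemma:q_extension} builds a $\tilde q$ equal to $\hat q_n$ there and uniformly $\epsilon$-close to $q^\star$, and Lemma~\ref{lemma:dynamicsensitivity} transfers the values. By the definition of the $\limsup$, $\beta(\pi_n)\le B+\eta$ for all large $n$, and the bounded slopes in $\mathcal Q$ absorb the margin $\eta$. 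So the lemma's hypothesis holds along the whole sequence at once. The ``main obstacle'' you describe, and your self-correcting argument with a nondecreasing explored level, are not needed.

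Your route instead needs compactness of $\{\pi_n\}$ and continuity of $\mathcal S$ in $(k,q)$. The paper never proves this continuity, and it is not automatic. Theorem~\ref{lemma:cauchy} selects $V(0)$ through a boundedness condition at $t=+\infty$, so finite-time continuous dependence on parameters does not suffice. You would need, for example, continuity of the optimal value in $(k,q)$, or a robust form of the divergence dichotomy behind the bisection solver. In exchange, you get a slightly stronger conclusion, $\pi_n\to\pi^\star$ uniformly, since the proof of Theorem~\ref{th:extension-theorem} yields $\mathcal S(k^\star,q_\infty)=\mathcal S(k^\star,q^\star)$. You also get an explicit self-confirming picture. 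The paper's route is quantitative instead: it gives suboptimality $4\mu^2\epsilon/\gamma^2$ in terms of estimation error.

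One justification in Step~1 should be repaired. Every win at an age beyond $\bar T$ contributes a regressor whose coordinates on the unknown slopes are proportional to $(t_1-t_0,\dots,t_{I_k-1}-t_{I_k-2})$. Such wins therefore identify only $k(\bar T)$ and do not make $\lambda_{\min}$ grow. So ``bidding $\beta(\pi_n)\ge\epsilon$ at large ages'' is not the right mechanism. What you need is wins in every piece $(t_{i-1},t_i]$, that is, a lower bound on bids at small ages along the subsequence, in the spirit of Condition~\ref{cond:k}.

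This can plausibly be extracted from the solver. For $k\in\mathcal K$, concavity and $k(0)=0$ give $k(\tau)\le a_1\tau$, hence $\gamma V(0)<a_1$. The bid then leaves $0$ with slope $a_1-\gamma V(0)>0$ and, being nondecreasing, stays positive. A uniform lower bound along the subsequence still needs an extra compactness argument, so this step must be argued explicitly. The paper itself is silent here, asserting consistency of $\hat k_n$ only ``by construction''.
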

 Theorem~\ref{th:asymp} is a consequence of Theorem~\ref{th:extension-theorem}, through Lemma~\ref{lemma:epsilon-opt}.
 
\textbf{Proof:}
See Appendix~\ref{proof:th:asymp}
\begin{remark}
The condition $\limsup_{n \to \infty} \beta(\pi_n) > 0$ ensures that the sequence of policies explores a region where the competition distribution \( q \) is identifiable. This theorem shows that no explicit randomization or exploration is required: the policy naturally converges to optimality through greedy plug-in estimation alone. 
Another way to understand this result is that  self-confirming policies are optimal.
However, this result does not provide finite-time regret guarantees, as convergence may be slow in the initial phases.
\end{remark}

\paragraph{Guidelines for Algorithms with Sub-Linear Regret.} We give the following core result that links $\epsilon$-optimality of the estimators $q$ and $k$ to the optimality of $\pi = \mathcal{S}(k,q)$ and characterizes the regret.
\begin{proposition}[$\epsilon$-optimality]\label{prop:epsilon_opt} Let $\epsilon >0$, $q$ and $k$ such that $\lVert k^\star - k \rVert_\infty \le \epsilon$ and $\max_{[0, \beta(\pi^\star)]} |q^\star(v) - q(v)| \le \epsilon$. We have for small $\epsilon$:
    \begin{align*}
    &\lVert \pi^\star- \pi\rVert_{\infty} \le \mathcal{O}(\epsilon)\,,\quad
    &\lVert V^\star-V^{\pi}\rVert_{\infty}\leq \mathcal{O}(\epsilon^2).
\end{align*}
\end{proposition}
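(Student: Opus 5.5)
Our plan has two parts: a Lipschitz-type perturbation bound for the bid, and then a second-order (envelope) argument for the value. Write $\pi^\star=\mathcal{S}(k^\star,q^\star)$ and $\pi=\mathcal{S}(k,q)$. By Theorem~\ref{th:extension-theorem} we may replace $q$ by $\tilde q$, equal to $q$ on $[0,\beta(\pi)]$ and to $q^\star$ beyond it, provided we first show $\beta(\pi)\le\beta(\pi^\star)+\mathcal{O}(\epsilon)$. Up to an $\mathcal{O}(\epsilon)$ boundary layer, handled by continuity of $q,q^\star$, this reduces us to a global sup-norm bound $\lVert q-q^\star\rVert_\infty=\mathcal{O}(\epsilon)$ on the relevant bid range. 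Then $f(v)=q(v)v-p(v)=\int_0^v q(s)\dd s$ (integration by parts) satisfies $\lVert f-f^\star\rVert_\infty=\mathcal{O}(\epsilon)$ on bounded ranges. Moreover $f'=q$ is bounded, so $f$ is Lipschitz.

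\textbf{Step 1: perturbation of $V(0)$ and $V(\cdot)$.} By Theorem~\ref{lemma:cauchy}, $V^\star(0)$ is the unique $y_0$ for which $Z^{y_0}$ stays bounded. The map $v\mapsto\Phi(t,v,\lambda)$ has derivative $\gamma+\mu q(\cdot)>0$, so the equation is repulsive forward in time. Two solutions with different $y_0$ separate at exponential rate at least $\gamma$, and the bisection criterion is monotone in $y_0$.

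We bound $|y_0-y_0^\star|$ as follows. Run the equation with the true primitives from $y_0^\star+C\epsilon$. The perturbation of $\Phi$ is at most $\mu\lVert f-f^\star\rVert_\infty+\mu\,\mathrm{Lip}(f)\lVert k-k^\star\rVert_\infty=\mathcal{O}(\epsilon)$. A comparison argument then shows the perturbed solution from $y_0^\star+C\epsilon$ diverges to $+\infty$, and the one from $y_0^\star-C\epsilon$ diverges to $-\infty$, for $C$ large enough relative to $\gamma^{-1}$. Hence $|V(0)-V^\star(0)|=\mathcal{O}(\epsilon)$.

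Equivalently, and more cleanly, write the bounded solution as the fixed point of the backward-integral representation $V(\tau)=\int_\tau^\infty e^{-\gamma(s-\tau)}\mu f(k(s)+V(0)-V(s))\dd s$. This is a contraction-type equation that gives $\lVert V-V^\star\rVert_\infty=\mathcal{O}(\epsilon)$ directly. Theorem~\ref{theorem:synthesis} then gives $\pi(\tau)=\max(0,k(\tau)+V(0)-V(\tau))$. Since $\max(0,\cdot)$ is 1-Lipschitz, we get $\lVert\pi-\pi^\star\rVert_\infty\le\lVert k-k^\star\rVert_\infty+2\lVert V-V^\star\rVert_\infty=\mathcal{O}(\epsilon)$. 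This also justifies the reduction on $\beta(\pi)$ above, after a short bootstrap.

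\textbf{Step 2: quadratic value loss.} Here $V^\pi$ denotes the true-environment value of $\pi$, i.e.\ $\mathcal{U}(k^\star,q^\star,\pi,\cdot)$. We use the fact that $\pi^\star$ maximizes pointwise the Hamiltonian $b\mapsto \mu\big(q^\star(b)(k^\star(t)+V^\star(0)-V^\star(t))-p^\star(b)\big)$. Its derivative in $b$ is $\mu\,q^{\star\prime}(b)\,(k^\star+V^\star(0)-V^\star-b)$, which vanishes at $b=\pi^\star(t)$; this is the second-price truthfulness with respect to the continuation-adjusted value.

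A performance-difference identity writes $V^\star-V^\pi$ as a discounted integral, with rate $\gamma$ and weighted by the state distribution under $\pi$, of the Hamiltonian gap $H(\pi^\star(t))-H(\pi(t))$, plus a reset term. The reset term carries the same structure, since $V^\star(0)$ appears inside the gap. The gap equals $\mu\int_{\pi}^{\pi^\star}q^{\star\prime}(b)(\pi^\star-b)\dd b\le \tfrac{\mu}{2}\lVert q^{\star\prime}\rVert_\infty|\pi^\star-\pi|^2$. The slopes $c_i$ are bounded, so this is $\mathcal{O}(\epsilon^2)$. Integrating against the discount gives $\lVert V^\star-V^\pi\rVert_\infty=\mathcal{O}(\epsilon^2)$.

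\textbf{Main obstacle.} We expect the obstacle to be the performance-difference identity in this reset structure. The state jumps to $0$ upon a win, so the identity must include the term $q^\star(\pi(t))\,(V^\star(0)-V^\star(t))$ in the Hamiltonian, and we must check that $\pi^\star$ remains a pointwise maximizer for every $t$. We would derive it by applying the Dynkin/It\^o formula for the piecewise-deterministic process $\tau_t$ (age since last win, killed at rate $\gamma$) to $V^\star(\tau_t)$ under policy $\pi$.

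A secondary difficulty is the kink of $q$ at breakpoints and the clipping at $0$. Both only affect the constant, because the gap bound uses only a bounded one-sided density.
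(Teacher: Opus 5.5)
Your Step 2 is the paper's argument. The performance-difference identity is Lemma~\ref{lemma:value_bregman}, which the paper derives from the dynamic-programming identity and a $\frac{\mu}{\mu+\gamma}$-contraction rather than from Dynkin's formula. Your Hamiltonian gap is exactly $\mu D_Q(\pi^\star,\pi)\le\frac{\mu c_{\max}}{2}\lvert\pi^\star-\pi\rvert^2$, which is Lemma~\ref{lemma:reg_quad}.

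Step 1 takes a genuinely different route. The paper never perturbs the ODE. After the same synthesis step $\lvert\pi^\star-\pi\rvert\le\lVert k^\star-k\rVert_\infty+2\lVert V^\star-V\rVert_\infty$, it splits $V^\star-V$ through the true value $V^\pi$ of the plug-in policy. It bounds $\lVert V^\pi-V\rVert_\infty$ by the evaluation-sensitivity Lemma~\ref{lemma:dynamicsensitivity} and $\lVert V^\star-V^\pi\rVert_\infty$ by the quadratic bound, then solves $\Delta\le A\epsilon+B\Delta^2$ (Lemma~\ref{lemma:eps_pol}). That recycles lemmas used elsewhere, but it has two weak points:
\begin{itemize}
\item Choosing the small root tacitly needs a priori smallness of $\Delta$.
\item Lemma~\ref{lemma:dynamicsensitivity}, applied to the plug-in policy, needs $q$ close to $q^\star$ up to $\beta(\pi)$, not $\beta(\pi^\star)$.
\end{itemize}
Your direct stability estimate is linear in $\epsilon$, so it has no branch-selection issue, and you rightly noticed the range issue.

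One justification in your Step 1 is wrong, though it is easily repaired. The $\gamma$-discounted form $V(\tau)=\int_\tau^\infty e^{-\gamma(s-\tau)}\mu f(k(s)+V(0)-V(s))\dd s$ is not a sup-norm contraction. Its Lipschitz constant can reach $2\mu\sup q/\gamma$, which exceeds $1$ in the regime $\mu>2\gamma$ the paper uses; the bounded solution is singled out by forward instability, not by a contraction. Your shooting comparison does give $\lvert V(0)-V^\star(0)\rvert=\mathcal{O}(\epsilon)$: the separation rate is indeed at least $\gamma$ once the $y_0$ in the drift is accounted for. For the sup-norm bound, linearize instead.

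Set $D=V-V^\star$, $e=k-k^\star$, $d_0=D(0)$, and $r(t)=(f-f^\star)\big(k^\star(t)+V^\star(0)-V^\star(t)\big)$. Let $\bar q(t)\in[0,1]$ be a mean-value slope of the perturbed $f$. Then $\dot D=(\gamma+\mu\bar q)D-\mu\bar q\,(e+d_0)-\mu r$. Its unique bounded solution satisfies $\lVert D\rVert_\infty\le\frac{\mu}{\mu+\gamma}(\epsilon+\lvert d_0\rvert)+\frac{\mu}{\gamma}\lVert r\rVert_\infty$. Taking $t=0$ also bounds $\lvert d_0\rvert$, so the shooting step is not even needed. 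Equivalently, you can use the uniformized Bellman form with kernel $e^{-(\gamma+\mu)t}$, which is a genuine $\frac{\mu}{\mu+\gamma}$-contraction.

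This decomposition also fixes your range reduction. Here $f-f^\star$ is evaluated only at $\pi^\star(t)\le\beta(\pi^\star)$, where $\lvert f-f^\star\rvert\le\beta(\pi^\star)\epsilon$. Everywhere else you use only that $f$ is $1$-Lipschitz and nondecreasing. So the hypothesis on $[0,\beta(\pi^\star)]$ suffices as stated, and you can drop the extension-theorem reduction. As written, that reduction relies on a bootstrap you do not supply, and its boundary layer needs a uniform slope bound on $q$, not mere continuity.
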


\textbf{Proof:} This is the result of combining Lemmas~\ref{lemma:reg_quad} and~\ref{lemma:eps_pol}.

While Theorem~\ref{prop:epsilon_opt} does not directly yield a no-regret algorithm, it provides the final ingredient needed to establish regret guarantees. In particular, \( \epsilon \)-optimality can be achieved with a sufficiently accurate estimate of \( k \) and an estimate of \( q \) that is accurate up to \( \beta(\mathcal{S}(k^\star, q^\star)) \). To ensure this, and as stated in Propositions~\ref{prop:concentration_k_main} and~\ref{prop:concentration_q_main}, the bidding policies must bid sufficiently high at time \( t_1 \) and cover the maximal optimal bid \( \beta(\mathcal{S}(k^\star, q^\star)) \). The difficulty is that this quantity is unknown, which forces sub-linear regret strategies to exploit the structure of the problem in order to guarantee adequate coverage of this region.

\subsection{Two-phase and Three-phase Algorithms}



Our second and third algorithms implement the idea above by simply bidding higher than \( \beta(\mathcal{S}(k^\star, q^\star)) \) for a sufficient number of sessions, using an approach reminiscent of the explore-then-commit strategy from bandit problems. We show that these simple algorithms achieve sub-linear regret.


\begin{theorem}[Two-phase algorithm]\label{th:twostep-main}
Consider the \textit{ Learn-Then-Rollout} algorithm:
\begin{enumerate}
    \item \textbf{Exploration Phase ($N_1$ episodes):} Use a exploratory policy $\pi_{\text{x}}$ that bids $k(\infty)$ (the maximal value) to collect data for estimating \( k \) and \( q \). Compute $\hat{k} = K(\mathcal{D}^k_{N_1})$ and $\hat{q}=Q(\mathcal{D}^q_{N_1})$.
    \item \textbf{Exploitation Phase ($N_2 = N - N_1$ episodes):} Use policy $\pi = \mathcal{S}(\hat{k}, \hat{q})$.
\end{enumerate}    

Let $\delta \in (0, 1)$. There exists $C_0$ such that once $N \geq C_0$, setting $N_1 = \mathcal{O}(\sqrt{N})$ yields with probability at least $1-\delta$ that $\operatorname{Regret}(N) \leq \mathcal{O}(\sqrt{N}).$
\end{theorem}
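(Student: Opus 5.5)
We split the regret into exploration and exploitation contributions, $\operatorname{Regret}(N) = \sum_{n\le N_1}\bigl(V_{\pi^\star}(0)-V_{\pi_{\text{x}}}(0)\bigr) + \sum_{n>N_1}\bigl(V_{\pi^\star}(0)-V_{\pi}(0)\bigr)$, and bound each term separately.

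\textbf{Exploration term.} Values and payments are bounded, since $k\le k(\infty)$ and $q\le 1$. The expected number of auctions per episode is $\mu/\gamma$, because episodes have length with rate $\gamma$. Hence $V_{\pi^\star}(0)-V_{\pi_{\text{x}}}(0)\le C_1$ for a constant $C_1$ depending only on $k(\infty),\mu,\gamma$. The first sum is therefore at most $C_1 N_1$.

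\textbf{Concentration of the estimates.} First we check that $\pi_{\text{x}}$ meets the hypotheses of Propositions~\ref{prop:concentration_k_main} and~\ref{prop:concentration_q_main}.
\begin{itemize}
\item For $k$: $\pi_{\text{x}}(t_1)=k(\infty)\ge \epsilon$ for some $\epsilon>0$ as soon as $k$ is nontrivial.
\item For $q$: Theorem~\ref{theorem:synthesis} gives $\pi^\star(\tau)=\max(0,k(\tau)+V^\star(0)-V^\star(\tau))$. We argue that $\beta(\pi^\star)\le k(\infty)$ by showing $V^\star(\infty)\ge V^\star(0)$. This follows because $V^\star$ is nondecreasing in the age: a larger age gives pointwise larger values, since $k$ is increasing. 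Therefore $\pi_{\text{x}}(t)=k(\infty)\ge\beta(\mathcal{S}(k^\star,q^\star))$ for all $t\ge T=0$.
\end{itemize}
Here $k(\infty)$ is unknown, so we must handle the value being bid. We will either use an a priori upper bound on the bids, which keeps all arguments intact, or note that bidding any $\bar b\ge\beta(\pi^\star)$ suffices.

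Applying both propositions with confidence $\delta/2$ each and a union bound, we get with probability $1-\delta$:
\begin{itemize}
\item $\|k^\star-\hat k\|_\infty\le C_2/\sqrt{N_1}$,
\item $\sup_{[0,\beta(\pi^\star)]}|q^\star-\hat q|\le C_2/\sqrt{N_1}$,
\end{itemize}
where $C_2$ depends on $\log(1/\delta)$ and the problem constants.

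\textbf{Exploitation term.} Proposition~\ref{prop:epsilon_opt} only holds for small $\epsilon$, say $\epsilon\le\epsilon_0$. This is where the threshold $C_0$ comes from: we require $N_1\ge (C_2/\epsilon_0)^2$. With $N_1=\lceil c\sqrt N\rceil$, this holds once $N\ge C_0$. Proposition~\ref{prop:epsilon_opt} with $\epsilon=C_2/\sqrt{N_1}$ then gives $V^\star(0)-V^{\pi}(0)\le C_3\epsilon^2=C_3C_2^2/N_1$. Summing over the $N_2\le N$ exploitation episodes gives at most $C_3C_2^2 N/N_1$.

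\textbf{Balancing.} The total regret is at most $C_1N_1+C'N/N_1$. Choosing $N_1\asymp\sqrt{N}$ balances the two terms and yields $\mathcal O(\sqrt N)$. We note that the quadratic bound in Proposition~\ref{prop:epsilon_opt} is what makes $\sqrt N$ optimal for this scheme. A merely linear bound would give $N_1\asymp N^{2/3}$.

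\textbf{Main obstacle.} The delicate step is verifying the coverage hypothesis of Proposition~\ref{prop:concentration_q_main}. The explore policy must bid at least the unknown $\beta(\mathcal S(k^\star,q^\star))$, which requires the monotonicity of $V^\star$ (hence $\beta(\pi^\star)\le k(\infty)$). I would try to prove this monotonicity first, by a coupling argument comparing two states with different ages under the same policy. A secondary subtlety is making the constants of the informal propositions explicit, so that $C_0$ is well defined.
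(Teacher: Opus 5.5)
Your proposal is correct and follows essentially the same route as the paper's proof. Both split the regret into an $\mathcal{O}(N_1)$ exploration cost and an exploitation cost: the concentration propositions give errors of order $1/\sqrt{N_1}$, the quadratic bound of Proposition~\ref{prop:epsilon_opt} turns these into a per-episode exploitation regret of order $1/N_1$, and choosing $N_1\asymp\sqrt N$ balances the two terms. Your version is in fact slightly more careful than the paper's write-up in two places: you justify the coverage condition $\beta(\pi^\star)\le k(\infty)$ via monotonicity of $V^\star$ in the age, which the paper merely asserts, and you build the small-$\epsilon$ requirement into $C_0$, whereas the paper takes only $C_0=\max\{N^k_0,N^q_0\}^2$.
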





\textbf{Proof:}
See Appendix~\ref{th:appendix:2phase}

\begin{theorem}[Three-phase Algorithm]\label{th:threestep-main}
Consider the \textit{Learn-$k$-Then-$q$-Then-Rollout} algorithm:
\begin{enumerate}
    \item \textbf{Exploration Phase for $k$ ($N_1$ episodes):} Use a fixed exploratory policy $\pi_{\texttt{x}}$ that bids $b_0 > 0$ to collect data for estimating \( k \). Compute $\hat{k} = K(\mathcal{D}^k_{N_1})$.
    \item \textbf{Exploration Phase for $q$ ($N_2$ episodes):} Use the learned $\pi_k =  \hat{k}$ to collect data for estimating \( q \). Compute $\hat{q} = Q(\mathcal{D}^q_{N_2})$.
    \item \textbf{Exploitation Phase ($N_3 = N - N_1 - N_2$ episodes):} Use policy $\pi = \mathcal{S}(\hat{k}, \hat{q})$.
\end{enumerate}    
Let $\delta \in (0, 1)$. There exists $C_0$ such that once $N \geq C_0$, setting $N_1 = N_2 =  \mathcal{O}(\sqrt{N})$ yields with probability at least $1-\delta$ that $\operatorname{Regret}(N) \leq \mathcal{O}(\sqrt{N}).$
\end{theorem}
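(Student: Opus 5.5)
The plan is to split the regret over the three phases and bound each one separately. Let $R_{\max}$ be a uniform bound on $V_{\pi^\star}(0)-V_{\pi}(0)$ over admissible policies. Such a bound exists because $k$ is bounded by $k(\infty)$, payments are nonnegative, and the expected number of auctions per episode is $\mu/\gamma$. Then
\begin{align*}
\operatorname{Regret}(N) \le (N_1+N_2) R_{\max} + N_3\,\bigl(V_{\pi^\star}(0)-V_{\mathcal{S}(\hat k,\hat q)}(0)\bigr).
\end{align*}
With $N_1=N_2=\mathcal{O}(\sqrt N)$, the first two terms contribute $\mathcal{O}(\sqrt N)$. It remains to show that, with probability at least $1-\delta$, the exploitation gap is $\mathcal{O}(1/\sqrt N)$ per episode. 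Proposition~\ref{prop:epsilon_opt} gives this, in fact the stronger $\mathcal{O}(\epsilon^2)$ with $\epsilon\sim N^{-1/4}$, provided both estimators are $\mathcal{O}(N^{-1/4})$-accurate on the relevant domains.

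\textbf{Phase 1.} The policy bids the constant $b_0>0$, so in particular $\pi(t_1)\ge b_0$. Proposition~\ref{prop:concentration_k_main} with confidence $\delta/2$ then gives $\lVert \hat k - k^\star\rVert_\infty \le C_k/\sqrt{N_1}=\mathcal{O}(N^{-1/4})$.

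\textbf{Phase 2.} The policy bids $\pi_k=\hat k$, a deterministic function of Phase-1 data; it is fixed during Phase 2, so the adaptive-policy version of Proposition~\ref{prop:concentration_q_main} applies. Its coverage hypothesis requires $\hat k(t)\ge \beta(\mathcal{S}(k^\star,q^\star))$ for all $t\ge T$, for some $T>0$. This is the key structural step. By Theorem~\ref{theorem:synthesis}, $\pi^\star(\tau)=\max(0,k^\star(\tau)+V^\star(0)-V^\star(\tau))$. Letting $\tau\to\infty$, both $k^\star$ and $V^\star$ stabilise beyond $\bar T$ (the ODE of Lemma~\ref{lemma:b_dynamics} becomes autonomous there and the bid tends to a fixed point). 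The claim to establish is that this fixed point satisfies $\beta^\star<k^\star(\infty)$ strictly. This is shading: winning resets the age and destroys future option value, so $V^\star(\infty)>V^\star(0)$. The assumption $q(b)>\alpha b$ rules out the degenerate case in which winning is never worthwhile and the reset has no cost.

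Let $\eta=k^\star(\infty)-\beta^\star>0$. Once $C_k N_1^{-1/2}<\eta/2$, which is where the threshold $C_0$ enters, we have $\hat k(t)\ge k^\star(t)-\eta/2\ge \beta^\star$ for all $t\ge T$. Here $T$ is chosen so that $k^\star(T)\ge k^\star(\infty)-\eta/2$, and $T=\bar T$ works since $k^\star$ is constant afterwards. The event that the age exceeds $T$ in an episode has positive probability, so each episode contributes a constant expected number of auctions with bids above $\beta^\star$. Proposition~\ref{prop:concentration_q_main} with confidence $\delta/2$ then gives $\max_{[0,\beta^\star]}|\hat q-q^\star|=\mathcal{O}(N_2^{-1/2})$.

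\textbf{Phase 3.} A union bound puts both estimates within $\epsilon=\mathcal{O}(N^{-1/4})$ with probability at least $1-\delta$. The estimate for $q$ is only needed up to $\beta^\star$, consistent with Theorem~\ref{th:extension-theorem}. Proposition~\ref{prop:epsilon_opt} then yields a per-episode gap of $\mathcal{O}(\epsilon^2)=\mathcal{O}(N^{-1/2})$. It also requires $\epsilon$ to be small, which is absorbed into $C_0$. Summing over $N_3\le N$ episodes gives $\mathcal{O}(\sqrt N)$, and adding the exploration cost proves the claim.

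\textbf{Main obstacle.} I expect the strict gap $\beta(\mathcal{S}(k^\star,q^\star))<k^\star(\infty)$ to be the hardest step: it is what makes the learned value $\hat k$ a valid covering policy without knowing $\beta^\star$. My first attempt would pass to the limit in the bid ODE of Lemma~\ref{lemma:b_dynamics} for $t>\bar T$, where $\dot k=0$. The limit $\beta$ solves
\begin{align*}
\mu f(\beta)+\gamma\beta=\gamma(V^\star(0)+k^\star(\infty)).
\end{align*}
I would then argue $\mu f(\beta)>\gamma(k^\star(\infty)-\beta)$ from positivity of $f$ under $q(b)>\alpha b$ together with a lower bound on $V^\star(0)$, which gives $\beta<k^\star(\infty)$. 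A secondary technical point is the measurability and adaptivity of $\hat k$ in Phase 2, but because it is frozen after Phase 1, conditioning on Phase-1 data handles it.
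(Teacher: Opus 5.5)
Your decomposition matches the paper's proof. So do your uses of Propositions~\ref{prop:concentration_k_main}, \ref{prop:concentration_q_main} and \ref{prop:epsilon_opt}, and your rate bookkeeping ($\epsilon\sim N^{-1/4}$, per-episode gap $\mathcal{O}(N^{-1/2})$). The one real difference is the covering step, and there your version is sounder than the paper's. The paper asserts $k^\star(\tau)>\pi^\star(\tau)$ for all $\tau$, attributing it to $\mu>2\gamma$, and sets $\Delta_k=\min_\tau(k^\star(\tau)-\pi^\star(\tau))/2$. But Theorem~\ref{theorem:synthesis} gives $\pi^\star(0)=k^\star(0)$, so that minimum is $0$ and the threshold $(C_k/\Delta_k)^2$ is infinite. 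Condition~\ref{cond:q} only needs coverage for $t\ge T_0$, so your tail gap $\eta=k^\star(\infty)-\beta^\star$ with $T_0=\bar T$ is the right quantity.

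However, the argument you sketch for $\eta>0$ does not work as written. Since $f(b)=\int_0^b q^\star$, the map $g(b)=\mu f(b)+\gamma b$ is strictly increasing. The fixed-point relation $g(\beta^\star)=\gamma(V^\star(0)+k^\star(\infty))$ shows that $\mu f(\beta^\star)>\gamma(k^\star(\infty)-\beta^\star)$ is merely equivalent to $V^\star(0)>0$. A lower bound on $V^\star(0)$ pushes $\beta^\star$ up, not down.

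What you need is $\beta^\star<k^\star(\infty)\iff g(\beta^\star)<g(k^\star(\infty))\iff \gamma V^\star(0)<\mu f(k^\star(\infty))$. This is an \emph{upper} bound: the dynamic value must lie strictly below the static benchmark $\frac{\mu}{\gamma}\int_0^{k^\star(\infty)}q^\star$. It follows from three facts:
\begin{itemize}
\item every auction yields conditional expected surplus at most $\max_b\{q^\star(b)k^\star(\text{age})-p(b)\}=f(k^\star(\text{age}))\le f(k^\star(\infty))$;
\item the expected number of auctions per episode is $\mu/\gamma$ regardless of the policy;
\item with positive probability the first auction of the episode arrives at a small age $t$ with $k^\star(t)<k^\star(\infty)$, and there $f(k^\star(t))<f(k^\star(\infty))$ strictly because $q^\star>0$ (this is where $q(b)>\alpha b$ enters).
\end{itemize}
Equivalently, this is your heuristic $V^\star(\infty)>V^\star(0)$ combined with $\beta^\star=k^\star(\infty)+V^\star(0)-V^\star(\infty)$. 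With this fix, and with no appeal to $\mu>2\gamma$ for this step, the rest of your proof goes through.
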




\textbf{Proof:}
See Appendix~\ref{proof:three-phase}


In the explore phase, these algorithms are ensured to bid higher than $\beta(\pi^\star)$ as $k$ covers the optimal bid (Theorem~\ref{theorem:synthesis}). Both algorithms achieve the fast $\mathcal{O}(\sqrt{N})$ convergence rate. From a practical perspective, this means that the choice of how to implement an explore-and-commit strategy—such as when transitioning from a static to a dynamic bidding model—does not hinge on statistical efficiency but on practical constraints. In particular, bidding the value, whether treated as static or time-dependent, is sufficient to explore the relevant part of the competitive landscape and to learn a near-optimal policy for the dynamic setting.

\subsection{UCB-Style Algorithm with Biased Estimates}

Our fourth algorithm takes inspiration from the Upper Confidence Bound (UCB) principle, using optimistic and pessimistic estimates to balance exploration and exploitation.

We rely on the following key intermediate result.  
\begin{theorem}[Monotony of $\beta$]
\label{th:monotony-theorem}
Let $k_1,k_2\in \mathcal{K}$, such that $k_1\leq k_2$ and 
$(q_1,q_2)\in \mathcal{Q}$ such that $q_1\geq q_2$ then 
$$\beta(\mathcal{S}(k_1,q_1))\leq \beta(\mathcal{S}(k_2,q_2))\,.$$
\end{theorem}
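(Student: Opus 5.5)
The plan is to reduce the statement to a scalar fixed-point equation for $\beta$ and then use comparison arguments on the Bellman ODE of Theorem~\ref{lemma:cauchy}.

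\textbf{Step 1: a scalar equation for $\beta$.} Since $k\in\mathcal{K}$ is constant on $[\bar T,+\infty[$, I first show that the bounded solution $V^\star=Z^{V^\star(0)}$ converges as $\tau\to\infty$ to a rest point $V^\star(\infty)$ of $\Phi(\cdot,\cdot,V^\star(0))$. This holds because $\partial_v\Phi=\gamma+\mu q>0$, so the rest point is repelling, and a bounded solution of the autonomous tail equation must sit on it. Theorem~\ref{theorem:synthesis} then gives
\[
\beta=k(\infty)+V^\star(0)-V^\star(\infty).
\]
Using $f(v)=q(v)v-p(v)=\int_0^v q(s)\dd s=:F_q(v)$, the condition $\Phi=0$ reads $\gamma V^\star(\infty)=\mu F_q(\beta)$. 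Hence $\beta$ is the unique root of
\[
G_q(\beta):=\gamma\beta+\mu F_q(\beta)=\gamma\bigl(k(\infty)+V^\star(0)\bigr).
\]
The function $G_q$ is strictly increasing in $\beta$, and it is pointwise increasing in $q$. The same relation also follows from Lemma~\ref{lemma:b_dynamics} by setting $\dot\pi=\dot k=0$ at infinity.

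\textbf{Step 2: the chain.} I would prove $\beta(\mathcal{S}(k_1,q_1))\le\beta(\mathcal{S}(k_1,q_2))\le\beta(\mathcal{S}(k_2,q_2))$.

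For the second inequality, $q$ is fixed, so $G_q$ is the same on both sides. It then suffices to show that $k(\infty)+V^\star_k(0)$ is nondecreasing in $k$. Both terms are monotone: $k_1(\infty)\le k_2(\infty)$ is immediate. For the value, every policy earns pointwise more under $k_2$, so $\mathcal{U}(k_1,q,\pi,0)\le\mathcal{U}(k_2,q,\pi,0)$ for all $\pi$; taking the supremum gives $V^\star_{k_1}(0)\le V^\star_{k_2}(0)$.

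For the first inequality, $k$ is fixed. Here the two effects compete. Larger $q$ (weaker competition) raises $G_q$, which pushes $\beta$ down, but it also raises $V^\star(0)$, which pushes the right-hand side up. So I must show that the second effect is dominated. I would compare the two equations at $\beta_2:=\beta(\mathcal{S}(k,q_2))$. Writing $V_i$ for the Bellman values, it suffices to prove
\[
\gamma\bigl(V_1(0)-V_2(0)\bigr)\le\mu\bigl(F_{q_1}(\beta_2)-F_{q_2}(\beta_2)\bigr),
\]
since this gives $G_{q_1}(\beta_2)\ge\gamma(k(\infty)+V_1(0))=G_{q_1}(\beta_1)$, and hence $\beta_1\le\beta_2$. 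To get this, I would use an envelope-type argument. By Bellman optimality, switching from $q_2$ to $q_1$ while keeping the $q_2$-optimal policy first (and then re-optimizing) gives a gain of at most the expected extra surplus
\[
\mu\int\bigl(F_{q_1}(\pi(\tau))-F_{q_2}(\pi(\tau))\bigr)\,\text{(discounted occupation of age }\tau)\,\dd\tau,
\]
where the occupation is taken under the optimal policy. Because the optimal bid $\pi(\tau)=\max(0,k(\tau)+V(0)-V(\tau))$ is increasing in $\tau$ and bounded by $\beta$, and $F_{q_1}-F_{q_2}$ is nondecreasing, each integrand is at most $F_{q_1}(\beta)-F_{q_2}(\beta)$. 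The total discounted mass is $1/\gamma$ by the Poisson-horizon discounting, which yields the displayed bound.

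\textbf{Main obstacle.} I expect the hardest part to be making this envelope inequality rigorous. There are two difficulties. First, it is evaluated at $\beta_2$ while re-optimizing under $q_1$ may move the bids, so the bound on the bids has to be handled carefully. Second, I need the monotonicity of $\pi^\star$ in $\tau$, which should follow from the concavity of $k$ via Lemma~\ref{lemma:b_dynamics}.

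If the envelope argument fails, the fallback is a direct ODE comparison on $D(\tau)=V^\star(\tau)-V^\star(0)$. It satisfies $\dot D=\gamma(D+V(0))-\mu F_q(k(\tau)-D)$, and $\beta=k(\infty)-D(\infty)$. I would then show, using the repelling property $\partial_D(\dot D)=\gamma+\mu q>0$ and the uniqueness of the bounded solution, that the bounded solution $D$ is pointwise larger when $q$ is larger, which gives the desired ordering of $\beta$.
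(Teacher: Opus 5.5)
Your Step~1 and the $k$-half of the chain are sound and coincide with the paper's Lemma~\ref{lemma-monotony-beta-k}. You even justify that $V^\star(0)$ is nondecreasing in $k$, which the paper leaves implicit. The gap is the $q$-half, $\beta(\mathcal{S}(k,q_1))\le\beta(\mathcal{S}(k,q_2))$. Your target inequality $\gamma(V_1(0)-V_2(0))\le\mu(F_{q_1}(\beta_2)-F_{q_2}(\beta_2))$ is \emph{equivalent} to the claim, by strict monotonicity of $G_{q_1}$. So all the work sits in proving it, and the envelope sketch does not do so.

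\textbf{The envelope bound.} Bellman optimality runs the other way. Since $V_1(0)\ge\mathcal{U}(k,q_1,\pi_2,0)$ and $V_2(0)=\mathcal{U}(k,q_2,\pi_2,0)$, keeping the $q_2$-optimal policy gives a \emph{lower} bound on $V_1(0)-V_2(0)$; re-optimizing can only add to the gain. A valid upper bound is $V_1(0)-V_2(0)\le\mathcal{U}(k,q_1,\pi_1,0)-\mathcal{U}(k,q_2,\pi_1,0)$. It uses $\pi_1$, whose bids go up to $\beta_1$, so bounding it by the increment at $\beta_2$ presupposes $\beta_1\le\beta_2$. Moreover, for a fixed policy, changing $q$ changes the win probabilities and hence the reset dynamics. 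That difference is therefore not a discounted integral of $F_{q_1}(\pi)-F_{q_2}(\pi)$. The exact identity available is $V_i(0)=\mu\int_0^{\infty}e^{-\gamma t}F_{q_i}(\pi_i(t))\dd t$, obtained by integrating $\dot V=\gamma V-\mu F_q(\pi)$ against $e^{-\gamma t}$. It involves two different policies, and extracting your bound from it would require $\pi_1\le\pi_2$ pointwise, which is stronger than the theorem.

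\textbf{The fallback.} It has the same defect. In $\dot D=\gamma(D+V_q(0))-\mu F_q(k-D)$, the parameter $V_q(0)$ itself increases with $q$, so the competing effects reappear inside the vector field. Where $k-D$ is near $0$ (e.g.\ at $\tau=0$), the difference of right-hand sides is close to $\gamma(V_1(0)-V_2(0))$, which is positive whenever $V_1(0)>V_2(0)$. The fields are then not ordered, and no comparison principle applies.

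\textbf{The missing idea.} The paper's Lemma~\ref{lemma-monotony-beta-q} eliminates $V(0)$ through the terminal relation $\gamma(V(0)+k(\infty))=h_q(\beta)$, with $h_q(x)=\gamma x+\mu F_q(x)$. This turns Lemma~\ref{lemma:b_dynamics} into $\dot\pi=\dot k-\gamma(k-k(\infty))+h_q(\pi)-h_q(\beta)$, and the argument then proceeds by contradiction. Suppose $\beta_1>\beta_2$. At any $\tau$ with $\pi_1(\tau)=\pi_2(\tau)=z$ (so $z\le\beta_2$), the $k$-terms cancel and
\[
\dot\pi_1(\tau)-\dot\pi_2(\tau)=\mu\Bigl(\int_z^{\beta_2}q_2(s)\dd s-\int_z^{\beta_1}q_1(s)\dd s\Bigr)+\gamma(\beta_2-\beta_1)<0.
\]
Since $\pi_1(0)=\pi_2(0)=k(0)$, this gives $\pi_1<\pi_2$ just after $0$, while $\pi_1(\bar T)=\beta_1>\beta_2=\pi_2(\bar T)$. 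At the first meeting point in between, $\pi_1$ catches up from below. The continuous function $\dot\pi_1-\dot\pi_2=h_{q_1}(\pi_1)-h_{q_1}(\beta_1)-h_{q_2}(\pi_2)+h_{q_2}(\beta_2)$ must therefore be $\ge0$ there, a contradiction. The contradiction hypothesis is exactly what settles the competition you identified: once $V(0)$ is expressed through $\beta$, assuming $\beta_1>\beta_2$ makes every term of the difference nonpositive.
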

\begin{proof}
    We combine Lemma~\ref{lemma-monotony-beta-k} and Lemma~\ref{lemma-monotony-beta-q}.
\end{proof}

\begin{theorem}[Confidence Bounds Algorithm]
\label{th:ucb-main}
Choose $\lambda_k, \lambda_q >0$. For each episode $n \in [1, N]$ do:
    \begin{itemize}
        \item $\hat{k}^{\texttt{UCB}}_n = \min(K(\mathcal{D}^k_n) + \lambda_k \sqrt{\log\log n/n}\,,\, k(\infty))$
        \item $ \hat{q}^{\texttt{LCB}}_n = \max(Q(\mathcal{D}^q_n) - \lambda_q \sqrt{\log \log n/n} \,,\, 0)$
        \item $\pi_n = \mathcal{S}(\hat{k}^{\texttt{UCB}}_n, \hat{q}^{\texttt{LCB}}_n)$
        \item $\mathcal{D}^k_{n+1} = \mathcal{D}^k_{n} \cup \mathcal{D}^k_{\pi_n}$ and $\mathcal{D}^q_{n+1} = \mathcal{D}^q_{n} \cup \mathcal{D}^q_{\pi_n}$ with $\mathcal{D}^k_{\pi_n}$ and $\mathcal{D}^q_{\pi_n}$ collected at episode $n$.
    \end{itemize}   
The algorithm constructs an envelope around $q$ and $k$ and derives an optimal policy for the plausible worst case environment.

Let $\delta \in (0, 1)$. There exist $\lambda_q$ and $\lambda_k$ (depending on problem constants) that makes the algorithm yield with probability at least $1-\delta$ that $\text{Regret}(N) \leq \tilde{\mathcal{O}}(\log(N)).$
\end{theorem}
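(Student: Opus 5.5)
The plan is to run the standard optimism argument on a high-probability ``good event'' and then sum per-episode regrets using the quadratic sensitivity of Proposition~\ref{prop:epsilon_opt}. Define $\mathcal{E}$ as the event that, for every $n\le N$, the true primitives lie inside the envelopes built by the algorithm:
\[
k^\star \le \hat{k}^{\texttt{UCB}}_n \le k^\star + 2\lambda_k\sqrt{\log\log n/n}
\]
and, on $[0,\beta(\pi^\star)]$,
\[
q^\star - 2\lambda_q\sqrt{\log\log n/n} \le \hat{q}^{\texttt{LCB}}_n \le q^\star .
\]
The $\log\log n$ factor suggests establishing $\mathcal{E}$ with a law-of-the-iterated-logarithm-type, time-uniform version of the concentration results in Propositions~\ref{prop:concentration_k_main} and~\ref{prop:concentration_q_main}. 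I would prove these with self-normalized martingale bounds, since the policies are adaptive and both propositions already allow past-dependent policies. Choosing $\lambda_k,\lambda_q$ large enough in terms of $\delta$ and the problem constants (sub-Gaussian parameter, the minimal eigenvalue of the design, the density lower bound $\min_i c_i$) should give $\mathbb{P}(\mathcal{E})\ge 1-\delta$, possibly with $\log(1/\delta)$ absorbed into the $\tilde{\mathcal{O}}$.

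The concentration propositions have hypotheses, and on $\mathcal{E}$ Theorem~\ref{th:monotony-theorem} supplies them; this coverage step is the heart of the argument. Since $\hat{k}^{\texttt{UCB}}_n\ge k^\star$ and $\hat{q}^{\texttt{LCB}}_n\le q^\star$, the monotonicity theorem gives
\[
\beta(\pi_n)=\beta\bigl(\mathcal{S}(\hat{k}^{\texttt{UCB}}_n,\hat{q}^{\texttt{LCB}}_n)\bigr)\ge \beta(\mathcal{S}(k^\star,q^\star)).
\]
So every played policy asymptotically bids above the maximal optimal bid, which is exactly the coverage condition of Proposition~\ref{prop:concentration_q_main}. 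For $k$, I need $\pi_n(t_1)\ge\epsilon$ uniformly. I would obtain this from Theorem~\ref{theorem:synthesis}, $\pi(\tau)=\max(0,k(\tau)+V(0)-V(\tau))$, together with the assumption $q(b)>\alpha b$, which keeps the optimal bid at $t_1$ bounded away from $0$ uniformly over the envelope class. The argument is circular: the envelopes are valid only if the data are exploratory, and the data are exploratory only if the envelopes are valid. I would break the circularity by induction on $n$, taking $\mathcal{E}_n$ as the event that the envelopes hold up to $n$, and applying the martingale concentration at step $n+1$ to data collected under $\mathcal{E}_n$. I expect this to be the \textbf{main obstacle}. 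A further difficulty is that Proposition~\ref{prop:concentration_q_main} needs coverage over $t\ge T$ with enough episodes reaching time $T$. I would handle this with the Poisson episode-length and arrival structure, which gives a constant fraction of auctions in the covered region, and then take a union bound over a geometric grid of $n$ to obtain the time-uniform statement.

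Given $\mathcal{E}$, the envelope widths give $\|k^\star-\hat{k}^{\texttt{UCB}}_n\|_\infty\le\epsilon_n$ and $\max_{[0,\beta(\pi^\star)]}|q^\star-\hat{q}^{\texttt{LCB}}_n|\le\epsilon_n$ with $\epsilon_n=\mathcal{O}(\sqrt{\log\log n/n})$. Proposition~\ref{prop:epsilon_opt} then gives per-episode regret $V_{\pi^\star}(0)-V_{\pi_n}(0)\le \mathcal{O}(\epsilon_n^2)=\mathcal{O}(\log\log n/n)$ once $\epsilon_n$ is below the ``small $\epsilon$'' threshold. For the finitely many $n\le n_0$ before that threshold is reached, the per-episode regret is bounded by a constant, since values are bounded by $k(\infty)/\gamma$-type quantities. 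Summing,
\[
\operatorname{Regret}(N)\le C n_0 + \sum_{n\le N}\mathcal{O}(\log\log n/n)=\tilde{\mathcal{O}}(\log N)
\]
with probability at least $1-\delta$. The quadratic dependence in Proposition~\ref{prop:epsilon_opt} is essential here: a linear dependence would only give $\tilde{\mathcal{O}}(\sqrt{N})$.
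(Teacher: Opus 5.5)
Your skeleton matches the paper's proof: optimistic envelopes, Theorem~\ref{th:monotony-theorem} for coverage, induction to break the circularity, time-uniform LIL concentration, and the quadratic bound of Proposition~\ref{prop:epsilon_opt} summed over $n$. One step, however, does not go through as written. Your event $\mathcal{E}$ orders $\hat q^{\texttt{LCB}}_n\le q^\star$ only on $[0,\beta(\pi^\star)]$. That restriction is right, since Proposition~\ref{prop:concentration_q_main} controls $Q(\mathcal{D}^q_n)$ only there. But Theorem~\ref{th:monotony-theorem} requires $\hat q^{\texttt{LCB}}_n\le q^\star$ on all of $\mathbb{R}_+$, so ``$\hat q^{\texttt{LCB}}_n\le q^\star$, hence $\beta(\pi_n)\ge\beta(\pi^\star)$'' does not follow. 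The missing idea is to pair monotonicity with Theorem~\ref{th:extension-theorem} and argue by contradiction. Suppose $\beta_n:=\beta(\pi_n)<\beta(\pi^\star)$, and take the extension of Lemma~\ref{lemma:q_extension}: $\tilde q=\hat q^{\texttt{LCB}}_n$ on $[0,\beta_n]$ and $\tilde q(b)=\hat q^{\texttt{LCB}}_n(\beta_n)+q^\star(b)-q^\star(\beta_n)$ for $b>\beta_n$. On $\mathcal{E}$ this gives $\tilde q\le q^\star$ everywhere. The proof of Theorem~\ref{th:extension-theorem} gives $\mathcal{S}(\hat k^{\texttt{UCB}}_n,\tilde q)=\pi_n$. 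Theorem~\ref{th:monotony-theorem} applied to $(\hat k^{\texttt{UCB}}_n,\tilde q)$ then yields $\beta_n\ge\beta(\pi^\star)$, a contradiction. (The paper bypasses this only by asserting a sup-norm envelope for $Q$ in Lemma~\ref{lem:lil_confidence_envelopes}, which Proposition~\ref{prop:concentration_q} does not deliver.)

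You are right that Condition~\ref{cond:k} must be checked along the run, a step the paper's proof omits, but the mechanism you cite is not the operative one. The bound $q(b)>\alpha b$ is a hypothesis on $q^\star$ that only converts bids into win probabilities. It says nothing about bids computed under $\hat q^{\texttt{LCB}}_n$, which may be clipped to $0$. A $q$-free argument works. Compare optimal play from age $\tau$ with play from age $0$ that submits the same bids until the first win and then continues optimally. The two share the first-win time and price. They differ only through $k(\tau+s_1)-k(s_1)\le k(\tau)-k(0)$ (by concavity), which is collected with probability at most $\frac{\mu}{\mu+\gamma}$. Hence $V(\tau)-V(0)\le\frac{\mu}{\mu+\gamma}\bigl(k(\tau)-k(0)\bigr)$, and Theorem~\ref{theorem:synthesis} gives $\pi_n(t_1)\ge\frac{\gamma}{\mu+\gamma}\hat k^{\texttt{UCB}}_n(t_1)\ge\frac{\gamma}{\mu+\gamma}k^\star(t_1)$ on $\mathcal{E}$. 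Three smaller points should also be made explicit.
(i) For $n$ below the concentration thresholds ($N^k_0,N^q_0$), the envelopes hold only because a large $\lambda$ activates the clipping. This gives $\hat k^{\texttt{UCB}}_n=k(\infty)$, $\hat q^{\texttt{LCB}}_n=0$ and $\pi_n\equiv k(\infty)$, which is how the paper seeds the induction.
(ii) Condition~\ref{cond:q} needs a $T_0$ uniform in $n$. This comes from Lemma~\ref{lemma:remark-on-beta} ($\pi_n$ is constant after $\bar T$), not from the asymptotic bid alone.
(iii) Do not apply concentration to data conditioned on $\mathcal{E}_n$, since conditioning breaks the martingale structure. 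Instead, apply the time-uniform bound once to the surrogate sequence that plays $\pi_n$ while the envelopes hold and $k(\infty)$ otherwise. That sequence satisfies both conditions surely and coincides with the algorithm on the good event.
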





\textbf{Proof:}
See Appendix~\ref{proof-ucb}

In the following section, we establish that this algorithm achieves near-minimax optimality. This algorithm shares structural similarities with the first algorithm, the key distinction lies in the use of biased plug-in estimators rather than unbiased ones.

 In addition, optimizing over $q$
 does not require randomization, which is typical when learning to bid in first-price auctions. In contrast, the static second-price case does not provide an interesting point of comparison, since the truthfulness property of static second-price auctions implies that knowledge of the bid distribution is unnecessary~\cite{weed2016online,krishna2009auction}.

\subsection{Lower Bound}
We consider two degenerate cases where $k$
 is constant.
In doing so, the problem reduces to the static case, with the important distinction that feedback arrives in batches and is aggregated.
Since allowing the algorithm to de-aggregate the data freely can only improve its performance, any lower bound derived under this relaxation remains valid for the original problem.
This reduction maps our setting to one of the scenarios analyzed in~\cite{weed2016online}, for which a lower bound has already been established. More precisely, we can apply their Theorem 4 with $\alpha=1$ to get the next result.
\begin{theorem}
\label{th:lowerbound}
There exists $C>0$ such that
    for any algorithm, and any $N$, $\operatorname{Regret}(N)\geq C\log(N)$.
\end{theorem}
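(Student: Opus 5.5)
The plan is to reduce to a static second-price bidding problem with a known constant value and an unknown competition distribution, and then invoke the lower bound of Theorem 4 in~\cite{weed2016online} with $\alpha=1$.

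\textbf{Construction.} Fix two constant reward functions $k\equiv v_1$ and $k\equiv v_2$. Both belong to $\mathcal{K}$: they are piecewise linear, concave and nondecreasing, with all slopes $a_i=0$. Pair them with a family of piecewise linear CDFs $q\in\mathcal{Q}$ satisfying $q(b)>\alpha b$, chosen to mirror the hard instances of~\cite{weed2016online}.

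When $k$ is constant, the state $\tau$ no longer affects rewards. Solving the ODE of Theorem~\ref{lemma:cauchy} gives a constant bounded solution $V^\star$. Theorem~\ref{theorem:synthesis} then yields $\pi^\star(\tau)=k+V^\star(0)-V^\star(\tau)=k$, so truthful bidding is optimal, as in the static case.

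\textbf{Regret comparison.} The per-episode regret of any policy is an expectation over Poisson auction arrivals. Each auction contributes the static per-auction regret $\int_{\pi(t)}^{v}(v-s)\,\dd q(s)$, up to sign conventions. Since the number of auctions per episode has expectation $\mu/\gamma$, the episodic regret is bounded below by a positive constant times the static regret, averaged over the bids used.

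\textbf{Feedback relaxation.} The learner observes only the aggregated episode value and the auction-level bid, win and price data. I will argue that revealing the full per-auction feedback can only help the algorithm. Formally, any algorithm using aggregated feedback can be simulated by one that receives disaggregated feedback and aggregates it itself. Hence the minimax regret of the original problem is at least the minimax regret of the static repeated second-price problem over roughly $N\mu/\gamma$ rounds, in which the value is constant and the learner sees win and price information.

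\textbf{Applying~\cite{weed2016online}.} Theorem 4 of~\cite{weed2016online}, with margin parameter $\alpha=1$, gives a lower bound of order $\log$ of the number of rounds. Since $\log(cN)\ge \tfrac12\log N$ for large $N$, this gives $\operatorname{Regret}(N)\ge C\log N$; small values of $N$ are absorbed into $C$.

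\textbf{Main obstacle.} The delicate step is checking that the hard instances of~\cite{weed2016online} can be realized inside our hypothesis class: piecewise linear $q$ with the fixed breakpoints, and the condition $q(b)>\alpha b$. If their construction uses other distributions, I would first rebuild a two-point Le Cam argument using two piecewise linear CDFs that differ by $\Delta$ near $v$.

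In that argument the per-round information is the KL divergence of the price/win observations, which is of order $\Delta^2$ times the probability of bidding near $v$. The regret, meanwhile, is of order $\Delta\times$ (bid gap) plus the gap squared. Balancing these yields the logarithmic rate. A secondary care point is that the number of episodes with wins is random, which is handled by taking expectations.
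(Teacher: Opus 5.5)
Your skeleton matches the paper's: constant $k$, truthful $\pi^\star\equiv k$, disaggregating the feedback, then Theorem~4 of~\cite{weed2016online} with $\alpha=1$. The gap is that you put the uncertainty in the wrong primitive. Your construction paragraph does introduce two values $v_1,v_2$, but the rest of the plan works with a \emph{known} constant value and an \emph{unknown} competition distribution. You say the relaxed learner sees ``win and price information'', and your fallback perturbs $q$ near $v$ and measures information by the KL divergence of price/win data.

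That reduced problem has zero minimax regret. As you compute yourself via Theorem~\ref{theorem:synthesis}, for constant $k$ the optimal policy is $\pi^\star\equiv k$ for \emph{every} $q$. So bidding $k$ from the first episode is optimal for all your candidate CDFs at once, and no argument over $q$ can force regret. The lower bound of~\cite{weed2016online} concerns the opposite situation: the good's value is unknown and is revealed, noisily, only when the bidder wins. Here that role is played by the unknown constant $k$, observed through the rewards $k+\varepsilon_i$ of won auctions. Concretely:
\begin{itemize}
\item the two instances must differ in $k$, and $v_1,v_2$ must be unknown to the learner;
\item $q$ can be held fixed and even known, e.g.\ a linear $q$, which lies in $\mathcal{Q}$, satisfies $q(b)>\alpha b$, and has density bounded above and below around $k$ (the $\alpha=1$ margin condition);
\item the disaggregated feedback must include the per-auction rewards on wins;
\item the win/price observations, whose conditional law given the bid depends only on $q$, then contribute nothing to the KL divergence between the instances.
\end{itemize}

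Even with the hypotheses placed on $k$, a two-point Le Cam argument cannot produce $\log N$. With margin $\alpha=1$ the per-auction regret is quadratic, $D_Q(v,b)\asymp (v-b)^2$. Bidding near $v$ wins with probability at least $\alpha v$, so a fixed pair $v\pm\Delta$ is told apart after $O(\Delta^{-2})$ auctions. The two-point bound is therefore at most of order $\Delta^2\cdot\Delta^{-2}=O(1)$, whatever $\Delta$ you choose as a function of $N$.

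The logarithm requires all scales $\Delta\asymp t^{-1/2}$ at once. One way is a smooth prior on the constant $k$ together with a van Trees (Bayesian Cram\'er--Rao) inequality. After $t$ auctions the Fisher information about $k$ carried by the history is $O(t)$, since only won rewards are informative. Hence the Bayes per-auction regret is $\gtrsim 1/t$, and summing gives $\gtrsim\log N$ for the Bayes risk, hence for some instance. If the cited hard instances did not embed in $\mathcal{K}\times\mathcal{Q}$, this is the argument you would need to rebuild, not a perturbation of $q$.
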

Theorem~\ref{th:lowerbound} implies in particular, that our last algorithm is near minimax, and, surprisingly, that the learning problem when primitives are piecewise linear is not much harder than the static case.

\subsection{General Smooth Primitives}\label{sec:gen_primitive}

We now extend the previous guarantees, proved for piecewise linear
primitives, to smooth primitives by approximation and obtain a regret of $\widetilde{\mathcal O}(N^{1/3})$ for our confidence bounds algorithm.

\begin{lemma}[Piecewise linear approximation]
\label{lem:piecewise_linear_approximation}
Let \(f:[a,b]\to\mathbb R\) be continuously differentiable and assume that
\(f'\) is \(L\)-Lipschitz. Let \(f_{PL,m}\) be the piecewise linear
interpolant of \(f\) on a uniform grid of \(m\) intervals on \([a,b]\).
Then $\|f-f_{PL,m}\|_\infty
    \le
    \mathcal{O}(1/m^2).$
\end{lemma}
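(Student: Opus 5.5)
The plan is to prove the bound interval by interval and then take the maximum. Let $h=(b-a)/m$ and let $x_j=a+jh$, $j=0,\dots,m$, be the grid points. On a fixed cell $[x_j,x_{j+1}]$, the interpolant $f_{PL,m}$ is the affine function that agrees with $f$ at both endpoints. The error function $e=f-f_{PL,m}$ is therefore $C^1$ on the cell and vanishes at $x_j$ and $x_{j+1}$. Moreover, $e'=f'-s_j$, where $s_j=(f(x_{j+1})-f(x_j))/h$ is a constant, so $e'$ is $L$-Lipschitz on the cell.

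By the mean value theorem there is $\xi\in(x_j,x_{j+1})$ with $f'(\xi)=s_j$, so $e'(\xi)=0$. The Lipschitz property of $f'$ then gives
\[
|e'(x)|=|f'(x)-f'(\xi)|\le L|x-\xi|
\]
for every $x$ in the cell.

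The main step is to turn this pointwise derivative bound into a bound on $|e|$ without assuming $f''$ exists. This is why I avoid the textbook remainder formula $f''(\eta)(x-x_j)(x-x_{j+1})/2$. Fix $x$ in the cell and integrate $e'$ from the nearer endpoint $x^\ast\in\{x_j,x_{j+1}\}$, where $e(x^\ast)=0$:
\[
|e(x)|\le\Big|\int_{x^\ast}^{x}|e'(s)|\,\dd s\Big|\le \int_{x^\ast}^{x} L|s-\xi|\,\dd s .
\]
Since $|x-x^\ast|\le h/2$ and $|s-\xi|\le h$ on the cell, this yields $|e(x)|\le Lh^2/2$.

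A slightly sharper constant is available by exploiting $e(x_j)=e(x_{j+1})=0$ directly. Writing $e(x)=\int_{x_j}^{x}(f'(s)-s_j)\,\dd s$ and representing $s_j$ as the average of $f'$ over the cell, the Lipschitz bound should give $|e(x)|\le L(x-x_j)(x_{j+1}-x)/2\le Lh^2/8$. Only the order matters here, so the cruder estimate suffices.

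Taking the maximum over the $m$ cells gives
\[
\|f-f_{PL,m}\|_\infty\le \frac{L(b-a)^2}{2m^2}=\mathcal{O}(1/m^2),
\]
where the constant depends only on $L$ and $b-a$.

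For the later use with $k$ on $[0,\bar T]$ and $q$ on its bounded bid range, I would briefly note two further points. The interpolant of a concave increasing $k$ is again concave and increasing, so it lies in $\mathcal{K}$. Likewise, the interpolant of an increasing $q$ lies in $\mathcal{Q}$. I do not expect any real obstacle beyond handling the mere Lipschitz regularity of $f'$, which the argument above does.
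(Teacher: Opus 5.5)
Your proof is correct, and it has the same overall structure as the paper's: bound the error on each cell $[x_j,x_{j+1}]$, then take the maximum over cells. The difference is in the per-cell bound. The paper cites ``the standard interpolation error bound for functions with Lipschitz derivative'' to get $Lh^2/8$, whereas you prove a bound from scratch:
\begin{itemize}
\item the mean value theorem gives a zero $\xi$ of $e'$;
\item Lipschitz continuity of $f'$ then gives $|e'(s)|\le L|s-\xi|\le Lh$;
\item integrating from the nearer endpoint gives $Lh^2/2$.
\end{itemize}
Your route is self-contained and, as you note, never uses $f''$, which under the stated hypothesis need only exist almost everywhere. The cost is a factor $4$ in the constant, which is irrelevant for $\mathcal{O}(1/m^2)$.

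Your sketched sharper estimate does hold and recovers the paper's constant. With $u=x-x_j$ and $v=x_{j+1}-x$,
\[
e(x)=\frac{uv}{h}\Big(\frac1u\int_{x_j}^{x}f'(s)\,ds-\frac1v\int_{x}^{x_{j+1}}f'(t)\,dt\Big)=\frac{1}{h}\int_{x_j}^{x}\int_{x}^{x_{j+1}}\big(f'(s)-f'(t)\big)\,dt\,ds .
\]
Bounding $|f'(s)-f'(t)|\le L(t-s)$ then gives $|e(x)|\le \frac{L}{2}uv\le \frac{Lh^2}{8}$.

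Your closing remarks about $\mathcal{K}$ and $\mathcal{Q}$ are not needed for this lemma. Membership in $\mathcal{K}$ would also require extending the interpolant as a constant beyond $\bar T$.
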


\begin{theorem}[Regret for general smooth primitives]
Suppose that \(k\) and \(q\) are continuously differentiable and have
Lipschitz derivatives on their respective domains. Let \(N\) be the horizon,
and use the family of piecewise linear functions for \(k\) and \(q\) on uniform
grids with $m \asymp N^{1/6}$
intervals. 

Then the Confidence Bounds Algorithm satisfies, with high
probability, $\operatorname{Regret}(N)
    \le
    \widetilde{\mathcal O}\!\left(N^{1/3}\right).$
\end{theorem}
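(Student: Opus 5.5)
We prove the bound by a bias--variance decomposition of the regret around the best piecewise linear surrogate. Fix $m$ and let $k_m$ and $q_m$ be the piecewise linear interpolants of $k^\star$ and $q^\star$ on the uniform grids. Interpolation of a concave increasing function is concave and increasing, and interpolation of an increasing bounded CDF is increasing and bounded, so $k_m\in\mathcal K$ and $q_m\in\mathcal Q$ with the grid breakpoints. By Lemma~\ref{lem:piecewise_linear_approximation}, $\|k^\star-k_m\|_\infty\le \mathcal O(m^{-2})$ and $\|q^\star-q_m\|_\infty\le\mathcal O(m^{-2})$. On a bounded domain this requires truncating $k$ at some $\bar T$ beyond which it is essentially flat. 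We split
\[
\operatorname{Regret}(N)=\sum_{n}\bigl(V_{\pi^\star}(0)-V_{\pi^\star_m}(0)\bigr)+\sum_n\bigl(V_{\pi^\star_m}(0)-V_{\pi_n}(0)\bigr),
\]
where $\pi^\star_m=\mathcal S(k_m,q_m)$ and all values are evaluated in the true environment.

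\textbf{Approximation term.} Proposition~\ref{prop:epsilon_opt} applied with $\epsilon=\mathcal O(m^{-2})$ gives a per-episode loss of $\mathcal O(m^{-4})$. The total contribution is therefore $\mathcal O(Nm^{-4})$, which is negligible.

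\textbf{Estimation term.} We rerun the proof of Theorem~\ref{th:ucb-main}, now with misspecification. There are two changes.
\begin{enumerate}
\item The least squares and MLE estimators concentrate around the projections $k_m,q_m$ with an additional bias of order $m^{-2}$.
\item The concentration widths scale with the number of parameters, giving $\sqrt{m\log\log n/n}$ instead of $\sqrt{\log\log n/n}$, since the design dimension is $I_k,I_q\asymp m$.
\end{enumerate}
We inflate the confidence radius to $\lambda\sqrt{m\log\log n/n}+c\,m^{-2}$. This keeps the envelope valid for the true primitives, so that $\hat k^{\texttt{UCB}}_n\ge k^\star$ and $\hat q^{\texttt{LCB}}_n\le q^\star$ with high probability. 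Theorem~\ref{th:monotony-theorem} then still guarantees $\beta(\pi_n)\ge\beta(\pi^\star)$, which is exactly the coverage needed for Proposition~\ref{prop:concentration_q_main}. Theorem~\ref{th:extension-theorem} is then applied in its approximate form through Proposition~\ref{prop:epsilon_opt}, which only needs accuracy on $[0,\beta(\pi^\star)]$. With width $\epsilon_n\asymp\sqrt{m/n}+m^{-2}$, the quadratic loss bound of Proposition~\ref{prop:epsilon_opt} gives a per-episode regret of $\mathcal O(m/n+m^{-4})$. Summing over episodes yields
\[
\widetilde{\mathcal O}\bigl(m\log N+Nm^{-4}\bigr).
\]

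\textbf{Balancing.} The term $Nm^{-4}$ dominates the cross terms. The remaining issue is a hidden factor: the constants in Proposition~\ref{prop:epsilon_opt} and in the concentration lemmas depend on the conditioning of the design matrix. On a uniform grid with mesh $1/m$, the smallest eigenvalue of $XX^t/n$ scales like $1/m$, so the variance per coordinate of $k$ is inflated. The sup-norm error is then $\sqrt{m^2/n}$ rather than $\sqrt{m/n}$, and squaring it gives a per-episode regret of order $m^2/n$. Including the misspecified bias $m^{-2}$ entering linearly through the optimism gap, the regret becomes
\[
\widetilde{\mathcal O}\bigl(m^2\log N+N m^{-2}\bigr)
\]
up to the $\log$ factors. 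With $m\asymp N^{1/6}$ this gives $N^{1/3}$ for the first term and $N^{2/3}$ for the second, so the second must actually be controlled more finely. The precise exponents should instead follow from the final balancing with the true dependence of widths and bias on $m$. I will verify that the bookkeeping yields two terms both equal to $N^{1/3}$ at $m\asymp N^{1/6}$: most plausibly a variance term $m^2$ summed with logarithms, and a bias term $N\cdot(m^{-2})^2 = N m^{-4} = N^{1/3}$.

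\textbf{Main obstacle.} The hardest step is tracking the $m$-dependence of every constant in the piecewise linear proofs: the eigenvalue lower bound of the design (which uses $\pi_n(t_1)\ge\epsilon$, now with $t_1\asymp 1/m$), the lower bound $\alpha$ on $q$ near the grid, and the constant in Proposition~\ref{prop:epsilon_opt}. To bound these, I would first try to show that the sup-norm error of the projected OLS on a concave class is controlled by the bias plus $\sqrt{m^2\log\log n/n}$, via a local averaging argument over neighboring cells. The rest then proceeds as above.
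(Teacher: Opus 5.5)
There is a genuine gap, and it is exactly at the step that fixes the exponent. Your bookkeeping ends at $\widetilde{\mathcal O}(m^2\log N+Nm^{-2})$, which at $m\asymp N^{1/6}$ is of order $N^{2/3}$. You then only announce that a finer accounting should give two $N^{1/3}$ terms; that accounting is the whole content of the theorem, and it is not carried out. The offending term is the one where the bias $m^{-2}$ enters ``linearly through the optimism gap''. Nothing in this analysis produces such a term. Regret is never bounded by an optimism gap, as in bandit UCB proofs. It is bounded through Proposition~\ref{prop:epsilon_opt}, which compares $\mathcal S(\tilde k,\tilde q)$ with the true optimum $\pi^\star$. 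That proposition turns any pair with $\|\tilde k-k^\star\|_\infty\le\epsilon$ and $\max_{[0,\beta(\pi^\star)]}|\tilde q-q^\star|\le\epsilon$ into a policy error $\mathcal O(\epsilon)$ and a value loss $\mathcal O(\epsilon^2)$, because the loss is a Bregman divergence around a maximizer (Lemma~\ref{lemma:reg_quad}). The optimism shift in $\hat k^{\texttt{UCB}}_n,\hat q^{\texttt{LCB}}_n$, your extra $c\,m^{-2}$ inflation and the misspecification bias are all simply part of this $\epsilon$. They are therefore squared together with the statistical width.

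Once you do this, the argument closes as in the paper's proof. Lemma~\ref{lem:piecewise_linear_approximation} and the triangle inequality give the $m^{-2}$ bias. The $1/\Delta_{\min}\asymp m$ factor in Propositions~\ref{prop:concentration_k} and~\ref{prop:concentration_q} gives the width; it comes from $\lambda_{\min}(Z^\top Z)\gtrsim N(\Delta^k_{\min})^2p_{\min}$, not from an eigenvalue of order $1/m$. Together these yield $\epsilon_n=\widetilde{\mathcal O}(m n^{-1/2}+m^{-2})$. The per-episode regret is then $\widetilde{\mathcal O}(m^2n^{-1}+m^{-1}n^{-1/2}+m^{-4})$, and the total is $\widetilde{\mathcal O}(m^2\log N+m^{-1}\sqrt N+Nm^{-4})$. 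Each of the three terms is of order $N^{1/3}$, up to logarithms, at $m\asymp N^{1/6}$.

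So $Nm^{-4}$ is not negligible; it is one of the balanced terms. Your split through $\pi^\star_m$ also buys nothing. Proposition~\ref{prop:epsilon_opt} is stated relative to $\pi^\star$, not $\pi^\star_m$, and $V_{\pi^\star_m}(0)-V_{\pi_n}(0)\le V_{\pi^\star}(0)-V_{\pi_n}(0)$ anyway. Your coverage concern is legitimate but resolves itself at this choice of $m$: for $n\le N$, $m/\sqrt n\ge m/\sqrt N\asymp m^{-2}$. A radius of order $m\sqrt{\log\log n/n}$ therefore already dominates the bias up to constants.
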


\textbf{Proof.} See Appendix~\ref{thrm:regret_general}

This result shows that the Confidence Bounds Algorithm extends beyond the
piecewise linear setting. In particular, the method applies to general smooth
primitives and does not require the primitives to be piecewise linear or the
locations of their breakpoints to be known in advance.

\section{Experiments}
\label{sect:experiments}


\begin{wrapfigure}{r}{0.49\textwidth}
\centering
\vspace{-\baselineskip}
\includegraphics[
    trim=.3cm .25cm .25cm .25cm,
    clip,
    width=0.45\textwidth
]{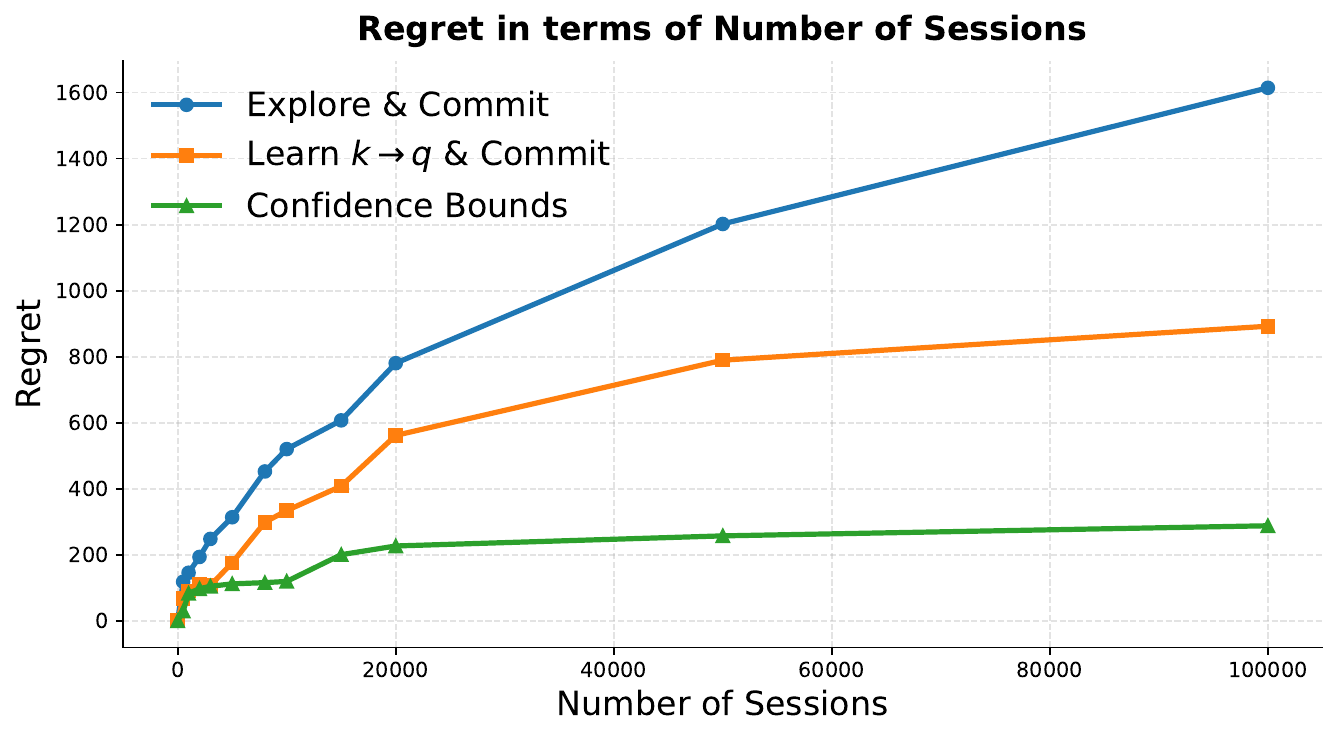}
\caption{Regret as a function of \(N\).}
\label{fig:regret}
\vspace{-1em}
\end{wrapfigure}
In this section, we design a simple experimental setting to validate our algorithms.
We consider a single environment with
\( k(t) = 1 - \exp(-\theta t) \) and \( q(v) = v^{\alpha} \), where
\( \alpha = 2 \) and \( \theta = 0.1 \).
We use a family of piecewise linear functions with 10 uniform breakpoints as an approximation.
We set \( \mu = 0.5 \) and \( \gamma = 0.1 \), and assume that these parameters are known.
We evaluate the no-regret algorithms on this environment. To compute the optimal policy, we
numerically solve the associated ODE using the \texttt{SciPy} ODE solver with the default
Runge--Kutta method. The results are reported in Figure~\ref{fig:regret}, where each
strategy is re-run for every fixed horizon \(N\); they are overall consistent with our
theoretical results.

\section{Limitations and Future Work}
\label{sect:discussion}

In this work, we studied learning to bid with dynamic values, focusing on
second-price auctions where the bidder's value depends on the time since their
last win. Natural extensions include other auction formats, whose strategic
features may interact differently with our resolution method. Another direction
is to enrich the value model beyond recency,  which may be too restrictive for some applications, allowing values to depend on the
full history of won auctions, patterns of wins and losses, past prices paid, or
random item-specific components capturing heterogeneity across objects. We believe these
directions could make the model richer and relevant beyond computational advertising, including
financial applications.

\bibliographystyle{unsrt}
\bibliography{paper.bib}

@article{heymann2024pragmatic,
  title={A pragmatic policy learning approach to account for users' fatigue in repeated auctions},
  author={Heymann, Benjamin and Gilotte, Alexandre and Chan-Renous, R{\'e}mi},
  journal={arXiv preprint arXiv:2407.10504},
  year={2024}
}

@article{anytime_bound,
author = {Steven R. Howard and Aaditya Ramdas and Jon McAuliffe and Jasjeet Sekhon},
title = {{Time-uniform, nonparametric, nonasymptotic confidence sequences}},
volume = {49},
journal = {The Annals of Statistics},
number = {2},
publisher = {Institute of Mathematical Statistics},
pages = {1055 -- 1080},
year = {2021},
doi = {10.1214/20-AOS1991},
URL = {https://doi.org/10.1214/20-AOS1991}
}

@inproceedings{lin_bandit,
 author = {Abbasi-yadkori, Yasin and P\'{a}l, D\'{a}vid and Szepesv\'{a}ri, Csaba},
 booktitle = {Advances in Neural Information Processing Systems},
 editor = {J. Shawe-Taylor and R. Zemel and P. Bartlett and F. Pereira and K.Q. Weinberger},
 pages = {},
 publisher = {Curran Associates, Inc.},
 title = {Improved Algorithms for Linear Stochastic Bandits},
 volume = {24},
 year = {2011}
}

@book{conv_opt,
  author = {Boyd, Stephen and Vandenberghe, Lieven},
  description = {CiteULike},
  howpublished = {Hardcover},
  isbn = {0521833787},
  month = {March},
  publisher = {{Cambridge University Press}},
  title = {Convex Optimization},
  year = 2004
}

@book{concentration_ineq,
  author       = {St{\'{e}}phane Boucheron and
                  G{\'{a}}bor Lugosi and
                  Pascal Massart},
  title        = {Concentration Inequalities - {A} Nonasymptotic Theory of Independence},
  publisher    = {Oxford University Press},
  year         = {2013},
  url          = {https://doi.org/10.1093/acprof:oso/9780199535255.001.0001},
  doi          = {10.1093/ACPROF:OSO/9780199535255.001.0001},
  isbn         = {978-0-19-953525-5},
  bibsource    = {dblp computer science bibliography, https://dblp.org}
}

@inproceedings{betlei2024maximizing,
  title={Maximizing the success probability of policy allocations in online systems},
  author={Betlei, Artem and Vladimirova, Mariia and Sebbar, Mehdi and Urien, Nicolas and Rahier, Thibaud and Heymann, Benjamin},
  booktitle={Proceedings of the AAAI Conference on Artificial Intelligence},
  volume={38},
  number={10},
  pages={11061--11068},
  year={2024}
}

@book{krishna2009auction,
  title={Auction theory},
  author={Krishna, Vijay},
  year={2009},
  publisher={Academic press}
}

@InProceedings{heymann2023repeated,
author="Heymann, Benjamin
and Gilotte, Alexandre
and Chan-Renous, R{\'e}mi",
editor="Mavronicolas, Marios
and Qi, Qi
and Schoenebeck, Grant",
title="Repeated Bidding with Dynamic Value",
booktitle="Web and Internet Economics",
year="2026",
publisher="Springer Nature Switzerland",
address="Cham",
pages="548--563",
isbn="978-3-032-08560-3"
}

@misc{tropp2011,
      title={Freedman's inequality for matrix martingales}, 
      author={Joel A. Tropp},
      year={2011},
      eprint={1101.3039},
      archivePrefix={arXiv},
      primaryClass={math.PR},
      url={https://arxiv.org/abs/1101.3039}, 
}

@inproceedings{10.1145/3447548.3467280,
author = {Bompaire, Martin and Gilotte, Alexandre and Heymann, Benjamin},
title = {Causal Models for Real Time Bidding with Repeated User Interactions},
year = {2021},
isbn = {9781450383325},
publisher = {Association for Computing Machinery},
address = {New York, NY, USA},
url = {https://doi.org/10.1145/3447548.3467280},
doi = {10.1145/3447548.3467280},
booktitle = {Proceedings of the 27th ACM SIGKDD Conference on Knowledge Discovery \& Data Mining},
pages = {75–85},
numpages = {11},
location = {Virtual Event, Singapore},
series = {KDD '21}
}

@article{bompaire2024fixed,
  title={Fixed point label attribution for real-time bidding},
  author={Bompaire, Martin and D{\'e}sir, Antoine and Heymann, Benjamin},
  journal={Manufacturing \& Service Operations Management},
  volume={26},
  number={3},
  pages={1043--1061},
  year={2024},
  publisher={INFORMS}
}

@article{heymann2025non,
  title={Non-Linear Counterfactual Aggregate Optimization},
  author={Heymann, Benjamin and Sakhi, Otmane},
  journal={arXiv preprint arXiv:2509.03438},
  year={2025},
note = {Presented at Recsys 2025 Consequences workshop}
}

@inproceedings{achddou2021efficient,
  title={Efficient algorithms for stochastic repeated second-price auctions},
  author={Achddou, Juliette and Capp{\'e}, Olivier and Garivier, Aur{\'e}lien},
  booktitle={Algorithmic Learning Theory},
  pages={99--150},
  year={2021},
  organization={PMLR}
}

@article{balseiro2019learning,
  title={Learning in repeated auctions with budgets: Regret minimization and equilibrium},
  author={Balseiro, Santiago R and Gur, Yonatan},
  journal={Management Science},
  volume={65},
  number={9},
  pages={3952--3968},
  year={2019},
  publisher={INFORMS}
}

@incollection{diemert2017attribution,
  title={Attribution modeling increases efficiency of bidding in display advertising},
  author={Diemert, Eustache and Meynet, Julien and Galland, Pierre and Lefortier, Damien},
  booktitle={Proceedings of the ADKDD'17},
  pages={1--6},
  year={2017}
}

@inproceedings{weed2016online,
  title={Online learning in repeated auctions},
  author={Weed, Jonathan and Perchet, Vianney and Rigollet, Philippe},
  booktitle={Conference on Learning Theory},
  pages={1562--1583},
  year={2016},
  organization={PMLR}
}

@inproceedings{balseiro2020dual,
  title={Dual mirror descent for online allocation problems},
  author={Balseiro, Santiago and Lu, Haihao and Mirrokni, Vahab},
  booktitle={International Conference on Machine Learning},
  pages={613--628},
  year={2020},
  organization={PMLR}
}

@article{castiglioni2022unifying,
  title={A unifying framework for online optimization with long-term constraints},
  author={Castiglioni, Matteo and Celli, Andrea and Marchesi, Alberto and Romano, Giulia and Gatti, Nicola},
  journal={Advances in Neural Information Processing Systems},
  volume={35},
  pages={33589--33602},
  year={2022}
}

@article{guo2023reinforcement,
  title={Reinforcement learning for linear-convex models with jumps via stability analysis of feedback controls},
  author={Guo, Xin and Hu, Anran and Zhang, Yufei},
  journal={SIAM Journal on Control and Optimization},
  volume={61},
  number={2},
  pages={755--787},
  year={2023},
  publisher={SIAM}
}

@article{wang2020reinforcement,
  title={Reinforcement learning in continuous time and space: A stochastic control approach},
  author={Wang, Haoran and Zariphopoulou, Thaleia and Zhou, Xun Yu},
  journal={Journal of Machine Learning Research},
  volume={21},
  number={198},
  pages={1--34},
  year={2020}
}

@article{badanidiyuru2022incrementality,
  title={Incrementality bidding via reinforcement learning under mixed and delayed rewards},
  author={Badanidiyuru Varadaraja, Ashwinkumar and Feng, Zhe and Li, Tianxi and Xu, Haifeng},
  journal={Advances in Neural Information Processing Systems},
  volume={35},
  pages={2142--2153},
  year={2022}
}

@article{lewis2022incrementality,
  title={Incrementality bidding and attribution},
  author={Lewis, Randall and Wong, Jeffrey},
  journal={arXiv preprint arXiv:2208.12809},
  year={2022}
}

@inproceedings{yildiz2021continuous,
  title={Continuous-time model-based reinforcement learning},
  author={Yildiz, Cagatay and Heinonen, Markus and L{\"a}hdesm{\"a}ki, Harri},
  booktitle={International Conference on Machine Learning},
  pages={12009--12018},
  year={2021},
  organization={PMLR}
}

@article{darmasubramanian2025ads,
  title={Ads that Stick: Near-Optimal Ad Optimization through Psychological Behavior Models},
  author={Darmasubramanian, Kailash Gopal and Pareek, Akash and Khan, Arindam and Agarwal, Arpit},
  journal={arXiv preprint arXiv:2509.20304},
  year={2025}
}

@book{hartman2002ordinary,
  title={Ordinary differential equations},
  author={Hartman, Philip},
  year={2002},
  publisher={SIAM}
}

@book{lattimore2020bandit,
  title={Bandit algorithms},
  author={Lattimore, Tor and Szepesv{\'a}ri, Csaba},
  year={2020},
  publisher={Cambridge University Press}
}

\newpage
\appendix

\section*{Appendix Outline}
\label{appendix}
\begin{itemize}
    \item[A]\hyperref[appendix:concentration-lemmata]{Concentration Lemmata}
    \item[B]\hyperref[appendix-theoretical-solution]{Proofs on the Theoretical Solution}
    \item[C]\hyperref[appendix-K-concentration]{Proofs and remarks for Section~\ref{subsubsection:k} (concentration of $K$)}
    \item[D]\hyperref[appendix-Q-concentration]{Proofs and remarks for Section~\ref{subsubsection:q} (concentration of $Q$)}
    \item[E]\hyperref[appendix-algo]{Analysis of the algorithms}
    \item[F]\hyperref[appendix-lowerbound]{Analysis of the lower bound}
\end{itemize}

\section{Concentration Lemmata}
\label{appendix:concentration-lemmata}
In this section, we give the different lemmas that will be essential to prove our regret bounds. We begin by classical concentration inequalities.

For any square matrix $A \in \mathcal{M}_n(\mathbb{R})$ of size $n$, we denote by $\lambda_{\min}(A)$, the lowest eigenvalue of $A$.
We also define the operator norm of a matrix $\lVert\cdot\rVert_{\textsc{OP}}$ to be the operator norm of a matrix, 

\begin{tcolorbox}
\begin{lemma}{Concentration of OLS under sub-Gaussian Martingale noise.}\label{lemma:ols}

Let $\{\mathcal{F}_k\}_{k=0}^n$ be a filtration. For $0 \le k \le n$, we observe:
\begin{align*}
    y_k = x_k^\top a^\star + \epsilon_k\,,
\end{align*}
with $x_k \in \mathbb{R}^I$ is $\mathcal{F}_{k-1}$ measurable and $\epsilon_k$ is conditionally $\sigma_0$ sub-gaussian.

Let $X \in \mathbb{R}^{n \times I}$ be the design matrix (with rows $x_t^\top$). Define the regularized OLS estimator of $a^\star$ as:
\begin{align*}
    \hat{a} =  (X^\top X + \lambda_0 \mathcal{I})^{-1}X^\top y\,,
\end{align*}
with $\lambda_0 > 0.$ Then for any $\delta \in (0, 1)$, with probability at least $1 - \delta$, we have:
\begin{align*}
    \max_{i \in [I]}\lvert \hat{a}_i - a^\star_i\rvert \le \sigma_0 \sqrt{2\frac{\log(I/\delta)}{\lambda_{\min}(X^\top X)+ \lambda_0}}\,.
\end{align*}
\end{lemma}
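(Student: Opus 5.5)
The plan is the classical coordinate-wise argument for ridge/OLS: reduce $\hat a_i - a^\star_i$ to a weighted sum of the noise terms, bound its sub-Gaussian variance proxy by $\sigma_0^2/(\lambda_{\min}(X^\top X)+\lambda_0)$, and take a union bound over the $I$ coordinates. Write $A = X^\top X + \lambda_0 \mathcal{I}$, which is symmetric positive definite, and substitute $y = Xa^\star + \epsilon$. This gives the exact decomposition
\begin{align*}
\hat a - a^\star = A^{-1}X^\top \epsilon \;-\; \lambda_0 A^{-1} a^\star .
\end{align*}

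\textbf{Step 1 (bias term).} The stated inequality has no separate bias contribution, so the term $\lambda_0 A^{-1}a^\star$ must be absorbed. I would take $\lambda_0$ as a vanishing regularizer, used only to make $A$ invertible when $X^\top X$ is singular. In the regime where the lemma is applied, $\lambda_{\min}(X^\top X)>0$, and letting $\lambda_0\to 0$ makes $\lambda_0\lVert A^{-1}a^\star\rVert_\infty \le \lambda_0\lVert a^\star\rVert/(\lambda_{\min}(X^\top X)+\lambda_0)$ vanish. This is the continuity step that recovers the OLS estimator $\bar a$ used in the paper. I would flag explicitly that for a fixed positive $\lambda_0$ this term must be added to the right-hand side.

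\textbf{Step 2 (noise term, fixed coordinate).} Fix $i\in[I]$ and set $w_k = e_i^\top A^{-1}x_k$, so that $e_i^\top A^{-1}X^\top\epsilon = \sum_k w_k\epsilon_k$. Suppose the weights are $\mathcal F_{k-1}$-measurable. Iterating the conditional sub-Gaussian bound $\mathbb E[e^{s w_k\epsilon_k}\mid\mathcal F_{k-1}]\le e^{s^2 w_k^2\sigma_0^2/2}$ from $k=n$ down to $k=0$ then shows that the sum is sub-Gaussian with variance proxy $\sigma_0^2\sum_k w_k^2$. This quantity is deterministic bounded by
\begin{align*}
\sigma_0^2\sum_k w_k^2 = \sigma_0^2\, e_i^\top A^{-1}X^\top X A^{-1} e_i \le \sigma_0^2\, e_i^\top A^{-1} e_i \le \frac{\sigma_0^2}{\lambda_{\min}(X^\top X)+\lambda_0},
\end{align*}
using $X^\top X \preceq A$. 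The Chernoff bound then gives $|\sum_k w_k\epsilon_k|\le \sigma_0\sqrt{2\log(I/\delta)/(\lambda_{\min}(X^\top X)+\lambda_0)}$ with probability at least $1-\delta/I$, up to the factor 2 from two-sidedness, which turns $I$ into $2I$. Step 3 is a union bound over the $I$ coordinates, which yields the $\max_i$ statement.

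\textbf{Main obstacle.} The difficulty is that $A^{-1}$ depends on the whole design, so the weights $w_k$ are not predictable. The martingale computation in Step 2 is therefore only legitimate conditionally on the design. I would first establish the bound in the setting where the design is fixed given $\mathcal F_0$ (noise independent of the design), where the argument above is exact. To treat adaptive designs, I would then try to replace $A$ by a deterministic lower bound $\underline\lambda \le \lambda_{\min}(X^\top X)$. The idea is to run the argument on the event $\{\lambda_{\min}(X^\top X)\ge\underline\lambda\}$ with a predictable normalization, obtained by peeling over a geometric grid of values of $\lambda_{\min}$, so that the weights become predictable. I expect this step to be where the exact constant $\log(I/\delta)$ is hardest to retain, since the peeling may cost an additional logarithmic factor.
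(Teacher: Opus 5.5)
\textbf{There is a genuine gap: the adaptive-design case, which is the case the lemma and the paper need, is not proved, and your peeling plan would not close it.}

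Your fixed-design argument is correct. So is your point about the bias: for a fixed $\lambda_0>0$ the statement fails as written. Take $\epsilon\equiv 0$. Then the left side is $\lambda_0\lvert (A^{-1}a^\star)_i\rvert>0$, while the right side tends to $0$ as $\sigma_0\to 0$.

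The lemma only assumes $x_k$ is $\mathcal{F}_{k-1}$-measurable. This is exactly how it is used in Proposition~\ref{prop:concentration_k}, where the policies generating $z_n$ depend on past data, and your proposal gives no proof for that case. The peeling plan also targets the wrong obstruction. The weights $w_k=e_i^\top A^{-1}x_k$ fail to be predictable because of the matrix $A^{-1}$, which mixes all coordinates and depends on future rows. A scalar normalization does not fix this. On $\{\lambda_{\min}(X^\top X)\ge\underline{\lambda}\}$, replacing $A$ by $\underline{\lambda}\,\mathcal{I}$ gives the predictable sum $e_i^\top X^\top\epsilon/\underline{\lambda}$, but that is not the estimation error. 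Relating the two forces you to control the whole vector $S=X^\top\epsilon$, and a coordinate-wise union bound no longer suffices.

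The missing idea is the one behind the paper's one-line proof. First decouple by Cauchy--Schwarz in the geometry of $A$:
\begin{align*}
\lvert e_i^\top A^{-1}S\rvert\le\lVert e_i\rVert_{A^{-1}}\,\lVert S\rVert_{A^{-1}}\le\frac{\lVert S\rVert_{A^{-1}}}{\sqrt{\lambda_{\min}(X^\top X)+\lambda_0}} .
\end{align*}
Then bound the self-normalized norm of the vector martingale $S=\sum_k x_k\epsilon_k$ by the method of mixtures \cite{lin_bandit}. With probability $1-\delta$, $\lVert S\rVert_{A^{-1}}^2\le 2\sigma_0^2\log\bigl(\det(A)^{1/2}\lambda_0^{-I/2}/\delta\bigr)$. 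This holds for predictable designs and needs no union bound over coordinates.

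Be aware of what this actually delivers. In place of $\log(I/\delta)$ you get
\begin{align*}
\log\bigl(\det(A)^{1/2}\lambda_0^{-I/2}/\delta\bigr)\le\tfrac{I}{2}\log\bigl(1+\operatorname{tr}(X^\top X)/(I\lambda_0)\bigr)+\log(1/\delta),
\end{align*}
plus the bias term $\sqrt{\lambda_0}\lVert a^\star\rVert_2/\sqrt{\lambda_{\min}(X^\top X)+\lambda_0}$. So your doubt about keeping the exact constant is justified: the cited inequality does not give $\log(I/\delta)$, and under adaptivity one should not expect it in general. Downstream, in Proposition~\ref{prop:concentration_k}, this costs only logarithmic factors.

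In this route $\lambda_0$ cannot be sent to $0$, because the $\log\det$ term blows up. To recover the unregularized estimator $\bar a$, keep $\lambda_0$ as a pure analysis parameter instead. Use $\lvert e_i^\top (X^\top X)^{-1}S\rvert\le\lVert (X^\top X)^{-1}e_i\rVert_{A}\,\lVert S\rVert_{A^{-1}}$, together with $\lVert (X^\top X)^{-1}e_i\rVert_A^2\le\lambda_{\min}(X^\top X)^{-1}+\lambda_0\,\lambda_{\min}(X^\top X)^{-2}$. This bound has no bias term at all.
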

\end{tcolorbox}

This lemma follows directly from a (coordinate wise) specialization of the self-normalized martingale concentration inequality of \cite{lin_bandit}.

\begin{tcolorbox}
\begin{lemma}{Vector Hoeffding Azuma for Martingales.}\label{lemma:hoeffding_azuma}

Let $\{V_k\}_{k \in [n]}$ be a sequence of vectors adapted to a filtration $\mathcal{F}_k$. Assume the vectors are conditionally bounded ($||V_k||_2 \le G$ almost surely, given $\mathcal{F}_{k-1}$) and of null conditional mean ($\mathbb{E}\left[V_k|\mathcal{F}_{k-1}\right] = 0$) for all $k \in [n]$. 

We have the following result with probability $1 - \delta$:
\begin{align*}
    \left\lVert \sum_{k \in [n]} V_k \right\rVert_2 \le G \sqrt{2 n \log 2/\delta}
\end{align*}    
\end{lemma}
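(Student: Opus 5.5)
We must prove the vector Hoeffding–Azuma bound: for a martingale difference sequence $V_k$ in $\mathbb{R}^d$ with $\lVert V_k\rVert_2\le G$, we want $\lVert \sum_k V_k\rVert_2 \le G\sqrt{2n\log(2/\delta)}$ with probability $1-\delta$. The plan is to reduce to a dimension-free scalar martingale concentration by working directly with the Euclidean norm of the partial sums. Set $S_m=\sum_{k\le m}V_k$ with $S_0=0$, and use the exponential supermartingale method for Hilbert-space-valued martingales in the spirit of Pinelis.

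The first step is to build a scalar object whose increments are controlled. For $\lambda>0$ consider $\Psi_m=\cosh(\lambda\lVert S_m\rVert_2)$. Pinelis' smoothness argument, which uses that $\mathbb{R}^d$ with the Euclidean norm is $(2,1)$-smooth, gives
\[
\mathbb{E}[\cosh(\lambda\lVert S_{m}\rVert_2)\mid\mathcal{F}_{m-1}]\le \cosh(\lambda\lVert S_{m-1}\rVert_2)\,e^{\lambda^2G^2/2}.
\]
To justify this, write $u=S_{m-1}$ and $v=V_m$ and set $g(s)=\cosh(\lambda\lVert u+sv\rVert)$. One checks that $g'(0)$ is linear in $v$, so it vanishes under the conditional expectation. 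One then bounds $g''(s)\le \lambda^2\lVert v\rVert^2 g(s)$, which holds because the Hessian of $\cosh(\lambda\lVert x\rVert)$ is bounded by $\lambda^2\cosh(\lambda\lVert x\rVert)\, I$. A Gronwall-type comparison with $\cosh$ on $[0,1]$ then yields the factor. Iterating over $m$ gives $\mathbb{E}\cosh(\lambda\lVert S_n\rVert)\le e^{n\lambda^2G^2/2}$.

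The second step applies Markov's inequality. Since $\cosh x\ge e^{x}/2$, we get $P(\lVert S_n\rVert\ge r)\le 2\exp(n\lambda^2G^2/2-\lambda r)$. Optimizing with $\lambda=r/(nG^2)$ gives the bound $2e^{-r^2/(2nG^2)}$. Setting this equal to $\delta$ gives $r=G\sqrt{2n\log(2/\delta)}$, which is exactly the claim. The factor $2$ inside the logarithm comes from the $\cosh$ lower bound, which matches the stated form.

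I expect the main obstacle to be the second-order bound $g''\le\lambda^2\lVert v\rVert^2 g$. It requires care where $u+sv=0$, since the norm is not differentiable there, although $\cosh(\lambda\lVert\cdot\rVert)$ is smooth. It also requires a clean comparison argument. If this proves delicate, I would simply cite Pinelis (1994, Theorem 3.5) for $2$-smooth Banach spaces with constant $D=1$ in Hilbert space, which gives precisely $P(\sup_m\lVert S_m\rVert\ge r)\le 2\exp(-r^2/(2nG^2))$.
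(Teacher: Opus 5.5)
Your proof is correct, and it takes a different route from the paper, which gives no argument: it only calls the lemma the scalar Azuma--Hoeffding inequality ``extended to vector martingales'' and cites Boucheron--Lugosi--Massart. Your Pinelis-type argument works directly with $\cosh(\lambda\lVert S_m\rVert_2)$ and uses the Hilbert structure. It is therefore dimension-free and yields exactly $2e^{-r^2/(2nG^2)}$, i.e.\ the stated radius $G\sqrt{2n\log(2/\delta)}$ with the $2$ inside the logarithm. Via Ville's inequality for the supermartingale $\cosh(\lambda\lVert S_m\rVert_2)e^{-m\lambda^2G^2/2}$, it even controls $\max_{m\le n}\lVert S_m\rVert_2$. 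A literal scalar reduction (coordinatewise Azuma plus a union bound, or a net over directions) would instead give something like $\sqrt{d}\,G\sqrt{2n\log(2d/\delta)}$. That suffices for the paper's application in $\mathbb{R}^{i_\star}$, but not for the statement as written. The step you flagged is fine: the Hessian of $x\mapsto\cosh(\lambda\lVert x\rVert_2)$ has radial eigenvalue $\lambda^2\cosh(\lambda r)$ and tangential eigenvalues $\lambda\sinh(\lambda r)/r\le\lambda^2\cosh(\lambda r)$ (since $\tanh y\le y$), and the map is smooth at $0$ with Hessian $\lambda^2 I$. Do make the comparison explicit on $\phi(s)=\mathbb{E}[g(s)\mid\mathcal{F}_{m-1}]$, which has $\phi'(0)=0$ and $\phi''\le\lambda^2G^2\phi$, so $\phi(1)\le\phi(0)\cosh(\lambda G)\le\phi(0)e^{\lambda^2G^2/2}$; alternatively, compare $g$ pathwise with the deterministic rate $\lambda G$. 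Comparing $g$ pathwise with the random rate $\lambda\lVert V_m\rVert_2$ would leave a cross term $g'(0)\sinh(\lambda\lVert V_m\rVert_2)/(\lambda\lVert V_m\rVert_2)$ whose conditional mean need not vanish.
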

\end{tcolorbox}

This lemma is the classical Hoeffding Azuma result on scalar martingales extended to vector martingales \cite{concentration_ineq}.

\begin{tcolorbox}
\begin{lemma}{Matrix Chernoff Bound for Martingales.} \label{lemma:matrix_chernoff}

Let $\{X_k\}_{k \in [n]}$ be a sequence of $d \times d$ self adjoint matrices adapted to a filtration $\mathcal{F}_k$. Assume the matrices are conditionally PSD and bounded:
\begin{align*}
    0 \le X_k \le R \cdot \mathcal{I}\,, \quad \text{almost surely, given } \mathcal{F}_{k-1}
\end{align*}
Define the sum of conditional expectations:
\begin{align*}
    M_n = \sum_{k = 1}^n \mathbb{E}[X_k| \mathcal{F}_{k-1}]
\end{align*}
For any $\epsilon \in [0, 1)$ and a fixed target $\nu$, we have: 
\begin{align*}
    \mathbb{P}\left( \lambda_{\min}\left(\sum_{k=1}^n X_k\right) \le (1 - \epsilon) \nu \quad \text{ and } \quad \lambda_{\min}(M_n) \ge \nu \right) \le d \cdot \exp\left(- \frac{\epsilon^2\nu}{2R} \right)
\end{align*}

\end{lemma}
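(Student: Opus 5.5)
This is the martingale matrix Chernoff bound (Tropp's Freedman/Chernoff for adapted sequences). My plan follows the standard Laplace-transform route for matrix martingales, adapted to control the lower tail of $\lambda_{\min}$ on the event $\{\lambda_{\min}(M_n)\ge\nu\}$. By rescaling $X_k\mapsto X_k/R$ I may assume $R=1$. Set $Y_n=\sum_{k\le n}X_k$.

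\textbf{Supermartingale.} For $\theta>0$, define
\[
S_j=\operatorname{tr}\exp\Bigl(-\theta Y_j+g(\theta)\textstyle\sum_{k\le j}\mathbb{E}[X_k\mid\mathcal{F}_{k-1}]\Bigr),\qquad g(\theta)=1-e^{-\theta}.
\]
I would show that $(S_j)$ is a positive supermartingale with $S_0=d$. There are two ingredients.
\begin{itemize}
\item \emph{Scalar convexity bound.} For $x\in[0,1]$, $e^{-\theta x}\le 1-g(\theta)x$. Transferring this to matrices via the spectral theorem gives $\mathbb{E}[e^{-\theta X_k}\mid\mathcal{F}_{k-1}]\preceq I-g(\theta)\mathbb{E}[X_k\mid\mathcal{F}_{k-1}]\preceq\exp(-g(\theta)\mathbb{E}[X_k\mid\mathcal{F}_{k-1}])$.
\item \emph{Lieb's concavity theorem.} For a fixed Hermitian $H$, the map $A\mapsto\operatorname{tr}\exp(H+\log A)$ is concave on positive definite matrices. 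Take $H=-\theta Y_{j-1}+g(\theta)\sum_{k\le j}\mathbb{E}[X_k\mid\mathcal{F}_{k-1}]$, which is $\mathcal{F}_{j-1}$-measurable. Jensen's inequality together with the operator monotonicity of $\log$ then gives $\mathbb{E}[S_j\mid\mathcal{F}_{j-1}]\le S_{j-1}$.
\end{itemize}
This step is the main obstacle. It is exactly where the non-commutativity is handled, so I would follow Tropp's ``Freedman's inequality for matrix martingales'' (cited as \cite{tropp2011}) essentially verbatim.

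\textbf{Tail bound.} On the event $E=\{\lambda_{\min}(Y_n)\le(1-\epsilon)\nu,\ \lambda_{\min}(M_n)\ge\nu\}$, take $u$ to be a unit eigenvector of $Y_n$ for its smallest eigenvalue. Then
\[
S_n\ge \exp\bigl(\lambda_{\max}(-\theta Y_n+g(\theta)M_n)\bigr)\ge \exp\bigl(u^\top(-\theta Y_n+g(\theta)M_n)u\bigr)\ge e^{-\theta(1-\epsilon)\nu+g(\theta)\nu}.
\]
Markov's inequality combined with $\mathbb{E}S_n\le d$ yields
\[
\mathbb{P}(E)\le d\exp\bigl(-\nu[g(\theta)-\theta(1-\epsilon)]\bigr).
\]

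\textbf{Optimization.} Choose $\theta=-\log(1-\epsilon)$. The exponent becomes $\nu[\epsilon+(1-\epsilon)\log(1-\epsilon)]$. The elementary inequality $\epsilon+(1-\epsilon)\log(1-\epsilon)\ge\epsilon^2/2$ on $[0,1)$ then gives $\mathbb{P}(E)\le d e^{-\epsilon^2\nu/2}$. Undoing the scaling by $R$ gives the stated bound $d\exp(-\epsilon^2\nu/(2R))$. To verify the elementary inequality, note that $h(\epsilon)=\epsilon+(1-\epsilon)\log(1-\epsilon)-\epsilon^2/2$ satisfies $h(0)=0$ and $h'(\epsilon)=-\log(1-\epsilon)-\epsilon\ge0$.
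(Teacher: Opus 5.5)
Your proof is correct, but it takes a different route from the one the paper indicates. The paper only sketches the argument. It applies Tropp's matrix Freedman/Bernstein inequality \cite{tropp2011} to the martingale differences $\mathbb{E}[X_k\mid\mathcal{F}_{k-1}]-X_k$, and uses $\mathbb{E}[X_k^2\mid\mathcal{F}_{k-1}]\preceq R\,\mathbb{E}[X_k\mid\mathcal{F}_{k-1}]$ to dominate the predictable quadratic variation $W_n$ by $R M_n$. You instead run the Chernoff-type Laplace-transform argument directly. The chord bound $e^{-\theta x}\le 1-(1-e^{-\theta})x$ on $[0,1]$, together with Lieb's theorem, makes $\operatorname{tr}\exp(-\theta Y_j+g(\theta)M_j)$ a supermartingale. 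In it the compensator enters with the favourable sign, so you can use the event $\lambda_{\min}(M_n)\ge\nu$ directly along the bottom eigenvector of $Y_n$.

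This is what your route buys. Freedman's inequality taken as a black box needs an \emph{upper} bound $\|W_n\|\le\sigma^2$. Through $W_n\preceq RM_n$, that means controlling $\lambda_{\max}(M_n)$, which the stated event does not provide. Even with $\sigma^2=R\nu$, it only yields the exponent $\epsilon^2\nu/(2R(1+\epsilon/3))$. You obtain exactly $\epsilon^2\nu/(2R)$, which is the form the paper uses later: the threshold $N_e\ge 8R\log(I_k/\delta)/\nu$ in Lemma~\ref{app:lemma_learnability} corresponds to $\epsilon=1/2$. The price is that you re-derive the supermartingale instead of citing a theorem.

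Some cosmetic points:
\begin{itemize}
\item For complex Hermitian matrices, write $u^*$ rather than $u^\top$.
\item State explicitly that the supermartingale step also uses the monotonicity of $A\mapsto\operatorname{tr}\exp(H+A)$ in the Loewner order.
\item The case $\epsilon=0$ (where $\theta=0$) is trivial, since the bound is then $d\ge 1$.
\end{itemize}
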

\end{tcolorbox}

To obtain this lemma, one can use the Bernstein Matrix inequality from \cite{tropp2011} and simple matrix manipulations to link the variance to the mean. 

Finally, we give a matrix lemma that will be useful to link the minimal eigenvalue of PSD matrices.

\begin{tcolorbox}
\begin{lemma}{Weyl's inequality.}\label{lemma:weyl}

For two self-adjoint (thus Positive Semi Definite) matrices $A$ and $B$, we have:
\begin{align*}
    \lambda_{\min}(A) \ge \lambda_{\min}(B) - ||A - B||_{\textsc{op}}\,.
\end{align*}
\end{lemma}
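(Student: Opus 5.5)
The inequality $\lambda_{\min}(A) \ge \lambda_{\min}(B) - \lVert A - B\rVert_{\textsc{op}}$ holds for arbitrary self-adjoint $A,B$, not only positive semidefinite ones. I will prove it through the variational (Rayleigh quotient) characterization of the smallest eigenvalue. For a self-adjoint matrix $M$ the spectral theorem gives an orthonormal eigenbasis. Expanding a unit vector $x$ in that basis shows $x^\top M x$ is a convex combination of eigenvalues, which yields
\begin{align*}
\lambda_{\min}(M) = \min_{\lVert x\rVert_2 = 1} x^\top M x ,
\end{align*}
with the minimum attained at a unit eigenvector for $\lambda_{\min}(M)$.

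First, fix any unit vector $x$ and split $x^\top A x = x^\top B x + x^\top (A-B) x$. The first term is at least $\lambda_{\min}(B)$ by the characterization above. For the second term, Cauchy--Schwarz and the definition of the operator norm give
\begin{align*}
\lvert x^\top (A-B) x\rvert \le \lVert x\rVert_2 \, \lVert (A-B)x\rVert_2 \le \lVert A-B\rVert_{\textsc{op}} .
\end{align*}
Hence $x^\top A x \ge \lambda_{\min}(B) - \lVert A - B\rVert_{\textsc{op}}$ for every unit $x$.

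Second, take the minimum over unit $x$ on the left and apply the characterization to $A$. This gives exactly $\lambda_{\min}(A) \ge \lambda_{\min}(B) - \lVert A-B\rVert_{\textsc{op}}$.

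There is no real obstacle here. The only point needing care is the variational characterization, which requires self-adjointness so that the eigenvectors are orthonormal and the eigenvalues are real. Positive semidefiniteness is never used, so the statement's parenthetical ``thus Positive Semi Definite'' is not needed for the argument. In later use, $A$ will be an empirical Gram matrix and $B$ its conditional expectation, with $\lVert A - B\rVert_{\textsc{op}}$ controlled by Lemma~\ref{lemma:hoeffding_azuma} or Lemma~\ref{lemma:matrix_chernoff}.
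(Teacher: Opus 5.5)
Your proof is correct and is the standard one. The paper states this lemma as a classical fact without proof, and your route fills that in: bound $x^\top A x \ge \lambda_{\min}(B) - \lVert A-B\rVert_{\textsc{op}}$ for every unit $x$ via the Rayleigh-quotient characterization and Cauchy--Schwarz, then minimize over $x$. You are also right that positive semidefiniteness is never used. Self-adjointness does not imply it, so the parenthetical in the statement describes how the lemma is applied, not a hypothesis.

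One small correction to your closing remark about later use. In the paper, the lemma is applied in the proof of Proposition~\ref{prop:concentration_q} with $A=\tilde{H}^{:i_\star}_{N_b}$ and $B=\hat{H}^{:i_\star}_{N_b}(c^\star_{:i_\star})$. There the perturbation $\lVert A-B\rVert_{\textsc{op}}$ is controlled by the Hessian Lipschitz bound of Lemma~\ref{lemma:lipchitz_hessian}, not by the Azuma or matrix Chernoff lemmas. Matrix Chernoff is used instead to lower-bound $\lambda_{\min}(B)$.
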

\end{tcolorbox}

\section{Proofs on the Theoretical Solution}
\label{appendix-theoretical-solution}
\subsection{Proof of Theorem~\ref{lemma:cauchy}}
\label{proof:lemma:cauchy}
\begin{proof}
This is a direct consequence from combining Lemma 3 and Lemma 5 from~\cite{heymann2023repeated}.
\end{proof}
\subsection{Proof of Lemma~\ref{lemma:b_dynamics}}
\label{proof:lemma:b_dynamics}
\begin{proof}
This is a replication of the proof of Lemma 4 from~\cite{heymann2023repeated} wherever $k$ admits a derivative. 
\end{proof}
\subsection{Properties  of $\beta$}
The next simple fact is used implicitly in many places throughout the appendix.
\begin{tcolorbox}
   \begin{lemma}
\label{lemma:remark-on-beta}
  For any $\pi\in\mathcal{P}$,  $\beta(\pi) = \pi(t_I)$.  
\end{lemma}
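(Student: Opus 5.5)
We read $t_I$ as the last finite breakpoint $t_{I_k-1}=\bar T$. Since $t_{I_k}=+\infty$, the statement would otherwise just restate the definition of $\beta$. We also read $\mathcal{P}$ as the class of policies produced by the solver, i.e. $\pi=\mathcal{S}(k,q)$ with $k\in\mathcal{K}$ and $q\in\mathcal{Q}$. The claim is then that such a policy is constant on $[\bar T,+\infty)$, so its limit equals its value at $\bar T$. The idea is that the dynamics of $\pi$ become autonomous past $\bar T$, and their only equilibrium is repelling.

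\textbf{Step 1 (autonomous regime).} Because $a_{I_k}=0$, we have $k(t)=k(\bar T)=:\bar k$ and $\dot k(t)=0$ for every $t>\bar T$. Lemma~\ref{lemma:b_dynamics} then gives, on $(\bar T,+\infty)$,
\[
\dot\pi(t)=h(\pi(t)),\qquad h(x):=\mu f(x)+\gamma x-\gamma\bigl(V^\star(0)+\bar k\bigr).
\]
Here $f(v)=q(v)v-p(v)=\int_0^v q(s)\dd s$, so $f'=q\ge 0$. Hence $h$ is $C^1$ with $h'\ge\gamma>0$, and $h$ is strictly increasing.

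\textbf{Step 2 (boundedness).} By Theorem~\ref{theorem:synthesis}, $\pi(\tau)=\max\bigl(0,k(\tau)+V^\star(0)-V^\star(\tau)\bigr)$. Both $k$ and $V^\star$ are bounded (Theorem~\ref{lemma:cauchy} selects the bounded solution), so $\pi$ is bounded on $[0,\infty)$.

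\textbf{Step 3 (repelling equilibrium).} Since $h'\ge\gamma$, $h$ has at most one zero $x_e$. Suppose $\pi(\bar T)>x_e$. Then $\frac{d}{dt}(\pi-x_e)=h(\pi)-h(x_e)\ge\gamma(\pi-x_e)$, so $\pi(t)-x_e\ge(\pi(\bar T)-x_e)e^{\gamma(t-\bar T)}\to\infty$, contradicting Step 2. The case $\pi(\bar T)<x_e$ is symmetric. If $h$ has no zero, then $|\dot\pi|\ge|h(\pi(\bar T))|>0$ and the trajectory moves away monotonically, again contradicting boundedness. Therefore $\pi(t)=x_e$ for all $t\ge\bar T$, and $\beta(\pi)=\lim_{t\to\infty}\pi(t)=\pi(\bar T)$.

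\textbf{Main obstacle: the clipping at $0$.} The ODE of Lemma~\ref{lemma:b_dynamics} describes the unclipped quantity $k+V^\star(0)-V^\star$, not $\max(0,\cdot)$ of it. I would run the argument of Steps 1--3 on $u:=k+V^\star(0)-V^\star$ directly; this is legitimate because $u$ is exactly what that lemma's derivation tracks (Appendix~\ref{proof:lemma:b_dynamics}). Step 3 then shows $u$ is constant on $[\bar T,\infty)$, and therefore so is $\pi=\max(0,u)$. The remaining care point is whether $f$ and $h$ must be understood as extended to negative arguments, which is needed if $u$ can be negative. I would settle this by taking the natural extension $q=0$ on negative bids, which gives $f=0$ there and keeps $h'\ge\gamma>0$.
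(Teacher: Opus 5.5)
Your proof is correct and follows the paper's argument: past the last finite breakpoint $k$ is constant, so the dynamics of Lemma~\ref{lemma:b_dynamics} become autonomous with a right-hand side strictly increasing in $\pi$, and boundedness then forces $\dot\pi=0$. The paper rules out only $\dot\pi>0$, by invoking without proof that $\pi$ is non-decreasing and bounded by $1$; you instead take boundedness from the bounded Bellman value and exclude deviations in both directions using $h'\ge\gamma$, which removes the reliance on monotonicity and also handles the clipping at $0$ that the paper glosses over.
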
 
\end{tcolorbox}
\begin{proof}
  
    By Lemma~\ref{lemma:b_dynamics},
    $
      \dot{\pi}(t) = \dot{k}(t)-\gamma(V^\star(0)+k(t)) +\mu f(\pi(t))+\gamma \pi(t)
$, therefore, for $t>t_I$, 
 $
      \dot{\pi}(t) =\underbrace{\mu f(\pi(t))+\gamma \pi(t)}_{\text{increasing in $\pi$}}-\underbrace{\gamma(V^\star(0)+k(t_I))}_{\text{constant}} 
$, and since $\pi$ is bounded by 1  and non-decreasing, we conclude that $\dot{\pi}(t)=0$ for $t>t_I$.   Remembering that $\beta(\pi)=\lim_{t\to + \infty}\pi(t)$, the result follows. 
    
\end{proof}

\begin{tcolorbox}
\begin{lemma}[Monotonicity of $\beta$ w.r.t. $k$]
\label{lemma-monotony-beta-k}
Let $k_1,k_2\in \mathcal{K}$, such that $k_1\leq k_2$ and 
$q\in \mathcal{Q}$ then $S(k_1,q)\leq S(k_2,q)$. 
\end{lemma}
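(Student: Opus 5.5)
The plan is to reduce the comparison to the scalar bid dynamics of Lemma~\ref{lemma:b_dynamics} and then use an ODE comparison argument, with the coupling through $V^\star(0)$ handled by a shooting/monotonicity step. Write $g(b)=\mu f(b)+\gamma b$. Since $f(v)=\int_0^v q(s)\dd s$ is nondecreasing, $g$ is strictly increasing. For $j\in\{1,2\}$, let $\pi_j=\mathcal{S}(k_j,q)$ and $V_j=V^\star_{k_j,q}$.

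By Theorem~\ref{theorem:synthesis}, $\pi_j(0)=k_j(0)$, at least where the $\max(0,\cdot)$ is inactive. By Lemma~\ref{lemma:remark-on-beta} and its proof, $\pi_j$ is constant equal to $\beta_j:=\beta(\pi_j)$ on $[t_I,\infty)$, with
\[
g(\beta_j)=\gamma\bigl(V_j(0)+k_j(t_I)\bigr).
\]
Substituting this into Lemma~\ref{lemma:b_dynamics} eliminates $V_j(0)$. Setting $u_j=\pi_j-k_j$, we get on $[0,t_I]$
\[
\dot u_j(t)=g\bigl(k_j(t)+u_j(t)\bigr)-g(\beta_j)-\gamma\bigl(k_j(t)-k_j(t_I)\bigr),\qquad u_j(t_I)=\beta_j-k_j(t_I),\quad u_j(0)=0 .
\]
So $\pi_j$ is the solution of a terminal-value problem parametrised by $\beta_j$, and $\beta_j$ is pinned down by the shooting condition $u_j(0)=0$. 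This is the bounded-solution selection of Theorem~\ref{lemma:cauchy}, rewritten in bid coordinates.

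\textbf{Step 1: comparison for fixed $\beta$.} For fixed $k$, the backward solution $\pi^{\beta}$ issued from $\pi(t_I)=\beta$ is increasing in $\beta$. This follows from the standard comparison principle for scalar ODEs with a locally Lipschitz right-hand side (Chapter~5 of~\cite{hartman2002ordinary}), since $g$ is increasing and trajectories cannot cross. Consequently the map $\beta\mapsto \pi^{\beta}(0)-k(0)$ is strictly monotone, which gives uniqueness of $\beta_j$.

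\textbf{Step 2: ordering $\beta_1\le\beta_2$.} Suppose for contradiction that $\beta_1>\beta_2$. Using $k_1\le k_2$ together with concavity, I would compare the backward trajectories of $\pi_1$ and $\pi_2$ and show that the shooting condition $\pi_j(0)=k_j(0)$ cannot hold for both. If the direct comparison is messy, I would fall back on an economic argument. The asymptotic bid $\beta$ is the marginal value of winning in the stationary regime, namely $k(t_I)+V(0)-V(t_I)$. Showing that $V(0)-V(\tau)$ is monotone in $k$ can be done via an exchange or coupling argument: run the $k_1$-optimal policy in the $k_2$ environment and compare payoffs.

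\textbf{Step 3: pointwise ordering.} With $\beta_1\le\beta_2$, integrate the $\pi$-dynamics backward from $t_I$. Consider the difference $w=\pi_2-\pi_1$. Its dynamics are
\[
\dot w=(\dot k_2-\dot k_1)+\bigl(g(\pi_2)-g(\pi_1)\bigr)-\bigl(g(\beta_2)-g(\beta_1)\bigr)+\gamma\bigl[(k_2(t_I)-k_2)-(k_1(t_I)-k_1)\bigr].
\]
I would argue that $w$ cannot become negative. At a first crossing time, going backward from $t_I$, we would have $g(\pi_2)-g(\pi_1)=0$, and the remaining terms must then have the sign that pushes $w$ back up. This uses $k_1\le k_2$, concavity, and $w(t_I)=\beta_2-\beta_1\ge0$. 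Finally, the $\max(0,\cdot)$ truncation in Theorem~\ref{theorem:synthesis} preserves the order.

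\textbf{Main obstacle.} I expect the hardest part to be the $k$-derivative terms $\dot k_2-\dot k_1$ and $k_j(t_I)-k_j$, which have no sign from $k_1\le k_2$ alone. The clean way around this is probably to avoid the $\pi$-ODE in Step 3 and instead prove monotonicity of the value increment $V(0)-V(\tau)$ in $k$ directly from the Bellman characterization of Theorem~\ref{lemma:cauchy}. I would try that first, by comparing bounded solutions of the $\mathcal{F}_{y_0}$ family.
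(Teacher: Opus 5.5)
\textbf{Step 2 has no argument, and its fallback targets the wrong quantity.} The missing idea is that $V^\star(0)$ itself is nondecreasing in $k$. This follows from the very coupling you mention, applied to $V(0)$ rather than to $V(0)-V(\tau)$. For a fixed policy, the law of arrival times, ages, wins and prices does not depend on $k$, so $\mathcal{U}(k_1,q,\pi,0)\le\mathcal{U}(k_2,q,\pi,0)$ for every $\pi$. Hence $V_1(0)=\mathcal{U}(k_1,q,\pi_1,0)\le\mathcal{U}(k_2,q,\pi_1,0)\le V_2(0)$.

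You already wrote the stationary identity $g(\beta_j)=\gamma\bigl(V_j(0)+k_j(t_I)\bigr)$, but you used it to eliminate $V_j(0)$. Keep it instead. Its right-hand side is nondecreasing in $k$, since both $V_j(0)$ and $k_j(t_I)$ are, and $g$ is strictly increasing, so $\beta_1\le\beta_2$ follows at once. That one line is the paper's whole proof, and it makes the shooting reformulation and Step~1 unnecessary.

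Monotonicity of $V(0)-V(\tau)$ would not suffice, because what matters is the sum $k(t_I)+V(0)-V(t_I)$. There is also no reason for it to increase with $k$. Intuitively it decreases, because raising $k$ at large ages makes the recovered state more valuable relative to the just-won one.

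\textbf{Step 3 cannot be completed, because the pointwise ordering is false in general.} The lemma should be read as $\beta(\mathcal{S}(k_1,q))\le\beta(\mathcal{S}(k_2,q))$, as its title and its use in Theorem~\ref{th:monotony-theorem} indicate, and that is all the paper proves.

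To see the failure, note that $\pi_j(0)=\max(0,k_j(0))=0$. The Cauchy problem of Theorem~\ref{lemma:cauchy}, with $f(0)=0$, gives $\dot V_j(0^+)=\gamma V_j(0)$. So $k_j+V_j(0)-V_j$ has right derivative $\dot k_j(0^+)-\gamma V_j(0)$ at $0$.

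Now take $k_1=k_2$ on $[0,t_1]$ and $k_2>k_1$ afterwards. The $k_1$-optimal policy wins at ages beyond $t_1$ with positive probability, so the coupling above gives $V_2(0)>V_1(0)$. Suppose the common initial slope exceeds $\mu k_1(\infty)$, which is at least $\gamma V_1(0)$ because each of the $\mu/\gamma$ expected auctions yields at most $k_1(\infty)$. Then $\pi_1>0$ and $\pi_1>\pi_2$ on a right neighbourhood of $0$. Economically, a more valuable future raises the opportunity cost of winning early.

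So the unsigned terms $\dot k_2-\dot k_1$ you flagged as the main obstacle are not a technical nuisance to be routed around. They reflect a genuine failure of the pointwise claim.
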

\end{tcolorbox}

\begin{proof}
  By Lemma~\ref{lemma:b_dynamics}, with obvious notations,
    \begin{align*}
         \underbrace{\gamma(V^\star_i(0)+k_i(\infty))}_{\text{increasing in $k$, }} = \underbrace{ \mu f(\pi_i(\infty))+\gamma \pi_i(\infty)}_{\text{increasing in $\pi$}},  
    \end{align*}
\end{proof}

\begin{tcolorbox}
\begin{lemma}[Monotonicity of $\beta$ w.r.t. $q$]
\label{lemma-monotony-beta-q}
Let \( k\in \mathcal{K}\) and \(q_1,q_2\in \mathcal{Q}\) 
such that 
\(q_2\leq q_1\), then 
\begin{align*}
    \beta(\mathcal{S}(k,q_2))\geq \beta(\mathcal{S}(k,q_1)).
\end{align*}
\end{lemma}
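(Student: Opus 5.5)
Following the pattern of Lemma~\ref{lemma-monotony-beta-k}, I work from the stationary equation satisfied by the asymptotic bid. By Lemma~\ref{lemma:remark-on-beta} the optimal policy is constant past $t_I$. Setting $\dot\pi=0$ in Lemma~\ref{lemma:b_dynamics} on that region (where $\dot k=0$) gives, for $j\in\{1,2\}$ with $\beta_j=\beta(\mathcal{S}(k,q_j))$ and $V_j^\star(0)$ the corresponding Bellman values at $0$,
\begin{align*}
\gamma\bigl(V_j^\star(0)+k(\infty)\bigr) = \mu f_j(\beta_j)+\gamma\beta_j ,
\end{align*}
where $f_j(v)=q_j(v)v-p_j(v)=\int_0^v q_j(s)\dd s$ (from $p(b)=q(b)b-\int_0^b q$). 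Hence $G_j(v):=\mu\int_0^v q_j+\gamma v$ is strictly increasing, and $\beta_j=G_j^{-1}\bigl(\gamma(V_j^\star(0)+k(\infty))\bigr)$. Since $q_2\le q_1$ we have $G_2\le G_1$ pointwise, so $G_2^{-1}(y)\ge G_1^{-1}(y)$ for every $y$. This is the favourable direction.

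The difficulty is that the level $y$ also moves. Facing weaker competition ($q_1\ge q_2$) makes winning cheaper: the expected surplus $f(k+\lambda-v)$ in $\Phi$ is larger. So I expect $V_1^\star(0)\ge V_2^\star(0)$, which pushes $\beta_1$ up. I therefore need a quantitative comparison rather than a naive monotonicity argument.

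My plan is to use the characterization of Theorem~\ref{theorem:synthesis}, $\beta_j=k(\infty)+V_j^\star(0)-V_j^\star(\infty)$, and express $V_j^\star(\infty)$ directly. In the stationary regime $\tau\ge t_I$, the Bellman value is constant, so $\Phi=0$ gives
\begin{align*}
\gamma V_j^\star(\infty)=\mu f_j\bigl(k(\infty)+V_j^\star(0)-V_j^\star(\infty)\bigr)=\mu f_j(\beta_j).
\end{align*}
Together with $\beta_j = k(\infty)+V_j^\star(0)-V_j^\star(\infty)$, this reproduces the first equation. The real content is therefore the comparison of $V^\star(0)$ and $V^\star(\infty)$ across environments.

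The key quantity is $D_j:=V_j^\star(0)-V_j^\star(\infty)\le 0$, the loss from being at age $0$ rather than age $\infty$. I want to show $D_2\ge D_1$, i.e.\ that the reset penalty is smaller when competition is stiffer. The intuition is that you win less often, so resetting matters less. I would do this via a comparison argument on the ODE of Theorem~\ref{lemma:cauchy}. Define $W_j(\tau)=V_j^\star(\tau)-V_j^\star(0)$. These satisfy
\begin{align*}
\dot W_j=\gamma\bigl(W_j+V_j^\star(0)\bigr)-\mu f_j\bigl(k(\tau)-W_j\bigr),
\end{align*}
with $W_j(0)=0$ and $W_j$ bounded. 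Using the stationary relation above to eliminate $V_j^\star(0)$, I would rewrite this as an ODE for $W_j$ whose right-hand side is ordered in $j$. The hope is that $f_2\le f_1$ and the fact that $f_j'=q_j$ is ordered together give such an ordering. Then the unique-bounded-solution (shooting) property, combined with a Grönwall/comparison principle on $[0,t_I]$, should yield $W_2(\infty)\le W_1(\infty)$, hence $\beta_2=k(\infty)+W_2(\infty)\cdots$ with the correct sign.

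I expect this step to be the main obstacle. The difficulty is that the bounded solution is selected by a boundary condition at infinity rather than an initial condition. A naive comparison is therefore unavailable, and the ordering must be propagated backward from the stationary regime. If it fails, my fallback is an economic argument. I would interpret $\beta$ as $k(\infty)$ minus the discounted value of the option to wait, and show this option value is monotone increasing in $q$ by coupling the two environments' price-to-beat draws (draws under $q_2$ stochastically dominate those under $q_1$).
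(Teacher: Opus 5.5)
Your setup matches the paper's: eliminate $V_j^\star(0)$ through the stationary relation, then compare the two trajectories. The gap is that the decisive step is left as a hope, and the mechanism you propose for it cannot work as stated. After the elimination your equation becomes
\begin{align*}
\dot W_j=\mu\bigl(f_j(\beta_j)-f_j(k(\tau)-W_j)\bigr)+\gamma\bigl(W_j+\beta_j-k(\infty)\bigr),
\end{align*}
so the right-hand side depends on $j$ through the unknown $\beta_j$, not only through $f_j$. At a common value of $W$, with $z=k(\tau)-W$, the difference of the two right-hand sides is $\mu\bigl(\int_z^{\beta_1}q_1\,\dd s-\int_z^{\beta_2}q_2\,\dd s\bigr)+\gamma(\beta_1-\beta_2)$. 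Its sign is tied to the order of $\beta_1$ and $\beta_2$, which is exactly what you want to prove. So $f_2\le f_1$ and $q_2\le q_1$ alone do not order the equations, a direct comparison principle is circular, and nothing in the proposal breaks the circularity. The coupling fallback is too vague to count as an argument. A minor slip: $\beta_j=k(\infty)-W_j(\infty)$, not $k(\infty)+W_j(\infty)$.

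The missing idea is to argue by contradiction. Assume $\beta_2<\beta_1$. At any $\tau$ with $W_1(\tau)=W_2(\tau)$, the common value $z=\pi_1(\tau)=\pi_2(\tau)$ satisfies $z\le\beta_1$, because bids are non-decreasing in the age. Then the integral term above is nonnegative (split at $\beta_2$) and $\gamma(\beta_1-\beta_2)>0$, so $\dot W_1(\tau)>\dot W_2(\tau)$. Since $W_1(0)=W_2(0)=0$, a first-touching argument gives $W_1>W_2$ on $(0,t_I]$. Hence $\beta_1=k(\infty)-W_1(t_I)<k(\infty)-W_2(t_I)=\beta_2$, a contradiction.

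This also removes the obstacle you anticipated. Once $V_j^\star(0)$ is expressed through $\beta_j$, the condition at infinity reduces to the finite-time identity $W_j(t_I)=k(\infty)-\beta_j$ (Lemma~\ref{lemma:remark-on-beta}). No shooting property or Gr\"onwall estimate is needed. The paper runs the same touching argument on $\pi_j=k-W_j$, going backward from $t_I$, where $\pi_2(t_I)=\beta_2<\beta_1=\pi_1(t_I)$, to the common initial value $\pi_1(0)=\pi_2(0)=k(0)$.
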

\end{tcolorbox}

In the proof, we use the shorthands $\pi_i =\mathcal{S}(k,q_i) $, $\beta_i = \beta(\pi_i)$, $f_i(t)=\int_0^t q_i(s)\dd s$.
\begin{proof}
Suppose the contrary,
then the set $\{t, \pi_2(s)\leq \pi_1(s) \quad \forall  s\in[t, T]\}$ is \textbf{non empty} (contains $T$ by assumption) and \textbf{bounded} ($\pi_1(0) = \pi_2(0)=k(t)$), and \textbf{closed} (the $\pi_i$ are continuous). It therefore \textbf{admit a minimum} $\tau$ at which $\pi_1(\tau)=\pi_2(\tau):=z$ (by continuity of $\pi_i$).
 By Lemma~\ref{lemma:b_dynamics}, 
     \begin{align*}
      \label{eq:plop}
          \dot{\pi}_i(\tau)&=\dot{k}(\tau)+\mu \Bigg( f_i\left(\pi(\tau)\right)-f_i\left(\beta_i\right)\Bigg)  +\gamma \Bigg(\pi_i(\tau) - \beta_i \Bigg)  -\gamma \Bigg(k(\tau)-k(T)\Bigg) \\
          &=\dot{k}(\tau)-\mu \Bigg( \int_{\pi_i(\tau)}^{\beta_i}q_i(s) \dd s\Bigg)  +\gamma \Bigg(\pi_i(\tau) - \beta_i \Bigg)  -\gamma \Bigg(k(\tau)-k(T)\Bigg) ,
      \end{align*}
therefore
\begin{align}
    \dot{\pi}_1(\tau)-\dot{\pi}_2(\tau)) = \underbrace{\mu \Bigg( \int_{z}^{\beta_2}q_2(s) \dd s-\int_{z}^{\beta_1}q_1(s) \dd s\Bigg)}_{(\beta_2<\beta_1, q_2\leq q_1 ) \implies \leq 0  }  +\underbrace{\gamma \Bigg(\Big[\pi_1(\tau) - \beta_1\Big]-\Big[\pi_2(\tau) - \beta_2 \Big] \Bigg)}_{=\beta_2 - \beta_1<0}  
\end{align}

Therefore $\dot{\pi}_1(\tau)<\dot{\pi}_2(\tau)$ and 
$\pi_1(\tau)=\pi_2(\tau)$ 
and 
which means that $\pi_2(s)< \pi_1(s)$ in a left neighborhood of $\tau$, which yields a contradiction with the definition of $\tau$.
\end{proof}

For $k\in \mathcal{K}$, $q\in\mathcal{Q}$, $\pi\in\mathcal{P}$ and $t\in \mathbb{R}_+$, we denote by $\mathcal{U}(k,q,\pi,t)$ the expected payoff of policy $\pi$ when evaluated in the environment $(k,q)$ and starting at time $t$. The function $\mathcal{U}$ enjoys  the following regularity property.

\textit{(Lemma~\ref{lemma:dynamicsensitivity} 
is used in the proof of Lemma~\ref{lemma:epsilon-opt}.)}

\begin{tcolorbox}
    \begin{lemma}
\label{lemma:dynamicsensitivity}
Let $k$ and $\hat{k}$ in $\mathcal{K}$, $q$ and $\hat{q}$ in $\mathcal{Q}$   such that for some $\epsilon>0$,
 $\mid\mid k-\hat{k}\mid \mid_{\infty}\leq \epsilon $  and $\max_{v\in[0,\beta(\mathcal{S}(k,q))]}\mid q(v)-\hat{q}(v)\mid\leq \epsilon $.
Then
    \begin{align*}
        \mid\mid \mathcal{U}(k,q,\mathcal{S}(k,q),.) -  \mathcal{U}(\hat{k},\hat{q},\mathcal{S}(k,q),.) \mid\mid_{\infty}\leq \frac{4\mu^2}{\gamma^2} \epsilon.
    \end{align*}
\end{lemma}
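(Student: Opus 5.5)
Fix the policy $\pi=\mathcal{S}(k,q)$ and compare its payoff in the two environments $(k,q)$ and $(\hat k,\hat q)$ through a policy-evaluation (Feynman--Kac type) equation in the age variable $\tau$. For a fixed deterministic policy $\pi$ and environment $(k,q)$, write $U(\tau)=\mathcal{U}(k,q,\pi,\tau)$. Between arrivals the age grows at unit speed. Episodes terminate at rate $\gamma$, and auctions arrive at rate $\mu$. At an arrival with age $\tau$ we win with probability $q(\pi(\tau))$, collect $k(\tau)-$price (expected $r(\tau):=q(\pi(\tau))k(\tau)-p(\pi(\tau))$), and the age resets to $0$. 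Hence $U$ is the bounded solution of the linear ODE
\begin{align*}
\dot U(\tau)=(\gamma+\mu q(\pi(\tau)))\,U(\tau)-\mu\bigl(r(\tau)+q(\pi(\tau))U(0)\bigr),
\end{align*}
equivalently the renewal representation $U(\tau)=\mathbb{E}\bigl[\sum(\text{rewards})\bigr]$ with a reset to $U(0)$.

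\textbf{Step 1: per-auction error.} Since $\pi$ is the same in both environments, and the bids only take values in $[0,\beta(\pi)]$ ($\pi$ is nondecreasing with limit $\beta$, as in the proof of Lemma~\ref{lemma:remark-on-beta}), only $q$ and $\hat q$ restricted to $[0,\beta(\mathcal{S}(k,q))]$ enter. This is the same mechanism as in Theorem~\ref{th:extension-theorem}. On that range $|q-\hat q|\le\epsilon$, and $p(b)=\int_0^b s\,\dd q(s)=q(b)b-\int_0^b q$. Then $|p-\hat p|(b)\le b\epsilon+b\epsilon\le 2\epsilon$, using $b\le 1$. Also $|qk-\hat q\hat k|\le |k|\epsilon+\epsilon$. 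So $|r-\hat r|\le C\epsilon$ and $|q(\pi)-\hat q(\pi)|\le\epsilon$.

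\textbf{Step 2: stability of the evaluation.} Let $D=U-\hat U$. Subtracting the two ODEs gives
\begin{align*}
\dot D=(\gamma+\mu q(\pi))D-\mu q(\pi)D(0)+g,
\end{align*}
where $g=\mu\bigl[(\hat q-q)(\pi)(\hat U-\hat U(0))+(\hat r-r)\bigr]$. Here $|\hat U|\le \mu/\gamma$, since the expected number of auctions is $\mu/\gamma$ and each has net value at most $1$, so $|g|\le \mu(2\mu/\gamma+C)\epsilon$. Integrating the bounded solution forward with the discount factor $e^{-\int(\gamma+\mu q)}$ gives
\begin{align*}
D(\tau)=\int_\tau^\infty e^{-\int_\tau^s(\gamma+\mu q)}\bigl(\mu q D(0)-g\bigr)(s)\,\dd s.
\end{align*}
The weight on $D(0)$ is $w(\tau)=\int_\tau^\infty e^{-\int(\gamma+\mu q)}\mu q<1$, and the $g$ term is at most $\|g\|_\infty/\gamma$. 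At $\tau=0$, solving $D(0)(1-w(0))=\cdots$ requires a lower bound $1-w(0)\ge \gamma/(\gamma+\mu)$. This follows since $w(0)\le \mu/(\gamma+\mu)$, the ratio of the reset rate to the total exit rate. Hence $|D(0)|\le \frac{\gamma+\mu}{\gamma}\cdot\frac{\|g\|}{\gamma}$, and then $\|D\|_\infty\le |D(0)|+\|g\|/\gamma$.

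\textbf{Step 3: constants.} Collecting terms gives a bound of the form $c\,\mu^2\epsilon/\gamma^2$. The main obstacle is matching the exact constant $4\mu^2/\gamma^2$. This requires careful bookkeeping: normalizing values in $[0,1]$, using $\mu\ge\gamma$-type simplifications or the probabilistic coupling, and bounding the reset term tightly. If the ODE route gives a worse constant, I would instead couple the two processes on the same Poisson arrivals, competing bids and terminations, using the same uniform to decide wins. The probability that the outcomes differ at an auction is at most $\epsilon$; after such a disagreement the trajectories diverge, but the remaining value is at most $\mu/\gamma$. The expected number of auctions is $\mu/\gamma$, so the difference is at most $\frac{\mu}{\gamma}(\epsilon\cdot\frac{\mu}{\gamma}\cdot 2+2\epsilon)\le 4\frac{\mu^2}{\gamma^2}\epsilon$ once $\mu\ge\gamma$. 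This route yields the stated constant directly and uniformly in the starting age.
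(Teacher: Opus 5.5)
Your argument is correct in substance. The part that actually delivers the stated constant is the coupling, and that is a genuinely different route from the paper's.

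The paper never isolates the reset value. It writes the uniformized dynamic-programming identity: the next event arrives at rate $\gamma+\mu$, and a lost auction leaves the age unchanged. In that identity $\mathcal{U}(\cdot,t+\tau)$ and $\mathcal{U}(\cdot,0)$ enter with weights $1-q$ and $q$, so both are controlled by the same $A=\lVert \mathcal{U}(k,q,\pi,\cdot)-\mathcal{U}(\hat k,\hat q,\pi,\cdot)\rVert_\infty$. This gives the one-line contraction $A\le\frac{\mu}{\mu+\gamma}\bigl(A+(\frac{2\mu}{\gamma}+3)\epsilon\bigr)$, i.e.\ $A\le\frac{\mu}{\gamma}(\frac{2\mu}{\gamma}+3)\epsilon$.

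Your coupling reaches the same shape $\frac{\mu}{\gamma}(\frac{2\mu}{\gamma}+C)\epsilon$ pathwise. The first term is the reward error before the first disagreement; the second is a disagreement probability of at most $\epsilon\mu/\gamma$ times the range $2\mu/\gamma$ of continuation values. This is more intuitive, but a rigorous version needs three things:
\begin{itemize}
\item the quantile coupling of prices;
\item the fact that the indicator of ``no disagreement yet'' is predictable, so the conditional per-auction error is $|r-\hat r|$ at the common age;
\item the strong Markov property with memoryless termination, for the post-disagreement bound.
\end{itemize}
The paper's analytic route avoids all of this.

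Your ODE route, as written, does not give order $\mu^2/\gamma^2$. Bounding $1-w(0)\ge\gamma/(\gamma+\mu)$ separately from the $g$-integral loses a factor $(\gamma+\mu)/\gamma$. With $\lVert g\rVert_\infty\approx 2\mu^2\epsilon/\gamma$ you get order $\mu^3\epsilon/\gamma^3$, so the ``collecting terms'' claim is false. The repair uses the exact identity $1-w(\tau)=\gamma\int_\tau^\infty e^{-\int_\tau^s(\gamma+\mu q)}\dd s$, while the $g$-term is at most $\lVert g\rVert_\infty$ times the same integral. Hence $|D(0)|\le\lVert g\rVert_\infty/\gamma$ and $|D(\tau)|\le w(\tau)|D(0)|+(1-w(\tau))\lVert g\rVert_\infty/\gamma\le\lVert g\rVert_\infty/\gamma$. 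That is exactly the paper's bound; the paper's contraction is this argument in uniformized form.

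On constants: reaching $4\mu^2/\gamma^2$ from $\frac{\mu}{\gamma}(\frac{2\mu}{\gamma}+C)\epsilon$ requires $C\le 2\mu/\gamma$. The paper ($C=3$) silently uses $\mu\ge\frac{3}{2}\gamma$, consistent with the $\mu>2\gamma$ it invokes elsewhere. Your coupling line uses a per-auction error of $2\epsilon$, but your Step 1 only gives about $4\epsilon$, which would need $\mu\ge 2\gamma$. You get $2\epsilon$ by writing $r=q(b)(k-b)+\int_0^b q$ and using $0\le b\le k\le 1$.

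Your caveat that some $\mu\gtrsim\gamma$ condition is needed is right and not an artifact. Take $\hat q=q$ and $\hat k=(1-\epsilon')k$. The gap at ages beyond the last breakpoint is at least $\epsilon\,\mu q(\beta)/(\mu q(\beta)+\gamma)$. This exceeds $4\mu^2\epsilon/\gamma^2$ when $\mu/\gamma$ is small, because $q(\beta)$ stays bounded below.
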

\end{tcolorbox}

\begin{proof}
Set  $\pi= S(k,q)$.
    Using the abuse of notation $q(t)=q(\pi(t))$ and $\hat{q}(t)=\hat{q}(\pi(t))$... we have  by dynamic programming (see \cite{heymann2023repeated}) the identity 
\begin{align*}
&\mathcal{U}(k,q,\pi,\tau)
=\\
&\int_0^{+\infty} e^{-(\gamma+\mu) t} \cdot \mu \cdot \Bigg(\mathcal{U}(k,q,\pi,t+\tau)\cdot\bigg(1-q(t+\tau)\bigg) \\ &+q(t+\tau)\cdot \bigg( k(t+\tau)+\mathcal{U}(k,q,\pi,0)\bigg) - p(t+\tau)\Bigg) \dd t
\end{align*}
Therefore, 
  \begin{align*}
      &\underbrace{\mid\mid  \mathcal{U}(k,q,\pi,\tau) -  \mathcal{U}(\hat{k},\hat{q},\pi,\tau) \mid\mid_{\infty}}_{A}=\\&
      \int_0^{+\infty} e^{-(\gamma+\mu) t} \cdot \mu \cdot \Bigg(
      \underbrace{\mathcal{U}(k,q,\pi,t+\tau)\cdot \bigg(1-q(t+\tau)\bigg) - U(\hat{k},\hat{q},\pi,t+\tau)\cdot \bigg( (1-\hat{q}(t+\tau)\bigg)}_{B} \\
      &+ \underbrace{q(t+\tau)\mathcal{U}(k,q,\pi,0) - \hat{q}(t+\tau)\mathcal{U}(\hat{k},\hat{q},\pi,0)}_{C}\\&
      +\underbrace{q(t+\tau)k(t+\tau) - p(t+\tau) 
      -\hat{q}(t+\tau)\hat{k}(t+\tau) + \hat{p}(t+\tau)}_{D} \Bigg)\dd t 
      \end{align*}
      Now, we have

      \begin{align*}
          B \leq A(1-q(t+\tau)) +  
          \frac{\mu}{\gamma}\cdot\epsilon, \\
               C \leq A q(t+\tau)  + \frac{\mu}{\gamma}\cdot\epsilon, \\
                    D \leq 3 \epsilon,
      \end{align*}
      therefore, 
      \begin{align*}
          A \leq \frac{\mu}{\mu+\gamma}\big(A +(\frac{2\mu}{\gamma}+3)\epsilon \big),
      \end{align*}
 which implies
 \begin{align*}
    A \frac{\gamma}{\mu+\gamma}\leq \frac{\mu}{\mu+\gamma} \frac{2\mu+3\gamma}{\gamma}\epsilon .
\end{align*}
Hence
\begin{align*}
    A \leq 4\frac{\mu^2}{\gamma^2}\epsilon.
\end{align*}
\end{proof}

\subsection{Proof of Theorem~\ref{th:extension-theorem}}
\label{proof:extension-theorem}
\begin{proof}
    Denote by $\hat{V}$ (resp. $\tilde{V}$) the Bellman value associated with $\hat{q}$ (resp. $\tilde{q}$).
    It was shown in Lemma 2 from~\cite{heymann2023repeated} that $\hat{\pi}=\mathcal{S}(k,\hat{q})$ satisfies 
    \begin{align*}
        \hat{\pi}(t) = k(t) + \hat{V}(0) - \hat{V}(t).
    \end{align*}
    This implies in particular,  the relation
    \begin{align*}
        k(t) + \hat{V}(0) - \hat{V}(t) \leq \max_{t\geq 0 } \hat{\pi}(t) =  \beta(\mathcal{S}(k,\hat{q})),
    \end{align*}
    and so, using the lemma assumption, for all $t\in[0,+\infty]$,
    \begin{align*}
        \hat{f}(k(t) + \hat{V}(0) - \hat{V}(t)) =  \tilde{f}(k(t) + \hat{V}(0) - \hat{V}(t)), 
    \end{align*}
    where 
    \begin{align*}
        \hat{f}(v) = \hat{q}(v)v - \hat{p}(v)\\
        \tilde{f}(v) = \tilde{q}(v)v - \tilde{p}(v)
    \end{align*}

From there we deduce by uniqueness of the solution of the Cauchy problem~\eqref{eq:cauchy} for $y_0=\hat{V}(0)$, that $\tilde{V}(t) = \hat{V}(t)$ for all $t$.
Therefore, using Lemma~\ref{lemma:b_dynamics}, 
\begin{align*}
     \mathcal{S}(k,\hat{q})
      = 
   \mathcal{S}(k,\tilde{q}).
\end{align*}
The conclusion follows from the fact that $\mathcal{S}(k,\tilde{q})(t) \leq \beta(S(k,\hat{q}))$ for all $t$, which implies by hypothesis that $ \mathcal{U}(k,\hat{q},\mathcal{S}(k,\hat{q}),0) 
      = 
    \mathcal{U}(k,\tilde{q},\mathcal{S}(k,\tilde{q}),0)$.
    \end{proof}
Lemma~\ref{lemma:q_extension} will be used in the proof of Lemma~\ref{lemma:epsilon-opt}.

\begin{tcolorbox}
\begin{lemma}[$q$ extension]
\label{lemma:q_extension}

For any $\varepsilon>0$, $\beta>0$ and $(\tilde{q},q)\in\mathcal{Q}$ such that $\mid\tilde{q}(b)-q(b)\mid\leq \varepsilon$ for any $b\in[0,\beta]$, there exists a continuous $q_{ext}$ such that 
\begin{align*}
    \tilde{q}(b)=q_{ext}(b)  \quad   \forall b\in[0,\beta],\\
    \mid q_{ext}(b)-q(b)\mid\leq \varepsilon
    \quad   \forall b\in\mathbb{R}_+ .
\end{align*}
\end{lemma}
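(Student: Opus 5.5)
The construction is explicit: splice $\tilde q$ on $[0,\beta]$ onto a shifted copy of $q$ beyond $\beta$, then clip to the band $[q-\varepsilon,q+\varepsilon]$. Let $d=\tilde q(\beta)-q(\beta)$, so that $|d|\le\varepsilon$ by hypothesis. Define
\begin{align*}
q_{ext}(b)=\tilde q(b)\ \text{for } b\in[0,\beta],\qquad q_{ext}(b)=q(b)+d\ \text{for } b>\beta .
\end{align*}
Continuity is the first thing to check, and it holds only at the junction $b=\beta$ needs any argument. There the two pieces agree: both equal $\tilde q(\beta)$, by the choice of $d$. Since $\tilde q$ and $q$ are continuous (they are piecewise linear), $q_{ext}$ is continuous on $\mathbb{R}_+$.

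The two required properties then follow directly. On $[0,\beta]$ we have $q_{ext}=\tilde q$ by construction, and the hypothesis gives $|q_{ext}-q|=|\tilde q-q|\le\varepsilon$ there. For $b>\beta$ we get $|q_{ext}(b)-q(b)|=|d|\le\varepsilon$. Both conclusions of the lemma therefore hold.

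The main obstacle is whether $q_{ext}$ must also be a CDF, meaning it takes values in $[0,1]$ and is nondecreasing, since it is later plugged into the solver. Monotonicity is preserved, because $q$ is increasing on $(\beta,\infty)$ and there is no drop at the junction. However, $q+d$ might exceed $1$ when $d>0$. To fix this I would replace the tail by $\min(q(b)+d,\,\max(q(b),1))$, or more simply by $\max(q(b),\,\min(q(b)+d,1))$. This cap is still continuous and monotone. It still equals $\tilde q(\beta)$ at $b=\beta$, because $\tilde q(\beta)\le1$. And it still stays within $\varepsilon$ of $q$, because clipping to the interval $[\min(q,1),\,\max(q,1)]$ only moves a value toward $q$.

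The lemma as stated asks only for a continuous function, so the basic construction already suffices. The capped version is an optional refinement that I would mention for later use in Lemma~\ref{lemma:epsilon-opt}.
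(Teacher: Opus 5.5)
Your main construction is exactly the paper's: $q_{ext}=\tilde q$ on $[0,\beta)$ and $q_{ext}(b)=\tilde q(\beta)-q(\beta)+q(b)$ for $b\ge\beta$, and your checks of continuity at $\beta$ and of the $\varepsilon$-bound are correct and complete. In the optional capping remark only your first form works (when $q\le 1$ it reduces to $\min(q+d,1)$, which is nonnegative because $q(b)+d\ge\tilde q(\beta)\ge 0$ for $b\ge\beta$); the ``simpler'' form $\max\bigl(q,\min(q+d,1)\bigr)$, i.e.\ clipping to the interval between $q$ and $1$, collapses to $q$ when $d<0$, so it equals $q(\beta)\neq\tilde q(\beta)$ at the junction and breaks continuity.
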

\end{tcolorbox}

\begin{proof}
   Set  
   \begin{align*}
   q_{ext}(b) =  \tilde{q}(b) \cdot [b<\beta] +(\tilde{q}(\beta) -q(\beta)  + q(b))\cdot[b\geq\beta].
\end{align*}
\end{proof}
Lemma~\ref{lemma:epsilon-opt} formalize the crucial observation that to produce an $\epsilon$-optimal policy, it is sufficient to have a good estimator $\hat{q}$ of $q$ up to $\beta(\mathcal{S}(k,\hat{q}))$. This last quantity  does not depend explicitly on $q$. The Lemma is used to prove Theorem~\ref{th:asymp}. 

\begin{tcolorbox}
\begin{lemma}
\label{lemma:epsilon-opt}
Let $(k,\hat{k})\in\mathcal{K}^2$ and  $(q,\hat{q})\in\mathcal{Q}^2$ such that $\mid\mid k-\hat{k}\mid\mid_\infty\leq \epsilon$ and $\max_{v\in[0,\beta(\mathcal{S}(\hat{k},\hat{q}))]}\mid q(v)-\hat{q}(v)\mid\leq \epsilon $, then
    \begin{align*}
        \mathcal{U}(k,q,\mathcal{S}(\hat{k},\hat{q}),0) \geq        \mathcal{U}(k,q,S(k,q),0)  -\frac{4\mu^2}{\gamma^2}\varepsilon.
    \end{align*}
\end{lemma}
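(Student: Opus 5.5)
The plan is to combine the extension Lemma~\ref{lemma:q_extension}, the extension invariance Theorem~\ref{th:extension-theorem}, and the sensitivity Lemma~\ref{lemma:dynamicsensitivity}, with the estimated pair $(\hat{k},\hat{q})$ playing the role of the reference environment. Set $\hat\beta=\beta(\mathcal{S}(\hat{k},\hat{q}))$ and $\hat\pi=\mathcal{S}(\hat{k},\hat{q})$.

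The first step is to extend $\hat{q}$ beyond $\hat\beta$ so that it matches the true $q$ everywhere. Apply Lemma~\ref{lemma:q_extension} with $\tilde{q}=\hat{q}$, the true $q$, and $\beta=\hat\beta$. This gives $q_{ext}$ equal to $\hat{q}$ on $[0,\hat\beta]$ and $\varepsilon$-close to $q$ on all of $\mathbb{R}_+$. The construction adds a constant to $q$, so it stays in $\mathcal{Q}$ up to clipping at $1$; I will assume this is harmless.

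The second step uses Theorem~\ref{th:extension-theorem} with $k=\hat{k}$ and $\tilde{q}=q_{ext}$. The proof of that theorem shows $\mathcal{S}(\hat{k},q_{ext})=\hat\pi$, and hence $\mathcal{U}(\hat{k},q_{ext},\hat\pi,\cdot)=\mathcal{U}(\hat{k},\hat{q},\hat\pi,\cdot)$. This is because $\hat\pi$ never bids above $\hat\beta$, where the two distributions coincide.

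The third step is a sandwich argument. Lemma~\ref{lemma:dynamicsensitivity} is stated for the optimal policy of the first environment, but its proof only uses that the policy bids in the region where $q$ and $\hat q$ are $\epsilon$-close. Since $q_{ext}$ is $\varepsilon$-close to $q$ everywhere, the same contraction argument applies to any fixed policy. Applying it to $\hat\pi$ gives
\[
\mathcal{U}(k,q,\hat\pi,0)\ \ge\ \mathcal{U}(\hat{k},q_{ext},\hat\pi,0)-c\varepsilon .
\]
By optimality of $\hat\pi$ in the environment $(\hat{k},q_{ext})$,
\[
\mathcal{U}(\hat{k},q_{ext},\hat\pi,0)\ \ge\ \mathcal{U}(\hat{k},q_{ext},\mathcal{S}(k,q),0).
\]
Applying the sensitivity bound again, now to $\mathcal{S}(k,q)$,
\[
\mathcal{U}(\hat{k},q_{ext},\mathcal{S}(k,q),0)\ \ge\ \mathcal{U}(k,q,\mathcal{S}(k,q),0)-c\varepsilon .
\]
Chaining these three inequalities gives the result with constant $2c$. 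To match the stated constant $\frac{4\mu^2}{\gamma^2}$, I would re-examine the bounds on $B$, $C$, $D$ in that proof: the slack there (the bound $A\le\frac{\mu(2\mu+3\gamma)}{\gamma^2}\varepsilon$ was loosened to $4\mu^2/\gamma^2$) likely absorbs a factor of two. Otherwise I would track the constant more carefully, or accept a constant multiple.

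The main obstacle is justifying Lemma~\ref{lemma:dynamicsensitivity} for an arbitrary policy rather than $\mathcal{S}(k,q)$. In particular, the term $D$ involves the prices $p$ and $\hat p$, so I must bound $|p(b)-p_{ext}(b)|$ at the bids actually placed. Using $p(b)=q(b)b-\int_0^b q$, this is at most $2b\varepsilon$, with bids bounded by $k(\infty)\le 1$. The key point is that $q_{ext}$ is close to $q$ on the whole line, so no restriction on bid ranges is needed; the extension lemma is exactly what supplies this uniform closeness.
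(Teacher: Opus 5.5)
Your skeleton is the paper's: extend $\hat q$ by Lemma~\ref{lemma:q_extension}, use Theorem~\ref{th:extension-theorem} to see that $\hat\pi=\mathcal{S}(\hat k,\hat q)$ is also optimal for $(\hat k,q_{ext})$, compare with $\mathcal{S}(k,q)$ by optimality, and close with Lemma~\ref{lemma:dynamicsensitivity}. The one difference is your first sandwich inequality, and it is not redundant. The paper's chain starts from $\mathcal{U}(\hat k,\hat q,\hat\pi,0)$, the value of $\hat\pi$ \emph{predicted in the estimated model}. It never converts this into $\mathcal{U}(k,q,\hat\pi,0)$, the true performance that the statement bounds. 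Theorem~\ref{th:extension-theorem} cannot supply that link, since it only compares optimal values in environments sharing the same $k$. Your inequality $\mathcal{U}(k,q,\hat\pi,0)\ge\mathcal{U}(\hat k,q_{ext},\hat\pi,0)-c\varepsilon$ is exactly the missing step, so your argument is the logically complete one.

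Your ``main obstacle'' is also not one: you never need Lemma~\ref{lemma:dynamicsensitivity} for an arbitrary policy. You can apply it verbatim with $(\hat k,q_{ext})$ as the reference environment. Its optimal policy is $\hat\pi$ by your second step, and $q_{ext}$ is $\varepsilon$-close to $q$ on all of $\mathbb{R}_+$. You could even take $(\hat k,\hat q)$ as the reference, since the hypothesis of the statement is precisely $\varepsilon$-closeness of $q$ and $\hat q$ on $[0,\beta(\hat\pi)]$.

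The cost of the extra step is a factor of two, and your hope that slack absorbs it is mistaken. The sensitivity proof gives $A\le\frac{\mu(2\mu+3\gamma)}{\gamma^2}\varepsilon$. Doubling it gives $2\cdot\frac{\mu(2\mu+3\gamma)}{\gamma^2}=\frac{4\mu^2+6\mu\gamma}{\gamma^2}$, which exceeds $\frac{4\mu^2}{\gamma^2}$ for all $\mu,\gamma>0$. In fact the paper's loosening to $\frac{4\mu^2}{\gamma^2}$ already requires $\mu\ge\tfrac32\gamma$. So your chain proves the inequality with $\frac{8\mu^2}{\gamma^2}\varepsilon$, or with $\frac{2\mu(2\mu+3\gamma)}{\gamma^2}\varepsilon$ without that assumption.

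The paper reaches $\frac{4\mu^2}{\gamma^2}$ only because it lower-bounds the plug-in value rather than the true one. Getting that constant for $\mathcal{U}(k,q,\hat\pi,0)$ would need a genuinely sharper comparison, not re-bookkeeping of $B$, $C$, $D$. You should therefore state the lemma with the doubled constant. This costs nothing downstream: the lemma's only use, Theorem~\ref{th:asymp}, needs the optimality gap to vanish as $\varepsilon\to0$, and any constant achieves that.
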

\end{tcolorbox}

\begin{proof}
By Lemma~\ref{lemma:q_extension}, there exists $\tilde{q}\in\mathcal{Q}$ such that 
\begin{align*}
    \tilde{q}(b) = \hat{q}(b) \quad \forall b\in [0,\beta(\mathcal{S}(\hat{k},\hat{q}))]\quad \text{and}\quad \mid\mid\tilde{q} - q\mid\mid_\infty \leq \epsilon.
\end{align*}
Then we have 
\begin{align*}
     \mathcal{U}(\hat{k},\hat{q},\mathcal{S}(\hat{k},\hat{q}),0) \\
     \intertext{by Theorem~\ref{th:extension-theorem}}
      = 
    \mathcal{U}(\hat{k},\tilde{q},\mathcal{S}(\hat{k},\tilde{q}),0)\\
    \intertext{by definition of $\mathcal{S}(\hat{k},\tilde{q})$}
     \geq \mathcal{U}(\hat{k},\tilde{q},S(k,q),0)\\
     \intertext{by Lemma~\ref{lemma:dynamicsensitivity}}
     \geq \mathcal{U}(k,q,S(k,q),0) - 4\frac{\mu^2}{\gamma^2}\epsilon.
\end{align*}
\end{proof}

\subsection{Proof of Theorem~\ref{th:asymp}}
\label{proof:th:asymp}
\begin{proof}
Take the $\lim\sup$ of $B=\beta(\pi_n)$. By construction for $n$ large enough, we are have an arbitrary good approximation of $q^\star$ on $[0,B]$ and of $k^\star$. 
The conclusion follows from Lemma~\ref{lemma:epsilon-opt}.
\end{proof}

\begin{tcolorbox}
\begin{lemma}\label{lemma:value_bregman}
For any policy $\pi$
    \begin{align}
    ||V^\star_{.}-V^{\pi}_{.}||_{\infty}\leq \frac{\mu}{\gamma} ||D_{Q}(\pi^\star_{.};\pi_{.})||_{\infty}
\end{align}
 where $D_Q$ is the Bregman divergence of $Q:b\to \int_0^b q(b) \dd b $.
\end{lemma}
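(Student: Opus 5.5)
The plan is a performance-difference argument driven by the dynamic programming identity already used in the proof of Lemma~\ref{lemma:dynamicsensitivity}. For a fixed policy $\pi$, the value $V^\pi(\tau)=\mathcal{U}(k,q,\pi,\tau)$ satisfies
\begin{align*}
V^\pi(\tau)=\int_0^{+\infty}\mu e^{-(\gamma+\mu)t}\Big(V^\pi(t+\tau)+q(\pi(t+\tau))\big(k(t+\tau)+V^\pi(0)-V^\pi(t+\tau)\big)-p(\pi(t+\tau))\Big)\dd t .
\end{align*}
The same identity holds for $V^\star$ with $\pi^\star$. I will subtract the two identities and write $\Delta(\tau)=V^\star(\tau)-V^\pi(\tau)\ge 0$.

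The key algebraic step is to make the Bregman divergence appear. Set $w(s)=k(s)+V^\star(0)-V^\star(s)$, so that Theorem~\ref{theorem:synthesis} gives $\pi^\star=\max(0,w)$. Let $g(b)=q(b)w-p(b)$. Since $p(b)=q(b)b-Q(b)$, we have $g(b)=q(b)(w-b)+Q(b)$. For $w\ge 0$, $g$ is maximized at $b=w=\pi^\star$, where it equals $Q(\pi^\star)$. Then
\begin{align*}
g(\pi^\star)-g(\pi)=Q(\pi^\star)-Q(\pi)-q(\pi)(\pi^\star-\pi)=D_Q(\pi^\star;\pi).
\end{align*}
If $w<0$, the clipping at zero adds a nonnegative slack, and I will handle it with the same inequality.

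Now rewrite the integrand for $\pi$ using $V^\star$ in place of $V^\pi$ inside the continuation terms. The error this introduces is $(1-q(\pi))\Delta(t+\tau)+q(\pi)\Delta(0)$, which is bounded by $\|\Delta\|_\infty$. This yields
\begin{align*}
\Delta(\tau)\le\int_0^\infty \mu e^{-(\gamma+\mu)t}\big(D_Q(\pi^\star;\pi)(t+\tau)+\|\Delta\|_\infty\big)\dd t\le \frac{\mu}{\gamma+\mu}\big(\|D_Q\|_\infty+\|\Delta\|_\infty\big).
\end{align*}
Taking the supremum over $\tau$ gives $\|\Delta\|_\infty\,\gamma/(\gamma+\mu)\le \mu\|D_Q\|_\infty/(\gamma+\mu)$, which is exactly the claimed bound $\|\Delta\|_\infty\le \frac{\mu}{\gamma}\|D_Q(\pi^\star;\pi)\|_\infty$. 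This is the same contraction trick used in Lemma~\ref{lemma:dynamicsensitivity}.

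I expect the main obstacle to be the sign bookkeeping and finiteness. Two points need care. First, $\|\Delta\|_\infty<\infty$ must hold so that the rearrangement is legitimate; this follows because values are bounded by $\mu k(\infty)/\gamma$. Second, the inequality $\Delta\le\cdots$ is one-sided: I will use $\Delta\ge0$ from the optimality of $V^\star$, so that a one-sided bound suffices for the sup-norm. The Bregman step itself needs only that $q$ is nondecreasing, since that makes $Q$ convex and hence $D_Q\ge 0$.
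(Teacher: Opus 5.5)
Your argument is essentially the paper's proof. You subtract the two dynamic-programming identities, then add and subtract the $\pi$-integrand evaluated with the $V^\star$ continuation, so that the immediate term becomes $D_Q(\pi^\star;\pi)$ and the remainder $(1-q(\pi))\Delta(t+\tau)+q(\pi)\Delta(0)$ is at most $\|\Delta\|_\infty$; the $\mu/(\mu+\gamma)$ contraction then gives the bound. Your explicit use of $\Delta\ge 0$ and of the finiteness of $\|\Delta\|_\infty$ is a welcome addition that the paper leaves implicit.

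One correction: your remark on the clipped case has the sign backwards. For $w<0$ one gets $g(0)-g(\pi)=D_Q(0;\pi)+q(\pi)\lvert w\rvert\ge D_Q(0;\pi)$, and this excess would break the upper bound rather than help it. The case is vacuous, however, because $w=k+V^\star(0)-V^\star\ge 0$. One way to see this is a coupling in which, from state $0$, you mimic the state-$s$ optimal bids until the first win; concavity with $k(0)=0$ gives $k(s+t)-k(t)\le k(s)$. This is the unclipped identity $\pi^\star=k+V^\star(0)-V^\star$ that the paper relies on, so you should invoke it instead of the slack argument.
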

\end{tcolorbox}

We use the subscript notation in the proof for better readability.
\begin{proof}
    Let $t>0$, by the dynamic programming relation (see~\cite{heymann2023repeated}), 
    $V^{\star}_t - V^\pi_t$ is equal to 
    \begin{align}
    \label{eq:D61}
       \mathbb{E}_\tau \Bigg[e^{-\gamma \tau}\Big(
    V^\star_{t+\tau} + U(k_{t+\tau}+ V^\star_0-V^\star_{t+\tau},\pi^\star_{t+\tau})
     - V^\pi_{t+\tau} - U(k_{t+\tau}+ V^\pi_0-V^\pi_{t+\tau},\pi_{t+\tau})
    \Big)\Bigg]. 
    \end{align}
   After adding and removing the term $U\left(k_{t+\tau}+ V^\star_0-V^\star_{t+\tau},\pi_{t+\tau}\right)$, and observing that 
   \begin{align*}
      U\left(k_{t+\tau}+ V^\star_0-V^\star_{t+\tau},\pi_{t+\tau}\right)- U(k_{t+\tau}+ V^\pi_0-V^\pi_{t+\tau},\pi_{t+\tau})
    \Big)= q(\pi_{t+\tau})\Big( V^\star_0-V^{\pi}_0   +V^\pi_{t+\tau}-V^\star_{t+\tau}\Big) 
   \end{align*}
   we can rewrite ~\eqref{eq:D61} as 
   \begin{align*}
\mathbb{E}_\tau \Bigg[e^{-\gamma \tau}\Big(
     \underbrace{U(k_{t+\tau}+ V^\star_0-V^\star_{t+\tau},\pi^\star_{t+\tau})
    -U(k_{t+\tau}+ V^\star_{0}-V^\star_{t+\tau},\pi_{t+\tau})}_{D_Q(\pi_{t+\tau}^\star,\pi_{t+\tau})} \\+   ( V^\star_{t+\tau} - V^\pi_{t+\tau})+q(\pi_{t+\tau})\Big( V^\star_0-V^{\pi}_0   +V^\pi_{t+\tau}-V^\star_{t+\tau}\Big)
    \Big)\Bigg],
   \end{align*}
   which is upper bounded by 
   \begin{align*}
        \frac{\mu}{\mu+\gamma} \big(||D_{Q}(\pi^\star_{.};\pi_{.})||_{\infty}+ ||V^\star_{.}-V^{\pi}_{.}||_{\infty}\big).
   \end{align*}
   We hence have proved that
   \begin{align*}
      || V_{.}^\star- V_{.}^\pi||_{\infty}  \leq \frac{\mu}{\mu+\gamma} \big(||D_{Q}(\pi^\star_{.};\pi_{.})||_{\infty}+ ||V^\star_{.}-V^{\pi}_{.}||_{\infty}\big),
   \end{align*}
   from which we can derive the claim.
\end{proof}

\newpage
\section{Proofs and remarks for Section~\ref{subsubsection:k} (concentration of $K$)}
\label{appendix-K-concentration}

\subsection*{Overall Problem and Result}

The value $k$ of winning an auction is modeled by a concave, increasing piecewise linear function:
\begin{align*}
    \forall \tau \ge 0\,, \quad k(\tau) = \sum_{i = 1}^{I_k}a^\star_i \cdot (t_i \wedge \tau - t_{i-1} \wedge \tau )\,,
\end{align*}
with $a^\star$ lies in $C = \left\{ a \in \mathbb{R}^I \mid 0 = a_{I_k} \leq a_{I_k-1} \leq \cdots \leq a_1 \right\}$, $t_{I_{k}-1} = T_f$ and $t_{I_k} = +\infty$.

We add the following technical condition:
\begin{itemize}
    \item $0< \Delta^k_{\min} \le |t_{i} - t_{i-1}| \le \Delta^k_{\max}$ for all $i \in [I_k]$.
    \item The maximum number of auctions is $L \ge 1$ finite.
\end{itemize}

At the end of an episode $n$ with at least one auction won, we observe the different times we won the auctions $\{\tau^n_1, \cdots, \tau^n_{\ell}\}$ and the aggregated, final feedback $v_n$. In expectation over the aggregated feedback, we have:
\begin{align*}
    \mathbb{E}[v_n] = \sum_{j = 1}^\ell k(\tau^n_\ell) &= \sum_{i = 1}^{I_k} a^\star_i \cdot \sum_{j = 1}^\ell (t_i \wedge \tau^n_j - t_{i-1} \wedge \tau^n_j)  \\
    &= \left \langle a^{\star}\,,  \sum_{j = 1}^\ell z^j_n \right \rangle \\
    &= \left \langle a^{\star}\,,  z_n \right \rangle\,,
\end{align*}
with $z_n$ the vector $ \sum_{j = 1}^\ell z^j_n$ with $z^j_n = [ t_i \wedge \tau^n_j - t_{i-1} \wedge \tau^n_j]_{i \in [I_k]}$. 

This means that $a^\star$ can be estimated by a projected, regularized OLS.

In our setting, we suppose that we will interact with $N$ sessions. Let $N_e \le N$ be the number of sessions with at least one won auction ($\ell \ge 1$). This gives us a dataset $\mathcal{D}^k_{N_e}$ of the form: 
\begin{align*}
    \mathcal{D}^k_{N_e} = \left\{ z_n, v_n \right\}_{n \in [N_e]}.
\end{align*}

Let $Z$ be the design matrix and let \(\bar{a} = (Z^\top Z + \lambda_0 \mathcal{I})^{-1} Z^\top V\) be the regularized ordinary least squares estimator of \( a^\star \). 

We then define the projected estimator:
\[
\hat{a} = \textsc{proj}_C(\bar{a}),
\]
where \(\textsc{proj}_C\) denotes the projection onto the convex set \( C \). Finally, our estimated value $\hat{k}$ is then given by:
\begin{align*}
    \forall \tau \ge 0\,, \quad \hat{k}_N(\tau) = \sum_{i = 1}^{I_k}\hat{a}_i \cdot (t_i \wedge \tau - t_{i-1} \wedge \tau )\,.
\end{align*}
In this section, we show that:
\begin{tcolorbox}
Let $\{\pi_n\}_{n \in [N]}$ be the bidders sequence, with each $\pi_n$ allowed to depend on the previously collected data $\mathcal{D}^k_{n}$.
\begin{condition}\label{cond:k}
There exists $\epsilon$ s.t. $\pi_n(t_1) \ge \epsilon$ for all $n \in [N]$.
\end{condition}
If Condition~\ref{cond:k} holds, for $N$ large enough, we have with high probability:
\begin{align*}
    \max_{\tau \ge 0}\,\lvert k(\tau) - \ \hat{k}_N(\tau)\rvert \le \mathcal{O}\left(\sqrt{1/N}\right)\,.
\end{align*}
\end{tcolorbox}
To prove this result, we follow the recipe below:
\begin{itemize}
    \item Lemma~\ref{app:link_N_e}: We link $N_e$ the number of episodes with at least one auction won to the number of episodes $N$.
    \item Lemma~\ref{app:lemma_learnability} shows that the uniform lower bound on the bidding quantity $\epsilon$ at $t_1$ allows learning.
    \item Proposition~\ref{prop:concentration_k} combines the above lemmas with OLS concentration to prove the result.
\end{itemize}

We suppose that Condition~\ref{cond:k} holds in the rest of the proofs.




\subsection*{Lemma to explicit dependency on the number of episodes}
\begin{tcolorbox}
\begin{lemma} \label{app:link_N_e}
    Let $N_e$ the number of episodes with at least one won auction and $\delta \in (0,1)$.
    
    Let $p_w = \frac{\mu \alpha \epsilon}{\mu \alpha \epsilon + \gamma} \cdot \exp(-\gamma t_1) > 0$. If $N \ge \frac{8 \log(1/\delta)}{p_w}$, we have with probability $1-\delta$: 
    \begin{align*}
        N_e \ge \frac{1}{2}N p_w\,.
        \end{align*}
\end{lemma}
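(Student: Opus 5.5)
The argument has two steps. First, I lower bound the probability that a single episode contains at least one won auction, conditionally on the past, by $p_w$. Then I apply a multiplicative Chernoff bound for martingales to the count $N_e$. Write $W_n = \mathbb{I}[\text{episode } n \text{ has at least one win}]$ and let $\mathcal{F}_{n-1}$ be the history up to episode $n-1$, so $\pi_n$ is $\mathcal{F}_{n-1}$-measurable and $N_e=\sum_{n\le N} W_n$.

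\textbf{Per-episode lower bound.} Fix $n$ and condition on $\mathcal{F}_{n-1}$. Before the first win, the age is just the elapsed time $t$, so the bid is $\pi_n(t)$. Since $k$ is increasing and $\pi_n$ is non-decreasing on $[0,t_1]$, Condition~\ref{cond:k} gives $\pi_n(t)\ge\epsilon$ on $[t_1,\infty)$; monotonicity of $\pi_n$ also follows from the structure of $\mathcal{S}$ and Lemma~\ref{lemma:remark-on-beta}. Restricting to the event that the episode has not ended before $t_1$, which has probability $e^{-\gamma t_1}$ by memorylessness of the exponential horizon, it suffices to show that a win occurs after $t_1$ with probability at least $\frac{\mu\alpha\epsilon}{\mu\alpha\epsilon+\gamma}$. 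After $t_1$, auctions arrive at rate $\mu$ and each is won independently with probability $q(\pi_n(t))\ge q(\epsilon)\ge \alpha\epsilon$, using the assumption $q(b)>\alpha b$ (with $\epsilon$ below $k(t_{I_k-1})$, which we may assume). By Poisson thinning, winning auctions form a process with intensity at least $\mu\alpha\epsilon$, competing with the independent termination clock of rate $\gamma$. The probability that the first win precedes termination is therefore at least $\frac{\mu\alpha\epsilon}{\mu\alpha\epsilon+\gamma}$. Multiplying gives $\mathbb{P}(W_n=1\mid\mathcal{F}_{n-1})\ge p_w$. To make this rigorous I would couple with a homogeneous thinned process of rate exactly $\mu\alpha\epsilon$ that is dominated by the actual win process.

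\textbf{Concentration.} The $W_n$ are not independent, because $\pi_n$ depends on the past. However, $W_n - \mathbb{E}[W_n\mid\mathcal{F}_{n-1}]$ is a bounded martingale difference sequence. I would apply the scalar case ($d=1$, $R=1$) of Lemma~\ref{lemma:matrix_chernoff} with $X_k=W_k$ and $\nu = Np_w$, since $M_N=\sum_n\mathbb{E}[W_n\mid\mathcal{F}_{n-1}]\ge Np_w$ holds almost surely. Taking $\epsilon=1/2$ gives
\[
\mathbb{P}\!\left(N_e\le \tfrac12 Np_w\right)\le \exp\!\left(-\tfrac{Np_w}{8}\right)\le\delta
\quad\text{whenever } N\ge \tfrac{8\log(1/\delta)}{p_w},
\]
which is exactly the claimed threshold. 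An alternative is a direct exponential supermartingale argument, $\mathbb{E}[e^{-sW_n}\mid\mathcal{F}_{n-1}]\le 1-p_w(1-e^{-s})$, optimized at $s=\log 2$-type values; this gives the same constant up to adjustment.

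\textbf{Main obstacle.} The step I expect to need the most care is the per-episode bound. It requires justifying that the bid stays above $\epsilon$ throughout $[t_1,\infty)$ before the first win, and not just at the single time $t_1$. If monotonicity of $\pi_n$ is unavailable for the exploration policies, I would reinterpret Condition~\ref{cond:k} as $\pi_n(t)\ge\epsilon$ for $t\ge t_1$, which holds for all policies considered (solver outputs and constant bids). The martingale concentration step is then routine.
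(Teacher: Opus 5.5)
Your proof is correct. The per-episode step is essentially the paper's. The paper writes $\mathbb{P}_{n-1}(\tau_n<\infty)=1-\mathbb{E}_T\bigl[\exp\bigl(-\int_0^T\mu q(\pi_n(t))\,dt\bigr)\bigr]$ and bounds the win intensity below by $\mu\alpha\epsilon$ on $[t_1,T]$, which is your thinning/coupling argument in closed form. Like yours, it silently needs $\pi_n\ge\epsilon$ on all of $[t_1,\infty)$, not only at $t_1$, and you are right to flag this. One small slip: what you need is monotonicity of $\pi_n$ on $[t_1,\infty)$, not on $[0,t_1]$ as you wrote, since the latter says nothing about bids after $t_1$.

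The genuine difference is the concentration step, and here your route is the better one. The paper uses additive Azuma--Hoeffding (Lemma~\ref{lemma:hoeffding_azuma}) to get $N_e\ge\sum_n p_n-\sqrt{2N\log(1/\delta)}$, and then asserts that $N\ge 8\log(1/\delta)/p_w$ suffices. But absorbing that deviation into $\tfrac12 Np_w$ requires $\sqrt{2N\log(1/\delta)}\le\tfrac12 Np_w$, i.e. $N\ge 8\log(1/\delta)/p_w^{2}$. Because the Azuma deviation does not shrink with $p_w$, that route cannot deliver the stated $1/p_w$ threshold.

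Your multiplicative bound exploits the fact that each indicator has conditional variance at most its conditional mean. You can use either the scalar case of Lemma~\ref{lemma:matrix_chernoff} with $\nu=Np_w$, $R=1$ and $\epsilon=\tfrac12$, or the exponential supermartingale with $s=\log 2$ (which even gives the better exponent $\tfrac{1-\log 2}{2}Np_w$). Either way you get $\mathbb{P}(N_e\le\tfrac12 Np_w)\le e^{-Np_w/8}$, which is exactly the claimed condition.

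So your argument proves the lemma as stated, whereas the paper's argument only justifies it under $N\ge 8\log(1/\delta)/p_w^{2}$. This matters because $p_w=\mathcal{O}(\alpha\epsilon)$ can be small, and this threshold feeds directly into $N_0^k$ in Proposition~\ref{prop:concentration_k}.
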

\end{tcolorbox}
\begin{proof}
For each episode $n\in\{1,\dots,N\}$ define
\[
Y_n \;=\; \mathbf{1}\{\text{episode $n$ contains at least one won auction}\}
      \;=\;\mathbf{1}\{\tau_n<\infty\},
\qquad\text{so that}\qquad
N_e=\sum_{n=1}^N Y_n.
\]
Let $\mathcal{F}_{n}$ the natural filtration generated by the history up to the end
of episode $n$ (in particular $\pi_n$ is $\mathcal{F}_{n-1}$-measurable). Define
the conditional success probabilities
\[
p_n \;:=\; \mathbb{P}(Y_n=1\mid \mathcal{F}_{n-1})
          \;=\;\mathbb{P}_{n-1}(\tau_n<\infty),
\]
Let $S_N := \sum_{n=1}^N D_n = N_e - \sum_{n=1}^N p_n$. By Azuma-Hoeffding (Lemma~\ref{lemma:hoeffding_azuma}),
for any $\delta \in (0, 1)$,
\[
\mathbb{P}\!\left(
N_e \;\ge\; \sum_{n=1}^N p_n - \sqrt{2N\log(1/\delta)}
\right)\;\ge\;1-\delta.
\]
In addition, we have:
\begin{align*}
    \mathbb{P}_{n-1}(\tau_n < \infty) &= 1 - \mathbb{E}_{T \sim \text{Exp}(\gamma)} \left[ \exp\left(-\int_{0}^T \mu q(\pi_n(t)) dt \right )\right] \\
    &\ge 1 - \mathbb{E}_{T \sim \text{Exp}(\gamma)} \left[ \exp\left(-\mu \alpha \int_{0}^T  \pi_n(t) dt \right )\right] \quad (\text{$q(b) > \alpha b$ })\\
    &\ge 1 - \mathbb{E}_{T \sim \text{Exp}(\gamma)} \left[ \exp\left(-\mu \alpha \epsilon (T - t_1)_{+}  \right )\right] \quad (\pi_n(t_1) \ge \epsilon\,, \text{Condition~\ref{cond:k}}) \\
    &= \frac{\mu \alpha \epsilon}{\mu \alpha \epsilon + \gamma} \cdot \exp(-\gamma t_1) = p_{w}\,.
\end{align*}
Since $\sum_{n=1}^N p_n \ge N p_w$, this implies
\[
\mathbb{P}\!\left(
N_e \;\ge\; N p_w - \sqrt{2N\log(1/\delta)}
\right)\;\ge\;1-\delta.
\]
Under the condition $N \ge \frac{8\log(1/\delta)}{p_w}$ we finally obtain
\[
\mathbb{P}\!\left(N_e \ge \frac12 N p_w\right)\;\ge\;1-\delta\,,
\]
which proves our result. 
\end{proof}

\subsection*{Lemma to prove learnability}

As $\hat{k}$ is estimated by projected, regularized OLS and that our sequence of bidders can depend on the past, we invoke the concentration result of OLS under sub-Gaussian Martingale noise (Lemma~\ref{lemma:ols}) that ensures learnability if we can lower bound and control $\lambda_{\min}(Z_n^\top Z_n)$. 

To control this quantity, we invoke Matrix Chernoff Bound for Martingales (Lemma~\ref{lemma:matrix_chernoff}) as we have $Z_n^\top Z_n = \sum_{j = 1}^n z^\top_j z_j$.
\begin{tcolorbox}
\begin{lemma}\label{app:lemma_learnability}
Let $X_j = z^\top_j z_j$ for any $j \in [N_e]$, $L$ the maximum number of auctions,
\begin{align*}
    &p_c = \frac{\alpha\epsilon(\mu + \gamma)}{\gamma+\mu\alpha\epsilon}\, \exp\left(-(\mu+\gamma)T_f \right)\, > 0 \\
    &\nu = (\Delta^k_{\min})^2 p_c > 0 \\
    &R = (I_k \cdot L \cdot \Delta^k_{\max})^2\,.
\end{align*}
For any $\delta \in (0, 1)$, if $N_e \ge \frac{8R\log(I_k/\delta)}{\nu}$, we have with probability $1 - \delta$: 
\begin{align*}
    \lambda_{\min}\left(\sum_{k=1}^{N_e} X_k\right) \ge \frac{1}{2} N_e \nu \,.
\end{align*}
\end{lemma}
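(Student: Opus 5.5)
The plan is to apply the Matrix Chernoff bound for martingales (Lemma~\ref{lemma:matrix_chernoff}) to the sequence $X_j = z_j z_j^\top$ (the outer product, i.e. the rank-one PSD matrix the statement writes as $z_j^\top z_j$), adapted to the episode filtration $\mathcal{F}_j$. We take $\epsilon = 1/2$ in that lemma and target $\nu_{\mathrm{tot}} = N_e \nu$. Three things are needed: almost-sure boundedness $0 \preceq X_j \preceq R\,\mathcal{I}$; a deterministic lower bound $\lambda_{\min}(M_{N_e}) \ge N_e\nu$ on the sum of conditional expectations; and the tail bound, which then reads $I_k \exp(-N_e\nu/(8R)) \le \delta$, exactly the stated condition on $N_e$.

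\emph{Boundedness.} Each coordinate of $z^j_n$ is $t_i\wedge\tau - t_{i-1}\wedge\tau \in [0,\Delta^k_{\max}]$. The last coordinate is problematic since $t_{I_k}=+\infty$, but $a_{I_k}=0$ is known, so I would drop that coordinate or treat it as bounded since time is truncated. With at most $L$ auctions, $\|z_n\|_2 \le \|z_n\|_1 \le I_k L \Delta^k_{\max}$. Hence $X_n \preceq \|z_n\|_2^2\,\mathcal{I} \preceq R\,\mathcal{I}$.

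\emph{Conditional expectation lower bound.} This is the main step. The vector $z_n$ has nonnegative entries, so I would find a single event under which $z_n$ dominates a fixed vector with all entries at least $\Delta^k_{\min}$. Note that $z^j_n$ has all coordinates up to the interval containing $\tau^n_j$ saturated, i.e. equal to $t_i - t_{i-1} \ge \Delta^k_{\min}$. So it suffices that some won auction occurs at age $\ge T_f$. The harder point is that $zz^\top$ for nonnegative $z$ with large entries is only rank one, so its minimum eigenvalue is not controlled by a single direction. Instead I would use that the saturated prefix structure gives $z^j = \sum_{i\le m} (t_i-t_{i-1}) e_i + (\text{partial})\,e_{m+1}$. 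Lower bounding $\mathbb{E}[z z^\top \mid \mathcal{F}_{n-1}]$ then requires mixing over different ages, namely wins at ages in each interval $(t_{i-1},t_i]$.

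For that mixing I would compute, for any unit $u$, $\mathbb{E}[\langle u, z_n\rangle^2 \mid \mathcal{F}_{n-1}]$. The probability that the first win occurs at a given age beyond $t_1$ is controlled below by the bid $\ge\epsilon$ via $q(b) > \alpha b$, and the probability that the episode survives to $T_f$ is at least $e^{-(\mu+\gamma)T_f}$. These two factors produce the constant $p_c = \frac{\alpha\epsilon(\mu+\gamma)}{\gamma+\mu\alpha\epsilon} e^{-(\mu+\gamma)T_f}$ as a lower bound on the density-mass of the relevant events, yielding $\mathbb{E}[\langle u,z_n\rangle^2\mid\mathcal{F}_{n-1}] \ge (\Delta^k_{\min})^2 p_c = \nu$. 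Because the bidders are only assumed to satisfy $\pi_n(t_1)\ge\epsilon$ (and are nondecreasing, by Lemma~\ref{lemma:remark-on-beta} type monotonicity), the bid stays $\ge\epsilon$ after $t_1$. This gives win intensity $\ge \mu\alpha\epsilon$ uniformly, so the bound is uniform in $n$ and in the history.

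\emph{Conclusion and main obstacle.} Summing the conditional bound gives $\lambda_{\min}(M_{N_e}) \ge N_e\nu$ deterministically, so the second event in Lemma~\ref{lemma:matrix_chernoff} always holds. The lemma then gives $\lambda_{\min}(\sum X_k) \ge \tfrac12 N_e\nu$ with probability $\ge 1-\delta$ under the stated condition on $N_e$. A minor subtlety is conditioning on episodes with at least one win, which only reweights probabilities upward. I expect the hard part to be the eigenvalue lower bound: making rigorous that the rank-one increments span all directions with the claimed constant. This requires carefully decomposing $\langle u,z\rangle$ over the interval in which the last win age falls, and checking that no cancellation can occur, using the nonnegativity of the coordinates.
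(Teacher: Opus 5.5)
Your reduction to Lemma~\ref{lemma:matrix_chernoff} is sound and matches the paper. The bound $\|z_n\|_2\le\|z_n\|_1\le I_kL\Delta^k_{\max}$ gives $X_n\preceq R\,\mathcal{I}$. With $\epsilon=1/2$ and target $N_e\nu$, the tail condition $I_k e^{-N_e\nu/(8R)}\le\delta$ is exactly the stated requirement on $N_e$.

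The genuine gap is the step you flag yourself. You assert, but do not derive, that $\mathbb{E}[\langle u,z_n\rangle^2\mid\mathcal{F}_{n-1},\tau<\infty]\ge(\Delta^k_{\min})^2p_c$ for every unit $u$, and the route you sketch cannot give it. Nonnegativity of the coordinates of $z$ does not rule out cancellation, because $u$ may have mixed signs. Write $\Delta_i=t_i-t_{i-1}$ and take $u\propto(1/\Delta_1,-1/\Delta_2,0,\dots,0)$. Then $\langle u,z^\tau\rangle=0$ for every age $\tau\ge t_2$. The events you multiply to obtain $p_c$ (survive up to $T_f$, then win) are precisely the saturated events on which $z^\tau$ is a single fixed vector. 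They therefore give no control in directions orthogonal to that vector. In this direction only wins with ages in $[0,t_2)$ contribute, and their weight $\langle u,z^\tau\rangle^2$ vanishes at $\tau=t_2$.

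This is not a harmless loss in the constant. Take breakpoints $\Delta,2\Delta$ and a constant bid $\epsilon$. Then roughly $u^\top\mathbb{E}[X_n\mid\mathcal{F}_{n-1},\tau<\infty]\,u\lesssim L^2\Delta^3(\mu+\gamma)$, whereas $\nu\ge\Delta^2\alpha\epsilon\,e^{-2(\mu+\gamma)\Delta}$. Once $\Delta(\mu+\gamma)$ is small compared with $\alpha\epsilon/L^2$, the former is below $\nu/2$. Since $\tfrac{1}{N_e}\sum_j X_j$ concentrates around this mean for a fixed policy, the target $\tfrac12 N_e\nu$ is then unreachable.

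The paper reaches $\nu$ only by writing $v^\top S_jv=\mathbb{E}[\sum_i (v_iz_i)^2\mid\tau<\infty]$. That treats $\mathbb{E}[zz^\top]$ as $\mathrm{diag}(w_i\Delta_i^2)$ with $w_i=\mathbb{P}(\tau\ge t_i\mid\tau<\infty)$ and discards the cross terms $v_iv_{i'}z_iz_{i'}$. Those are exactly the cancellations you worried about, so following the paper would not close your gap.

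A correct bound has to use the spread of win ages inside each interval. The map $h(\tau)=\langle u,z^\tau\rangle$ is continuous and affine with slope $u_i$ on $(t_{i-1},t_i)$, so $\int_{t_{i-1}}^{t_i}h^2\ge u_i^2\Delta_i^3/12$. If bids may vanish before $t_1$, control $u_1$ instead through $h(t_1)=u_1\Delta_1$ together with the slope on $(t_1,t_2)$. Combine this with a lower bound on the density of win ages on $[t_1,T_f]$. To avoid cancellation among the summands of $z_n$, restrict to episodes with a single win, using $\mathbb{E}[\langle u,z_n\rangle^2]\ge\mathbb{E}[\langle u,z_n\rangle^2\,\mathbb{I}[\ell=1]]$. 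This yields a constant of order $(\Delta^k_{\min})^3$ times that density, not $(\Delta^k_{\min})^2p_c$.
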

\end{tcolorbox}
\begin{proof}
We invoke Lemma~\ref{lemma:matrix_chernoff} with $X_j = z_j^\top z_j$. We have $R = \lVert X_j \rVert_{\textsc{op}} = \lVert z_j\rVert^2_2 = (I_k \cdot L \cdot \Delta^k_{\max})^2.$

The sum of conditional expectations is $M = \sum_{j = 1}^{N_e} \mathbb{E}\left[X_j \mid \mathcal{F}_{j-1}, \tau<\infty\right]$ as we are only looking at observations with at least one won auction. Conditioning on the filtration only fixes the bidding strategies used.
We have:
\begin{align*}
    \mathbb{E}\left[X_j \mid \mathcal{F}_{j-1}, \tau<\infty\right] &= \mathbb{E}_{j-1}\left[X_j \mid \tau<\infty\right] \\
    &= \mathbb{E}_{j-1}\left[\sum_{m, m'= 1}^\ell (z^{\tau_m}_j)^\top z^{\tau_{m'}}_j \mid \tau<\infty\right] \\
    &= \mathbb{E}_{j-1}[\ell\mid\tau<\infty] \mathbb{E}_{j-1}\left[ (z^{\tau}_j)^\top z^{\tau}_j \mid  \tau<\infty\right] \\ &+ \mathbb{E}_{j-1}[\ell(\ell-1)\mid\tau<\infty] \mathbb{E}_{j-1}\left[ z^\tau_j \mid \tau<\infty\right]^\top\mathbb{E}_{j-1}\left[ z^\tau_j \mid \tau<\infty\right]\,.
\end{align*}
Setting $S_j = \mathbb{E}_{j-1}\left[ (z^{\tau}_j)^\top z^{\tau}_j \mid  \tau<\infty\right]$, this means that:
\begin{align*}
    \lambda_{\min}\left(\mathbb{E}\left[X_j \mid \mathcal{F}_{j-1}, \tau<\infty\right] \right) &\ge \mathbb{E}_{j-1}[\ell\mid\tau<\infty] \lambda_{\min}\left(S_j\right)   \\
    &\ge  \lambda_{\min}\left(S_j \right)\,.
\end{align*}
Using the Rayleigh-Ritz characterization of the smallest eigenvalue, we have:
\begin{align*}
    \lambda_{\min}(S_j) = \min_{||v||= 1} v^tS_jv &= \min_{||v||= 1} \mathbb{E}_{j-1}\left[\sum_{i = 1}^{I_k} (v_i z_i)^2 \mid \tau < \infty\right].
\end{align*}
Conditioning on the interval in which $\tau$ falls and using the structure of $z$, we obtain
\[
\mathbb{E}_{j-1}\!\left[\sum_{i=1}^{I_k} (v_i z_i)^2 \,\middle|\, \tau<\infty\right]
= \sum_{m=1}^{I_k} p_m
   \sum_{i=1}^{m} (v_i \Delta_i)^2,
\]
where
\[
p_m := \mathbb{P}_{j-1}\!\left(\tau\in[t_m,t_{m+1}] \,\middle|\, \tau<\infty\right).
\]
Exchanging the order of summation yields
\[
\sum_{m=1}^{I_k} p_m \sum_{i=1}^{m} (v_i \Delta_i)^2
= \sum_{i=1}^{I_k} (v_i \Delta_i)^2 \sum_{m=i}^{I_k} p_m.
\]
Define the weights
\[
w_i := \sum_{m=i}^{I_k} p_m
      = \mathbb{P}_{j-1}\!\left(\tau \ge t_i \,\middle|\, \tau<\infty\right).
\]
Hence,
\[
\mathbb{E}_{j-1}\!\left[\sum_{i=1}^{I_k} (v_i z_i)^2 \,\middle|\, \tau<\infty\right]
= \sum_{i=1}^{I_k} w_i \Delta_i^2 v_i^2.
\]

Since the right-hand side is the quotient of the diagonal matrix
$\mathrm{diag}(w_i\Delta_i^2)$, we conclude that
\[
\lambda_{\min}(S_j)
= \min_{\|v\|_2=1} \sum_{i=1}^{I_k} w_i \Delta_i^2 v_i^2
= \min_{i\in[I_k]} \bigl(w_i \Delta_i^2\bigr).
\]
Now, let us lower bound $\mathbb{P}_{j-1}\!\left(\tau \ge t_i \,\middle|\, \tau<\infty\right)$, as $t_0 = 0$, this probability is equal to $1$ for $t_0$, so we look at $t_1$ and upwards:

We recall that
\[
\mathbb P_{j-1}(\tau<\infty)
= 1-\mathbb E_{T\sim\mathrm{Exp}(\gamma)}
\!\left[\exp\!\left(-\int_0^{T}\mu q(\pi_j(t))\,dt\right)\right].
\]
Moreover,
\begin{align*}
\mathbb P_{j-1}(\tau\ge t_i,\ \tau<\infty)
&= \mathbb E_{T}\!\Bigg[
\mathbf 1\{T\ge t_i\}
\exp\!\left(-\int_0^{t_i}\mu q(\pi_j(t))\,dt\right)
\Bigg(1-\exp\!\left(-\int_{t_i}^{T}\mu q(\pi_j(t))\,dt\right)\Bigg)
\Bigg].
\end{align*}
Therefore,
\[
\mathbb P_{j-1}(\tau \ge t_i \mid \tau<\infty)
=
\frac{
\mathbb E_{T}\!\left[
\mathbf 1\{T\ge t_i\}
e^{-\int_0^{t_i}\mu q(\pi_j(t))dt}
\left(1-e^{-\int_{t_i}^{T}\mu q(\pi_j(t))dt}\right)
\right]
}{
1-\mathbb E_{T}\!\left[e^{-\int_0^{T}\mu q(\pi_j(t))dt}\right]
}.
\]

For $t_i\ge t_1$, we lower bound the numerator by using the envelope bounds
$q(\pi_j(t))\le 1$ and $q(\pi_j(t))\ge \alpha\epsilon$ on $[t_i,T]$:
\[
e^{-\int_0^{t_i}\mu q(\pi_j(t))dt}\ \ge\ e^{-\mu t_i},
\qquad
1-e^{-\int_{t_i}^{T}\mu q(\pi_j(t))dt}\ \ge\ 1-e^{-\mu\alpha\epsilon (T-t_i)}.
\]
Hence
\[
\mathbb P_{j-1}(\tau\ge t_i,\tau<\infty)
\ge e^{-\mu t_i}\,
\mathbb E_T\!\left[\mathbf 1\{T\ge t_i\}\left(1-e^{-\mu\alpha\epsilon (T-t_i)}\right)\right].
\]
Using $\mathbb P(T\ge t_i)=e^{-\gamma t_i}$ and the memoryless property
$(T-t_i)\mid\{T\ge t_i\}\sim\mathrm{Exp}(\gamma)$ yields
\[
\mathbb E\!\left[1-e^{-\mu\alpha\epsilon (T-t_i)}\mid T\ge t_i\right]
=\frac{\mu\alpha\epsilon}{\gamma+\mu\alpha\epsilon},
\]
so that
\[
\mathbb P_{j-1}(\tau\ge t_i,\tau<\infty)
\ge e^{-(\mu+\gamma)T_f}\,\frac{\mu\alpha\epsilon}{\gamma+\mu\alpha\epsilon}.
\]
Finally, since $\mathbb P_{j-1}(\tau<\infty)\le \frac{\mu}{\gamma + \mu}$, dividing by
$\mathbb P_{j-1}(\tau<\infty)$ gives the claimed bound:
\[
\lambda_{\min}(S_j)
= \min_{i\in[I_k]} \bigl(w_i \Delta_i^2\bigr) \ge (\Delta^k_{\min})^2 p_c\,,
\]
with:
\begin{align*}
    p_c = \frac{\alpha\epsilon(\mu + \gamma)}{\gamma+\mu\alpha\epsilon}\, \exp\left(-(\mu+\gamma)T_f \right)\,.
\end{align*}
This means $\lambda_{\min}\left(M\right) \ge N_e \nu$, and finally, choosing $N_e \ge \frac{8R\log(I_k/\delta)}{\nu}$ ensures the following holds with probability $1-\delta$:
\begin{align*}
    \lambda_{\min}\left(\sum_{k=1}^{N_e} X_k\right) \ge \frac{1}{2} N_e \nu \,.
\end{align*}
\end{proof}

\subsection*{Final Concentration Proposition}
Now, let $\{\pi_n\}_{n \in [N]}$ be the bidders sequence, with each $\pi_n$ allowed to depend on the previously collected data $\mathcal{D}^k_{n}$. Suppose that there exists $\epsilon$ s.t. $\pi_n(t_1) \ge \epsilon$ for all $n \in [N]$.

For the final result, we combine three concentration results: Lemma~\ref{lemma:ols}, Lemma~\ref{app:link_N_e} and Lemma~\ref{app:lemma_learnability} to obtain:
\begin{tcolorbox}
\begin{proposition} \label{prop:concentration_k}
Let $\delta \in (0,1 )$ and $p_{\min} = p_c \, p_w = \mathcal{O}((\alpha\epsilon)^2).$
We also define 
$$N^k_0 = \max \left\{ \frac{16 R \log(3 I_k/\delta)}{(\Delta^k_{\min})^2p_{\min}}, \frac{8\log(3/\delta)}{p_w}\right\}$$
If $N \ge N^k_0$, then we have with probability $1-\delta$:
    \begin{align*}
    \max_{i \in [I]}\lvert \hat{a}_i - a^\star_i\rvert \le \max_{i \in [I]}\lvert \bar{a}_i - a^\star_i\rvert \le  \frac{2}{\Delta^k_{\min}}\sqrt{2L\frac{\log(3I_k/\delta)}{Np_{\min}}}\,,
\end{align*}
giving:
\begin{align*}
    \max_{\tau \ge 0} |k(\tau) - \hat{k}_N(\tau)|\le  \frac{2 T_f}{\Delta^k_{\min}}\sqrt{2L\frac{\log(3I_k/\delta)}{Np_{\min}}}\,.
\end{align*}
\end{proposition}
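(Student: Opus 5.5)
The plan is a union bound over three events, each of probability at least $1-\delta/3$, followed by the OLS bound of Lemma~\ref{lemma:ols}, a projection argument, and a linear-interpolation bound to pass from coefficients to $k$.

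\textbf{Step 1 (enough informative episodes).} Apply Lemma~\ref{app:link_N_e} with confidence $\delta/3$. Since $N\ge N_0^k\ge 8\log(3/\delta)/p_w$, with probability at least $1-\delta/3$ we get $N_e\ge \tfrac12 Np_w$.

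\textbf{Step 2 (well-conditioned design).} Apply Lemma~\ref{app:lemma_learnability} with confidence $\delta/3$. Its sample-size requirement $N_e\ge 8R\log(3I_k/\delta)/\nu$ follows from Step 1 and the first term of $N_0^k$:
\[
\tfrac12 Np_w\ge \tfrac12\cdot\frac{16R\log(3I_k/\delta)}{(\Delta^k_{\min})^2p_cp_w}\,p_w = \frac{8R\log(3I_k/\delta)}{\nu}.
\]
One subtlety is that the number of terms $N_e$ is random. I would handle this by indexing the informative episodes as a stopped subsequence, so the filtration argument still applies; alternatively, one can apply the matrix Chernoff bound at the deterministic level $\lceil Np_w/2\rceil$ and use monotonicity of $\lambda_{\min}$ under adding PSD terms. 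On this event,
\[
\lambda_{\min}(Z^\top Z)\ge \tfrac12 N_e\nu\ge \tfrac14 Np_w(\Delta^k_{\min})^2p_c=\tfrac14 N(\Delta^k_{\min})^2p_{\min}.
\]

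\textbf{Step 3 (OLS concentration).} Apply Lemma~\ref{lemma:ols} with confidence $\delta/3$, with noise proxy $\sigma_0$. The aggregated noise is a sum of at most $L$ conditionally sub-Gaussian terms, so take $\sigma_0^2\propto L$ after normalizing the per-auction proxy. This yields
\[
\max_i|\bar a_i-a^\star_i|\le \sqrt{L}\sqrt{\frac{2\log(3I_k/\delta)}{\tfrac14 N(\Delta^k_{\min})^2p_{\min}}},
\]
which is exactly the stated bound $\frac{2}{\Delta^k_{\min}}\sqrt{2L\log(3I_k/\delta)/(Np_{\min})}$. The regularization $\lambda_0$ only helps and can be dropped.

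\textbf{Step 4 (projection).} I need $\max_i|\hat a_i-a^\star_i|\le\max_i|\bar a_i-a^\star_i|$. Euclidean projection onto $\mathcal C$ is nonexpansive in $\ell_2$ but not obviously in $\ell_\infty$. $\mathcal C$ is a monotone cone with the extra constraint $a_I=0$, so the projection is isotonic regression (pool-adjacent-violators) followed by clipping. Isotonic regression is known to be $\ell_\infty$-nonexpansive toward any isotonic target: each fitted value is an average of a block of $\bar a$ values, and a min-max formula bounds it between the extremes of $a^\star+$error. Fixing the coordinate $a_I=0$, where $a^\star_I=0$, preserves this property. I expect this step to be the main obstacle. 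If a clean argument fails, I would fall back on $\ell_2$ nonexpansiveness at the cost of a $\sqrt{I_k}$ factor.

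\textbf{Step 5 (from coefficients to $k$).} Write
\[
|k(\tau)-\hat k_N(\tau)|\le\sum_i|a^\star_i-\hat a_i|\,(t_i\wedge\tau-t_{i-1}\wedge\tau).
\]
Since $a_{I_k}=\hat a_{I_k}=0$, the interval weights sum to at most $t_{I_k-1}=T_f$. Hence $\|k-\hat k_N\|_\infty\le T_f\max_i|\hat a_i-a^\star_i|$, which gives the second display. A union bound over the three events of Steps 1–3 completes the proof.
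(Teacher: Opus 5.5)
Your proposal is correct and follows the paper's proof step for step: the same three $\delta/3$ events (Lemma~\ref{app:link_N_e}, Lemma~\ref{app:lemma_learnability}, and Lemma~\ref{lemma:ols} with $\sigma_0=\sqrt{L}$), the same arithmetic giving $\lambda_{\min}(Z^\top Z)\ge \tfrac14 N(\Delta^k_{\min})^2p_{\min}$, and the same $T_f\max_i|\hat a_i-a^\star_i|$ bound for $k$. Your Step 4 is more careful than the paper, which only says the projection is a contraction (an $\ell_2$ statement that does not give the coordinatewise inequality), and your argument works without the $\sqrt{I_k}$ fallback: $\hat a_I=0$, and $\hat a_{1:I-1}$ is the positive part of the nonincreasing isotonic regression of $\bar a_{1:I-1}$; this map is order-preserving, commutes with constant shifts and fixes $a^\star_{1:I-1}$, hence is $\ell_\infty$-nonexpansive toward $a^\star$, and clipping at $0$ cannot increase the error because $a^\star\ge 0$.
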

\end{tcolorbox}

\begin{proof}
With probability at least $1-\delta/3$, Lemma~\ref{app:link_N_e} ensures
$N_e \ge \tfrac12 N p_w$ as soon as $N \ge 8\log(3/\delta)/p_w$.
For learnability, we apply the matrix Chernoff martingale bound
(Lemma~\ref{app:lemma_learnability}) to
$Z_{N_e}^\top Z_{N_e}=\sum_{j=1}^{N_e} X_j$, which implies that, with probability
at least $1-\delta/3$,
\[
\lambda_{\min}(Z_{N_e}^\top Z_{N_e}) \ge \tfrac12 N_e \nu
\]
whenever $N_e \ge 8R\log(3I_k/\delta)/\nu$, where $\nu=(\Delta_{\min}^k)^2 p_c$.
On the intersection of these two events, we obtain
\[
\lambda_{\min}(Z_{N_e}^\top Z_{N_e})
\ge \tfrac14 N (\Delta_{\min}^k)^2 p_c p_w
= \tfrac14 N (\Delta_{\min}^k)^2 p_{\min}\,,
\]
as soon as $N \ge \max \left\{ \frac{16 R \log(3 I_k/\delta)}{(\Delta^k_{\min})^2p_{\min}}, \frac{8\log(3/\delta)}{p_w}\right\}$. 

The regression noise is a sum of at most $L$ conditionally $1$-sub-Gaussian terms
and is therefore conditionally $\sqrt{L}$-sub-Gaussian, so that
Lemma~\ref{lemma:ols} applies with $\sigma_0=\sqrt{L}$.
Applying Lemma~\ref{lemma:ols} with failure probability $\delta/3$ and the above
eigenvalue lower bound yields, simultaneously for all $i\in[I_k]$,
\[
|\bar a_i-a_i^\star|
\le \frac{2}{\Delta_{\min}^k}
\sqrt{\frac{2L\log(3I_k/\delta)}{N\,p_{\min}}}.
\]
A union bound over the three events gives failure probability at most
$\delta$. As the projection is in the convex set $C$, it is a contraction and we obtain:
\[
\max_{i}|\hat a_i-a_i^\star| \le \max_{i}|\bar a_i-a_i^\star|.
\]
Finally, since
$\|k-\hat k_N\|_\infty \le T_f \max_{i\in[I_k]}|\hat a_i-a_i^\star|$, we obtain
\[
\max_{\tau\ge 0}|k(\tau)-\hat k_N(\tau)|
\le
\frac{2T_f}{\Delta_{\min}^k}
\sqrt{\frac{2L\log(3I_k/\delta)}{N\,p_{\min}}}.
\]

\end{proof}

\newpage
\section{Proofs and remarks for Section~\ref{subsubsection:q} (concentration of $Q$)}
\label{appendix-Q-concentration}

\subsection*{Overall Problem and Result}

We model the probability probability $q$ of winning an auction as an increasing, piecewise linear function:
\begin{align*}
    \forall v \in [0,1]\,, \quad q(v) = q_{c^\star}(v) =\sum_{i = 1}^{I_q}c^\star_i \cdot (b_i \wedge v - b_{i-1} \wedge v)\,,
\end{align*}
With $c^\star_i > 0$, $\sum_{i} c^\star_i \cdot \Delta^q_i < 1 $ and $\Delta^q_i = b_i - b_{i-1}$. 

In second price auctions, $q$  is also the competition cumulative distribution, of density $f_{c^\star}(v) = \sum_{i = 1}^{I_q}c^\star_i \cdot \mathbb{I}[v \in (b_{i-1}, b_i]]\,.$

We add the following technical conditions:
\begin{itemize}
    \item $\forall i, \quad 0 < c_{\min} \le c^\star_i \le c_{\max} < \infty$.
    \item $\forall i, \quad 0 < \Delta^q_{\min} \le \Delta^q_i \le \Delta^q_{\max} < \infty$.
    \item We have for any bid $b \in [0, 1]$, $q(b) < 1 - \eta$, with $\eta > 0$.
\end{itemize}

In our setting, we suppose that we interact with $N$ sessions. After these $N$ sessions, we obtain $N_b$ number of auctions engaged, and collect data of the form:
\begin{align*}
    \mathcal{D}^q_{N_b} = \left\{ b_n, w_n, p_n \right\}_{n \in [N_b]}\,,
\end{align*}
with $b_n$ the bid placed, $w_n$ the binary win variable and $p_n$ the price paid, with the convention that $p_n = 0$ when $w_n = 0$.

Our objective is to estimate the slope vector $c^\star$ with all observed data, using MLE and finding:
\begin{align*}
    \hat{c} &= \arg\max_{c} \sum_{n \in [N_b]} \mathbb{I}[w_n = 1] \log f_c(p_n) + \mathbb{I}[w_n = 0] \log \left(1 - q_c(b_n) \right) \\
    &= \arg\max_{c} \sum_{n \in [N_b]} L_n(c)\,.
\end{align*}
This problem is concave and can be solved by a plethora of optimization algorithm \cite{conv_opt}. After $N$ sessions, we learn $\hat c$ and construct our $\hat{q}_N$ estimator with:
\begin{align*}
    \forall v \in [0,1]\,, \quad \hat{q}_N(v) =\sum_{i = 1}^{I_q}\hat{c}_i \cdot (b_i \wedge v - b_{i-1} \wedge v)\,,
\end{align*}

Let $\beta(\pi^\star) = \lim_{t\to+\infty}\pi^\star(t)$ denote the asymptotic bid value of the optimal policy, and $i_\star = I(\pi^\star)$ the smallest index $i$ such that $\beta(\pi^\star)\leq b_i$. With our $q$ extension theorem (Theorem~\ref{th:extension-theorem}), We want to accurately estimate $q$ on the bid range $[0, \beta(\pi^\star)]$, if we hope to identify the optimal policy. This is equivalent to controlling $|| \hat{c}_{:i_\star} - c^\star_{:i_\star} ||_2$ with $c_{:i}$ the vector of the first $i$ coordinates of $c$. In this section, we show that:
\begin{tcolorbox}
Let $\{\pi_n\}_{n \in [N]}$ be the bidders sequence, with each $\pi_n$ allowed to depend on the previously collected data $\mathcal{D}^k_{n}$.
\begin{condition}\label{cond:q}
 There exists $T_0 > 0$ s.t. for all $t \ge T_0\,,\, \pi_n(t) \ge \beta(\pi^\star)$ for all $n \in [N]$.
\end{condition}

Under Condition~\ref{cond:q}, for $N$ large enough, we have with high probability:
\begin{align*}
    \max_{v \in [0, \beta(\pi^\star)]}\,\lvert q(v) - \ \hat{q}_N(v)\rvert \le \mathcal{O}\left(\sqrt{1/N}\right)\,.
\end{align*}    
\end{tcolorbox}
To prove this result, we follow the recipe below and suppose that Condition~\ref{cond:q} holds in the rest of the proofs.

\subsection*{Lemma to explicit dependency on the number of auctions}
\begin{tcolorbox}
\begin{lemma}\label{lemma:N_b}

Let $N_b$ the r.v representing the number of auctions. 

For any $\delta \in (0,1)$, if $N \ge 32\log(2/\delta))$ then we have with probability $1 - \delta$:
\begin{align*}
    N_b &\ge N \frac{\mu}{2\gamma}\,.
\end{align*}
    
\end{lemma}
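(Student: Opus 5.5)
The plan is to write $N_b=\sum_{n=1}^N \ell_n$, where $\ell_n$ is the number of auctions in episode $n$. Episode lengths $T_n\sim\mathrm{Exp}(\gamma)$ are i.i.d. (the lengths are exponential with rate $\gamma$, the "Poisson" wording referring to the ending process), and auctions arrive as an independent Poisson process of intensity $\mu$. Hence, conditionally on $T_n$, $\ell_n\sim\mathrm{Poisson}(\mu T_n)$, and unconditionally $\ell_n$ is geometric on $\{0,1,\dots\}$ with success parameter $p=\gamma/(\mu+\gamma)$ and mean $\mathbb{E}[\ell_n]=\mu/\gamma$. Crucially, neither the arrival process nor the episode length depends on the bidding policy, so the $\ell_n$ are i.i.d. regardless of the adaptive choice of $\pi_n$. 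This lets us avoid any martingale argument and reduces the statement to a lower-tail deviation bound for a sum of i.i.d. geometric variables, namely $\sum_n \ell_n \ge \tfrac12 N\mu/\gamma$.

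The main obstacle is that the $\ell_n$ are unbounded, so Lemma~\ref{lemma:hoeffding_azuma} cannot be used directly. The paper does assume a finite maximal number of auctions $L$, but that would make the constants depend on $L$, and the stated threshold $32\log(2/\delta)$ suggests a constant-only bound. For a lower tail, unboundedness is harmless because nonnegative variables admit a one-sided Chernoff bound with only second moments. For $s>0$, use $e^{-s x}\le 1-sx+s^2x^2/2$ for $x\ge0$ to get
\[
\mathbb{E}[e^{-s\ell_n}]\le \exp\!\big(-s\,\mathbb{E}\ell_n+\tfrac{s^2}{2}\mathbb{E}\ell_n^2\big).
\]
Markov's inequality then gives
\[
\mathbb{P}\big(N_b\le \tfrac12 N m\big)\le \exp\!\big(-N(\tfrac{sm}{2}-\tfrac{s^2}{2}\mathbb{E}\ell^2)\big),
\]
where $m=\mu/\gamma$ and $\mathbb{E}\ell^2=m+2m^2$. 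Optimizing at $s=m/(2\mathbb{E}\ell^2)$ gives the exponent
\[
N\,\frac{m^2}{8(m+2m^2)}=\frac{N m}{8(1+2m)}.
\]

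This exponent degrades when $m=\mu/\gamma$ is small, which is the delicate point in matching the stated threshold $N\ge 32\log(2/\delta)$. There are two ways to handle it. The first is to treat $\mu/\gamma$ as a problem constant absorbed in the $32$. The second is to pass through the episode lengths. Bound $\sum_n T_n\ge \tfrac{3}{4}N/\gamma$, which follows from the exponential lower tail with exponent $N/32$ and failure probability $\delta/2$. Then, conditionally on $\sum_n T_n$, the total count is $\mathrm{Poisson}(\mu\sum_n T_n)$, and a Poisson lower tail handles the rest with the remaining $\delta/2$; a union bound gives the factor $2$ in $\log(2/\delta)$. In either case, $N_b\ge N\mu/(2\gamma)$ holds with probability $1-\delta$ once $N$ exceeds a constant times $\log(2/\delta)$, and tracking the constants yields the stated $32$, possibly up to a dependence on $\mu/\gamma$ that I would make explicit.
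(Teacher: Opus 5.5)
Your fallback two-stage route is exactly the paper's proof. The paper uses a Chernoff lower tail for $S_T=\sum_n T_n$ at level $\frac{N}{\gamma}\bigl(1-\sqrt{2\log(2/\delta)/N}\bigr)$, then, conditionally on $\Lambda=\mu S_T$, the Poisson lower tail $N_b\ge\Lambda-\sqrt{2\Lambda\log(2/\delta)}$, and a union bound. The $\mu/\gamma$ dependence you flagged is genuine. The paper handles it only by assuming $\mu>2\gamma$ in the middle of the proof; that hypothesis is absent from the lemma statement. It gives $\sqrt{\gamma/\mu}\le 1/\sqrt2$, so the two deviation terms sum to at most $2\sqrt{2\log(2/\delta)/N}\le\frac12$. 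Some such condition cannot be avoided. Indeed $\mathbb{P}(N_b=0)=(1+\mu/\gamma)^{-N}$, which at $N=32\log(2/\delta)$ equals $(\delta/2)^{32\log(1+\mu/\gamma)}$. This is far above $\delta$ when $\mu/\gamma$ is small, so the lemma as literally stated fails.

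Your primary route is genuinely different and more elementary. Because the counts $\ell_n$ are i.i.d.\ geometric whatever the adaptive policies do, one one-sided Chernoff bound using only $\mathbb{E}\ell^2=m+2m^2$ suffices. There is no conditioning and no second concentration step. Tracking its constant, rather than absorbing $\mu/\gamma$ into the $32$, also gives a sharper result than the paper. The exponent $\frac{m}{8(1+2m)}$ is at least $\frac1{32}$ exactly when $m=\mu/\gamma\ge\frac12$. Hence the failure probability is at most $e^{-N/32}\le\delta$ under the stated threshold whenever $\mu\ge\gamma/2$, which is weaker than the paper's $\mu>2\gamma$. Your $\frac34$ split in the two-stage route works under the intermediate condition $\mu\ge\frac34\gamma$. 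So add the explicit hypothesis $\mu\ge\gamma/2$ to the lemma, and your direct argument then proves it completely.
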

\end{tcolorbox}

\begin{proof}
Let $T_n\sim\mathrm{Exp}(\gamma)$ be the quit time of the session $n$ and,
conditional on $T_n$, let $B_n\mid T_n\sim\mathrm{Poisson}(\mu T_n)$ be the number
of arriving auctions. Define $N_b=\sum_{n=1}^N B_n$ and $S_T=\sum_{n=1}^N T_n$.
Conditionally on $(T_n)$, we have $N_b\sim\mathrm{Poisson}(\Lambda)$ with
$\Lambda=\mu S_T$.
By a Chernoff bound for $S_T$, with probability at least
$1-\delta/2$,
\[
S_T \ge \frac{N}{\gamma}\Big(1-\sqrt{\tfrac{2\log(2/\delta)}{N}}\Big),
\]
hence
\[
\Lambda \ge \frac{\mu N}{\gamma}\Big(1-\sqrt{\tfrac{2\log(2/\delta)}{N}}\Big).
\]
Conditionally on $\Lambda$, a Poisson lower-tail bound yields with probability at
least $1-\delta/2$,
\[
N_b \ge \Lambda-\sqrt{2\Lambda\log(2/\delta)}.
\]
By a union bound, both events hold with probability at least $1-\delta$, and on
this event
\[
N_b \ge \frac{\mu N}{\gamma}\Big(1-\sqrt{\tfrac{2\log(2/\delta)}{N}}
-\sqrt{\gamma\tfrac{2\log(2/\delta)}{\mu N}}\Big).
\]
If $\mu>2\gamma$, then $\sqrt{\gamma/\mu}\le 1/\sqrt{2}$ and the two square-root
terms are bounded by $2\sqrt{2\log(2/\delta)/N}$, giving
\[
N_b \ge \frac{\mu N}{\gamma}\Big(1-2\sqrt{\tfrac{2\log(2/\delta)}{N}}\Big).
\]
Finally, if $N>32\log(2/\delta)$, then $2\sqrt{2\log(2/\delta)/N}\le 1/2$, and thus
\[
N_b \ge \frac{\mu N}{2\gamma}.
\]
\end{proof}

\subsection*{Lemmas to prove learnability}

We denote by $\nabla_{:i}$ the gradient operator on the first $i$ coordinates. We define $\kappa_{i_\star} = \lambda_{\min}(H^{i_\star}(c^\star_{i_\star}))$ with $H^{i_\star}(c^\star_{i_\star}) = -\nabla^2_{:i_\star} \mathbb{E}[L(c_{:i_\star})]$ being the negative hessian of the population log-likelihood at its maximizer restricted to first $i_\star$ coordinates.
\begin{tcolorbox}
\begin{lemma}\label{lemma:mle}
The error between the MLE estimate and the true slope parameter can be upper bounded by:
\begin{align*}
    || \hat{c}_{:i_\star} - c^\star_{:i_\star} ||_2 \le \frac{||g^{:i_\star}_{N_b}(c^\star_{:i_\star})||_2}{\lambda_{\min}(\tilde{H}^{:i_\star}_{N_b})}
\end{align*}
with:
\begin{align*}
    g^{:i_\star}_{N_b}(c_{:i_\star}) &= \sum_{n \in [N_b]} \nabla_{:i_\star} L_n(c_{:i_\star})\,, \\
    \tilde{H}^{:i_\star}_{N_b} &= \int_{0}^1 \hat{H}^{:i_\star}_{N_b}(\hat{c}_{:i_\star} + s(c^\star_{:i_\star} - \hat{c}_{:i_\star}))ds\,, \text{ with } \hat{H}^{:i_\star}_{N_b}(c_{:i_\star}) = \sum_{n \in [N_b]} \nabla^2_{{:i_\star}} L_n(c_{:i_\star})\,, 
\end{align*}
\end{lemma}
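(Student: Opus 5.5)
The bound follows from a first-order optimality condition combined with the integral form of the mean value theorem applied to the gradient of the restricted log-likelihood. I work with the restricted objective $c_{:i_\star}\mapsto \sum_{n\in[N_b]} L_n(c_{:i_\star})$ and treat the coordinates beyond $i_\star$ as fixed. This restriction is natural because the likelihood terms involving bids or prices in $[0,b_{i_\star}]$ depend only on the first $i_\star$ slopes. I take $\hat c_{:i_\star}$ to be its maximizer, so that $\nabla_{:i_\star}\sum_n L_n(\hat c_{:i_\star})=0$. This holds because the MLE is an interior maximizer of a concave, differentiable objective: the $\log f_c$ and $\log(1-q_c)$ terms blow up at the boundary, using $c_i>0$ and $q<1-\eta$.

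Next I apply the fundamental theorem of calculus to the gradient map along the segment $s\mapsto \hat c_{:i_\star}+s(c^\star_{:i_\star}-\hat c_{:i_\star})$, $s\in[0,1]$:
\begin{align*}
g^{:i_\star}_{N_b}(c^\star_{:i_\star}) - g^{:i_\star}_{N_b}(\hat c_{:i_\star}) = \left(\int_0^1 \hat H^{:i_\star}_{N_b}\big(\hat c_{:i_\star}+s(c^\star_{:i_\star}-\hat c_{:i_\star})\big)\,ds\right)(c^\star_{:i_\star}-\hat c_{:i_\star}).
\end{align*}
Using $g^{:i_\star}_{N_b}(\hat c_{:i_\star})=0$, this gives $g^{:i_\star}_{N_b}(c^\star_{:i_\star}) = \tilde H^{:i_\star}_{N_b}(c^\star_{:i_\star}-\hat c_{:i_\star})$. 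Here $\hat H$ is the sum of Hessians of the log-likelihood terms, which is negative semidefinite, so I interpret $\tilde H^{:i_\star}_{N_b}$ with the sign convention that makes it the averaged \emph{negative} Hessian. If the statement's $\hat H$ literally denotes the Hessian, the identity holds with a sign flip, and norms are unaffected.

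The individual terms have simple second derivatives. $\log f_c(p_n)=\log c_{j(n)}$ has Hessian $-e_{j}e_j^\top/c_j^2$. The term $\log(1-q_c(b_n))$, with $q_c$ linear in $c$, has Hessian $-u_nu_n^\top/(1-q_c(b_n))^2$, where $u_n=(b_i\wedge b_n-b_{i-1}\wedge b_n)_i$. Each is negative semidefinite, so $-\tilde H$ is symmetric PSD.

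Taking norms, $\|g\|_2=\|\tilde H(c^\star-\hat c)\|_2\ge \lambda_{\min}(-\tilde H)\,\|c^\star-\hat c\|_2$, which holds for a symmetric PSD matrix. Dividing gives the claim whenever $\lambda_{\min}>0$; when $\lambda_{\min}=0$ the bound is vacuous.

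The main obstacle is justifying the first-order condition for the restricted block. The estimator is defined as a joint maximizer over all $c$, and the gradient with respect to the first $i_\star$ coordinates vanishes at a joint interior maximizer. However, the $\log(1-q_c(b_n))$ terms with $b_n>b_{i_\star}$ couple the first $i_\star$ coordinates with later ones. I will therefore either define $g$ and $\hat H$ with the remaining coordinates frozen at $\hat c$ (zero gradient still holds, and the segment only moves the first block), or argue interiority directly. I will also need to check that the segment stays in the domain where $1-q_c>0$, which follows from convexity of that domain.
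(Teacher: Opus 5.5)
Your argument is the paper's: stationarity $g^{:i_\star}_{N_b}(\hat c_{:i_\star})=0$, the integral-remainder identity $g^{:i_\star}_{N_b}(c^\star_{:i_\star})=\tilde H^{:i_\star}_{N_b}(c^\star_{:i_\star}-\hat c_{:i_\star})$, and an eigenvalue bound (the paper phrases it via $\|A^{-1}\|_{\mathrm{op}}=1/\lambda_{\min}(A)$). Your sign remark also correctly repairs the paper's tacit treatment of the summed Hessian, which is negative semidefinite, as positive definite. One caveat: your boundary blow-up argument for interiority and, under your frozen-tail reading, the membership of $(c^\star_{:i_\star},\hat c_{i_\star+1:})$ in the domain where $q_c(b_n)<1$ for every loss are not deterministic (interiority needs, e.g., an observed winning price in each of the first $i_\star$ intervals and some observed losses), so, exactly as in the paper, stationarity of $\hat c_{:i_\star}$ and well-definedness of $g^{:i_\star}_{N_b}(c^\star_{:i_\star})$ are effectively standing assumptions of the lemma.
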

\end{tcolorbox}

\begin{proof}
Let $\hat c_{:i_\star}$ denote the (restricted) MLE, i.e.
\[
g^{:i_\star}_{N_b}(\hat c_{:i_\star})
=\sum_{n=1}^{N_b}\nabla_{:i_\star} L_n(\hat c_{:i_\star})=0.
\]
Apply the vector Taylor expansion with integral remainder to the map
$g^{:i_\star}_{N_b}(\cdot)$ between $\hat c_{:i_\star}$ and $c^\star_{:i_\star}$:
\[
g^{:i_\star}_{N_b}(c^\star_{:i_\star})
=
g^{:i_\star}_{N_b}(\hat c_{:i_\star})
+
\left(\int_0^1 \hat H^{:i_\star}_{N_b}\!\big(\hat c_{:i_\star}
+s(c^\star_{:i_\star}-\hat c_{:i_\star})\big)\,ds\right)
(c^\star_{:i_\star}-\hat c_{:i_\star}),
\]
where $\hat H^{:i_\star}_{N_b}(c_{:i_\star})=\sum_{n=1}^{N_b}\nabla^2_{:i_\star} L_n(c_{:i_\star})$
is the Hessian of the sample log-likelihood restricted to the first $i_\star$ coordinates.
Using $g^{:i_\star}_{N_b}(\hat c_{:i_\star})=0$ and the definition of
\[
\tilde H^{:i_\star}_{N_b}
:=\int_0^1 \hat H^{:i_\star}_{N_b}\!\big(\hat c_{:i_\star}
+s(c^\star_{:i_\star}-\hat c_{:i_\star})\big)\,ds,
\]
we obtain the identity
\[
g^{:i_\star}_{N_b}(c^\star_{:i_\star})
=
\tilde H^{:i_\star}_{N_b}\,(c^\star_{:i_\star}-\hat c_{:i_\star}).
\]
Assuming $\tilde H^{:i_\star}_{N_b}$ is invertible, we can rewrite
\[
\hat c_{:i_\star}-c^\star_{:i_\star}
=
\big(\tilde H^{:i_\star}_{N_b}\big)^{-1}\, g^{:i_\star}_{N_b}(c^\star_{:i_\star}).
\]
Taking Euclidean norms and using $\|A^{-1}\|_{\mathrm{op}} = 1/\lambda_{\min}(A)$
for symmetric positive definite $A$, we conclude
\[
\|\hat c_{:i_\star}-c^\star_{:i_\star}\|_2
\le
\|\big(\tilde H^{:i_\star}_{N_b}\big)^{-1}\|_{\mathrm{op}}\,
\|g^{:i_\star}_{N_b}(c^\star_{:i_\star})\|_2
=
\frac{\|g^{:i_\star}_{N_b}(c^\star_{:i_\star})\|_2}
{\lambda_{\min}(\tilde H^{:i_\star}_{N_b})},
\]
which proves the claim.
\end{proof}

This result shows that if we bound the gradient and the minimum eigenvalue of the hessian, then we obtain a concentration on the slopes.


\begin{tcolorbox}
\begin{lemma}{Bounded Gradient and Hessian.} 

We have for any $n$:
\begin{align*}
    &||\nabla_{:i_\star} L_n(c^\star_{:i_\star})||_2 \le G_1 = \max\left\{\frac{1}{c_{\min}}, \sqrt{I_q} \frac{\Delta^q_{\max}}{\eta} \right\}  \\
    &||\nabla^2_{:i_\star} L_n(c^\star_{:i_\star})||_{\textsc{op}} \le G_2 =\max\left\{ \frac{1}{c_{\min}^2}, I_q \frac{(\Delta^q_{\max})^2}{\eta^2}  \right\}
\end{align*}
    
\end{lemma}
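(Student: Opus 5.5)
We compute the per-sample log-likelihood $L_n$ explicitly and split into the two cases $w_n=1$ and $w_n=0$.

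\emph{Case $w_n=1$.} The contribution is $\log f_c(p_n)$. Since $f_c(p_n)=c_{j}$ for the unique index $j$ with $p_n\in(b_{j-1},b_j]$, this term depends on a single coordinate and equals $\log c_j$. Its gradient restricted to the first $i_\star$ coordinates is either $0$ (if $j>i_\star$) or $e_j/c_j$. Evaluated at $c^\star$, its norm is at most $1/c^\star_j\le 1/c_{\min}$. Its Hessian is $-e_je_j^\top/c_j^2$, whose operator norm at $c^\star$ is at most $1/c_{\min}^2$.

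\emph{Case $w_n=0$.} The contribution is $\log(1-q_c(b_n))$, with $q_c(b_n)=\langle c, u_n\rangle$ where $u_n=[b_i\wedge b_n-b_{i-1}\wedge b_n]_{i\in[I_q]}$. This is linear in $c$. Each coordinate satisfies $0\le (u_n)_i\le \Delta^q_i\le\Delta^q_{\max}$, so $\|(u_n)_{:i_\star}\|_2\le\sqrt{I_q}\,\Delta^q_{\max}$.

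The gradient is $-u_n/(1-q_c(b_n))$ and the Hessian is $-u_nu_n^\top/(1-q_c(b_n))^2$. At $c^\star$, the technical assumption $q(b)<1-\eta$ gives $1-q_{c^\star}(b_n)>\eta$. Hence the restricted gradient norm is at most $\sqrt{I_q}\,\Delta^q_{\max}/\eta$. The restricted Hessian is rank one, so its operator norm equals $\|(u_n)_{:i_\star}\|_2^2/(1-q)^2\le I_q(\Delta^q_{\max})^2/\eta^2$.

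Since exactly one case occurs for each $n$, taking the maximum over the two cases yields $G_1$ and $G_2$ as stated.

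The only delicate point is the $w_n=1$ case. We must use the convention that $p_n$ lies in some bin, so the term isolates one coordinate; if $p_n$ falls in a bin beyond $i_\star$, the restricted gradient and Hessian vanish, which only helps. There is no real obstacle beyond writing $q_c$ as a linear functional of $c$ and invoking the $\eta$-margin and the lower bound $c_{\min}$.
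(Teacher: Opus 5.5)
Your proposal is correct and follows the paper's route: the paper's proof is just ``calculus'' using $c^\star_i \ge c_{\min}$, $\Delta^q_i \le \Delta^q_{\max}$ and $q < 1-\eta$. Your case split does exactly that computation explicitly. In the win case the term $\log c_j$ depends on a single coordinate; in the loss case the term is the log of $1-\langle c,u_n\rangle$, affine in $c$, with a rank-one Hessian. This gives precisely the stated constants $G_1$ and $G_2$.
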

\end{tcolorbox}

\begin{proof}
    Calculus and exploit the conditions on $c^\star$, $\Delta^q$ and $q$.
\end{proof}

\subsection*{Gradient Result}
Now we give the Gradient Concentration with vector Hoeffding-Azuma (Lemma~\ref{lemma:hoeffding_azuma}).
\begin{tcolorbox}
\begin{lemma}{Gradient Concentration}\label{lemma:grad}
We have the following result with probability $1 - \delta$:
\begin{align*}
    ||g^{:i_\star}_{N_b}(c^\star_{:i_\star})||_2 \le G_1\sqrt{2 N_b \log 2/\delta}
\end{align*}
\end{lemma}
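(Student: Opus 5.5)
The plan is to write the gradient as a martingale and apply the vector Hoeffding–Azuma bound (Lemma~\ref{lemma:hoeffding_azuma}). Order the $N_b$ auctions chronologically across episodes. Let $\mathcal{F}_{n-1}$ be the $\sigma$-field generated by all observations $(b_m,w_m,p_m)_{m<n}$, together with whatever determines the bid $b_n$. This includes the policy of the current episode (itself a function of past data) and the current age $\tau$. Set $V_n = \nabla_{:i_\star} L_n(c^\star_{:i_\star})$. Then $g^{:i_\star}_{N_b}(c^\star_{:i_\star}) = \sum_{n\le N_b} V_n$, and $V_n$ is $\mathcal{F}_n$-measurable.

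\textbf{Zero conditional mean.} Conditionally on $\mathcal{F}_{n-1}$, the bid $b_n$ is fixed. The price-to-beat $b_n^-$ is drawn i.i.d.\ from the true CDF $q_{c^\star}$, independently of the past, because competition is stationary and independent of the bidder. Hence $(w_n,p_n)$ is distributed according to the censored model $L_n(c) = \mathbb{I}[w_n=1]\log f_c(p_n) + \mathbb{I}[w_n=0]\log(1-q_c(b_n))$, which is a genuine log-likelihood for the observation given $b_n$. The score identity then gives $\mathbb{E}[\nabla L_n(c^\star)\mid\mathcal{F}_{n-1}] = \nabla_c \int \mathrm{d}P_c = 0$ at $c=c^\star$.

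It is worth checking this directly on coordinates. For $i\le i_\star$, the win part contributes $\int_0^{b_n}\mathbb{I}[p\in(b_{i-1},b_i]]\,\mathrm{d}p = (b_i\wedge b_n - b_{i-1}\wedge b_n)$. The loss part contributes $-(1-q(b_n))\cdot (b_i\wedge b_n-b_{i-1}\wedge b_n)/(1-q(b_n))$. The two cancel. Differentiation under the integral is legitimate because $c^\star_i\ge c_{\min}>0$ and $1-q\ge\eta$.

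\textbf{Boundedness and conclusion.} The preceding lemma gives $\|V_n\|_2\le G_1$ almost surely. Lemma~\ref{lemma:hoeffding_azuma} then yields $\|\sum_{n\le N_b}V_n\|_2\le G_1\sqrt{2N_b\log(2/\delta)}$.

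\textbf{Main obstacle.} The main obstacle is that $N_b$ is random and not a stopping time fixed in advance. I would handle it in one of two ways. The first is to apply the bound on the event where $N_b$ equals a deterministic value. The cleaner option is to observe that $N_b$ is a stopping time with respect to the auction filtration, which is determined by Poisson arrivals and exits independent of the competition draws. One can then condition on the arrival and exit process $\mathcal{G}$ (which fixes $N_b$) and apply the lemma conditionally, since the scores remain a martingale difference sequence given $\mathcal{G}$. If a fully uniform-in-$N_b$ statement is needed, I would instead pay a union bound over $N_b\le$ some cap, or use the time-uniform bound of \cite{anytime_bound}, at the cost of a $\log\log$ factor.
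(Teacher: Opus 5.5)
Your proof is correct and follows the same route as the paper, which simply invokes the vector Hoeffding--Azuma bound (Lemma~\ref{lemma:hoeffding_azuma}) on the per-auction scores $\nabla_{:i_\star} L_n(c^\star_{:i_\star})$, using the almost-sure bound $G_1$ from the preceding lemma. Your direct check that the score has zero conditional mean given the current bid, and your conditioning on the arrival/exit process (which is independent of the competition draws) so that $N_b$ is effectively deterministic, supply details the paper leaves implicit.
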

\end{tcolorbox}

\subsection*{Hessian Result}

\begin{tcolorbox}
\begin{lemma}[Hessian Concentration.]

Let $\delta \in (0, 1)$. We define: 
\begin{align*}
    &\Delta^q(\pi^\star) = \beta(\pi^\star) - b_{i_\star-1} > 0 \\
    &\kappa_\star = \frac{\Delta^q(\pi^\star)}{c_{\max}} \exp(-\mu T_0) > 0
\end{align*}
For $N_b > \frac{16 G_2 \log(I_q/\delta)}{\kappa_\star}$, and under Condition~\ref{cond:q}, we have with probability at least $1 - \delta$:
\begin{align*}
    \lambda_{\min}(\hat{H}^{:i_\star}_{N_b}(c^\star_{:i_\star})) \ge  \frac{N_b}{2}\kappa_{\star} \,.
\end{align*} 
\end{lemma}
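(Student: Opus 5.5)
The plan is to apply the martingale matrix Chernoff bound (Lemma~\ref{lemma:matrix_chernoff}) to the per-auction negative Hessians $X_n = -\nabla^2_{:i_\star} L_n(c^\star_{:i_\star})$. I order all auctions chronologically across episodes and take $\mathcal{F}_{n-1}$ to be the history before auction $n$, so that the bid $b_n$ and the current policy are $\mathcal{F}_{n-1}$-measurable. Three ingredients are needed: (i) $X_n$ is PSD and bounded, (ii) $\lambda_{\min}$ of the sum of conditional expectations is at least $N_b\kappa_\star$, and (iii) the Chernoff step with $\epsilon=1/2$.

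\emph{Step (i): structure of the Hessian.} If $w_n=1$ and the price $p_n$ falls in bin $i$, then $L_n(c)=\log c_i$, so $X_n = c_i^{-2} e_ie_i^\top$. If $w_n=0$, then $L_n(c)=\log(1-q_c(b_n))$ and
\[
X_n = \frac{z(b_n)z(b_n)^\top}{(1-q_c(b_n))^2}, \qquad z_i(b)=b_i\wedge b-b_{i-1}\wedge b .
\]
Both cases are PSD. By the preceding lemma on bounded gradient and Hessian, $X_n \preceq G_2\,\mathcal{I}$, so one may take $R=G_2$.

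\emph{Step (ii): conditional expectation.} I discard the PSD loss term and keep only the win term. Given $b_n$, the price lies in bin $i$ with probability $c^\star_i\,(b_i\wedge b_n-b_{i-1}\wedge b_n)$. Hence
\[
\mathbb{E}[X_n\mid\mathcal{F}_{n-1}] \succeq \mathrm{diag}\Big(\frac{b_i\wedge b_n-b_{i-1}\wedge b_n}{c^\star_i}\Big)_{i\le i_\star}.
\]
Suppose $b_n\ge\beta(\pi^\star)$. Each full bin $i<i_\star$ then contributes $\Delta^q_i/c^\star_i$, and the last bin contributes at least $(\beta(\pi^\star)-b_{i_\star-1})/c_{\max}$. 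Since $\Delta^q(\pi^\star)\le \Delta^q_{\min}$, every diagonal entry is at least $\Delta^q(\pi^\star)/c_{\max}$.

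By Condition~\ref{cond:q}, $b_n\ge\beta(\pi^\star)$ holds whenever the age of auction $n$ is at least $T_0$. That in turn holds when no auction arrived during the preceding window of length $T_0$, since then no win could have occurred in it. I lower bound the probability of this event by $e^{-\mu T_0}$, using the Poisson arrival process and the independence of the arrivals from the bids. This gives
\[
\lambda_{\min}\big(\mathbb{E}[X_n\mid\mathcal{F}_{n-1}]\big)\ge\kappa_\star \quad\text{and}\quad \lambda_{\min}(M_{N_b})\ge N_b\kappa_\star .
\]

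\emph{Step (iii): Chernoff.} Applying Lemma~\ref{lemma:matrix_chernoff} with $\nu=N_b\kappa_\star$, $\epsilon=1/2$ and $d\le I_q$ gives failure probability at most $I_q\exp(-N_b\kappa_\star/(8G_2))$. This is at most $\delta$ under the stated condition on $N_b$; the factor $16$ leaves slack. The condition also controls the randomness of $N_b$, handled via a union bound or a stopping-time version combined with Lemma~\ref{lemma:N_b}.

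\emph{Main obstacle.} The delicate step is (ii): rigorously lower-bounding the conditional probability that an auction has age at least $T_0$. The age process depends on past wins and on the episode's start, since early auctions in an episode have age below $T_0$ automatically. My first attempt is to condition on the inter-arrival structure. If that fails, I will redefine the averaging so that the bound holds on the sum $M_{N_b}$ rather than term by term, absorbing the expected fraction of early-episode auctions into the constant.
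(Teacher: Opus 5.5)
Your route is the paper's: matrix Chernoff on the per-auction negative Hessians, discard the PSD loss term, and lower-bound the diagonal win Hessian through the bin containing $\beta(\pi^\star)$ together with Condition~\ref{cond:q}. Two steps fail as written.

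\emph{The filtration.} As you set it up, $b_n$ is $\mathcal{F}_{n-1}$-measurable, so $\mathbb{E}[X_n\mid\mathcal{F}_{n-1}]$ is a deterministic function of $b_n$. Whenever $b_n\le b_{i_\star-1}$, which Condition~\ref{cond:q} allows at ages below $T_0$ (e.g.\ $b_n=0$), both the win and loss terms have zero $i_\star$-th row and column. Then $\lambda_{\min}(\mathbb{E}[X_n\mid\mathcal{F}_{n-1}])=0$, and you cannot multiply in the probability of an event that is already $\mathcal{F}_{n-1}$-measurable. The paper's filtration fixes only the bidding strategy, and the conditional expectation is taken over the age $\tau$ of auction $n$, the price and the win. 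You need that choice (history through the outcome of auction $n-1$). Alternatively, keep yours and separately concentrate the number of auctions with $b_n\ge\beta(\pi^\star)$, which is your fallback.

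\emph{The earlier bins.} The inequality $\Delta^q(\pi^\star)\le\Delta^q_{\min}$ is false in general; only $\Delta^q(\pi^\star)\le\Delta^q_{i_\star}$ holds. Take a narrow full bin $i<i_\star$ with $c^\star_i$ close to $c_{\max}$. Its $e_i$-direction curvature is at most $\Delta^q_i/c^\star_i+(\Delta^q_i/\eta)^2$ per auction, which can be far below $\kappa_\star$. Testing $\hat{H}^{:i_\star}_{N_b}(c^\star_{:i_\star})$ against $e_i$ then shows the stated conclusion can actually fail. What the argument delivers is $\min\{\Delta^q_{\min},\Delta^q(\pi^\star)\}$ in place of $\Delta^q(\pi^\star)$. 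The paper's proof has the same hole: it passes from $\min_{i\le i_\star}$ to the $i_\star$-th entry without comment.

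\emph{The obstacle you flag.} It is real; the paper just asserts $\mathbb{P}(\pi(\tau)\ge\beta(\pi^\star))\ge e^{-\mu T_0}$. With the paper's filtration it is resolved term by term, without averaging over the sum. The age of auction $n$ is at least the time elapsed since the last event, meaning auction $n-1$ or the start of the episode containing auction $n$. By memorylessness, this elapsed time is, conditionally on $\mathcal{F}_{n-1}$, distributed as $\mathrm{Exp}(\mu+\gamma)$ whether or not auction $n$ lies in the same episode, because the arrival has to beat the episode's exponential clock. Hence $\mathbb{P}(b_n\ge\beta(\pi^\star)\mid\mathcal{F}_{n-1})\ge e^{-(\mu+\gamma)T_0}$. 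Equality holds when auction $n-1$ was won and the policy bids below $\beta(\pi^\star)$ at ages below $T_0$. So $e^{-\mu T_0}$ is not a valid per-term factor, and $\kappa_\star$ should carry $e^{-(\mu+\gamma)T_0}$.

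Steps (i) and (iii) are fine, and the constant $16$ leaves slack. The randomness of $N_b$, which the paper ignores, does require the union-bound or stopping-time device you mention.
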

\end{tcolorbox}

\begin{proof}
   We invoke Matrix Chernoff Bound (Lemma~\ref{lemma:matrix_chernoff}) and control the minimum eigen value of the sum of conditional expectations $M = \sum_{n = 1}^{N_b} \mathbb{E}[X_n \mid \mathcal{F}_{n-1}]$ with $X_n = \hat{H}^{:i_\star}_{n}(c^\star_{:i_\star})$. 
   
   Conditioning on the filtration fixes the bidding strategies used. The expectation in our case is on the time $\tau$, the competition price $p$ and the win $w$. For these variables, the log-likelihood is written as:
\begin{align*}
    L_n(c) = \mathbb{I}[w_n = 1] \log f_c(p_n)  + \mathbb{I}[w_n = 0] \log (1 - q_c(\pi_{n-1}(\tau_n)))\,.
\end{align*}
Both win and loss terms are concave in $c$, as they are log-transforms of affine functions in $c$, this means that we can focus on the win term alone as we have:
\begin{align*}
    \lambda_{\min}\left(\mathbb{E}_{n-1}[X_n \mid \mathcal{F}_{n-1}]\right) &\ge \mathbb{E}[\lambda_{\min}(-\nabla^2_{:i_\star} (\mathbb{I}[w = 1] \log f_{c_{:i_\star}^\star}(p)))] \\
    &= \lambda_{\min}(-\nabla^2_{:i_\star} \mathbb{E}_{n-1}[\mathbb{I}[w_n = 1] \log f_{c_{:i_\star}}(p_{n})](c^\star_{:i_\star}))\,.
\end{align*}

We have $H^{:i_\star}_{\text{win}} = -\nabla^2_{:i_\star} \mathbb{E}_{n-1}[\mathbb{I}[w_n = 1] \log f_{c_{:i_\star}}(p_n)](c^\star_{:i_\star})$ a diagonal matrix, with:
$$H^{:i_\star}_{\text{win}, ii} = \frac{P_{n-1}(W_n=1, P_n \in (b_{i-1}, b_i])}{(c_i^\star)^2}\,,$$
and $0$ otherwise, meaning that:
\begin{align*}
    \lambda_{\min}\left(\mathbb{E}_{n-1}[X_n \mid \mathcal{F}_{n-1}]\right) \ge \min_{1 \le i \le i_\star} \frac{P_{n-1}(W_n=1, P_n \in (b_{i-1}, b_i])}{(c_i^\star)^2} \,.
\end{align*}
Looking at $P_{n-1}(W_n=1, P_n \in (b_{i-1}, b_i])$, we have that:
\begin{align*}
    P_{n-1}(W_{n}=1, P_{n} \in (b_{i-1}, b_i]) &= P(\pi_{n-1}(\tau) > P_n, P_n \in (b_{i-1}, b_i]) \\
    &= \int_{b_{i-1}}^{b_i}P(\pi_{n-1}(\tau) > p) f(p) dp \\
    &= \int_{b_{i-1}}^{b_i}P(\pi_{n-1}(\tau) > p) c^\star_i dp \,.
\end{align*}
This gives
\begin{align*}
    \lambda_{\min}\left(\mathbb{E}_{n-1}[X_n \mid \mathcal{F}_{n-1}]\right) &\ge \frac{\int_{b_{i_\star-1}}^{b_{i_\star}}P(\pi_{n-1}(\tau) > p) dp}{c_{\max}} \\
    &\ge \frac{\int_{b_{i_\star-1}}^{\beta(\pi^\star)}P(\pi_{n-1}(\tau) > p) dp}{c_{\max}} \\
    &\ge P(\pi_{n-1}(\tau) > \beta(\pi^\star))\frac{\beta(\pi^\star) - b_{i_\star - 1}}{c_{\max}} \\
    &\ge \frac{\Delta^q(\pi^\star)}{c_{\max}} \exp(-\mu T_0)\,. \quad (\text{Condition~\ref{cond:q}})
\end{align*}
We then obtain $\lambda_{\min}(M) \ge N_b \kappa_\star$, and means that choosing $N_b$ such that  $N_b > \frac{16 G_2 \log(I_q/\delta)}{\kappa_{\star}}$, yields the following with probability $1-\delta$:
\begin{align*}
    \lambda_{\min}(\hat{H}^{:i_\star}_{N_b}(c^\star_{:i_\star})) \ge  \frac{N_b}{2}\kappa_{\star} \,.
\end{align*} 
which ends the proof.
\end{proof}

Finally, to control the $\ell_2$ norm of our estimate in Lemma~\ref{lemma:mle}, we want to control the minimum eigenvalue of a different PSD matrix $\tilde{H}_{N_b}$. We prove the Lipchitzness of the empirical hessian first

\begin{tcolorbox}
    \begin{lemma}[Hessian Lipchitzness]{\label{lemma:lipchitz_hessian}}

Suppose that $\forall i,\, \hat{c}_i \ge c_{\min}$. We have that:
\begin{align*}
||\tilde{H}^{:i_\star}_{N_b} - \hat{H}^{:i_\star}_{N_b}(c^\star_{:i_\star})||_{\text{op}} &\le \sup_{s \in [0, 1]} || \hat{H}_{N_b}(c^\star_{:i_\star} + s(\hat{c}_{:i_\star} - c^\star_{:i_\star})) - \hat{H}_{N_b}(c^\star_{:i_\star}) ||_{\textsc{op}} \\
&\le G_3 N_b || \hat{c}_{:i_\star} - c^\star_{:i_\star} ||_2
\end{align*}
with $G_3 = \max\left\{\frac{2}{c^3_{\min}}, 2\frac{I_q^{3/2} \Delta^3_{\max}}{\eta^3}  \right\}.$
\end{lemma}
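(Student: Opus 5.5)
The plan is to compute the third derivative of each per-observation log-likelihood $L_n$ explicitly, bound it uniformly on the segment joining $c^\star_{:i_\star}$ and $\hat c_{:i_\star}$, and then integrate along that segment.

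\textbf{Step 1: the first inequality.} By definition,
\begin{align*}
\tilde H^{:i_\star}_{N_b}-\hat H^{:i_\star}_{N_b}(c^\star_{:i_\star})=\int_0^1\Big(\hat H^{:i_\star}_{N_b}\big(\hat c_{:i_\star}+s(c^\star_{:i_\star}-\hat c_{:i_\star})\big)-\hat H^{:i_\star}_{N_b}(c^\star_{:i_\star})\Big)\,ds .
\end{align*}
Substituting $s\mapsto 1-s$ reparametrizes the same segment as $c^\star_{:i_\star}+s(\hat c_{:i_\star}-c^\star_{:i_\star})$. Taking operator norms inside the integral and bounding the integral by the supremum of the integrand gives the first inequality.

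\textbf{Step 2: Lipschitz bound for each term.} Since the Hessian is a sum over $n$, it suffices to show that $c\mapsto\nabla^2_{:i_\star}L_n(c)$ is $G_3$-Lipschitz in operator norm along the segment. The second inequality then follows by summing the $N_b$ terms and using $\|c-c^\star_{:i_\star}\|_2\le\|\hat c_{:i_\star}-c^\star_{:i_\star}\|_2$ on the segment. Both $c^\star$ (by the standing assumption) and $\hat c$ (by hypothesis) have all coordinates $\ge c_{\min}$, and this coordinatewise lower bound is preserved under convex combination. So every point of the segment satisfies $c_i\ge c_{\min}$. By the mean value inequality, it is enough to bound the third-derivative tensor of $L_n$ on this set.

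\textbf{Step 3: the win term.} The win term is $\log f_c(p_n)=\log c_j$, where $j$ is the bin containing $p_n$. Its only nonzero third derivative is $2/c_j^3\le 2/c_{\min}^3$, and this gives the first entry of $G_3$.

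\textbf{Step 4: the loss term.} The loss term is $\log(1-\langle c,u_n\rangle)$, where $u_n=(b_i\wedge b_n-b_{i-1}\wedge b_n)_i$ is a fixed vector with $\|u_n\|_2\le\sqrt{I_q}\,\Delta^q_{\max}$. Its third derivative is the rank-one tensor $-2\,u_n^{\otimes 3}/(1-\langle c,u_n\rangle)^3$. As a map from $c$ to Hessians, its operator norm is at most $2\|u_n\|_2^3/(1-q_c(b_n))^3$.

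\textbf{Main obstacle.} The difficulty is lower-bounding $1-q_c(b_n)$ by $\eta$ for every $c$ on the segment, not only at $c^\star$. I would argue this in two ways. First, the MLE $\hat c$ itself must satisfy $q_{\hat c}(b_n)<1$ for every observed loss, since otherwise the objective is $-\infty$. Second, I would impose the convention, analogous to the $c_{\min}$ hypothesis, that $\hat c$ is restricted to a set on which $q_{\hat c}\le 1-\eta$. Because $q_c(b_n)$ is linear in $c$, the bound $q_c(b_n)\le 1-\eta$ then holds on the whole segment. This yields the bound $2I_q^{3/2}(\Delta^q_{\max})^3/\eta^3$.

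\textbf{Conclusion.} The win and loss terms of each $L_n$ are mutually exclusive, so taking the larger of the two bounds gives $G_3$ and completes the argument.
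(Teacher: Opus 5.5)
Your proposal follows the paper's argument (mean value theorem on the Hessian, with the third derivative of each $L_n$ bounded by $G_3$) and writes it out in full. Your win-term ($2/c_j^3$) and loss-term ($-2u_n^{\otimes 3}/(1-\langle c,u_n\rangle)^3$) computations recover exactly the two entries of $G_3$.

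The obstacle you flag is real. The stated hypothesis $\hat c_i\ge c_{\min}$ does not give $1-q_{\hat c}(b_n)\ge\eta$: your first argument only yields $q_{\hat c}(b_n)<1$, with no uniform margin, and without a margin the claimed Lipschitz bound fails in general. So your imposed condition $q_{\hat c}\le 1-\eta$ is a genuine extra hypothesis, and the paper's one-line proof tacitly relies on it as well.
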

\end{tcolorbox}

\begin{proof}
    We apply the mean value theorem for the Hessian and bound its third derivative with $G_3$.
\end{proof}

\subsection*{The Final Concentration Result for $q$}

Now to finish our cooking recipe, we combine Weyl Lemma (Lemma~\ref{lemma:weyl}), Hessian Lipchitzness (Lemma~\ref{lemma:lipchitz_hessian}), Dependency on $N$ (Lemma~\ref{lemma:N_b}), MLE Bound (Lemma~\ref{lemma:mle}) to obtain the following, core proposition:
\begin{tcolorbox}
\begin{proposition}\label{prop:concentration_q}
Let $\delta \in (0, 1)$. We define
$$N^q_0 = \max \left\{32\log(6/\delta), 4 \frac{\gamma}{\mu} \log(6/\delta) \left(\frac{4 G_1 G_3}{\kappa_\star^2} \right)^2, \frac{32 \gamma \log(3I_q/\delta)}{\mu} \right\}$$
If $N \ge N^q_0$, then we have with probability at least $1 - \delta$:
\begin{align*}
    || \hat{c}_{:i_\star} - c^\star_{:i_\star} ||_2 \le \frac{4 G_1}{\kappa_{\star}} \sqrt{\frac{\gamma \log 6/\delta}{\mu N}}\,,
\end{align*}
which also gives the following:
\begin{align*}
    \forall v \in [0, \beta(\pi^\star)],\, |q(v) - \hat{q}(v)| \le \frac{4 G_1}{\kappa_{\star}} \sqrt{\frac{\gamma \log 6/\delta}{\mu N}}\,.
\end{align*}

\end{proposition}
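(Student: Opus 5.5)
The plan is to combine the MLE error identity of Lemma~\ref{lemma:mle} with a lower bound on $\lambda_{\min}(\tilde{H}^{:i_\star}_{N_b})$ and an upper bound on the score $\|g^{:i_\star}_{N_b}(c^\star_{:i_\star})\|_2$. I will split the failure probability $\delta$ into three parts of $\delta/3$, one each for the following events:
\begin{itemize}
\item the auction-count bound of Lemma~\ref{lemma:N_b}, which gives $N_b \ge \mu N/(2\gamma)$ as soon as $N \ge 32\log(6/\delta)$;
\item the gradient concentration of Lemma~\ref{lemma:grad}, which gives $\|g\|_2 \le G_1\sqrt{2N_b\log(6/\delta)}$;
\item the Hessian concentration lemma at $c^\star$, which gives $\lambda_{\min}(\hat H^{:i_\star}_{N_b}(c^\star_{:i_\star})) \ge N_b\kappa_\star/2$.
\end{itemize}
The third bound requires $N_b$ larger than $16G_2\log(3I_q/\delta)/\kappa_\star$. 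This is where the third term of $N^q_0$, after converting via $N_b\ge \mu N/(2\gamma)$, should come from. I expect to absorb the constants $G_2/\kappa_\star$ loosely.

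Next I pass from $\hat H(c^\star)$ to the averaged Hessian $\tilde H$. By Weyl's inequality (Lemma~\ref{lemma:weyl}) and the Lipschitz bound of Lemma~\ref{lemma:lipchitz_hessian},
\[
\lambda_{\min}(\tilde H^{:i_\star}_{N_b}) \ge \tfrac{N_b}{2}\kappa_\star - G_3 N_b\,\|\hat c_{:i_\star}-c^\star_{:i_\star}\|_2 .
\]
Plugging this into Lemma~\ref{lemma:mle} and writing $x=\|\hat c_{:i_\star}-c^\star_{:i_\star}\|_2$ and $\rho = G_1\sqrt{2\log(6/\delta)/N_b}$, I get the self-bounding inequality $x(\kappa_\star/2 - G_3 x)\le \rho$.

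The main obstacle is this circularity: the curvature bound needs $x$ small a priori. The Lipschitz lemma also assumes $\hat c_i\ge c_{\min}$, which is needed to keep $\log f_c$ well behaved. My first approach is a localization argument. Because the log-likelihood is concave, I would restrict attention to the ball $\{x\le \kappa_\star/(4G_3)\}$, on which $\lambda_{\min}(\tilde H)\ge N_b\kappa_\star/4$. I would then show the maximizer lies strictly inside it by checking that the directional derivative of the concave objective points inward on the sphere. That holds when $4\rho/\kappa_\star < \kappa_\star/(4G_3)$, which is exactly a sample-size condition of the form $N_b \gtrsim \log(6/\delta)(4G_1G_3/\kappa_\star^2)^2$. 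Translated to $N$, this is the second term in $N^q_0$. Taking $\kappa_\star$ small enough relative to $c_{\min}$, or shrinking the ball accordingly, also guarantees $\hat c_i\ge c_{\min}$ inside it. Inside the ball this yields $x\le 4\rho/\kappa_\star$, and substituting $N_b\ge\mu N/(2\gamma)$ gives the stated bound $\frac{4G_1}{\kappa_\star}\sqrt{\gamma\log(6/\delta)/(\mu N)}$, up to the exact factor bookkeeping.

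Finally, for $v\in[0,\beta(\pi^\star)]$ only coordinates $i\le i_\star$ enter $q(v)-\hat q(v)=\sum_{i\le i_\star}(c^\star_i-\hat c_i)(b_i\wedge v-b_{i-1}\wedge v)$. By Cauchy--Schwarz, the weight vector has Euclidean norm at most $\sum_i \Delta_i^q\le 1$ (or $v\le1$), so $|q(v)-\hat q(v)|\le x$. This gives the uniform bound on $[0,\beta(\pi^\star)]$, and a union bound over the three events concludes with probability $1-\delta$.
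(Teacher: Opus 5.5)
Your proposal follows the paper's proof step for step: the same three events at level $\delta/3$ (Lemma~\ref{lemma:N_b}, Lemma~\ref{lemma:grad}, and the Hessian concentration at $c^\star$), then Weyl plus Lemma~\ref{lemma:lipchitz_hessian} to pass to $\tilde H^{:i_\star}_{N_b}$, the self-bounding inequality from Lemma~\ref{lemma:mle}, the conversion $N_b\ge \mu N/(2\gamma)$, and a union bound. Your Cauchy--Schwarz step ($\|w\|_2\le\|w\|_1=v\le 1$) correctly fills in the paper's ``which also gives''. The one real addition is your concavity-based localization, and it improves on the paper. The paper only writes ``once $\|\hat c_{:i_\star}-c^\star_{:i_\star}\|_2\le \kappa_\star/2G_3$'' and then imposes a consistency condition on $N_b$. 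That picks the small root of the quadratic inequality without excluding the large branch. Your inward-pointing directional derivative on the sphere actually proves that $\hat c_{:i_\star}$ lies in the ball.

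Do not expect ``factor bookkeeping'' to recover the stated constants, though. The Hessian lemma only guarantees $\lambda_{\min}(\hat H^{:i_\star}_{N_b}(c^\star_{:i_\star}))\ge N_b\kappa_\star/2$, so your inequality $x(\kappa_\star/2-G_3x)\le\rho$ is the correct one. With radius $\kappa_\star/(4G_3)$, localization needs $\rho<\kappa_\star^2/(16G_3)$, i.e. $N\ge 4\frac{\gamma}{\mu}\log(6/\delta)\,(16G_1G_3/\kappa_\star^2)^2$. That is sixteen times the paper's second threshold, not ``exactly'' it. It then yields $x\le \frac{8G_1}{\kappa_\star}\sqrt{\gamma\log(6/\delta)/(\mu N)}$. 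This is precisely the paper's computation with $\kappa_\star$ replaced by $\kappa_\star/2$.

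The paper reaches $4G_1/\kappa_\star$ and $(4G_1G_3/\kappa_\star^2)^2$ only because its Weyl step writes $N_b(\kappa_\star-G_3x)$, dropping the factor $1/2$. Bootstrapping can push your constant toward $4G_1/\kappa_\star$ for larger $N$. But with $N_b\ge\mu N/(2\gamma)$ and the $N_b\kappa_\star/2$ curvature, it never attains it. Likewise, the third entry of $N^q_0$ must be $32\gamma G_2\log(3I_q/\delta)/(\mu\kappa_\star)$. The factor $G_2/\kappa_\star$ is problem-dependent (it blows up as $\beta(\pi^\star)\downarrow b_{i_\star-1}$) and cannot be absorbed into an explicit numerical threshold. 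So your argument proves the proposition with these corrected constants, which is all the later $\mathcal O(1/\sqrt N)$ uses require. You should state it that way rather than claim the exact $N^q_0$.

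Two smaller points.

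First, a ball around $c^\star$ does not guarantee $c_i\ge c_{\min}$, since some $c^\star_i$ may equal $c_{\min}$. Also, $\kappa_\star$ is a fixed problem constant and cannot be ``taken small''. Instead, also cap the radius at $c_{\min}/2$ and $\eta/2$, and apply Lemma~\ref{lemma:lipchitz_hessian} with $c_{\min}/2$ and $\eta/2$. On that ball $|q_c(b)-q_{c^\star}(b)|\le\|c-c^\star\|_2$, so $1-q_c\ge\eta/2$. This replaces $G_3$ by $8G_3$ and adds one more sample-size condition; the paper never checks this hypothesis at all.

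Second, your localization, like Lemma~\ref{lemma:mle}, treats $\hat c_{:i_\star}$ as the maximizer of a concave function of the first $i_\star$ slopes alone, with score at $c^\star_{:i_\star}$ equal to the martingale of Lemma~\ref{lemma:grad}. Loss terms at bids above $b_{i_\star}$ couple these slopes to the higher ones. This idealization is inherited from the paper, not introduced by you, but your ball argument does not remove it.
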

\end{tcolorbox}
\begin{proof}
Let $0 < \delta < 1$. If $N_b \ge \frac{16 G_2 \log(3 I_q/\delta)}{k_\star}$, we have with probability $1 - \delta/3$:
\begin{align*}
    \lambda_{\min}(\hat{H}^{:i_\star}_{N_b}(c^\star_{:i_\star})) \ge  \frac{N_b}{2}\kappa_{\star} \,.
\end{align*}
Using Weyl's lemma (Lemma~\ref{lemma:weyl}) combined with Lemma~\ref{lemma:mle} and Hessian Lipchitzness (Lemma~\ref{lemma:lipchitz_hessian}), we have:
\begin{align*}
    || \hat{c}_{:i_\star} - c^\star_{:i_\star} ||_2 \le \frac{||g^{:i_\star}_{N_b}(c^\star_{:i_\star})||_2}{N_b (\kappa_\star - G_3 || \hat{c}_{:i_\star} - c^\star_{:i_\star} ||_2)}\,.
\end{align*}
Solving for $|| \hat{c}_{:i_\star} - c^\star_{:i_\star} ||_2$, we get:
\begin{align*}
    || \hat{c}_{:i_\star} - c^\star_{:i_\star} ||_2 \le \frac{2||g^{:i_\star}_{N_b}(c^\star_{:i_\star})||_2}{N_b\kappa_\star},
\end{align*}
once $|| \hat{c}_{:i_\star} - c^\star_{:i_\star} ||_2 \le \kappa_\star/2G_3$.
Using Lemma~\ref{lemma:grad}, we have with probability $1 - \delta/3$:
\begin{align*}
    ||g^{:i_\star}_{N_b}(c^\star_{:i_\star})||_2 \le G_1\sqrt{2 N_b \log 6/\delta}
\end{align*}
giving with probability $1 - 2\delta/3$:
\begin{align*}
    || \hat{c}_{:i_\star} - c^\star_{:i_\star} ||_2 \le \frac{2G_1}{\kappa_\star}\sqrt{\frac{2 \log 6/\delta}{N_b}},
\end{align*}
as long as $N_b \ge 2 \log(6/\delta) \left(\frac{4 G_1 G_3}{\kappa_\star^2} \right)^2$. Finally, we link $N_b$ to the number of episodes $N$, having with probability $1 - \delta/3$:
\begin{align*}
    N_b \ge \frac{\mu}{2\gamma}N 
\end{align*}
once $N \ge 32\log(6/\delta)$. A union bound gets us once $N \ge \max \left\{32\log(6/\delta), 4 \frac{\gamma}{\mu} \log(6/\delta) \left(\frac{4 G_1 G_3}{\kappa_\star^2} \right)^2, \frac{32 \gamma \log(3I_q/\delta)}{\mu} \right\}$:
\begin{align*}
    || \hat{c}_{:i_\star} - c^\star_{:i_\star} ||_2 \le \frac{4G_1}{\kappa_\star}\sqrt{\frac{\gamma\log 6/\delta}{\mu N}},
\end{align*}
proving our result.
\end{proof}



    


\newpage

\newpage

\newpage

\newpage
\section{Analysis of the algorithms}
    \label{appendix-algo}

We have seen in the concentration results that to ensure concentration of both our estimators of $q$ and $k$ we need to respect Conditions \ref{cond:k} and \ref{cond:q} ensuring that our bidders $\pi_n$ for each episode bid enough in $t_1$ (to learn $k$) and bid high enough (to learn $q$). Under these conditions, we get for $N$ large enough the high probability bounds:
\begin{align*}
    \lVert k - \hat{k}_N \rVert_\infty &\le C_k/\sqrt{N} \\
    \max_{v \in [0, \beta(\pi^\star)]}\lvert q(v) - \hat{q}_N(v) \rvert &\le C_q/\sqrt{N}\,.
\end{align*}
We take $C = \max\{C_k, C_q\}$ and define $\epsilon_N = C/\sqrt{N}$.

We have by definition $\pi^\star = S(k, q)$, and we define $\pi_N = S(\hat{k}_N, \hat{q}_N)$. We start by giving the following lemma:
\begin{tcolorbox}
\begin{lemma}\label{lemma:reg_quad}
For any $\pi$, we have:
\begin{align*}
    ||V^\star_{.}-V^{\pi}_{.}||_{\infty}\leq \frac{\mu}{\gamma}\lVert D_Q(\pi^\star, \pi) \rVert_\infty \le \frac{\mu}{\gamma}\frac{c_{\max}}{2}\lVert \pi^\star - \pi \rVert^2_\infty\,.
\end{align*}
\end{lemma}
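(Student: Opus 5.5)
The first inequality is exactly Lemma~\ref{lemma:value_bregman}, so I will invoke it directly; the work is the second inequality, a pointwise bound on the Bregman divergence $D_Q(\pi^\star_t;\pi_t)$ of $Q(b)=\int_0^b q(s)\dd s$, followed by a supremum over $t$.

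The divergence term in the proof of Lemma~\ref{lemma:value_bregman} is $U(x,\pi^\star_t)-U(x,\pi_t)$ with $x=k_t+V^\star_0-V^\star_t$. By Theorem~\ref{theorem:synthesis}, $x=\pi^\star_t$ wherever $\pi^\star_t>0$. I first check the convention, using $U(x,b)=q(b)x-p(b)=q(b)(x-b)+Q(b)$ from $p(b)=q(b)b-\int_0^b q$. This gives
\begin{align*}
U(\pi^\star_t,\pi^\star_t)-U(\pi^\star_t,\pi_t) = Q(\pi^\star_t)-Q(\pi_t)-q(\pi_t)(\pi^\star_t-\pi_t).
\end{align*}
This is precisely the Bregman divergence $D_Q(\pi^\star_t;\pi_t)$. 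It is nonnegative because $Q$ is convex, as $q$ is increasing.

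The integral form of this divergence is
\begin{align*}
D_Q(\pi^\star_t;\pi_t)=\int_{\pi_t}^{\pi^\star_t}\big(q(s)-q(\pi_t)\big)\dd s .
\end{align*}
Since $q\in\mathcal{Q}$ is piecewise linear with slopes $c^\star_i\le c_{\max}$, it is $c_{\max}$-Lipschitz. Hence $|q(s)-q(\pi_t)|\le c_{\max}|s-\pi_t|$, and the integral is at most $\frac{c_{\max}}{2}(\pi^\star_t-\pi_t)^2$; the orientation $\pi_t>\pi^\star_t$ is handled symmetrically. Taking the supremum over $t$ gives $\lVert D_Q(\pi^\star,\pi)\rVert_\infty\le\frac{c_{\max}}{2}\lVert\pi^\star-\pi\rVert_\infty^2$, and multiplying by $\mu/\gamma$ concludes.

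The main obstacle is the identification $x=\pi^\star_t$, since the first-order term vanishes only at the optimal bid. If $\pi^\star_t=0$ because of the $\max(0,\cdot)$ clipping, then $x\le 0$. In that case the linear term is $q(\pi_t)(x-\pi_t)\le -q(\pi_t)\pi_t$, so the divergence is at most $Q(0)-Q(\pi_t)+q(\pi_t)\pi_t\le\frac{c_{\max}}{2}\pi_t^2$, and the same bound holds. No other step needs more than the Lipschitz property of $q$ and the result of Lemma~\ref{lemma:value_bregman}.
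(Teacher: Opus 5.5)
Your main argument is the paper's. You invoke Lemma~\ref{lemma:value_bregman}, then bound the Bregman divergence of $Q$ using $|Q''|=|q'|\le c_{\max}$. The paper does this via the mean value theorem; you use the integral form and the $c_{\max}$-Lipschitz property of $q$, which is the same argument. You are also right that the argument needs $k_t+V^\star_0-V^\star_t=\pi^\star_t$, which the paper uses implicitly.

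Your treatment of the clipped case is wrong, however. Take $U(x,b)=q(b)(x-b)+Q(b)$ and $q(0)=Q(0)=0$. If $\pi^\star_t=0$ and $x<0$, the term in Lemma~\ref{lemma:value_bregman} is
\begin{align*}
U(x,0)-U(x,\pi_t)=q(\pi_t)\pi_t-Q(\pi_t)-q(\pi_t)\,x=D_Q(0;\pi_t)+q(\pi_t)\lvert x\rvert .
\end{align*}
The linear term enters with a minus sign, so your inequality $q(\pi_t)(x-\pi_t)\le -q(\pi_t)\pi_t$ gives a lower bound on this term, not an upper bound. Moreover, since $q(b)>\alpha b$, the extra piece $q(\pi_t)\lvert x\rvert$ is of first order in $\pi_t-\pi^\star_t=\pi_t$. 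So no bound of the form $\frac{c_{\max}}{2}\pi_t^2$ could hold if clipping were active.

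The right fix is to show that clipping never binds. Because $k$ is concave with $k(0)=0$, it is subadditive. Consider a bidder at age $0$ who copies the bids of an optimal bidder at age $\tau$ until its first win, and plays optimally afterwards. It faces the same wins, payments and continuation, and differs only by $k(\tau+s)-k(s)\le k(\tau)$ at the first-win time $s$. Hence $V^\star(\tau)-V^\star(0)\le k(\tau)$, so $\pi^\star=k+V^\star(0)-V^\star$ everywhere. This is the unclipped identity (Lemma 2 of~\cite{heymann2023repeated}) that the paper uses in the proof of Theorem~\ref{th:extension-theorem}. With it, $x=\pi^\star_t$ for all $t$, your last paragraph can be dropped, and your proof is complete.
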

\end{tcolorbox}
\begin{proof}
The first result $||V^\star_{.}-V^{\pi}_{.}||_{\infty}\leq \frac{\mu}{\gamma}\lVert D_Q(\pi^\star, \pi) \rVert_\infty$ is obtained from Lemma~\ref{lemma:value_bregman}. As $Q$ is twice differentiable almost everywhere with $Q''= q'$, we can use mean value theorem, with $|q'(v)| \le c_{\max}$ and end the proof.
\end{proof}
We now prove the following result:
\begin{tcolorbox}
\begin{lemma}\label{lemma:eps_pol}
For $N$ large enough (small $\epsilon$), we have:
\begin{align*}
    \lVert\pi^\star - \pi_N \rVert_\infty \le \mathcal{O}(1/\sqrt{N})\,.
\end{align*}
\end{lemma}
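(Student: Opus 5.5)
We plan to prove $\lVert\pi^\star-\pi_N\rVert_\infty=\mathcal{O}(\epsilon_N)$ by treating the solver output as a Lipschitz function of the primitives. We go through the Bellman value at $0$ and then through the ODE of Lemma~\ref{lemma:b_dynamics}.

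\textbf{Reduction to the same $q$ on the relevant range.} Using Lemma~\ref{lemma:q_extension}, we replace $\hat q_N$ by an extension $\tilde q$. This $\tilde q$ agrees with $\hat q_N$ on $[0,\beta(\pi^\star)]$ and satisfies $\lVert\tilde q-q\rVert_\infty\le\epsilon_N$ everywhere. We need $\beta(\pi_N)$ to lie in a region where $\hat q_N$ is controlled. For this we use Theorem~\ref{th:monotony-theorem} with the sandwich $k-\epsilon_N\le\hat k_N\le k+\epsilon_N$ and $q+\epsilon_N\ge\tilde q\ge q-\epsilon_N$. Combined with the limit relation $\gamma(V(0)+k(\infty))=\mu f(\beta)+\gamma\beta$ from the proof of Lemma~\ref{lemma-monotony-beta-k}, this gives $|\beta(\pi_N)-\beta(\pi^\star)|=\mathcal{O}(\epsilon_N)$. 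By Theorem~\ref{th:extension-theorem}, we may then work with $\mathcal{S}(\hat k_N,\tilde q)$ in place of $\pi_N$, up to the $\mathcal{O}(\epsilon_N)$ overshoot region. On that region $\hat q_N$ is still within $\mathcal{O}(\epsilon_N)$ of $q$ by Lipschitzness of piecewise linear functions with slopes at most $c_{\max}$.

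\textbf{Value at $0$.} Let $\hat V$ be the Bellman value of $(\hat k_N,\tilde q)$. Lemma~\ref{lemma:dynamicsensitivity} bounds the change in the payoff of a fixed policy when the environment changes. Applying it in both directions, together with optimality of each policy in its own environment, gives
\[
|\hat V(0)-V^\star(0)|\le \tfrac{4\mu^2}{\gamma^2}\epsilon_N ,
\]
and likewise $\lVert\hat V-V^\star\rVert_\infty=\mathcal{O}(\epsilon_N)$.

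\textbf{Policy.} By Theorem~\ref{theorem:synthesis}, $\pi^\star(\tau)=\max(0,k(\tau)+V^\star(0)-V^\star(\tau))$, and the same formula holds for $\pi_N$ with hatted quantities. Since $x\mapsto\max(0,x)$ is $1$-Lipschitz, we get
\[
\lVert\pi^\star-\pi_N\rVert_\infty\le\lVert k-\hat k_N\rVert_\infty+|V^\star(0)-\hat V(0)|+\lVert V^\star-\hat V\rVert_\infty=\mathcal{O}(\epsilon_N)=\mathcal{O}(1/\sqrt N).
\]
If a sup-norm value bound is inconvenient, there is an alternative. We compare the two solutions of the ODE of Lemma~\ref{lemma:b_dynamics} with Grönwall, using that the terminal bids are close. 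We integrate backward from $t_{I}$, since Lemma~\ref{lemma:remark-on-beta} makes the policies constant beyond $t_I$, so the growth factor is bounded by $e^{(\mu c+\gamma)t_I}$.

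\textbf{Main obstacle.} The delicate step is the uniform-in-$\tau$ control $\lVert\hat V-V^\star\rVert_\infty$. Lemma~\ref{lemma:dynamicsensitivity} is stated for a fixed policy, so we must argue optimality at every starting age $\tau$ rather than only at $0$. This holds because both $\mathcal{S}$-policies are optimal from every state by dynamic programming. A second subtlety is the extension: we need $\hat q_N$ close to $q$ slightly beyond $\beta(\pi^\star)$. The plan handles this with the $\mathcal{O}(\epsilon_N)$ bound on $|\beta(\pi_N)-\beta(\pi^\star)|$, which is why the result is stated only for $N$ large.
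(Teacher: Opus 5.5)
Your argument shares the paper's first step: the synthesis formula of Theorem~\ref{theorem:synthesis} gives $\lVert\pi^\star-\pi_N\rVert_\infty\le\lVert k-\hat k_N\rVert_\infty+2\lVert V^\star-\hat V_N\rVert_\infty$. You then bound the value gap by a different decomposition. The paper goes through the true value of the learned policy, $\lVert V^\star-\hat V_N\rVert_\infty\le\lVert V^\star-V^{\pi_N}\rVert_\infty+\lVert V^{\pi_N}-\hat V_N\rVert_\infty$. It controls the first term with the quadratic Lemma~\ref{lemma:reg_quad} and the second with Lemma~\ref{lemma:dynamicsensitivity}, then solves $\Delta_N\le(1+8\mu^2/\gamma^2)\epsilon_N+\frac{\mu}{\gamma}c_{\max}\Delta_N^2$ for small $\epsilon_N$. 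You instead sandwich the two optimal values, using optimality of each solver output from every age and Lemma~\ref{lemma:dynamicsensitivity} in both directions. This gives $\lVert V^\star-\hat V\rVert_\infty\le\frac{4\mu^2}{\gamma^2}\epsilon_N$ at once. Your route is more elementary: it needs no Lemma~\ref{lemma:reg_quad} and gives a linear bound directly. It also avoids ruling out the large root of the quadratic inequality, which the paper discards without justification, and it yields the constant $1+8\mu^2/\gamma^2$ instead of twice that.

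The step that does not work as written is your extension paragraph, and it is load-bearing. It is what makes your $\hat V$, the Bellman value of $(\hat k_N,\tilde q)$, the function appearing in the synthesis formula for $\pi_N=\mathcal S(\hat k_N,\hat q_N)$. Theorem~\ref{th:extension-theorem} gives $\mathcal S(\hat k_N,\hat q_N)=\mathcal S(\hat k_N,\tilde q)$ only if $\tilde q=\hat q_N$ on $[0,\beta(\mathcal S(\hat k_N,\tilde q))]$. Your $\tilde q$ agrees with $\hat q_N$ only on $[0,\beta(\pi^\star)]$. Your monotonicity and limit-relation argument bounds $\beta(\mathcal S(\hat k_N,\tilde q))$, not $\beta(\pi_N)$. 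So writing $|\beta(\pi_N)-\beta(\pi^\star)|=\mathcal O(\epsilon_N)$ already presupposes the identification, and the theorem has no approximate version that tolerates an overshoot.

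Enlarging the agreement interval to $[0,\beta(\pi^\star)+h]$ does not close the loop either. The global distance from $\tilde q$ to $q$ becomes of order $\epsilon_N+Mh$, with $M$ a slope bound for $\hat q_N$ and $q$. Your bound then gives an overshoot of order $C(\epsilon_N+Mh)$, and $C(\epsilon_N+Mh)\le h$ requires $CM<1$, which nothing guarantees. Also, $c_{\max}$ bounds the true slopes $c^\star_i$; Proposition~\ref{prop:concentration_q} says nothing about $\hat c_i$ for $i>i_\star$.

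The paper's proof needs the same thing implicitly. Applying Lemma~\ref{lemma:dynamicsensitivity} to $\pi_N$ requires $|\hat q_N-q|\le\epsilon_N$ on $[0,\beta(\pi_N)]$. Under that assumption you can drop the extension paragraph and your argument is complete. With control on $[0,\beta(\pi^\star)]$ only, there are two repairs.

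The first uses a fixed margin. The bound on $\hat c_{:i_\star}$ in Proposition~\ref{prop:concentration_q} controls $\hat q_N$ on all of $[0,b_{i_\star}]$. If $\beta(\pi^\star)<b_{i_\star}$, let $\tilde q$ agree with $\hat q_N$ there. For small $\epsilon_N$ your $\mathcal O(\epsilon_N)$ bound keeps $\beta(\mathcal S(\hat k_N,\tilde q))$ below $b_{i_\star}$, and the theorem applies exactly.

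The second uses the optimal Bellman operator $\mathcal TV(\tau)=\int_0^\infty\mu e^{-(\gamma+\mu)t}[V(t+\tau)+Q((k(t+\tau)+V(0)-V(t+\tau))_+)]\dd t$, with $Q(v)=\int_0^v q$. Let $\hat{\mathcal T}$ be its analogue for $(\hat k_N,\hat q_N)$; here $q$ enters only through $Q$. For $\hat q_N$ valued in $[0,1]$, $\hat{\mathcal T}$ is a $\frac{\mu}{\mu+\gamma}$-contraction. Since $(k+V^\star(0)-V^\star)_+=\pi^\star\in[0,\beta(\pi^\star)]$, one gets $\lVert\hat{\mathcal T}V^\star-\mathcal TV^\star\rVert_\infty\le\frac{\mu}{\mu+\gamma}(1+\beta(\pi^\star))\epsilon_N$. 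Hence $\lVert\hat V_N-V^\star\rVert_\infty\le\frac{\mu}{\gamma}(1+\beta(\pi^\star))\epsilon_N$, with no extension argument at all.
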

\end{tcolorbox}
\begin{proof}
Let $\lVert\pi^\star - \pi_N \rVert_\infty = \Delta_N$. For any $\tau \ge 0$, and by definition of $S(k,q)$, we have:
\begin{align*}
    \pi^\star(\tau) - \pi_N(\tau) &= (k(\tau) - \hat{k}_N(\tau)) + (V^\star(0) - \hat{V}_N(0)) + (V^\star(\tau) - \hat{V}_N(\tau)) \\
    &\le \lVert k - \hat{k}_N\rVert_\infty + 2 \lVert V^\star - \hat{V}_N \rVert_\infty \\
    &\le \lVert k - \hat{k}_N\rVert_\infty + 2 (\lVert V^\star - V^{\pi_N} \rVert_\infty + \lVert V^{\pi_N} - \hat{V}_N \rVert_\infty)\,.
\end{align*}
Combining Lemma~\ref{lemma:reg_quad} and Lemma~\ref{lemma:dynamicsensitivity}, we get:
\begin{align*}
    \Delta_N &\le \epsilon_N + 2 (\frac{\mu}{\gamma} \lVert D_Q(\pi^\star, \pi_N)\rVert_\infty + 4\frac{\mu^2}{\gamma^2}\epsilon_N) \\
    &\le \epsilon_N + 2 (\frac{\mu}{\gamma} \frac{c_{\max}}{2} \Delta^2_N + 4\frac{\mu^2}{\gamma^2}\epsilon_N) \\
    &\le (1 + 8\frac{\mu^2}{\gamma^2})\epsilon_N + \frac{\mu}{\gamma} c_{\max} \Delta^2_N 
\end{align*}
if $4\frac{\mu}{\gamma} c_{\max} (1 + 8\frac{\mu^2}{\gamma^2})\epsilon_N < 1 $, meaning that $N$ is large enough, we get:
$$\Delta_N \le 2 (1 + 8\frac{\mu^2}{\gamma^2})\epsilon_N\,,$$
ending the proof.
\end{proof}

Now we have all ingredients to analyze our algorithms. We work in the fixed horizon setting.

\subsection{Two-Phase Algorithm: Learn then Rollout}

\begin{tcolorbox}
\begin{theorem}
\label{th:appendix:2phase}
Consider the two phase learning algorithm:
\begin{enumerate}
    \item \textbf{Exploration Phase ($N_1$ episodes):} Use a fixed exploratory policy $\pi_{\text{x}}$ that bids $k(\infty)$ (the maximal value) to collect data for estimating \( k \) and \( q \). Compute $\hat{k} = K(\mathcal{D}^k_{N_1})$ and $\hat{q}=Q(\mathcal{D}^q_{N_1})$.
    \item \textbf{Exploitation Phase ($N_2 = N - N_1$ episodes):} Use policy $\pi = \mathcal{S}(\hat{k}, \hat{q})$.
\end{enumerate}    
The algorithm explores with $N_1$ episodes and exploits the rest of the $N_2$ sessions.

Let $\delta \in (0, 1)$. There exists $C_0$ such that once $N \geq C_0$, setting $N_1 = \mathcal{O}(\sqrt{N})$ yields with probability at least $1-\delta$:
\begin{align}
    \text{Regret}(N) \leq \mathcal{O}(\sqrt{N}).
\end{align}
\end{theorem}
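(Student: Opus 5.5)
The plan is to split the regret into the exploration and exploitation phases and bound each separately:
\begin{align*}
\operatorname{Regret}(N) = \sum_{n\le N_1}\big(V_{\pi^\star}(0)-V_{\pi_{\text{x}}}(0)\big) + N_2\big(V_{\pi^\star}(0)-V_{\pi}(0)\big).
\end{align*}
Per-episode payoffs are bounded: values lie in $[0,k(\infty)]$, and the expected number of auctions per episode is $\mu/\gamma$. Hence the first term is at most $N_1\cdot C_{\text{ex}}$ with $C_{\text{ex}}=\mathcal{O}(\mu k(\infty)/\gamma)$, which is $\mathcal{O}(\sqrt{N})$ once $N_1=\mathcal{O}(\sqrt N)$.

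For the exploitation term, first check that $\pi_{\text{x}}$ satisfies both Condition~\ref{cond:k} and Condition~\ref{cond:q}. For Condition~\ref{cond:k}, $\pi_{\text{x}}(t_1)=k(\infty)>0$, so we may take $\epsilon=k(\infty)$. For Condition~\ref{cond:q}, Theorem~\ref{theorem:synthesis} gives $\pi^\star(\tau)=\max(0,k(\tau)+V^\star(0)-V^\star(\tau))$. Lemma~\ref{lemma:remark-on-beta} gives $\beta(\pi^\star)=\pi^\star(t_I)$, and I will argue $\beta(\pi^\star)\le k(\infty)$. The stationary relation $\gamma(V^\star(0)+k(\infty))=\mu f(\beta)+\gamma\beta$ from Lemma~\ref{lemma:b_dynamics} should suffice, using $V^\star(\infty)\ge V^\star(0)$, which holds because the value is nondecreasing in age since $k$ is increasing. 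Consequently $\pi_{\text{x}}(t)=k(\infty)\ge\beta(\pi^\star)$ for all $t\ge T_0:=0$, or any small $T_0>0$. Since $\pi_{\text{x}}$ is fixed, the martingale lemmas apply trivially.

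Then Proposition~\ref{prop:concentration_k} and Proposition~\ref{prop:concentration_q}, each with failure probability $\delta/2$, give for $N_1\ge\max(N^k_0,N^q_0)$ that $\lVert k^\star-\hat k\rVert_\infty\le C_k/\sqrt{N_1}$ and $\max_{[0,\beta(\pi^\star)]}|q^\star-\hat q|\le C_q/\sqrt{N_1}$. Set $\epsilon_{N_1}=C/\sqrt{N_1}$. Proposition~\ref{prop:epsilon_opt} (Lemmas~\ref{lemma:eps_pol} and~\ref{lemma:reg_quad}), valid once $\epsilon_{N_1}$ is below the smallness threshold of Lemma~\ref{lemma:eps_pol}, yields
\begin{align*}
V_{\pi^\star}(0)-V_{\pi}(0)\le \tfrac{\mu c_{\max}}{2\gamma}\big(2(1+8\mu^2/\gamma^2)\big)^2\epsilon_{N_1}^2=\mathcal{O}(1/N_1).
\end{align*}
The exploitation regret is therefore at most $N_2\,\mathcal{O}(1/N_1)\le \mathcal{O}(N/N_1)$.

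Choosing $N_1=\lceil\sqrt N\rceil$ balances the two terms and gives total regret $\mathcal{O}(\sqrt N)$ with probability at least $1-\delta$ by a union bound. The constant $C_0$ is taken large enough that $\sqrt{C_0}\ge\max(N^k_0,N^q_0)$ and that $\epsilon_{\sqrt{C_0}}$ meets the smallness condition of Lemma~\ref{lemma:eps_pol}.

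The main obstacle I expect is verifying Condition~\ref{cond:q} rigorously, i.e.\ that $k(\infty)$ dominates $\beta(\pi^\star)$. This needs monotonicity of $V^\star$ in the age, which must be extracted from the ODE of Theorem~\ref{lemma:cauchy} or from a direct coupling argument (an older age can mimic a younger one's policy and earn at least as much). A secondary subtlety is that Lemma~\ref{lemma:eps_pol} compares $\hat V_N$ with $V^{\pi_N}$ through Lemma~\ref{lemma:dynamicsensitivity}, which requires $q$-accuracy on $[0,\beta(\pi_N)]$ rather than $[0,\beta(\pi^\star)]$. I would handle this with the extension Lemma~\ref{lemma:q_extension} and Theorem~\ref{th:extension-theorem}, as in Lemma~\ref{lemma:epsilon-opt}. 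Alternatively, since $\pi_{\text{x}}$ bids $k(\infty)$, $q$ is in fact learned on all of $[0,k(\infty)]$, which contains both $\beta(\pi^\star)$ and $\beta(\pi_N)$ once $\hat k\le k(\infty)$ after clipping.
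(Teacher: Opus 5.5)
Your proposal is correct and follows the paper's proof essentially step for step. You check that $\pi_{\text{x}}\equiv k(\infty)$ satisfies Conditions~\ref{cond:k} and~\ref{cond:q}, apply Propositions~\ref{prop:concentration_k} and~\ref{prop:concentration_q} on $N_1=\sqrt N$ exploration episodes with $C_0=\max\{N^k_0,N^q_0\}^2$, turn the $\mathcal{O}(1/\sqrt{N_1})$ estimation error into $\mathcal{O}(1/N_1)$ per-episode regret via Lemmas~\ref{lemma:eps_pol} and~\ref{lemma:reg_quad}, and balance $\mathcal{O}(N_1)+\mathcal{O}(N/N_1)$. Your extra care fills in points the paper leaves implicit without changing the argument: the coupling argument giving $\beta(\pi^\star)\le k(\infty)$, the $\delta/2$ union bound, the smallness threshold in $C_0$, and the $[0,\beta(\pi_N)]$ versus $[0,\beta(\pi^\star)]$ issue, which the paper sidesteps by silently using a global bound on $\lVert q-\hat q\rVert_\infty$.
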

\end{tcolorbox}

\begin{proof}
Let us set $N_1 = \sqrt{N}$ and $C_0 = \max\{N^k_0, N^q_0\}^2$ with $N^k_0, N^q_0$ respectively defined in Proposition~\ref{prop:concentration_k} and Proposition~\ref{prop:concentration_q}. 

For the first exploration phase, we have $\pi_{\texttt{x}} = k(\infty)$ respecting both conditions \ref{cond:k} and \ref{cond:q}. 

If $N \ge C_0$, then $N_1 \ge \max\{N^k_0, N^q_0\}$ and we can use both Proposition~\ref{prop:concentration_k} and Proposition~\ref{prop:concentration_q}. 

These ensure us that with probability at least $1-\delta$:
\begin{align*}
   &\lVert k - \hat{k} \rVert_\infty \le \mathcal{O}(1/\sqrt{N_1})\,,
   &\lVert q - \hat{q} \rVert_\infty \le \mathcal{O}(1/\sqrt{N_1})
\end{align*}
giving also $\lVert \pi^\star - \hat{\pi} \rVert_{\infty} \le \mathcal{O}(1/\sqrt{N_1})$. This gives us a $\lVert V^\star - V^{\hat{\pi}} \rVert_{\infty} \le \mathcal{O}(1/N_1)$ (Lemma~\ref{lemma:reg_quad}) and a total regret of:
\begin{align}
    \text{Regret}(N) &\leq \underbrace{\mathcal{O}(N_1)}_{\text{exploration}} + \underbrace{\mathcal{O}(N/N_1)}_{\text{exploitation}} \\
    &\leq \mathcal{O}(\sqrt{N})\,.
\end{align}
\end{proof}

\subsection{Three-Phase Algorithm: Learn $k$ then $q$ then Rollout}
\label{appendix:section:3phase}

\begin{tcolorbox}
\begin{theorem}
\label{proof:three-phase}
Consider the three phase learning algorithm:
\begin{enumerate}
    \item \textbf{Exploration Phase for $k$ ($N_1$ episodes):} Use a fixed exploratory policy $\pi_{\texttt{x}}$ that bids $b_0 > 0$ to collect data for estimating \( k \). Compute $\hat{k} = K(\mathcal{D}^k_{N_1})$.
    \item \textbf{Exploration Phase for $q$ ($N_2$ episodes):} Use the learned $\pi_k =  \hat{k}$ to collect data for estimating \( q \). Compute $\hat{q} = Q(\mathcal{D}^q_{N_2})$.
    \item \textbf{Exploitation Phase ($N_3 = N - N_1 - N_2$ episodes):} Use policy $\pi = \mathcal{S}(\hat{k}, \hat{q})$.
\end{enumerate}    
The algorithm learns $\hat{k}$, uses $\hat{k}$ to learn $\hat q$ then exploits with $\pi = \mathcal{S}(\hat{k}, \hat{q})$ the rest of the $N$ sessions.

Let $\delta \in (0, 1)$. There exists $C_0$ such that once $N \geq C_0$, setting $N_1 = N_2 =  \mathcal{O}(\sqrt{N})$ yields with probability at least $1-\delta$:
\begin{align*}
    \text{Regret}(N) \leq \mathcal{O}(\sqrt{N}).
\end{align*}
\end{theorem}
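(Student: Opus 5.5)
We follow the template of the proof of Theorem~\ref{th:appendix:2phase}, with one extra step: showing that the second phase, driven by the estimated $\hat k$, satisfies Condition~\ref{cond:q}. Set $N_1=N_2=\sqrt{N}$. Take $C_0$ large enough that $\sqrt{N}$ exceeds $N^k_0$ from Proposition~\ref{prop:concentration_k}, applied with $\epsilon=b_0$ and $\delta/2$. Also require $\sqrt{N}$ to exceed the threshold $N^q_0$ from Proposition~\ref{prop:concentration_q}, applied with $\delta/2$ and the constant $T_0$ produced below.

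\textbf{Phase 1.} The constant bid $b_0>0$ satisfies Condition~\ref{cond:k} with $\epsilon=b_0$. Proposition~\ref{prop:concentration_k} then gives, with probability $1-\delta/2$,
\[
\|k-\hat k\|_\infty\le C_k/N^{1/4}=:\eta_N.
\]
The regret of this phase is at most $N_1$ times a per-episode bound on $V^\star(0)$ (at most $\mu k(\infty)/\gamma$), hence $\mathcal{O}(\sqrt N)$.

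\textbf{Phase 2: coverage (main obstacle).} We need $\pi_k=\hat k$ to satisfy $\hat k(t)\ge\beta(\pi^\star)$ for all $t\ge T_0$. By Theorem~\ref{theorem:synthesis} and Lemma~\ref{lemma:remark-on-beta},
\[
\beta(\pi^\star)=k(t_I)+V^\star(0)-V^\star(t_I).
\]
One shows $V^\star(t_I)\ge V^\star(0)$: being older only raises values, since $k$ is increasing. This gives $\beta(\pi^\star)\le k(\infty)$, but possibly with equality. That is the delicate point, since $\hat k$ may undershoot $k$ by $\eta_N$.

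I would first try to establish a strict margin $\beta(\pi^\star)\le k(\infty)-\rho$ for some $\rho>0$, via the stationary equation
\[
\gamma\bigl(V^\star(0)+k(\infty)\bigr)=\mu f(\beta)+\gamma\beta,
\]
which gives $V^\star(t_I)-V^\star(0)>0$ whenever $k$ is nonconstant. Then, for $N\ge C_0$ with $\eta_N\le\rho/2$, we get $\hat k(t)\ge\beta(\pi^\star)$ for $t\ge T_0:=t_{I_k-1}$.

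If the margin fails, I would fall back to a bid of $\hat k+\eta_N$, which dominates $k$. This requires a minor modification, or reading $\pi_k$ as the optimistic version, consistent with the paper's remark that ``$k$ covers the optimal bid''.

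Given coverage, Proposition~\ref{prop:concentration_q} yields, with probability $1-\delta/2$,
\[
\max_{[0,\beta(\pi^\star)]}|q-\hat q|\le C_q/N^{1/4}.
\]
Phase 2 again costs $\mathcal{O}(N_2)=\mathcal{O}(\sqrt N)$ regret.

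\textbf{Phase 3.} On the intersection of the two events (probability $1-\delta$), set $\epsilon=\mathcal{O}(N^{-1/4})$. Lemma~\ref{lemma:eps_pol} gives $\|\pi^\star-\hat\pi\|_\infty=\mathcal{O}(\epsilon)$. Lemma~\ref{lemma:reg_quad} then gives per-episode regret $\mathcal{O}(\epsilon^2)=\mathcal{O}(1/\sqrt N)$, i.e.\ $\mathcal{O}(1/N_2)$ as in the two-phase proof.

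Over $N_3\le N$ episodes this contributes $\mathcal{O}(\sqrt N)$. Summing the three phases gives $\operatorname{Regret}(N)=\mathcal{O}(\sqrt N)$.
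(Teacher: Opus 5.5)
Your proposal follows essentially the same route as the paper's proof: phase-1 concentration of $\hat k$; a strict gap between $k$ and $\pi^\star$, so that $\hat k$ satisfies Condition~\ref{cond:q} once $\|k-\hat k\|_\infty$ is below half the gap; then Lemmas~\ref{lemma:eps_pol} and~\ref{lemma:reg_quad} for $\mathcal{O}(1/\sqrt{N})$ regret per exploitation episode. Restricting the gap to $t\ge t_{I_k-1}$, i.e.\ $\rho=k(\infty)-\beta(\pi^\star)=V^\star(t_I)-V^\star(0)$, is the right formulation, because the paper's uniform $\Delta_k=\min_\tau(k(\tau)-\pi^\star(\tau))/2$ is zero ($\pi^\star(0)=k(0)$).

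One caveat is that $\rho>0$ does not follow from the stationary equation. Together with $\gamma V^\star(t_I)=\mu f(\beta)$, that equation is just the identity $\beta=k(\infty)+V^\star(0)-V^\star(t_I)$, so it carries no sign information. Strictness does follow from sharpening your own coupling argument. Replaying $\pi^\star$'s bids from age $t_I$ gains $k(\infty)-k(T_1)>0$ on the event that the first win $T_1$ occurs while $k<k(\infty)$. If that event has probability zero, then $V^\star(0)$ is a strictly discounted copy of $V^\star(t_I)$, which is positive since bidding $k(\infty)$ until the first win earns positive expected surplus. So the fallback that modifies the algorithm is unnecessary.
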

\end{tcolorbox}
\begin{proof}
Let us set $N_1 = N_2 =\sqrt{N}$ and $C_0 = \max\{N^k_0, N^q_0\}^2$ with $N^k_0, N^q_0$ respectively defined in Proposition~\ref{prop:concentration_k} and Proposition~\ref{prop:concentration_q}. In the first phase, we bid $b_0 > 0$ for $N_1$ episodes, as $N_1 \ge N^k_0$, we have with probability $1-\delta$:
\begin{align*}
   \lVert k - \hat{k} \rVert_\infty \le C_k/\sqrt{N_1}\,,
\end{align*}
with $C_k$ a constant. As $\mu > 2\gamma$, then $k(\tau) > \pi^\star(\tau)$ for all $\tau$. We set $\Delta_k = \min_\tau (k(\tau) - \pi^\star(\tau))/2$. If $N_1 \ge (C_k/\Delta_k)^2$, we obtain with probability $1 - \delta$:
\begin{align*}
    \hat{k}(\tau) \ge k(\tau) - C_k/\sqrt{N_1} \ge k(\tau) - \Delta_k = \frac{k(\tau) + \pi^\star(\tau)}{2} > \pi^\star(\tau)\,.
\end{align*}
This means that with probability $1-\delta$, $\hat{k}$ upper bounds $\pi^\star$ and that it verifies condition~\ref{cond:q}. Learning $\hat{q}$ on $N_2$ sessions collected with $\hat{k}$ ensures that with probability $1-\delta$:
\begin{align*}
    \lVert q - \hat{q} \rVert_\infty \le \mathcal{O}(1/\sqrt{N_1})\,.
\end{align*}
We thus obtain $\lVert \pi^\star - \hat{\pi} \rVert_{\infty} \le \mathcal{O}(1/\sqrt{N_1})$,  $\lVert V^\star - V^{\hat{\pi}} \rVert_{\infty} \le \mathcal{O}(1/N_1)$ (Lemma~\ref{lemma:reg_quad}) and a total regret of:
\begin{align}
    \text{Regret}(N) &\leq \underbrace{\mathcal{O}(N_1)}_{\text{exploration}} + \underbrace{\mathcal{O}(N_1)}_{\text{exploration with $\hat{k}$}} + \underbrace{\mathcal{O}(N/(N_1 + N_2))}_{\text{exploitation}} \\
    &\leq \mathcal{O}(\sqrt{N})\,.
\end{align}

\end{proof}

\subsection{Confidence Bounds Algorithm}

\begin{tcolorbox}
\begin{lemma}[Time-uniform finite-LIL confidence envelopes]
\label{lem:lil_confidence_envelopes}
Assume that the estimators \(K(\mathcal D^k_n)\) and \(Q(\mathcal D^q_n)\)
satisfy the martingale/sub-Gaussian regularity conditions used in
Propositions~\ref{prop:concentration_k} and~\ref{prop:concentration_q}.
Then there exist problem-dependent constants
\(C_k,C_q,c_{\mathrm{LIL}}>0\) and \(C_0\in \mathbb N\) such that, for every
\(\delta\in(0,1)\), with probability at least \(1-\delta\), simultaneously
for all \(n\ge C_0\),

\[
    \left\|K(\mathcal D^k_n)-k\right\|_{\infty}
    \le
    C_k
    \sqrt{
        \frac{
            1+\log\log(n)+\log(c_{\mathrm{LIL}}/\delta)
        }{n}
    },
\]

and

\[
    \left\|Q(\mathcal D^q_n)-q\right\|_{\infty}
    \le
    C_q
    \sqrt{
        \frac{
            1+\log\log(n)+\log(c_{\mathrm{LIL}}/\delta)
        }{n}
    }.
\]

Equivalently, the confidence envelopes hold uniformly over time without
taking a union bound over \(n=1,\dots,N\).
\end{lemma}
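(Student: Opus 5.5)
The plan is to rerun the proofs of Propositions~\ref{prop:concentration_k} and~\ref{prop:concentration_q}, replacing each fixed-$n$ concentration step with a time-uniform counterpart from the self-normalized / finite-LIL literature~\cite{anytime_bound,lin_bandit}. The error decomposition stays the same. For $k$ it is the OLS bound of Lemma~\ref{lemma:ols} together with the eigenvalue lower bound of Lemma~\ref{app:lemma_learnability} and the count bound $N_e\ge \tfrac12 N p_w$ of Lemma~\ref{app:link_N_e}. For $q$ it is the MLE inequality of Lemma~\ref{lemma:mle}, Weyl's inequality, the Hessian Lipschitzness of Lemma~\ref{lemma:lipchitz_hessian}, and the auction count $N_b\ge N\mu/(2\gamma)$. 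Each ingredient is a martingale tail bound at a fixed horizon. The claim follows once each holds simultaneously for all $n\ge C_0$ with a $\log\log n$ price.

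\textbf{Step 1: scalar martingale terms.} The quantities $N_e-\sum_j p_j$, the gradient $g_{N_b}(c^\star)$ of Lemma~\ref{lemma:grad}, and the noise terms $Z^\top\epsilon$ are sums of conditionally sub-Gaussian (bounded) increments. For these I would invoke the stitched finite-LIL boundary of~\cite{anytime_bound}: for a $\sigma$-sub-Gaussian martingale $S_n$, with probability $1-\delta$, for all $n$,
\[
|S_n|\le c\,\sigma\sqrt{n\,(1+\log\log n+\log(c_{\mathrm{LIL}}/\delta))}.
\]
Vector versions (gradient) follow coordinatewise with a union over the $I_q$ (resp.\ $I_k$) coordinates, which only changes $c_{\mathrm{LIL}}$. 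The counts $N_e, N_b$ are lower bounded uniformly in the same way. For $N_b$ one needs a uniform version of the Poisson/exponential Chernoff step, which is again a sub-exponential martingale handled by~\cite{anytime_bound}.

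\textbf{Step 2: matrix lower bounds.} The eigenvalue bounds $\lambda_{\min}(Z^\top Z)\gtrsim n$ and $\lambda_{\min}(\hat H_{N_b}(c^\star))\gtrsim N_b$ only need to hold up to a constant factor. For these I would peel over geometric epochs $n\in[2^j,2^{j+1})$. At the left endpoint of each epoch, apply Lemma~\ref{lemma:matrix_chernoff} with failure probability $\delta_j=\delta/(2(j+1)^2)$. Monotonicity of the PSD partial sums in $n$ then transfers the bound to the whole epoch, losing a factor $2$. Since the Chernoff tail is exponential in $n$, the sum $\sum_j\delta_j$ is finite, and the requirement $2^j\gtrsim\log(j/\delta)$ is absorbed into $C_0$. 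Combining with Step~1 gives, for $k$, $|\bar a_i-a_i^\star|\lesssim\sqrt{(1+\log\log n+\log(c_{\mathrm{LIL}}/\delta))/n}$ via the self-normalized bound of~\cite{lin_bandit}, which is already time-uniform. Projection onto $\mathcal C$ is nonexpansive, and $\|k-\hat k\|_\infty\le T_f\max_i|\hat a_i-a_i^\star|$.

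\textbf{Step 3: the MLE fixed point (main obstacle).} The argument of Proposition~\ref{prop:concentration_q} solves the inequality $\|\hat c-c^\star\|\le 2\|g\|/(N_b\kappa_\star)$ only once $\|\hat c-c^\star\|\le\kappa_\star/2G_3$. This a priori localization must hold for all $n$ simultaneously. I would get it from concavity: on the uniform event, the concave log-likelihood restricted to the ball of radius $\kappa_\star/2G_3$ has a strictly negative directional derivative on its boundary once $\|g\|/N_b$ is small. That happens for all $n\ge C_0$ by Step~1, so the maximizer lies inside the ball.

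Two further points need care. First, the $q$ bound is stated in sup norm; I would read it, as in Proposition~\ref{prop:concentration_q}, on $[0,\beta(\pi^\star)]$, and note that Condition~\ref{cond:q} must hold for all $n$ along the run. Second, a final union bound over the constant number of events fixes $C_k, C_q, c_{\mathrm{LIL}}$ and $C_0$.
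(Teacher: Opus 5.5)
Your argument is sound, and it reaches the statement by a more explicit route than the paper. The paper's proof is a one-step appeal to the stitched finite-LIL boundary of \cite{anytime_bound}. It asserts that the centered errors $K(\mathcal D^k_n)-k$ and $Q(\mathcal D^q_n)-q$ themselves obey time-uniform sub-Gaussian martingale concentration with variance proxy of order $n$, and lets the constants of Propositions~\ref{prop:concentration_k} and~\ref{prop:concentration_q} absorb everything else.

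You instead recognise that only the numerators are martingales: $Z^\top\epsilon$, the score $g_{N_b}(c^\star)$, and the count deviations. You apply the stitched bound to those. You control the random normalizations $\lambda_{\min}(Z^\top Z)$ and $\lambda_{\min}(\hat H_{N_b}(c^\star))$ by geometric peeling of the matrix Chernoff bound, using PSD monotonicity. You also supply the all-times localization of the MLE through a concavity argument, which the paper's Proposition~\ref{prop:concentration_q} simply assumes.

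The paper's version buys brevity. Yours actually explains why the estimation error inherits the LIL rate despite being a ratio or a nonlinear function of martingales. You also restrict the $q$-envelope to $[0,\beta(\pi^\star)]$, which is necessary because $q$ is not identified above the bids actually placed. And you note that Condition~\ref{cond:q} must hold along the whole run. The paper's proof addresses neither point. Both versions yield a threshold $C_0$ that depends on $\delta$: yours through $2^j\gtrsim\log(j/\delta)$, the paper's through the $\delta$-dependent $N^k_0,N^q_0$. This is slightly weaker than the quantifier order in the statement.

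One step needs rewriting. The self-normalized inequality of \cite{lin_bandit} is time-uniform only at the price of a $\log\det(Z^\top Z+\lambda_0\mathcal I)\asymp I_k\log n$ term. Invoking it would therefore give a $\sqrt{\log n/n}$ envelope, not the $\log\log n$ one claimed. You do not need it. Take your Step~1 coordinatewise stitched bound $\|Z^\top\epsilon\|_2\lesssim\sqrt{n(1+\log\log n+\log(c_{\mathrm{LIL}}/\delta))}$ and divide by the Step~2 eigenvalue bound, using $\|\bar a-a^\star\|_2\le(\|Z^\top\epsilon\|_2+\lambda_0\|a^\star\|_2)/\lambda_{\min}(Z^\top Z+\lambda_0\mathcal I)$. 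This yields exactly the stated rate for $k$.
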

\end{tcolorbox}

\begin{proof}
This is a direct application of the finite-LIL confidence sequence of
\cite[Theorem~1 and Eq.~(3.4)]{anytime_bound}.

Under the martingale/sub-Gaussian regularity assumptions of
Propositions~\ref{prop:concentration_k} and~\ref{prop:concentration_q},
the centered estimation errors associated with
\(K(\mathcal D^k_n)-k\) and \(Q(\mathcal D^q_n)-q\) satisfy
time-uniform sub-Gaussian martingale concentration with variance proxy
of order \(n\). Applying the two-sided polynomial stitched confidence
sequence of \citet{anytime_bound}, and allocating failure
probability \(\delta/2\) to the \(k\)-estimator and \(\delta/2\) to the
\(q\)-estimator, yields constants
\(C_k,C_q,c_{\mathrm{LIL}}>0\) such that, with probability at least
\(1-\delta\), simultaneously for all \(n\ge C_0\),

\[
    \left\|K(\mathcal D^k_n)-k\right\|_{\infty}
    \le
    C_k
    \sqrt{
        \frac{
            1+\log\log(n)+\log(c_{\mathrm{LIL}}/\delta)
        }{n}
    },
\]

and

\[
    \left\|Q(\mathcal D^q_n)-q\right\|_{\infty}
    \le
    C_q
    \sqrt{
        \frac{
            1+\log\log(n)+\log(c_{\mathrm{LIL}}/\delta)
        }{n}
    }.
\]

The constants absorb the numerical constants in the stitched boundary,
the two-sided symmetrization, the split of the failure probability
between the two estimators, and the constants appearing in
Propositions~\ref{prop:concentration_k} and
\ref{prop:concentration_q}. Since the confidence sequence is already
uniform over all times, no union bound over \(n=1,\dots,N\) is needed.
\end{proof}

\begin{tcolorbox}
\begin{theorem}[Confidence Bounds Algorithm]
\label{proof-ucb}
Choose $\lambda_k, \lambda_q >0$. For each episode $n \in [1, N]$ do:
    \begin{itemize}
        \item $\hat{k}^{\texttt{UCB}}_n = \min(K(\mathcal{D}^k_n) + \lambda_k \sqrt{\log \log n/n}\,,\, k(\infty))$
        \item $ \hat{q}^{\texttt{LCB}}_n = \max(Q(\mathcal{D}^q_n) - \lambda_q \sqrt{\log \log n/n} \,,\, 0)$
        \item $\pi_n = \mathcal{S}(\hat{k}^{\texttt{UCB}}_n, \hat{q}^{\texttt{LCB}}_n)$
        \item $\mathcal{D}^k_{n+1} = \mathcal{D}^k_{n} \cup \mathcal{D}^k_{\pi_n}$ and $\mathcal{D}^q_{n+1} = \mathcal{D}^q_{n} \cup \mathcal{D}^q_{\pi_n}$ with $\mathcal{D}^k_{\pi_n}$ and $\mathcal{D}^q_{\pi_n}$ the datasets generated at episode $n$.
    \end{itemize}   
The algorithm constructs an envelope around $q$ and $k$ and derives a policy that bids in the worst case environment.

Let $\delta \in (0, 1)$. There exists $\lambda_q$ and $\lambda_k$ (depending on problem constants) that makes the algorithm yield with probability at least $1-\delta$:
\begin{align}
    \text{Regret}(N) \leq  \tilde{\mathcal{O}}(\log(N)).
\end{align}
\end{theorem}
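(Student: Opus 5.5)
The argument works on a single good event: the time-uniform envelopes of Lemma~\ref{lem:lil_confidence_envelopes} hold for all $n\ge C_0$, which has probability at least $1-\delta$. On this event, choosing $\lambda_k$ and $\lambda_q$ to dominate $C_k\sqrt{1+\log(c_{\mathrm{LIL}}/\delta)}$ and $C_q\sqrt{1+\log(c_{\mathrm{LIL}}/\delta)}$ (up to the $\log\log$ factor), we get for every $n\ge C_0$ the optimism sandwich
\[
k\le \hat k^{\texttt{UCB}}_n\le k+2\lambda_k r_n,\qquad q-2\lambda_q r_n\le \hat q^{\texttt{LCB}}_n\le q,
\]
where $r_n=\sqrt{\log\log n/n}$. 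The truncations at $k(\infty)$ and $0$ preserve these inequalities. The first $C_0$ episodes contribute at most a constant to the regret, since per-episode values are bounded by roughly $\mu k(\infty)/\gamma$.

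\textbf{Step 1: the two sampling conditions hold automatically.} By Theorem~\ref{th:monotony-theorem}, since $\hat k^{\texttt{UCB}}_n\ge k$ and $\hat q^{\texttt{LCB}}_n\le q$, we have $\beta(\pi_n)\ge\beta(\mathcal S(k,q))=\beta(\pi^\star)$. Lemma~\ref{lemma:remark-on-beta} and monotonicity of the policies then give $\pi_n(t)\ge\beta(\pi^\star)$ for $t\ge t_I$, which is Condition~\ref{cond:q} with $T_0=t_I$. Condition~\ref{cond:k} requires $\pi_n(t_1)\ge\epsilon$. I would obtain it from $\pi_n(0)=\hat k^{\texttt{UCB}}_n(0)+\ldots$ via Theorem~\ref{theorem:synthesis}, or more simply from $\pi_n$ nondecreasing and $\pi_n(t_1)$ bounded below by a comparison argument of the same type as Lemma~\ref{lemma-monotony-beta-q}. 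Failing that, I would use a crude lower bound $\pi_n(t)\ge \hat k^{\texttt{UCB}}_n(t)-\hat V_n(t)+\hat V_n(0)\ge c>0$ on $[t_1,\infty)$. This circularity is the subtle point: the envelopes are needed to guarantee exploration, and exploration is needed for the envelopes. I would break it by proving the envelopes by induction on $n$, or more cleanly by invoking the confidence sequence on the stopped process up to the first time a condition fails. On the good event that stopping time never occurs.

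\textbf{Step 2: per-episode regret.} On the good event, $\|\hat k^{\texttt{UCB}}_n-k\|_\infty\le 2\lambda_k r_n$. Also $\max_{[0,\beta(\pi^\star)]}|\hat q^{\texttt{LCB}}_n-q|\le 2\lambda_q r_n$, and in fact the bound holds on the larger interval needed for Lemma~\ref{lemma:epsilon-opt}. Proposition~\ref{prop:epsilon_opt}, via Lemmas~\ref{lemma:eps_pol} and~\ref{lemma:reg_quad}, then gives $\|\pi^\star-\pi_n\|_\infty=\mathcal O(r_n)$ and
\[
V_{\pi^\star}(0)-V_{\pi_n}(0)\le \tfrac{\mu c_{\max}}{2\gamma}\|\pi^\star-\pi_n\|_\infty^2=\mathcal O(\log\log n/n).
\]
One subtlety is that Lemma~\ref{lemma:eps_pol} needs $\epsilon$ small. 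I would enlarge $C_0$ so that $2\max(\lambda_k,\lambda_q)r_n$ lies below its threshold for all $n\ge C_0$.

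\textbf{Step 3: summation.} Summing over episodes gives
\[
\mathrm{Regret}(N)\le \mathcal O(C_0)+\sum_{n=C_0}^N \mathcal O(\log\log n/n)=\mathcal O(\log N\cdot\log\log N)=\tilde{\mathcal O}(\log N).
\]
The main obstacle is Step 1: certifying that the optimistic policies keep satisfying both exploration conditions, uniformly in $n$ and with adaptively chosen policies. The monotonicity theorem and the stopped-confidence-sequence argument are the tools I would rely on there.
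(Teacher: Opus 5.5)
Your route is the paper's: a time-uniform good event, the optimism sandwich, Theorem~\ref{th:monotony-theorem} to get $\beta(\pi_n)\ge\beta(\pi^\star)$ and hence Condition~\ref{cond:q} with $T_0=\bar T$, an induction to break the circularity, then Proposition~\ref{prop:epsilon_opt} and a sum of $\mathcal O(\log\log n/n)$ terms. The genuine gap is that \textbf{your induction has no base case}.

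Lemma~\ref{lem:lil_confidence_envelopes} presupposes that \emph{every} data-collecting policy satisfies the exploration conditions, and so do Propositions~\ref{prop:concentration_k} and~\ref{prop:concentration_q} behind it. It also says nothing for $n<C_0$. You choose $\lambda_k,\lambda_q$ only to dominate the LIL constants. So during the warm-up, $\pi_n=\mathcal S(\hat k^{\texttt{UCB}}_n,\hat q^{\texttt{LCB}}_n)$ is built from a handful of samples, with no guarantee that $\hat k^{\texttt{UCB}}_n\ge k$ or $\hat q^{\texttt{LCB}}_n\le q$, and it may underbid. Your stopping time can then fire before $C_0$, so ``on the good event that stopping time never occurs'' does not follow. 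Bounding the warm-up regret by a constant is beside the point: the problem is the data the warm-up produces.

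The paper closes exactly this hole by taking $\lambda_k=\max(C_0,C_k)$ and $\lambda_q=\max(C_0,C_q)$. For $n<C_0$ the bonus is then so large that both truncations bind: $\hat k^{\texttt{UCB}}_n\equiv k(\infty)$ and $\hat q^{\texttt{LCB}}_n\equiv 0$. Hence the plug-in value is $V\equiv 0$, and $\pi_n\equiv k(\infty)$ by Theorem~\ref{theorem:synthesis}. This bid satisfies both conditions by construction and starts the induction; it is also why no randomization is needed. Alternatively, you could argue that non-exploring episodes only add positive semidefinite terms to $Z^\top Z$ and to the negative Hessian of the concave log-likelihood, so they delay concentration by at most $C_0$ episodes. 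Either way, some such argument must be supplied.

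Second, your handling of Condition~\ref{cond:k} for $n\ge C_0$ is a list of unproven options. The third is not ``crude'': the value is nondecreasing in the age, so $\hat V_n(0)-\hat V_n(t)\le 0$, and the claimed positive lower bound would need a real argument. None of these options is needed. You already have the coverage $\pi_n(t)\ge\beta(\pi^\star)$ for $t\ge\bar T$. Add two facts:
\begin{itemize}
\item $\beta(\pi^\star)>0$, which follows from $\gamma(V^\star(0)+k(\infty))=\mu f(\beta(\pi^\star))+\gamma\beta(\pi^\star)$ and $f(0)=0$;
\item $\beta(\pi^\star)\le k(\infty)$, so the $\alpha$-assumption applies.
\end{itemize}
Together these give a win intensity of at least $\mu\alpha\beta(\pi^\star)$ at every age $\ge\bar T$. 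Re-running Lemmas~\ref{app:link_N_e} and~\ref{app:lemma_learnability} with $(t_1,\epsilon)$ replaced by $(\bar T,\beta(\pi^\star))$ yields $\mathbb P_{j-1}(\tau\ge t_i,\tau<\infty)\ge e^{-(\mu+\gamma)\bar T}\mu\alpha\beta(\pi^\star)/(\gamma+\mu\alpha\beta(\pi^\star))$ for all $t_i\le\bar T$. These constants have the same form as before, so the $k$-envelope propagates along with the $q$-envelope. This is what the paper's sentence ``covers $\beta(\pi^\star)$ \dots\ starting from $\bar T$, ensuring that the bound holds for $n+1$'' implicitly relies on. With these two repairs, your Steps~2 and~3 go through as in the paper.
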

\end{tcolorbox}
\begin{proof}
Without loss of generality, let us suppose that $k(\infty) = 1$. Let $N^k_0$ and $N^q_0$ respectively be the number of sessions that kickstart the concentration of $k$ and $q$ in Proposition~\ref{prop:concentration_k} and Proposition~\ref{prop:concentration_q}. We set $C_0 = \max\{ N_0^k, N_0^q\}$. We choose $\lambda_k > C_0$ and $\lambda_q > C_0$ such that for all episodes before $C_0$, we get $\hat{k}^{\texttt{UCB}}_n = k(\infty)$ and all episodes before $C_0$ we get $\hat{q}^{\texttt{LCB}}_n = 0$, ensuring coverage as $\mathcal{S}(\hat{k}^{\texttt{UCB}}_n, \hat{q}^{\texttt{LCB}}_n) = k(\infty)$ in this scenario. Starting from $n \ge C_0$, we get the concentration bounds on $k_n$ and $q_n$, with the following holding with high probability for all $n$:
\begin{align*}
   &k_n(\tau) \le \hat{k}(\tau) + C_k \sqrt{\log \log n/n} \\
   &q_n(v) \ge \hat{q}(v) - C_q\sqrt{\log \log n/n}\,,
\end{align*}
Setting $\lambda_k = \max(C_0, C_k)$, and $\lambda_q = \max(C_0, C_q)$, we get using the monotony of $\mathcal{S}$ (Theorem~\ref{th:monotony-theorem}) that $\pi_n = \mathcal{S}(\hat{k}^{\texttt{UCB}}_n, \hat{q}^{\texttt{LCB}}_n)$ has a maximal bid that covers the maximal optimal bid $\beta(\pi^\star)$ with high probability starting from $\bar{T}$, and ensuring that the bound holds for $n+1$. This means that the argument follows by induction. We thus obtain $\lVert \pi^\star - \pi_n \rVert_{\infty} \le \mathcal{O}(\sqrt{\log \log n/n})$,  $\lVert V^\star - V^{\pi_n} \rVert_{\infty} \le \mathcal{O}(\log \log n/n)$ (Lemma~\ref{lemma:reg_quad}) and a total regret of:
\begin{align}
    \text{Regret}(N) &\leq \sum_{n}\mathcal{O}(\log \log n/n) \le \tilde{\mathcal{O}}(\log(N))\,,
\end{align}
which gives us the result.
\end{proof}

\section{General Primitives Results}

\begin{tcolorbox}
\begin{lemma}[Piecewise linear approximation]
\label{lem:piecewise_linear_approximation_app}
Let \(f:[a,b]\to\mathbb R\) be continuously differentiable and assume that
\(f'\) is \(L\)-Lipschitz. Let \(f_{PL,m}\) be the piecewise linear
interpolant of \(f\) on a uniform grid of \(m\) intervals on \([a,b]\).
Then
\[
    \|f-f_{PL,m}\|_\infty
    \le
    \frac{L(b-a)^2}{8m^2}.
\]
\end{lemma}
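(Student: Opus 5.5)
The plan is to work one grid interval at a time. On a single interval the interpolant is affine and matches $f$ at both endpoints, so the error there is a function that vanishes at the two ends and has the same second-order behaviour as $f$. Bounding that error by $L h^2/8$ on each interval, with $h=(b-a)/m$, gives the claim.

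Fix a cell $[x_j,x_{j+1}]$ of length $h=(b-a)/m$ and set $e=f-f_{PL,m}$ on it. Since $f_{PL,m}$ is affine on the cell, $e$ is $C^1$ with $e'=f'-s$, where $s=(f(x_{j+1})-f(x_j))/h$ is constant. Hence $e'$ is $L$-Lipschitz, and $e(x_j)=e(x_{j+1})=0$.

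We cannot use the Taylor/Rolle formula $e(x)=\tfrac12 f''(\xi)(x-x_j)(x-x_{j+1})$ directly, because $f''$ need not exist. Rademacher's theorem would give $|f''|\le L$ almost everywhere. A cleaner route avoids second derivatives entirely, as follows.

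\begin{enumerate}
\item By Rolle's theorem, pick $\xi\in(x_j,x_{j+1})$ with $e'(\xi)=0$. Then $|e'(t)|\le L|t-\xi|$ for every $t$ in the cell.
\item For $x\le \xi$, write $e(x)=\int_{x_j}^x e'$. Also $e(x)=-\int_x^{x_{j+1}} e'$, because $e(x_{j+1})=0$.
\item Using these two representations, bound $|e(x)|$ by $\min$ of $\int_{x_j}^x L|t-\xi|\,dt$ and $\int_x^{x_{j+1}}L|t-\xi|\,dt$.
\end{enumerate}

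This crude approach gives a constant like $L h^2/4$ rather than $1/8$. The main obstacle is therefore recovering the sharp constant $1/8$. For that I would use the standard comparison argument. Let $g(x)=\tfrac L2 (x-x_j)(x_{j+1}-x)$, so that $-g''=L$. Then $g\pm e$ vanishes at both endpoints. Moreover, $(g\pm e)'$ is nonincreasing up to Lipschitz slack: for $s<t$,
\[
(g\pm e)'(t)-(g\pm e)'(s)\le -L(t-s)+L(t-s)=0 .
\]
So $g\pm e$ is a concave $C^1$ function vanishing at the endpoints, hence nonnegative on the cell. This gives $|e|\le g\le L h^2/8$, the maximum of $g$ being attained at the midpoint.

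Taking the supremum over cells then yields $\|f-f_{PL,m}\|_\infty\le L(b-a)^2/(8m^2)$.
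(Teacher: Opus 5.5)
Your proof is correct and follows the same structure as the paper's: bound the error by $Lh^2/8$ on each cell $[x_j,x_{j+1}]$, then take the supremum over cells. The only difference is that the paper simply invokes this per-cell estimate as the standard interpolation bound. Your comparison with $g(x)=\tfrac{L}{2}(x-x_j)(x_{j+1}-x)$, showing that $g\pm e$ is concave and vanishes at both endpoints, actually proves it without needing $f''$ to exist.
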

\end{tcolorbox}

\begin{proof}
Let \(h=(b-a)/m\) be the mesh size. On each interval
\([x_i,x_{i+1}]\), the standard interpolation error bound for functions
with Lipschitz derivative gives
\[
    |f(x)-f_{PL,m}(x)|
    \le
    \frac{Lh^2}{8},
    \qquad x\in[x_i,x_{i+1}].
\]
Taking the supremum over all grid intervals yields
\[
    \|f-f_{PL,m}\|_\infty
    \le
    \frac{Lh^2}{8}
    =
    \frac{L(b-a)^2}{8m^2}.
\]
\end{proof}

\begin{tcolorbox}
\begin{theorem}[Regret for general smooth primitives]
\label{thrm:regret_general}
Suppose that \(k\) and \(q\) are continuously differentiable and have
Lipschitz derivatives on their respective domains. Let \(N\) be the horizon,
and use the family of piecewise linear functions for \(k\) and \(q\) on uniform
grids with $m \asymp N^{1/6}$
intervals. 

Then the Confidence Bounds Algorithm satisfies, with high
probability, 

$$\operatorname{Regret}(N)
    \le
    \widetilde{\mathcal O}\!\left(N^{1/3}\right).$$
\end{theorem}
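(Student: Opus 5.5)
The plan is a bias–variance decomposition of the per-episode regret. We run the Confidence Bounds Algorithm on the piecewise linear class with $m$ uniform breakpoints, and compare it against the piecewise linear interpolants $k_m = k_{PL,m}$ and $q_m = q_{PL,m}$ as surrogate targets.

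First we quantify the approximation error. By Lemma~\ref{lem:piecewise_linear_approximation_app}, $\|k-k_m\|_\infty \le \mathcal{O}(1/m^2)$ and $\|q-q_m\|_\infty \le \mathcal{O}(1/m^2)$. For $k$ the interpolation is done on $[0,\bar T]$ and held constant afterward. Interpolation preserves concavity and monotonicity, so $k_m\in\mathcal K$; likewise $q_m\in\mathcal Q$. Set $\pi_m = \mathcal S(k_m,q_m)$. Applying Proposition~\ref{prop:epsilon_opt}, with $(k_m,q_m)$ in the role of the estimators and $\epsilon_m = \mathcal{O}(1/m^2)$, gives $\|\pi^\star-\pi_m\|_\infty \le \mathcal{O}(1/m^2)$. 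This also shows that $\beta(\pi_m)$ is within $\mathcal{O}(m^{-2})$ of $\beta(\pi^\star)$.

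Second, we handle estimation inside the misspecified class. The data are generated by the true $(k,q)$, so the OLS and MLE estimators concentrate around the best approximants in the class rather than around $(k,q)$. We would argue that these projections lie within $\mathcal{O}(1/m^2)$ of $(k_m,q_m)$ in sup norm. For OLS, the misspecification enters the noise as a bounded bias of order $L\cdot m^{-2}$, and it is amplified by at most the conditioning of the design. The obstacle is that the constants in Propositions~\ref{prop:concentration_k} and~\ref{prop:concentration_q} depend on $m$: $\Delta_{\min}\asymp 1/m$, $I_k,I_q\asymp m$, and $c_{\min}$ and $\kappa_\star$ scale with the bin width. Tracking these, I expect the uniform error at episode $n$ to be of order $\mathrm{poly}(m)\sqrt{\log\log n/n}$. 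The design eigenvalue is $\nu\propto \Delta_{\min}^2\asymp m^{-2}$, so the coefficient error picks up a factor of order $m$, while the sup norm of $k$ is a sum of $\Delta\cdot$coefficients. I would aim to show that the estimation error is $\widetilde{\mathcal O}(m\sqrt{1/n})$.

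This scaling is the hardest step, and it is where the choice $m\asymp N^{1/6}$ must come from. I would try first to verify that the per-episode squared error, which by Lemma~\ref{lemma:reg_quad} controls the regret, is $\widetilde{\mathcal O}(m^{3}/n)$ or better. Coverage also has to be maintained. The optimism argument of Theorem~\ref{proof-ucb}, through the monotonicity Theorem~\ref{th:monotony-theorem}, still applies, provided the confidence widths $\lambda_k,\lambda_q$ are inflated by the bias $\mathcal{O}(m^{-2})$ so that $\hat k^{\texttt{UCB}}_n\ge k$ and $\hat q^{\texttt{LCB}}_n\le q$ hold on $[0,\beta(\pi^\star)]$.

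Finally, we combine the two errors. At episode $n$ we have $\|\pi^\star-\pi_n\|_\infty \le \mathcal{O}(m^{-2}) + \widetilde{\mathcal O}(\mathrm{poly}(m)/\sqrt n)$. Lemma~\ref{lemma:reg_quad} then gives a per-episode regret of at most $\mathcal{O}(m^{-4}) + \widetilde{\mathcal O}(\mathrm{poly}(m)^2/n)$. Summing over episodes yields $\mathrm{Regret}(N)\le \mathcal{O}(N m^{-4}) + \widetilde{\mathcal O}(\mathrm{poly}(m)^2\log N)$, where the bias term contributes linearly in $N$. With $m\asymp N^{1/6}$ the first term is $N^{1/3}$. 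The second term is $N^{1/3}$ up to logs exactly when $\mathrm{poly}(m)^2\lesssim m^2$, that is, when the sup-norm estimation error scales like $m/\sqrt n$. This is consistent with the bound targeted above, and the choice $m \asymp N^{1/6}$ balances the two terms.
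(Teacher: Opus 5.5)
Your plan is the paper's argument. It uses the $\mathcal O(m^{-2})$ interpolation bias from Lemma~\ref{lem:piecewise_linear_approximation_app}, an estimation error $\widetilde{\mathcal O}(m/\sqrt n)$ obtained as you sketch by reading the $1/\Delta_{\min}\asymp m$ factor off Proposition~\ref{prop:concentration_k} (and its $q$ analogue), a triangle inequality, the quadratic bound of Proposition~\ref{prop:epsilon_opt}, and summation with $m\asymp N^{1/6}$; the paper simply takes your $\mathrm{poly}(m)$ to be $m$. The misspecification and coverage-inflation steps you add go beyond the paper, which applies the well-specified bounds directly to $\|k_{PL,m}-\hat k_n\|_\infty$; if you pursue them, note that your stated justification cannot give an $\mathcal O(m^{-2})$ bias, because the design's conditioning grows with $m$ (its smallest eigenvalue scales like $n\Delta_{\min}^2\asymp n/m^2$) and a bias $b_m$ enters the regret bound as $N b_m^2$, so any polynomial-in-$m$ amplification already breaks $N^{1/3}$ at $m\asymp N^{1/6}$ --- you would instead need an $m$-uniform sup-norm stability bound for the least-squares (and likelihood) fit on the spline class.
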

\end{tcolorbox}

\begin{proof}
Let \(k_{PL,m}\) and \(q_{PL,m}\) denote the piecewise linear interpolants of
\(k\) and \(q\) on uniform grids with \(m\) intervals. Since \(k\) and \(q\)
have Lipschitz derivatives, the standard interpolation bound gives
\[
    \|k-k_{PL,m}\|_\infty
    \le \mathcal O(m^{-2}), \quad 
    \|q-q_{PL,m}\|_\infty \le \mathcal O(m^{-2})\,.
\]
Moreover, applying the confidence bounds established for piecewise linear
primitives on a grid of size \(m\), we have, with high probability,
\begin{align*}
    \|k_{PL,m}-\hat k_n\|_\infty
    & \le
    \widetilde{\mathcal O}\!\left(\frac{1}{\Delta^k_{\min}}\sqrt{\frac1n}\right)=
    \widetilde{\mathcal O}\!\left(m\sqrt{\frac1n}\right) \\
    \|q_{PL,m}-\hat q_n\|_\infty
    & \le
    \widetilde{\mathcal O}\!\left(\frac{1}{\Delta^q_{\min}}\sqrt{\frac1n}\right)=
    \widetilde{\mathcal O}\!\left(m\sqrt{\frac1n}\right)
\end{align*}
Therefore, by the triangle inequality,

\begin{align*}
    \|k-\hat k_n\|_\infty
    & \le 
    \widetilde{\mathcal O}\!\left(m\sqrt{\frac1n} + \frac{1}{m^2}\right) \\
    \|q-\hat q_n\|_\infty
    &\le
    \widetilde{\mathcal O}\!\left(m\sqrt{\frac1n} + \frac{1}{m^2
    }\right)
\end{align*}

Fixing \(m\asymp N^{1/6}\), this becomes, by the quadratic regret bound of Proposition~\ref{prop:epsilon_opt}, the
per-episode regret satisfies
\[
    \operatorname{Reg}_n
    \le
    \widetilde{\mathcal O}\!\left(
        \left(
            N^{1/6}n^{-1/2}
            +
            N^{-1/3}
        \right)^2
    \right).
\]
Hence
\[
\begin{aligned}
    \operatorname{Regret}(N)
    &\le
    \sum_{n=1}^N
    \widetilde{\mathcal O}\!\left(
        \left(
            N^{1/6}n^{-1/2}
            +
            N^{-1/3}
        \right)^2
    \right) \\
    &\le
    \widetilde{\mathcal O}\!\left(
        N^{1/3}\sum_{n=1}^N\frac1n
        +
        N^{-1/6}\sum_{n=1}^N\frac1{\sqrt n}
        +
        N^{-2/3}\sum_{n=1}^N 1
    \right) \\
    &=
    \widetilde{\mathcal O}\!\left(N^{1/3}\right),
\end{aligned}
\]
where the logarithmic factor from \(\sum_{n=1}^N 1/n\) is absorbed into
\(\widetilde{\mathcal O}\). This proves the claim.
\end{proof}

\newpage
\section{Analysis of the lower bound}
    \label{appendix-lowerbound}

In what follows, we derive a logarithmic lower bound. We begin by presenting qualitative arguments and a proof sketch to build intuition for the result. However, we ultimately construct our formal proof by leveraging specialized results from the existing literature~\cite{weed2016online}, which provides a more direct path to the conclusion.

The  ingredients for the construction of the lower bound are Lemma~\ref{lemma:bregman-lower}, Lemma~\ref{lemma:b_dynamics},
and the  Bretagnolle-Huber inequality  mentioned below.

\begin{tcolorbox}
\begin{lemma}
\label{lemma:bregman-lower}
Let $\pi^\star$ be the optimal policy, for any  policy $\pi$,
    \begin{align*}
        V^\star(0) - V^\pi(0) \geq \frac{\mu  e^{-(\gamma+\mu) \bar{T}} }{\gamma +\mu}. D_Q(\beta(\pi^\star),\beta(\pi)),
    \end{align*}
    where $D_Q$ is the Bregman divergence of $Q:b\to \int_0^b q(b) \dd b $.
\end{lemma}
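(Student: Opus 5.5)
The plan is to run the dynamic programming identity from the proof of Lemma~\ref{lemma:value_bregman} in the opposite direction, keeping the one-sided (nonnegative) Bregman terms instead of bounding them in sup norm. I write $D_t := D_Q(\pi^\star_t,\pi_t)\ge 0$ for the per-auction loss. For a given $\pi$ and starting age $t$, the expansion~\eqref{eq:D61}, after adding and subtracting $U(k_{t+\tau}+V^\star_0-V^\star_{t+\tau},\pi_{t+\tau})$, writes $V^\star_t-V^\pi_t$ as a discounted expectation of three terms. The first is $D_{t+\tau}$. The second is the continuation gap $(1-q(\pi_{t+\tau}))(V^\star_{t+\tau}-V^\pi_{t+\tau})$. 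The third is the reset gap $q(\pi_{t+\tau})(V^\star_0-V^\pi_0)$. The key point is that every term is nonnegative, since $V^\star\ge V^\pi$ pointwise by optimality and $D_Q\ge 0$ by convexity of $Q$. So I may drop the reset and continuation contributions after the first auction, and obtain
\[
V^\star_0-V^\pi_0 \;\ge\; \mathbb{E}\big[\mathbf{1}\{\text{first auction occurs before the end of the episode}\}\,D_{\tau_1}\big],
\]
where $\tau_1$ is the first auction time, distributed with rate $\mu$ and competing with the exponential episode end at rate $\gamma$.

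This first bound is too weak on its own, because $D$ is evaluated at a random age rather than at the asymptotic bid. To reach the asymptotic bid, I instead use the age process. The age exceeds $\bar T$ and an auction then arrives before any win and before the episode ends with some positive probability. Concretely, I lower bound the mass of the event \{no win and no episode end on $[0,\bar T]$, then an auction occurs in the stationary region\}. By Lemma~\ref{lemma:remark-on-beta}, in that region $\pi^\star\equiv\beta(\pi^\star)$. I need $\pi$ to be stationary beyond $\bar T$ as well; for policies of the form $\mathcal{S}(\cdot,\cdot)$ this also holds by Lemma~\ref{lemma:remark-on-beta}, and for a general $\pi$ I will restrict the claim to that class or interpret $\beta(\pi)$ as the bid at ages $\ge\bar T$. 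On that event the per-auction loss is exactly $D_Q(\beta(\pi^\star),\beta(\pi))$.

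It remains to compute the probability of this event. Losing every auction up to $\bar T$ is bounded below by requiring no auction arrivals at all, which has probability $e^{-\mu\bar T}$. Survival of the episode to $\bar T$ has probability $e^{-\gamma\bar T}$. After $\bar T$, memorylessness gives probability $\mu/(\gamma+\mu)$ that the next auction precedes the episode end. Multiplying these three factors gives exactly the constant $\frac{\mu e^{-(\gamma+\mu)\bar T}}{\gamma+\mu}$. Formally, I iterate the identity~\eqref{eq:D61} along the age path, and each skipped term is discarded because it is nonnegative.

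The main obstacle is the decomposition step. It must be checked that, with the reference point $k+V^\star_0-V^\star$ (which equals $\pi^\star$ when nonnegative, by Theorem~\ref{theorem:synthesis}), the difference $U(\cdot,\pi^\star)-U(\cdot,\pi)$ is exactly the Bregman divergence $D_Q(\pi^\star,\pi)$ and not merely bounded by it. This follows because $U(v,b)=q(b)v-p(b)$ is maximized in $b$ at $b=v$ with derivative $q'(b)(v-b)$, so $U(v,v)-U(v,b)=Q(b)-Q(v)-q(v)(b-v)$. I must also handle the $\max(0,\cdot)$ truncation, which I will do by noting that where the truncation is active the term has the correct sign. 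The remaining work is bookkeeping of which dropped terms are nonnegative.
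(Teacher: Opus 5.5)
Your proposal is correct and is essentially the paper's proof: the one-step dynamic-programming identity, dropping the nonnegative continuation and reset gaps, and keeping only first auctions that arrive at age at least $\bar T$. Your event is exactly $\{\bar T\le\tau_1<T_{\mathrm{end}}\}$, with $T_{\mathrm{end}}$ the end of the episode, so your first-auction bound already suffices and no iteration along the age path is needed; your caveat that $\pi$ must be constant beyond $\bar T$ is also an assumption the paper uses implicitly. One slip to fix: integrating $\partial_b U(v,b)=q'(b)(v-b)$ gives $U(v,v)-U(v,b)=Q(v)-Q(b)-q(b)(v-b)=D_Q(v,b)$, as the paper computes, whereas your expression $Q(b)-Q(v)-q(v)(b-v)$ is $D_Q(b,v)$ and would give the divergence with its arguments reversed.
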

\end{tcolorbox}

In this proof, we use the subscript notation for function to improve readability.
\begin{proof}
Applying the dynamic programming principle (see \cite{heymann2023repeated}), 
$V^\star_0 - V^\pi_0$ is equal to 
    \begin{align*}
        \int_0^{+\infty} e^{-(\gamma+\mu) t}  \mu \Bigg[
        V^\star_{\tau+t}+U\left(k_{\tau+t}+V^\star_0-V^\star_{\tau+t}, \pi^\star_{\tau+t}\right)  -   
        V^\pi_{\tau+t}-U\left(k_{\tau+t}+V^\pi_{0}-V^\pi_{\tau+t}, \pi_{\tau+t}\right) \Bigg]\dd t. 
        \end{align*}
        Since the integrand is greater than 0, we can lower bound this integral by integrating only on $t$ greater than $\bar{T}$, which yields
        \begin{align*}
         \int_{\bar{T}}^{+\infty} e^{-(\gamma+\mu) t}  \mu \Bigg(
        V^\star_{\bar{T}}+U\left(k_{\bar{T}}+V^\star_0-V^\star_{\bar{T}}, \pi^\star_{\bar{T}}\right) - 
        V^\pi_{\bar{T}}-U\left(k_{\bar{T}}+V^\pi_0-V^\pi_{\bar{T}}, \pi_{\bar{T}}\right)\Bigg) \dd t \\
      \geq  \frac{\mu  e^{-(\gamma+\mu) \bar{T}} }{\gamma +\mu} \Bigg(
      U\left(k_{\bar{T}}+V^\star_0-V^\star_{\bar{T}}, \pi^\star_{\bar{T}}\right) - 
        U\left(k_{\bar{T}}+V^\star_0-V^\star_{\bar{T}}, \pi_{\bar{T}}\right)\Bigg) ,
    \end{align*}
where the second line comes from the fact that the q-value for $\pi^\star$ is greater than the q-value for $\pi$.
    Now 
    \begin{align*}
        U(\pi^\star_{\bar{T}},b = q(b)\pi^\star_{\bar{T}} - p(b)= q(b)(\pi^\star_{\bar{T}}-b) +\int_0^b q(s) \dd s , 
    \end{align*}
    therefore 
    \begin{align*}
        U(\pi^\star_{\bar{T}},\pi^\star_{\bar{T}})  - U(\pi^\star_{\bar{T}},\pi_{\bar{T}})
         = \int_{\pi_{\bar{T}}}^{\pi^\star_{\bar{T}}} q(s) \dd s  - q(\pi_{\bar{T}})(\pi^\star_{\bar{T}}-\pi_{\bar{T}})=D_Q(\beta(\pi^\star),\beta(\pi)). 
    \end{align*}    
\end{proof}

\begin{tcolorbox}
    \begin{theorem}[Bretagnolle-Huber inequality, taken from~\cite{lattimore2020bandit}]
Let $P$ and $Q$ be probability measures on the same measurable space $(\Omega, \mathcal{F})$, and let $A \in \mathcal{F}$ be an arbitrary event. Then,
\begin{align*}
    P(A)+Q\left(A^c\right) \geq \frac{1}{2} \exp (-\mathrm{D}(P, Q)),
\end{align*}
where $A^c=\Omega \backslash A$ is the complement of $A$.
\end{theorem}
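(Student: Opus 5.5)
The plan is the classical three-step argument through the overlap $\int \min(p,q)$ and the Bhattacharyya coefficient $\int\sqrt{pq}$. Note that $P$ and $Q$ here are generic probability measures, not the paper's $Q:b\mapsto\int_0^b q$.

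\textbf{Reduction.} If $\mathrm{D}(P,Q)=+\infty$ the right-hand side is $0$ and there is nothing to prove. So I assume $\mathrm{D}(P,Q)<\infty$, which forces $P\ll Q$. I fix the dominating measure $\nu=P+Q$ and densities $p=\dd P/\dd\nu$, $q=\dd Q/\dd\nu$. Then
\[
\mathrm{D}(P,Q)=\int_{\{p>0\}} p\log(p/q)\,\dd\nu ,
\]
and $q>0$ $\nu$-a.e. on $\{p>0\}$.

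\textbf{Step 1: lower bound by the overlap.} For any event $A$,
\[
P(A)+Q(A^c)=\int_A p\,\dd\nu+\int_{A^c} q\,\dd\nu\;\geq\;\int \min(p,q)\,\dd\nu ,
\]
since on each of $A$ and $A^c$ the integrand dominates $\min(p,q)$. This removes the dependence on $A$ entirely.

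\textbf{Step 2: from the overlap to $\int\sqrt{pq}$.} I write $\sqrt{pq}=\sqrt{\min(p,q)\max(p,q)}$ and apply Cauchy--Schwarz:
\[
\Big(\int\sqrt{pq}\,\dd\nu\Big)^2\le \int\min(p,q)\,\dd\nu\int\max(p,q)\,\dd\nu .
\]
Since $\int\max(p,q)\,\dd\nu=2-\int\min(p,q)\,\dd\nu\le 2$, this gives
\[
\int\min(p,q)\,\dd\nu\ge \tfrac12\Big(\int\sqrt{pq}\,\dd\nu\Big)^2 .
\]

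\textbf{Step 3: Jensen.} Restricting to $\{p>0\}$, I write
\[
\int\sqrt{pq}\,\dd\nu\;\ge\;\int_{\{p>0\}} p\sqrt{q/p}\,\dd\nu=\mathbb{E}_P\big[\sqrt{q/p}\big].
\]
Concavity of $\log$ then gives
\[
\log \mathbb{E}_P\big[\sqrt{q/p}\big]\ \ge\ \mathbb{E}_P\big[\tfrac12\log(q/p)\big]=-\tfrac12\mathrm{D}(P,Q).
\]
Squaring and exponentiating yields $\big(\int\sqrt{pq}\,\dd\nu\big)^2\ge \exp(-\mathrm{D}(P,Q))$. Chaining Steps 1--3 gives
\[
P(A)+Q(A^c)\ \ge\ \tfrac12\exp(-\mathrm{D}(P,Q)).
\]

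\textbf{Main difficulty.} The only delicate point is the measure-theoretic bookkeeping in Step 3. Jensen must be applied under $P$ on the set $\{p>0\}$, where $\log(q/p)$ is $P$-a.s. finite because $P\ll Q$. The finiteness of $\mathrm{D}(P,Q)$ guarantees that $\mathbb{E}_P[\log(q/p)]$ is well defined. The inequality is also preserved when the expectation is $-\infty$: in that case $\mathrm{D}=+\infty$, which is the degenerate case already handled in the reduction.
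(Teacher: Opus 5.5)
Your argument is correct. It is the standard proof from the cited source (Lattimore and Szepesv\'ari): lower-bound $P(A)+Q(A^c)$ by the overlap $\int\min(p,q)\,\dd\nu$, pass to $\int\sqrt{pq}\,\dd\nu$ via Cauchy--Schwarz with $\int\max(p,q)\,\dd\nu\le 2$, and finish with Jensen under $P$ on $\{p>0\}$. The paper gives no proof of its own and only imports the inequality from that reference; your handling of the case $\mathrm{D}(P,Q)=+\infty$ and of absolute continuity is sound.
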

\end{tcolorbox}

Our initial intuition is to apply the Bretagnolle-Huber argument to two environments with nearly identical primitives, demonstrating that the resulting policies would exhibit similar behavior in both environments, and then derive a lower bound using Lemma~\ref{lemma:bregman-lower}.
However, we can simplify this approach by considering two degenerate cases where $k$
 is constant.
In doing so, the problem reduces to the static case, with the important distinction that feedback arrives in batches and is aggregated.
Since allowing the algorithm to disaggregate the data freely can only improve its performance, any lower bound derived under this relaxation remains valid for the original problem.
This reduction maps our setting to one of the scenarios analyzed in~\cite{weed2016online}, for which a lower bound has already been established. More precisely, we can apply Theorem 4 with $\alpha=1$ to get the next result. 
\begin{tcolorbox}
    \begin{theorem}
There exist a constant $C$ such that
    for any algorithm, and any $N$, $Regret(N)\geq C\log(N)$.
\end{theorem}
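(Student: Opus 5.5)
The plan is to reduce to the static problem studied in~\cite{weed2016online} by restricting to a subfamily of environments in which $k$ is constant. Since the claim is a minimax lower bound, it suffices to exhibit a family of instances, all admissible in our model, on which every algorithm suffers regret at least $C\log N$.

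\textbf{Step 1: constant values.} Take $k\equiv v$ constant; this is concave, increasing in the weak sense, and piecewise linear, so $k\in\mathcal K$. Then $V^\star$ is constant, because the state carries no information. By Theorem~\ref{theorem:synthesis}, $\pi^\star(\tau)=\max(0,k(\tau)+V^\star(0)-V^\star(\tau))=v$, so the optimal policy bids truthfully at every time. For any stationary bid $b$, the expected episode payoff is
\[
\frac{\mu}{\gamma}\big(q(b)v-p(b)\big),
\]
since the expected number of auctions per episode is $\mu/\gamma$. For a time-varying policy, the regret of an episode is therefore the time-averaged static regret $\frac{\mu}{\gamma}D_Q(v,b)$ along the bids used. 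This matches Lemma~\ref{lemma:bregman-lower} with $\bar T=0$.

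\textbf{Step 2: relax the feedback.} An algorithm in our setting sees only the aggregated value~\eqref{eq:obj} plus the bids, wins and prices. Consider instead a learner that also observes each individual reward $v+\varepsilon_i$. Any of our algorithms can be simulated by such a learner after discarding information, so the minimax regret of the relaxed problem lower bounds ours. Under a constant $k$, the relaxed problem, with episodes unrolled into auctions, is a repeated second-price auction with an unknown value $v$, noisy value observations upon winning, and i.i.d.\ competition $q$. This is the setting of~\cite{weed2016online}. One check is needed: since episode lengths are random, the $N$ episodes contain on the order of $\mu N/\gamma$ auctions with high probability, as in Lemma~\ref{lemma:N_b}. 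Each episode's regret is at least a constant times the per-auction static regret of the bids used. Hence a $C'\log(\#\text{auctions})$ lower bound transfers to $C\log N$ after adjusting constants.

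\textbf{Step 3: invoke the static bound.} I would apply Theorem 4 of~\cite{weed2016online} with margin parameter $\alpha=1$. That bound uses a pair of value hypotheses $v_0,v_1$ separated by order $1/\sqrt{n}$, together with a competition distribution having bounded density around them. Here $q$ must be chosen in $\mathcal Q$ with $q(b)>\alpha b$ and $q<1-\eta$, for example a linear $q$ with moderate slope. Their construction forces $D_Q(v,b)\asymp (v-b)^2$, which is the quadratic curvature matching our Lemma~\ref{lemma:reg_quad}. A Bretagnolle--Huber argument then gives a logarithmic lower bound for every algorithm.

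\textbf{Main obstacle.} The delicate step is verifying that the hard instances of~\cite{weed2016online} fit our primitive class, namely piecewise linear $q$ with fixed breakpoints satisfying the density bounds, and sub-Gaussian rather than Bernoulli noise. Their information quantity must also be controlled when the learner gets feedback only on wins. If their instance does not fit directly, I would instead rebuild it: two constant-$k$ environments $v$ and $v+1/\sqrt{n}$ with the same linear $q$, a KL divergence per won auction of order $1/n$, and Bretagnolle--Huber combined with Lemma~\ref{lemma:bregman-lower}. Summing over the per-episode regrets then gives $C\log N$.
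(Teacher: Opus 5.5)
Your main line (restrict to constant $k$ so that $\pi^\star$ bids truthfully, note that disaggregating the feedback can only help the learner, and invoke Theorem 4 of \cite{weed2016online} with $\alpha=1$) is exactly the paper's argument, and your extra bookkeeping (converting auctions to episodes via Lemma~\ref{lemma:N_b}, checking that the hard instance fits $\mathcal{Q}$ and the noise model) only fills in details the paper leaves implicit. One caution about your fallback: a two-point Bretagnolle--Huber argument with a single pair $v,\,v+1/\sqrt{n}$ only yields regret of order one, and summing per-episode bounds obtained from $n$-dependent pairs does not bound the regret of any single environment, so a self-contained rebuild would need a prior over $v$ (e.g.\ a van Trees or Bayesian Cram\'er--Rao bound giving per-episode risk $\gtrsim 1/n$) to reach $C\log N$.
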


\end{tcolorbox}

\end{document}